\DeclareMathOperator{\UCB}{UCB}
\DeclareMathOperator{\Ber}{Ber}
\DeclareMathOperator{\KL}{KL}
\DeclareMathOperator{\TV}{TV}
\DeclareMathOperator{\Unif}{Unif}
\renewcommand{\ind}{\mathds{1}}
\newcommand{\lc}{l^C}
\newcommand{\robustagg}{\ensuremath{\textsc{RobustAgg}}\xspace}
\newcommand{\arobustagg}{\ensuremath{\textsc{RobustAgg-Agnostic}}\xspace}
\newcommand{\inducb}{\ensuremath{\textsc{Ind-UCB}}\xspace}
\newcommand{\naiveagg}{\ensuremath{\textsc{Naive-Agg}}\xspace}
\newcommand{\adaptedrobustagg}{\ensuremath{\textsc{RobustAgg-Adapted}}\xspace}
\newcommand{\wbar}[1]{ {\ensuremath{\overline{#1}}} }
\newcommand{\corral}{\textsc{Corral}\xspace}
\begin{document}

\runningauthor{Zhi Wang\textsuperscript{*}, Chicheng Zhang\textsuperscript{*}, Manish Kumar Singh, Laurel D. Riek, Kamalika Chaudhuri}

\twocolumn[

\aistatstitle{Multitask Bandit Learning Through  Heterogeneous Feedback Aggregation}

\aistatsauthor{Zhi Wang\textsuperscript{*}$^1$, Chicheng Zhang\textsuperscript{*}$^2$, Manish Kumar Singh$^1$, Laurel D. Riek$^1$, Kamalika Chaudhuri$^1$}

\aistatsaddress{$^1$University of California San Diego, $^2$University of Arizona} ]

\begin{bibunit}

\begin{abstract}
In many real-world applications, multiple agents seek to learn how to perform highly related yet slightly different tasks in an online bandit learning protocol. We formulate this problem as the \textit{$\epsilon$-multi-player multi-armed bandit} problem, in which a set of players concurrently interact with a set of arms, and for each arm, the reward distributions for all players are similar but not necessarily identical.
We develop an upper confidence bound-based algorithm, $\robustagg(\epsilon)$, that adaptively aggregates rewards collected by different players.
In the setting where an upper bound on the pairwise dissimilarities of reward distributions between players is known, we achieve instance-dependent regret guarantees that depend on the amenability of information sharing across players. We complement these upper bounds with nearly matching lower bounds.
In the setting where pairwise dissimilarities are unknown, we provide a lower bound, as well as an algorithm that trades off minimax regret guarantees for adaptivity to unknown similarity structure.
\end{abstract}

\section{Introduction}
\label{sec:intro}

Online multi-armed bandit learning has many important real-world applications \cite[see][for a few examples]{vbw15,swjz15,lcls10}.
In practice, a group of online bandit learning agents are often deployed for similar tasks, and they learn to perform these tasks in similar yet nonidentical environments.
For example, a group of assistive healthcare robots may be deployed to provide personalized cognitive training to people with dementia (PwD), e.g., by playing cognitive training games with people \citep{jessie2020}.
Each robot seeks to learn the preferences of its paired PwD so as to recommend tailored health intervention based on how the PwD reacts to and is engaged with the activities (as captured by sensors on the robots) \citep{jessie2020}.
As PwD may have similar preferences and may therefore exhibit similar reactions, one natural question arises---can the robots as a multi-agent system learn to perform their respective tasks faster through collaboration? In this paper, we 
develop multi-agent bandit learning algorithms where each agent can robustly aggregate data from other agents to better perform its respective task.

We generalize the multi-armed bandit problem \citep{acf02} and formulate the $\epsilon$-Multi-Player Multi-Armed Bandit ($\epsilon$-MPMAB) problem, which models \textit{heterogeneous multitask learning} in a multi-agent bandit learning setting.
In an $\epsilon$-MPMAB problem instance, a set of $M$ players are deployed to perform similar tasks---simultaneously they interact with a set of actions/arms, and for each arm, different players receive feedback from similar but not necessarily identical reward distributions.
In the above assistive robotics example, each player corresponds to a robot; each arm corresponds to one of the cognitive activities to choose from; for each player and each arm, there is a separate reward distribution which reflects a PwD's personal preferences. Informally, $\epsilon \geq 0$ is a \textit{dissimilarity parameter} that upper bounds the pairwise distances between different reward distributions for different players on the same arm (see Definition~\ref{assumption:epsilon} in the next section). The players can communicate and share information among each other, with a goal of maximizing their collective reward. %

While multi-player bandit learning has been studied extensively in the literature \cite[e.g.,][]{lsl16,cgz13,glz14}, warm-starting bandit learning using different feedback sources has been investigated \citep{zailn19},
and sequential transfer between similar tasks in a bandit learning setting has also been studied \citep{glb13,soaremulti},
to our knowledge, no prior work models multitask learning in a multi-player bandit learning perspective with a focus on adaptive and robust aggregation of player-dependent heterogeneous feedback.
In \Cref{sec:relatedwork}, we further discuss and compare our problem formulation with related papers.

It is worth noting that naively utilizing data collected by other players may substantially hurt a player's regret \citep{zailn19}, if there are large disparities between the sources of feedback. This is also well-known as {\em negative transfer} in transfer learning~\citep{rosenstein2005transfer,brunskill2013sample}.

Therefore, the main challenge of the $\epsilon$-MPMAB problem is for the players to properly manage \textit{when and how} to utilize auxiliary data shared by others---while auxiliary data can be useful to maintain more accurate estimates of the rewards for each player and each arm, they can also easily be inefficacious or even misleading. While transfer learning in the offline setting has been well studied, in this paper we seek to characterize the difficulty of the more challenging problem of learning through heterogeneous feedback aggregation in a multi-player online setting.

We will first study the $\epsilon$-MPMAB problem when the dissimilarity parameter $\epsilon$ is known, and then move on to the harder setting in which $\epsilon$ is unknown.
Here is a summary of our main contributions:
\begin{itemize}
    \item We model online multitask bandit learning from heterogeneous data sources as the $\epsilon$-MPMAB problem, with a goal of studying how to adaptively and robustly aggregate data to improve the collective performance of the players. %
    
    \item In the setting where 
    $\epsilon$ is known,
    we propose an upper confidence bound (UCB)-based algorithm, $\robustagg(\epsilon)$, that adaptively aggregates rewards collected by different players.
    
    We provide (suboptimality)-{gap-dependent} and {gap-independent} upper bounds on the collective regret of $\robustagg(\epsilon)$. Our regret bounds depend on the set of arms that admit information sharing among the players. When this set is large, $\robustagg(\epsilon)$ can potentially improve the gap-dependent regret bound by nearly a factor of $M$ compared to the baseline of  players acting individually using UCB-1~\citep{acf02}.    
    
    We complement these upper bounds with nearly matching gap-dependent and gap-independent lower bounds. 

    \item In the setting where $\epsilon$ is unknown, we first establish a lower bound, showing that if an algorithm guarantees sublinear minimax regret with respect to all MPMAB instances, then it must be unable to significantly utilize inter-player similarity in a large collection of instances. 
    To complement the above result, we use the framework of Corral~\citep{alns17,pparzls20,amm20} and present an algorithm that trades off minimax regret guarantee for adaptivity to ``easy'' MPMAB problem instances.

\end{itemize}

\section{Problem Specification}
\label{sec:formulation}

We formulate the $\epsilon$-MPMAB problem, building on the standard model of stochastic multi-armed bandits \citep{lai1985asymptotically,acf02}.

Throughout, we denote by $[n] = \cbr{1,\ldots,n}$. An {\em MPMAB problem instance} consists of a set of $M$ players, labeled as elements in $[M]$, and a set of $K$ arms, labeled as elements in $[K]$. In addition, each player $p \in [M]$ and each arm $i \in [K]$ is associated with an unknown reward distribution $\mathcal{D}_i^p$ with support $[0,1]$ and mean $\mu_i^p$. If all $\Dcal_i^p$'s are Bernoulli distributions, we call this instance a {\em Bernoulli MPMAB problem instance}; under the Bernoulli reward assumption, $\mu = (\mu_i^p)_{i \in [K], p \in [M]}$ completely specifies the instance.

The reward distributions of the same arm are not necessarily identical for different players---we consider the following notion of dissimilarity between the reward distributions of the players. Related
conditions have been considered in works on multi-task bandit learning~\citep[e.g.,][]{glb13,soaremulti}.

\begin{definition}
\label{assumption:epsilon}
An MPMAB problem instance is said to be an $\epsilon$-MPMAB  problem instance, if
for every pair of players $p, q \in [M]$, $\max_{i \in [K]} |\mu^{p}_i - \mu^{q}_i| \le \epsilon$, where $\epsilon \in [0,1]$. We call $\epsilon$ the dissimilarity parameter.
\end{definition}

\paragraph{Interaction protocol.} 
Let $T > \max(M,K)$ be the horizon of an MPMAB ($\epsilon$-MPMAB) problem instance.
In each round $t \in [T]$, every player $p \in [M]$ 
pulls an arm $i^p_t$, and observes an independently-drawn reward $r^p_t \sim \mathcal{D}^p_{i_t^p}$.
Once all the $M$ players finish pulling arms in round $t$, each decision, $i^p_t$, together with the corresponding reward received, $r^p_t$, is immediately shared with all players.

\paragraph{Arm pulls, gaps, and performance measure.}
Let $\mu^p_* = \max_{i \in [K]} \mu^{p}_i$ be the optimal mean reward for every player $p \in [M]$.
Denote by $n^p_i(t)$ the number of pulls of arm $i$ by player $p$ after $t$ rounds, and $\Delta_i^p = \mu^p_* - \mu_i^p \ge 0$ the \textit{suboptimality gap} (abbrev. gap) between the means of the reward distributions associated with some optimal arm $i^p_*$ and arm $i$ for player $p$. 
For any arm $i \in [K]$, define $\Delta_i^{\min} = \min_{p \in [M]} \Delta_i^p$.
To measure the performance of MPMAB algorithms, we use the following notion of regret.
The expected regret of player $p$ is defined as $\EE[\mathcal{R}^p(T)] = \sum_{i\in [K]} \Delta_i^p \cdot \mathbb{E} [n^p_i(T)]$, and the players' {\em expected collective regret} is defined as $\EE[\mathcal{R}(T)] = \sum_{p \in [M]} \EE[\mathcal{R}^p(T)]$. 

\paragraph{Bandit learning algorithms.} A multi-player bandit learning algorithm $\Acal$ with horizon $T$ is defined as a sequence of conditional probability distributions $\cbr{\pi_t}_{t=1}^T$, where  for every $t$ in $[T]$, $\pi_t$ is the policy used in round $t$; specifically, $\pi_t(\cdot \mid (i_s^p, r_s^p)_{s \in [t-1], p \in [M]})$ is a conditional probability distribution of actions taken by all $M$ players in round $t$, given historical data. 
A bandit learning algorithm is said to have {\em sublinear regret} for the $\epsilon$-MPMAB (resp. MPMAB) problem, if there exists some $C>0$ and $\alpha>0$ such that $\EE[\Rcal(T)] \leq C T^{1-\alpha}$ for all $\epsilon$-MPMAB (resp. MPMAB) problem instances.

\paragraph{Miscellaneous notations.}
Throughout, we use $\tilde{O}$ notation to hide logarithmic factors. 
Given a universe set $\Hcal$ and any $\Jcal \subseteq \Hcal$, we use $\Jcal^C$ to denote the set $\Hcal \setminus \Jcal$.

\paragraph{Baseline: Individual UCB.} We now consider a baseline algorithm that runs the UCB-1 algorithm individually for each player without communication---hereafter, we refer to it as \inducb.
By \citep[Theorem 1]{acf02}, and summing over the individual regret guarantees of all players, the expected collective regret of \inducb satisfies 
$$\EE[\mathcal{R}(T)] \le  O\bigg(\sum_{i \in [K]} \sum_{p \in [M]: \Delta_i^p > 0} \frac{\ln T}{\Delta_i^p} \bigg).$$
In addition, \inducb has a gap-independent regret bound of $\tilde{\order}\del{M\sqrt{KT}}$ \cite[e.g.,][Theorem 7.2]{lattimore2020bandit}.

\subsection{Can auxiliary data always help?}
Since the interaction protocol allows information sharing among players, in any round $t > 1$, each player has access to more data than they would have without communication. Can the players always expect benefits from such auxiliary data and collectively perform better than \inducb? 

Below we provide an example that illustrates that the role of auxiliary data depends on the dissimilarities between the player-dependent reward distributions, as indicated by $\epsilon$, as well as the intrinsic difficulty of each multi-armed bandit problem each player faces individually, as indicated by the gaps $\Delta^p_i$'s.
Specifically, we show in the example that when $\epsilon$ is much larger than the gaps $\Delta^p_i$'s, any sublinear-regret bandit learning algorithm for the $\epsilon$-MPMAB problem cannot significantly take advantage of auxiliary data.

\begin{example}
\label{example:delta-small}
For a fixed $\epsilon \in (0,\frac1{8})$ and $\delta \le \epsilon/4$, consider the following Bernoulli MPMAB problem instance: for each $p \in [M]$, $\mu_1^p = \frac12 + \delta$, $\mu_2^p = \frac12$. 
This is a $0$-MPMAB instance, hence an $\epsilon$-MPMAB problem instance.
Also, note that $\epsilon$ is at least  four times larger than the gaps $\Delta^p_2 = \delta$.
\end{example}

\begin{claim}
\label{claim:delta-small}
For the above example, any sublinear regret algorithm for the $\epsilon$-MPMAB problem must have $\Omega(\frac{M \ln T}{\delta})$ regret on this instance, 
matching the \inducb regret upper bound.
\end{claim}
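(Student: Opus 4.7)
The plan is to apply a Lai--Robbins/Bretagnolle--Huber change-of-measure argument, separately for each player, and then sum the per-player lower bounds.

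First, for each fixed player $p \in [M]$, construct an alternative Bernoulli $\epsilon$-MPMAB instance $\nu^{(p)}$ by raising $\mu_2^p$ from $\tfrac12$ to $\tfrac12 + 2\delta$ and leaving every other reward distribution unchanged. I need to verify two things: (i) $\nu^{(p)}$ is still a valid $\epsilon$-MPMAB instance, which holds because the only pairwise mean gaps that change are those on arm $2$ involving player $p$, and they have size $2\delta \le \epsilon/2 < \epsilon$; and (ii) under $\nu^{(p)}$, arm $2$ is the unique optimal arm for player $p$ with suboptimality gap $\delta$ on arm $1$, while for every other player $q \neq p$ the optimal arm is still arm $1$ with gap $\delta$ on arm $2$.

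Next, let $A_p = \{n_1^p(T) \ge T/2\}$. Under the original instance $\nu$, arm $1$ is optimal for player $p$, so player $p$'s regret is at least $\tfrac{\delta T}{2}\, \Pr_\nu(A_p^c)$; under $\nu^{(p)}$, arm $2$ is optimal for player $p$, so player $p$'s regret is at least $\tfrac{\delta T}{2}\, \Pr_{\nu^{(p)}}(A_p)$. The sublinear regret assumption $\EE[\Rcal(T)] \le C T^{1-\alpha}$ (applied to both instances) then forces both $\Pr_\nu(A_p^c)$ and $\Pr_{\nu^{(p)}}(A_p)$ to be $O\bigl(1/(\delta T^\alpha)\bigr)$. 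I then apply the Bretagnolle--Huber inequality
\[
\Pr_\nu(A_p^c) + \Pr_{\nu^{(p)}}(A_p) \;\ge\; \tfrac12 \exp\bigl(-\KL(\PP_\nu \,\|\, \PP_{\nu^{(p)}})\bigr),
\]
which yields $\KL(\PP_\nu \,\|\, \PP_{\nu^{(p)}}) \ge \alpha \ln T - O(1)$ for all large $T$.

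Now I apply the divergence-decomposition lemma: since $\nu$ and $\nu^{(p)}$ differ only in $\Dcal_2^p$, and all rewards are conditionally independent across $(p,i,\text{pull count})$ given histories,
\[
\KL(\PP_\nu \,\|\, \PP_{\nu^{(p)}}) \;=\; \EE_\nu\!\bigl[n_2^p(T)\bigr]\cdot \KL\!\bigl(\Ber(\tfrac12)\,\big\|\,\Ber(\tfrac12+2\delta)\bigr).
\]
Combined with the standard bound $\KL(\Ber(\tfrac12)\|\Ber(\tfrac12+2\delta)) = O(\delta^2)$ for $\delta \le 1/8$, this gives $\EE_\nu[n_2^p(T)] \ge \Omega(\ln T/\delta^2)$, whence player $p$'s expected regret under $\nu$ is $\delta \cdot \EE_\nu[n_2^p(T)] \ge \Omega(\ln T/\delta)$. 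Summing over $p \in [M]$ yields the claimed $\Omega(M \ln T/\delta)$ collective regret.

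The step I expect to require the most care is the divergence decomposition in the multi-player setting: because all players observe each other's rewards, one must carefully argue that the only observational coordinates whose conditional distribution differs between $\PP_\nu$ and $\PP_{\nu^{(p)}}$ are rewards received by player $p$ from arm $2$, so that the standard chain-rule decomposition collapses to a single term weighted by $\EE_\nu[n_2^p(T)]$. Everything else is a routine instantiation of the classical lower-bound template.
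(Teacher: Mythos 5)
Your proof is correct, but it takes a different route from the paper's. The paper proves Claim~\ref{claim:delta-small} in three lines by invoking Theorem~\ref{thm:gap_dep_lb}: it checks that the instance satisfies the theorem's hypotheses (all $\mu_i^p \in [\tfrac{15}{32},\tfrac{17}{32}]$, and $\Ical_{\epsilon/20}=\emptyset$ because $\Delta_2^p = \delta \le 5\cdot\tfrac{\epsilon}{20}$, so arm $2$ lands in $\Ical_{\epsilon/20}^C$ and contributes the per-player term $M\ln(T^\alpha\delta/C)/\delta$). You instead give a self-contained change-of-measure argument: one alternative instance $\nu^{(p)}$ per player, validity of $\nu^{(p)}$ as an $\epsilon$-MPMAB instance via $2\delta \le \epsilon/2$, Bretagnolle--Huber on the event $\{n_1^p(T)\ge T/2\}$, divergence decomposition collapsing to $\EE[n_2^p(T)]\cdot\KL(\Ber(\tfrac12)\,\|\,\Ber(\tfrac12+2\delta))$, and the quadratic KL bound. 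This is essentially a re-derivation, specialized to the two-arm instance, of item~\ref{item:nearopt-lb} in the paper's proof of Theorem~\ref{thm:gap_dep_lb_gen} (the per-player lower bound for arms in $\Ical_{\epsilon/20}^C$), including the same choice of perturbation size $2\Delta$ and the same auxiliary lemmas. What the paper's route buys is brevity and reuse of an already-established general theorem; what yours buys is transparency---it makes explicit why \emph{each individual player} must spend $\Omega(\ln T/\delta^2)$ pulls on arm $2$ (because the perturbation only touches that player's own reward distribution, so other players' data cannot help distinguish the instances), which is exactly the intuition the paper states informally after Claim~\ref{claim:delta-small}. Your flagged concern about the divergence decomposition in the multi-player setting is handled correctly: since only $\Dcal_2^p$ changes, the chain rule leaves a single term weighted by $\EE_\nu[n_2^p(T)]$, which is how the paper itself applies \citet[Lemma~15.1]{lattimore2020bandit}.
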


The claim follows from Theorem~\ref{thm:gap_dep_lb} in Section~\ref{sec:lb_known_eps};
see Appendix~\ref{appendix:claim2} for details.
The intuition is that 
any sublinear regret $\epsilon$-MPMAB algorithm must have $\Omega\del{\frac{\ln T}{\delta^2}}$ pulls of arm 2 from every player; otherwise, as $\delta$ is small compared to $\epsilon$, we can create a new $\epsilon$-MPMAB instance such that arm 2 is optimal for some player and is sufficiently indistinguishable from the original MPMAB problem, causing the algorithm to fail its sublinear regret guarantee.

Complementary to the above negative result, in the next section, we establish algorithms and sufficient conditions for the players to take advantage of the auxiliary data to achieve better regret guarantees.

\section{$\epsilon$-MPMAB with Known $\epsilon$}
\label{sec:known_eps}
In this section, we study the $\epsilon$-MPMAB problem with the dissimilarity parameter $\epsilon$ known to the players.
We first present our main algorithm $\robustagg(\epsilon)$ in Section~\ref{sec:mpmab_alg}; Section~\ref{sec:analysis} shows its regret guarantees; Finally, Section~\ref{sec:lb_known_eps} provides nearly matching regret lower bounds. Our proofs are deferred to Appendices~\ref{appendix:proof_of_fact},~\ref{appendix:upper_bounds_known_eps} and~\ref{appendix:lb}.

\subsection{Algorithm:
$\robustagg(\epsilon)$}
\label{sec:mpmab_alg}

\begin{algorithm*}[t]
\SetAlgoLined
\SetKwInOut{Input}{Input}
\SetKw{Return}{return}
 \Input{Distribution dissimilarity parameter $\epsilon \in [0,1]$\;}
 \textbf{Initialization:} Set $n^p_i = 0$ for all $p \in [M]$ and all $i \in [K]$.\\
 
 \For{$t = 1, 2 \ldots, T$}{
    \For{$p \in [M]$}{
        \For{$i \in [K]$}{
        
            Let $m^p_i = \sum_{q \in [M]: q \neq p} n^q_i$\;
            \label{line:ucb-start}
            
            Let $\wbar{n^p_i} = \max(1, n^p_i)$ and $\wbar{m^p_i} = \max(1, m^p_i)$\; 
                        
            Let
            $$\zeta_i^p(t) = \frac{1}{\wbar{n^p_i}} \sum_{\substack{s < t\\ i^p_s = i}} r^p_s,
            \ \eta_i^p(t) =  \frac{1}{\wbar{m^p_i}} \sum_{\substack{q \in [M] \\ q \neq p}}\sum_{\substack{s < t\\ i^q_s = i}} r^q_s, 
            \text{and}\ \kappa_i^p(t, \lambda) = \lambda \zeta_i^p(t) + (1-\lambda) \eta_i^p(t);
            $$
            \label{line:emp-mean}
            
            Let $F(\wbar{n^p_i}, \wbar{m^p_i}, \lambda, \epsilon ) = 8\sqrt{13\ln T\left[\frac{\lambda^2}{\wbar{n^p_i}} + \frac{(1-\lambda)^2}{\wbar{m^p_i}}\right]} + (1-\lambda)\epsilon$\; 
            \label{line:dev-bound}
            
            Compute $\lambda^* = \argmin_{\lambda \in [0,1]} F(\wbar{n^p_i},\wbar{m^p_i}, \lambda, \epsilon)$\;
            \label{line:opt-lambda}

            Compute an upper confidence bound of the reward of arm $i$ for player $p$:
            $$\UCB^p_i(t) = \kappa_i^p(t, \lambda^*) + F(\wbar{n^p_i}, \wbar{m^p_i}, \lambda^*, \epsilon).$$
            \label{line:ucb-end}
        }
        Let $i^p_t = \text{argmax}_{i \in [K]} \text{UCB}^p_i(t)$\;
        Player $p$ pulls arm $i^p_t$ and observes reward $r^p_t$\;
   }
   \For{$p \in [M]$}{
       Let $i = i^p_t$ and set $n^p_{i} = n^p_{i} + 1$.
   }
 }
 \caption{$\robustagg(\epsilon)$: Robust learning in $\epsilon$-MPMAB}
 \label{alg: mpmab}
\end{algorithm*}

We present $\robustagg(\epsilon)$, %
an algorithm that adaptively and robustly aggregates rewards collected by different players in $\epsilon$-MPMAB problem instances, given dissimilarity $\epsilon$ as an input parameter. 

Intuitively,
in any round, a player may decide to take advantage of data from other players who have similar reward distributions.
Deciding how to use auxiliary data is tricky---on the one hand, they can help reduce variance and get a better mean reward estimate, but on the other hand, if the dissimilarity between players' reward distributions is large, auxiliary data can substantially bias the estimate. Our algorithm is built upon this insight of balancing bias and variance. A similar tradeoff in offline transfer learning for classification is studied in the work of \citet{bbckpv10}; we discuss %
the connection and differences between our work and theirs in Section~\ref{sec:weighting_samples}.

Algorithm \ref{alg: mpmab} provides a pseudocode of $\robustagg(\epsilon)$.
Specifically, it builds on the classic UCB-$1$ algorithm \citep{acf02}: for each player $p$ and arm $i$, it maintains an upper confidence bound $\UCB_i^p(t)$ for mean reward $\mu^p_i$ over time (lines~\ref{line:ucb-start} to~\ref{line:ucb-end}), such that with high probability, $\mu_i^p \leq \UCB_i^p(t)$, for all $t$.

To achieve the best regret guarantees, we would like our confidence bounds on $\mu_i^p$ to be as tight as possible. To this end, we consider a family of confidence intervals for $\mu_i^p$, parameterized by a weighting factor $\lambda \in [0,1]$: $[\kappa_i^p(t, \lambda) \pm F(\wbar{n^p_i}, \wbar{m^p_i}, \lambda, \epsilon)]$.

In the above confidence interval formula, $\kappa_i^p(t, \lambda)$ estimates $\mu_i^p$ by taking a convex combination of $\xi_i^p(t)$ and $\eta_i^p(t)$, the empirical mean reward of arm $i$ based on the player's own samples and the auxiliary samples, respectively  (line~\ref{line:emp-mean}). 
The width $F(\wbar{n^p_i}, \wbar{m^p_i}, \lambda, \epsilon)$ is a high-probability upper bound on $\abs{\kappa_i^p(t, \lambda)-\mu_i^p}$ (line~\ref{line:dev-bound}). Varying $\lambda$ reveals the aforementioned bias-variance tradeoff: the first term, $8\sqrt{13\ln T [\frac{\lambda^2}{\wbar{n^p_i}} + \frac{(1-\lambda)^2}{\wbar{m^p_i}}]}$, is a high probability upper bound on the deviation of $\kappa_i^p(t, \lambda)$ from its expectation $\EE[\kappa_i^p(t, \lambda)]$; the second term, $(1-\lambda)\epsilon$, is an upper bound on the difference between $\EE[\kappa_i^p(t, \lambda)]$ and $\mu_i^p$.
We choose $\lambda^* \in [0,1]$ to minimize the width of our confidence interval for $\mu_i^p$  (line~\ref{line:opt-lambda}), similar to the calculation in \citep[Section 6]{bbckpv10}.\footnote{See Appendix~\ref{appendix:lambda_star} for an analytical solution to the optimal weighting factor $\lambda^*$.}

\subsection{Regret analysis}
\label{sec:analysis}

We first define the notion of \textit{subpar arms}. %
Let $$\Ical_\epsilon = \cbr{i: \exists p \in [M], \mu_*^p - \mu_i^p > 5\epsilon}$$ %
be the set of subpar arms for an $\epsilon$-MPMAB problem instance. Intuitively, $\Ical_\epsilon$ contains the set of ``easier'' arms for which data aggregation between players can be \textit{effective}.
For each arm $i \in \Ical_\epsilon$, the following fact shows that the gap $\Delta_i^p = \mu_*^p - \mu_i^p$ is sufficiently larger than the dissimilarity parameter $\epsilon$ for all players $p \in [M]$. This allows $\robustagg(\epsilon)$ to exploit the ``easiness'' of these arms through data aggregation across players, thereby reducing avoidable individual explorations.

\begin{fact}
\label{fact:optimal_arm}
$\abs{\Ical_\epsilon} \leq K-1$. In addition, for each arm $i \in \Ical_\epsilon$, $\Delta_i^{\min} > 3\epsilon$; in other words,  for all players $p$ in $[M]$, $\Delta_i^p = \mu_*^p - \mu_i^p > 3\epsilon$; consequently, arm $i$ is suboptimal for all players $p$ in $[M]$. 
\end{fact}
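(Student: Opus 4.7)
My plan is to prove the two conclusions in reverse of how they are stated, since the second one (the gap lower bound) is the substantive claim and the cardinality bound follows as a one-line corollary.

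First I would fix an arbitrary arm $i \in \Ical_\epsilon$ and show $\Delta_i^q > 3\epsilon$ for every player $q \in [M]$. By the definition of $\Ical_\epsilon$, there exists some witness player $p \in [M]$ with $\mu_*^p - \mu_i^p > 5\epsilon$; let $i^p_*$ denote an optimal arm for $p$, so that $\mu_{i^p_*}^p = \mu_*^p$. For any other player $q$, I would apply Definition~\ref{assumption:epsilon} twice, once to the pair of players $(p,q)$ on arm $i^p_*$ to get $\mu_{i^p_*}^q \geq \mu_{i^p_*}^p - \epsilon$, and once on arm $i$ to get $\mu_i^q \leq \mu_i^p + \epsilon$. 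Subtracting and using $\mu_*^q \geq \mu_{i^p_*}^q$ yields
\[
\Delta_i^q \;=\; \mu_*^q - \mu_i^q \;\geq\; \mu_{i^p_*}^q - \mu_i^q \;\geq\; (\mu_*^p - \mu_i^p) - 2\epsilon \;>\; 5\epsilon - 2\epsilon \;=\; 3\epsilon.
\]
Taking the minimum over $q \in [M]$ then gives $\Delta_i^{\min} > 3\epsilon$, and in particular $\Delta_i^q > 0$ for every $q$, so arm $i$ is suboptimal for every player.

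The cardinality bound $\abs{\Ical_\epsilon} \leq K-1$ then follows immediately: pick any player $p_0 \in [M]$, and observe that its optimal arm $i^{p_0}_*$ satisfies $\Delta_{i^{p_0}_*}^{p_0} = 0$, so by the just-established contrapositive (every arm in $\Ical_\epsilon$ is strictly suboptimal for every player) we have $i^{p_0}_* \notin \Ical_\epsilon$. Hence at least one of the $K$ arms lies outside $\Ical_\epsilon$.

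There is no real obstacle here; the only thing to be careful about is the arithmetic of the triangle-inequality transfer. A gap larger than $5\epsilon$ at the witness player $p$ shrinks by at most $2\epsilon$ (one $\epsilon$ from each arm) when passed to another player $q$, leaving a slack of $3\epsilon$, which is exactly the threshold quoted in the statement. This $5\epsilon$-versus-$3\epsilon$ design of the thresholds in the definition of $\Ical_\epsilon$ is what makes the argument go through cleanly, and it is ultimately what will let $\robustagg(\epsilon)$ benefit from cross-player aggregation on subpar arms later in the regret analysis.
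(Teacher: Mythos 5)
Your proof is correct and follows essentially the same route as the paper's: the paper first establishes the two-sided bound $|\Delta_i^p - \Delta_i^q| \le 2\epsilon$ as a standalone fact (via $|\mu_*^p - \mu_*^q| \le \epsilon$ plus the triangle inequality) and then applies it to the witness player, while you carry out the same $2\epsilon$-loss transfer directly and one-sidedly. The cardinality argument (an arm optimal for some fixed player cannot be subpar) is identical.
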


We now present regret guarantees of $\robustagg(\epsilon)$. %

\begin{theorem}
\label{thm:gap_dep_ub}
Let $\robustagg(\epsilon)$ run on an $\epsilon$-MPMAB problem instance for $T$ rounds. 
Then, its expected collective regret satisfies
\begin{align*}
\mathbb{E}[\mathcal{R}(T)] \le \order \Bigg(
& \sum_{i \in \Ical_\epsilon} \del{ \frac{\ln T}{\Delta_i^{\min}} + M \Delta_i^{\min}} + \\
& \sum_{i \in \Ical^C_\epsilon}
\sum_{p \in [M]: \Delta_i^p > 0}
\frac{\ln T}{\Delta_i^p}
 \Bigg).
\end{align*}
\end{theorem}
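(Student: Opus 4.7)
The plan is to follow a standard UCB regret template, splitting the contribution of each arm by whether it lies in $\mathcal{I}_\epsilon$, and exploiting the fact that $\lambda^*$ minimizes the confidence width over $\lambda \in [0,1]$ to obtain two complementary bounds on the pull counts.

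First, I would define a clean event $\mathcal{E}$ on which, for every round $t$, player $p$, arm $i$, and the data-dependent choice $\lambda^* \in [0,1]$, one has $|\kappa_i^p(t,\lambda^*) - \mu_i^p| \le F(\overline{n^p_i}(t), \overline{m^p_i}(t), \lambda^*, \epsilon)$. The square-root term in $F$ arises from a Hoeffding bound for the convex combination $\lambda \zeta_i^p + (1-\lambda)\eta_i^p$ of two independent empirical averages, whose effective variance proxy is $\lambda^2/\overline{n^p_i} + (1-\lambda)^2/\overline{m^p_i}$; the $(1-\lambda)\epsilon$ term absorbs the bias $|\mathbb{E}[\kappa_i^p(t,\lambda)] - \mu_i^p|$, which is at most $(1-\lambda)\max_{q \neq p} |\mu_i^q - \mu_i^p| \le (1-\lambda)\epsilon$ by the $\epsilon$-dissimilarity assumption. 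A union bound over $t \in [T]$ and over the finitely many realizable count values $(n,m)$ makes $\mathbb{P}(\mathcal{E}^C)$ polynomially small in $T$; crucially, $\lambda^*$ is a deterministic function of the counts, so no continuous union over $\lambda$ is required, and the regret incurred on $\mathcal{E}^C$ is negligible.

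Next I apply the standard UCB deduction: on $\mathcal{E}$, whenever player $p$ pulls a suboptimal arm $i$ at round $t$, the inequalities $\text{UCB}_i^p(t) \ge \text{UCB}_{i^p_*}^p(t) \ge \mu^p_*$ together with $\text{UCB}_i^p(t) \le \mu_i^p + 2F$ force $\Delta_i^p \le 2F(\overline{n^p_i}(t), \overline{m^p_i}(t), \lambda^*, \epsilon)$. Because $\lambda^*$ is a minimizer over $[0,1]$, this width is upper bounded by both the value at $\lambda=1$, which is $O(\sqrt{\ln T/\overline{n^p_i}(t)})$, and the value at $\lambda=0$, which is $O(\sqrt{\ln T/\overline{m^p_i}(t)}) + \epsilon$. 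Inverting yields two simultaneous bounds: $n^p_i(t) = O(\ln T/(\Delta_i^p)^2)$ always, and $m^p_i(t) = O(\ln T/(\Delta_i^p - 2\epsilon)^2)$ whenever $\Delta_i^p > 2\epsilon$.

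For arms $i \in \mathcal{I}^C_\epsilon$ I use only the first bound: each player $p$ with $\Delta_i^p > 0$ pulls arm $i$ at most $O(\ln T/(\Delta_i^p)^2)$ times, contributing $O(\ln T/\Delta_i^p)$ each, which sums to the second term of the theorem. For arms $i \in \mathcal{I}_\epsilon$, Fact~\ref{fact:optimal_arm} gives $\Delta_i^p > 3\epsilon$ for every $p$, so $\Delta_i^p - 2\epsilon \ge \Delta_i^p/3$ and both bounds collapse to $O(\ln T/(\Delta_i^{\min})^2)$. Letting $\tau$ denote the last round at which any player pulls arm $i$ and $p^\star$ such a puller, the total count satisfies $N_i \le N_i(\tau) + M = n^{p^\star}_i(\tau) + m^{p^\star}_i(\tau) + M = O(\ln T/(\Delta_i^{\min})^2) + M$, where the $+M$ accounts for the up-to-$M$ simultaneous pulls in round $\tau$. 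To convert $N_i$ into collective regret I use $|\Delta_i^p - \Delta_i^q| \le 2\epsilon$, which follows from $|\mu_*^p - \mu_*^q| \le \epsilon$ (since an optimal arm for one player is within $\epsilon$ of optimal for any other) together with $|\mu_i^p - \mu_i^q| \le \epsilon$; combined with $\Delta_i^{\min} > 3\epsilon$ this yields $\Delta_i^{\max} = O(\Delta_i^{\min})$, so arm $i$'s contribution to collective regret is at most $\Delta_i^{\max} N_i = O(\ln T/\Delta_i^{\min} + M\Delta_i^{\min})$.

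The main obstacle will be the joint use of both inequalities $F(\lambda^*) \le F(0)$ and $F(\lambda^*) \le F(1)$ on subpar arms: together they bound the counts of both the puller's own samples and the aggregated auxiliary samples at the last pull, and it is precisely this joint control that yields an $O(\ln T/(\Delta_i^{\min})^2) + M$ bound on $N_i$ rather than a per-player bound that scales linearly in $M$. Secondary subtleties are the last-round accounting that produces the additive $M$ term and the across-player gap comparison $\Delta_i^{\max} = O(\Delta_i^{\min})$ on subpar arms, which is what makes the $\Delta_i^{\max}/\Delta_i^{\min}$ factor benign.
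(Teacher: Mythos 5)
Your proposal is correct and follows essentially the same route as the paper's proof: a clean concentration event, the split into subpar arms $\Ical_\epsilon$ (where own and auxiliary counts are controlled jointly to get a collective $O(\ln T/(\Delta_i^{\min})^2)+M$ pull bound) versus $\Ical_\epsilon^C$ (per-player UCB-1-style bounds via $\lambda=1$), and the conversion $\Delta_i^{\max}\le 2\Delta_i^{\min}$ from $|\Delta_i^p-\Delta_i^q|\le 2\epsilon$ and $\Delta_i^{\min}>3\epsilon$. The only cosmetic differences are that the paper bounds $F(\lambda^*)$ by plugging in $\lambda=\wbar{n^p_i}/(\wbar{n^p_i}+\wbar{m^p_i})$ to control the total count directly (rather than your two endpoint evaluations at $\lambda\in\{0,1\}$, which yield the same order), and it uses Freedman's inequality in place of your Hoeffding-plus-union-over-counts device.
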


The first term in the above bound shows that the collective regret incurred by the players for the subpar arms $\Ical_\epsilon$ and the second term for arms in $\Ical_\epsilon^C = [K] \setminus \Ical_\epsilon$. Observe that for each subpar arm, the regret of the players \textit{as a group} can be upper-bounded by $O\rbr{\frac{\ln T}{\Delta^{\min}_i} + M \Delta^{\min}_i}$, whereas for each arm in $\Ical^C_\epsilon$, the regret on \textit{each} player is $O({\frac{\ln T}{\Delta^{p}_i}})$ unless $\Delta^p_i = 0$.

\begin{fact}
\label{fact:i-eps-inv-gap-short}
For any $i \in \Ical_{\epsilon}$, 
$\frac{1}{\Delta_i^{\min}} \leq \frac{2}{M} \sum_{p \in [M]} \frac{1}{\Delta_i^p}$.
\end{fact}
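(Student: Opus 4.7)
The plan is to first bound the pairwise discrepancy $|\Delta_i^p - \Delta_i^q|$ in terms of $\epsilon$, and then use the fact that $i \in \Ical_\epsilon$ implies $\Delta_i^{\min} > 3\epsilon$ (Fact~\ref{fact:optimal_arm}) to show that every $\Delta_i^p$ is within a constant multiplicative factor of $\Delta_i^{\min}$. The inequality then follows by term-by-term inversion and averaging.

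Concretely, first I would show that the optimal mean rewards are close across players: letting $a^p, a^q$ denote some optimal arms for players $p$ and $q$, Definition~\ref{assumption:epsilon} gives
\[
\mu_*^p = \mu_{a^p}^p \ge \mu_{a^q}^p \ge \mu_{a^q}^q - \epsilon = \mu_*^q - \epsilon,
\]
and symmetrically, so $|\mu_*^p - \mu_*^q| \le \epsilon$. Combining this with $|\mu_i^p - \mu_i^q| \le \epsilon$ via the triangle inequality yields
\[
|\Delta_i^p - \Delta_i^q| \le 2\epsilon \quad \text{for all } p,q \in [M].
\]
In particular, for every $p$, $\Delta_i^p \le \Delta_i^{\min} + 2\epsilon$.

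Next I invoke $i \in \Ical_\epsilon$: by Fact~\ref{fact:optimal_arm}, $\Delta_i^{\min} > 3\epsilon$, so $2\epsilon < \tfrac{2}{3}\Delta_i^{\min} < \Delta_i^{\min}$. Substituting,
\[
\Delta_i^p < \Delta_i^{\min} + \Delta_i^{\min} = 2\Delta_i^{\min},
\]
and since $\Delta_i^p > 3\epsilon \ge 0$ we may invert to get $\tfrac{1}{\Delta_i^p} > \tfrac{1}{2\Delta_i^{\min}}$. Summing over $p \in [M]$ and multiplying by $2/M$ delivers $\tfrac{2}{M}\sum_{p \in [M]}\tfrac{1}{\Delta_i^p} > \tfrac{1}{\Delta_i^{\min}}$, which is the claimed inequality (with strict inequality, hence also $\le$).

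There is no real obstacle here; the only subtlety is the clean derivation of $|\mu_*^p - \mu_*^q| \le \epsilon$ from the per-arm closeness assumption, which requires the two-step sandwich above rather than a direct application of Definition~\ref{assumption:epsilon}. The rest is elementary manipulation, and the constant $2$ in the statement comes out with room to spare (the argument actually gives a bound of $\tfrac{5}{3}$ in place of $2$ if one keeps the tighter estimate $2\epsilon < \tfrac{2}{3}\Delta_i^{\min}$).
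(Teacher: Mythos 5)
Your proof is correct and follows essentially the same route as the paper's: the paper first establishes $|\Delta_i^p - \Delta_i^q| \le 2\epsilon$ via the same two-step sandwich on the optimal means (its Fact~\ref{fact:delta_difference}), then combines this with $\Delta_i^{\min} > 3\epsilon$ to get $\Delta_i^p < 2\Delta_i^{\min}$ and inverts and averages exactly as you do. Your closing observation that the constant could be tightened to $5/3$ is a valid but inessential refinement.
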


\paragraph{Fallback guarantee.} The regret guarantee of $\robustagg(\epsilon)$ by \Cref{thm:gap_dep_ub} is always no worse than that of \inducb by a constant factor, as from Fact~\ref{fact:i-eps-inv-gap-short}, for all $i$ in $\Ical_{\epsilon}$, $\frac{\ln T}{\Delta_i^{\min}} + M \Delta_i^{\min} = O\del{\sum_{p \in [M]} \frac{\ln T}{\Delta_i^p}}$.

\paragraph{Two extreme cases of $\abs{\mathcal{I}_\epsilon}$.} If $\mathcal{I}_\epsilon = \O$, in which case we do not expect data aggregation across players to be beneficial, 
the above bound can be simplified to:
\begin{align*}
\mathbb{E}[\mathcal{R}(T)] \le
\order\rbr{
\sum_{i \in [K]}
\sum_{p \in [M]: \Delta_i^p > 0}
\frac{\ln T}{\Delta_i^p}}. \label{eq:Ical_empty}
\end{align*}

In contrast, when $\Ical_\epsilon$ has a larger size, namely, more arms admit data aggregation across players, $\robustagg(\epsilon)$ has an improved regret bound. The following corollary gives regret bounds in the most favorable case when $\Ical_\epsilon$ has size $K-1$. It is not hard to see that, in this case, $\Ical^C_\epsilon$ is equal to a singleton set $\cbr{i_*}$, where arm $i_*$ is optimal for all players $p$.

\begin{corollary}
\label{cor:Ical0}
Let $\robustagg(\epsilon)$ run on an $\epsilon$-MPMAB problem instance with $|\mathcal{I}_\epsilon| = K - 1$ for $T$ rounds. 
Then, its expected collective regret satisfies
\[  
\mathbb{E}[\mathcal{R}(T)] \le \order\rbr{
\sum_{i \neq i_*} \frac{\ln T}{\Delta_i^{\min}}
+
M \sum_{i \neq i_*} \Delta_i^{\min}}.
\]
\end{corollary}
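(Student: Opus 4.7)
The plan is to derive Corollary~\ref{cor:Ical0} as a direct specialization of Theorem~\ref{thm:gap_dep_ub}, using Fact~\ref{fact:optimal_arm} to pin down the structure of $\Ical_\epsilon^C$ when $|\Ical_\epsilon| = K-1$.

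First I would observe that since $|\Ical_\epsilon| = K-1$, the complement $\Ical_\epsilon^C = [K] \setminus \Ical_\epsilon$ is a singleton; call its unique element $i_*$. Next I would argue that $i_*$ is simultaneously optimal for every player $p \in [M]$. By Fact~\ref{fact:optimal_arm}, every arm $i \in \Ical_\epsilon$ satisfies $\Delta_i^p > 3\epsilon \geq 0$ for all $p$, so every arm in $\Ical_\epsilon$ is suboptimal for every player. Hence each player's optimal arm $i_*^p$ must lie in $\Ical_\epsilon^C = \{i_*\}$, forcing $i_*^p = i_*$, and therefore $\Delta_{i_*}^p = 0$ for all $p \in [M]$.

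With this structural observation in hand, I would apply Theorem~\ref{thm:gap_dep_ub} and simplify. The second term in the theorem's bound,
\[
\sum_{i \in \Ical_\epsilon^C} \sum_{p \in [M]: \Delta_i^p > 0} \frac{\ln T}{\Delta_i^p},
\]
collapses: $\Ical_\epsilon^C = \{i_*\}$ and the inner set $\{p : \Delta_{i_*}^p > 0\}$ is empty, so the whole double sum vanishes. The first term becomes $\sum_{i \in \Ical_\epsilon} \bigl(\frac{\ln T}{\Delta_i^{\min}} + M \Delta_i^{\min}\bigr) = \sum_{i \neq i_*} \bigl(\frac{\ln T}{\Delta_i^{\min}} + M \Delta_i^{\min}\bigr)$, which is exactly the claimed bound.

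There is no real obstacle here; the argument is essentially bookkeeping once one recognizes that $|\Ical_\epsilon| = K-1$ together with Fact~\ref{fact:optimal_arm} forces $\Ical_\epsilon^C$ to be a commonly-optimal singleton. The only subtlety worth flagging in the write-up is to invoke Fact~\ref{fact:optimal_arm} explicitly to rule out the possibility that some player's optimal arm lies in $\Ical_\epsilon$, since that is what makes the second sum in Theorem~\ref{thm:gap_dep_ub} disappear.
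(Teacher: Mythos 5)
Your proof is correct and matches the paper's (implicit) argument exactly: the paper likewise notes that $|\Ical_\epsilon| = K-1$ forces $\Ical_\epsilon^C$ to be a singleton $\{i_*\}$ that is optimal for all players (via Fact~\ref{fact:optimal_arm}), so the second sum in Theorem~\ref{thm:gap_dep_ub} vanishes and the first sum is the claimed bound. No issues.
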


It can be observed that, compared to the \inducb baseline, under the assumption that $|\mathcal{I}_\epsilon| = K - 1$, $\robustagg(\epsilon)$ improves the regret bound by nearly a factor of $M$: if we set aside the $O\del{M \sum_{i \neq i_*} \Delta_i^{\min} }$ term, which is of lower order than the rest under the mild assumption that $M = O\del{ \min_{i \neq i^*} \frac{\ln T}{(\Delta_i^{\min})^2} }$, then the expected collective regret in Corollary \ref{cor:Ical0} is a factor of $O(\frac{1}{M})$ times that of \inducb, in light of Fact~\ref{fact:i-eps-inv-gap-short}.

\paragraph{Gap-independent upper bound.}
We now provide an upper bound on the expected collective regret that is independent of the gaps $\Delta^p_i$'s.
\begin{theorem}
\label{thm:gap_ind_upper}
Let $\robustagg(\epsilon)$ run on an $\epsilon$-MPMAB problem instance for $T$ rounds. Then its expected collective regret satisfies
\[ 
\EE[\mathcal{R}(T)] \le
\tilde{\order}\rbr{
\sqrt{\abr{\Ical_\epsilon} M T}
+ 
M \sqrt{(\abr{\Ical^C_\epsilon}-1) T}
+ 
M \abr{\Ical_\epsilon}
}.
\]
\end{theorem}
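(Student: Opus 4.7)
The plan is to convert the gap-dependent bound of Theorem~\ref{thm:gap_dep_ub} into a gap-independent one via threshold arguments, handling subpar arms and non-subpar arms separately.

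I would begin with a preliminary observation about the gaps. For any arm $i$ and any players $p,q$, the difference $|\Delta_i^p - \Delta_i^q| \le 2\epsilon$: indeed, $|\mu_i^p - \mu_i^q| \le \epsilon$ directly, and $|\mu_*^p - \mu_*^q| \le \epsilon$ follows by evaluating each player's optimal reward at the other's optimal arm and using $\epsilon$-similarity. Combined with Fact~\ref{fact:optimal_arm} (which gives $\Delta_i^{\min} > 3\epsilon$ for $i \in \Ical_\epsilon$), this yields $\Delta_i^{\max} < \tfrac{5}{3}\Delta_i^{\min}$ for every subpar arm. The same estimate shows that each player's optimal arm $i_*^p$ lies in $\Ical_\epsilon^C$, since $\Delta_{i_*^p}^q \le 2\epsilon \le 5\epsilon$ for all $q$.

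For subpar arms $i \in \Ical_\epsilon$, I would combine Theorem~\ref{thm:gap_dep_ub}'s per-arm bound $O(\tfrac{\ln T}{\Delta_i^{\min}} + M\Delta_i^{\min})$ with the trivial bound $\sum_p \Delta_i^p n_i^p \le \Delta_i^{\max} N_i = O(\Delta_i^{\min} N_i)$, where $N_i = \sum_p n_i^p$. The crucial observation is the shared-budget constraint $\sum_{i \in \Ical_\epsilon} N_i \le MT$. Fix a threshold $\Delta^*$: for arms with $\Delta_i^{\min} \ge \Delta^*$ use the gap-dependent bound, which is at most $\tfrac{\ln T}{\Delta^*} + M$ per arm (using $\Delta_i^{\min} \le 1$); for the remaining arms use $\Delta_i^{\min} N_i \le \Delta^* N_i$. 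Summing and invoking the shared budget yields at most $|\Ical_\epsilon|\bigl(\tfrac{\ln T}{\Delta^*} + M\bigr) + \Delta^* MT$. Optimizing at $\Delta^* = \sqrt{|\Ical_\epsilon|\ln T / (MT)}$ gives $\tilde O(\sqrt{|\Ical_\epsilon| MT}) + O(M|\Ical_\epsilon|)$.

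For non-subpar arms $i \in \Ical_\epsilon^C$, I would work per player. For each $p$, the contribution is $\sum_{i \in \Ical_\epsilon^C,\, \Delta_i^p > 0} \min\bigl(\tfrac{\ln T}{\Delta_i^p},\, \Delta_i^p n_i^p\bigr)$; applying a per-player threshold together with the single-player budget $\sum_{i \in \Ical_\epsilon^C} n_i^p \le T$ yields $\tilde O(\sqrt{S_p T})$, where $S_p$ is the number of arms in $\Ical_\epsilon^C$ suboptimal for player $p$. Since $i_*^p \in \Ical_\epsilon^C$, $S_p \le |\Ical_\epsilon^C| - 1$. Summing over players and applying Cauchy--Schwarz with $\sum_p S_p \le M(|\Ical_\epsilon^C|-1)$ gives $\tilde O\bigl(M\sqrt{(|\Ical_\epsilon^C|-1)T}\bigr)$. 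Adding the two pieces produces the claimed bound.

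The main subtlety is the subpar-arm step: a naive per-arm AM--GM on $\min(\tfrac{\ln T}{\Delta_i^{\min}},\, \Delta_i^{\min} MT)$ only yields $|\Ical_\epsilon|\sqrt{MT}$, whereas the desired $\sqrt{|\Ical_\epsilon| MT}$ requires pooling the budget $\sum_{i \in \Ical_\epsilon} N_i \le MT$ across all subpar arms under a single threshold. This pooling is only possible because $\Delta_i^{\max} = O(\Delta_i^{\min})$ for subpar arms, which is the place where the structural facts about $\Ical_\epsilon$ are truly used beyond Theorem~\ref{thm:gap_dep_ub}.
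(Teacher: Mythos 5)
Your proposal is correct and takes essentially the same route as the paper's proof: decompose the regret into $\Ical_\epsilon$ and $\Ical_\epsilon^C$, invoke the gap-dependent per-arm (resp.\ per-arm-per-player) pull-count bounds underlying Theorem~\ref{thm:gap_dep_ub}, and convert to a gap-independent bound by thresholding against a pooled pull budget, using $\Delta_i^{\max}=O(\Delta_i^{\min})$ on subpar arms exactly as the paper does. The only cosmetic difference is in the $\Ical_\epsilon^C$ term, where you threshold per player and obtain the $\abs{\Ical_\epsilon^C}-1$ factor directly from $i_*^p \in \Ical_\epsilon^C$ plus Cauchy--Schwarz, whereas the paper uses a single global threshold over all (arm, player) pairs and a separate case analysis for $\abs{\Ical_\epsilon^C}=1$; both yield the same bound.
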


Recall that \inducb has a gap-independent bound of $\tilde{\order}\del{M\sqrt{KT}}$.
By algebraic calculations, we can see that when $T = \Omega(KM)$, the regret bound of $\robustagg(\epsilon)$ is a factor of $O\rbr{\max\del{\sqrt{\frac{\abs{\Ical_\epsilon^C}-1}{K}}, \sqrt{\frac1M}} }$ times \inducb's regret bound. Therefore, when $M = \omega(1)$ and $\abs{\Ical_\epsilon^C} = o(K)$, i.e., when there is a large number of players, and an overwhelming portion of subpar arms,
\robustagg has a gap-independent regret bound of strictly lower order than \inducb.

Observe that the above bound has a term $M \sqrt{(\abr{\Ical^C_\epsilon}-1) T}$ with a peculiar dependence on $\abr{\Ical^C_\epsilon}-1$; this is due to the fact that in the special case of $\abr{\Ical_\epsilon} = K-1$, i.e., $\abr{\Ical^C_\epsilon} = 1$, the contribution to the regret from arms in $\Ical_\epsilon^C$ is zero. Indeed, in this case, $\Ical_\epsilon^C$ is a singleton set $\cbr{i_*}$, where arm $i_*$ is optimal for all players.

\subsection{Lower bounds}
\label{sec:lb_known_eps}

\paragraph{Gap-dependent lower bound.}
To complement our gap-dependent upper bound in \Cref{thm:gap_dep_ub}, we now present a gap-dependent lower bound. We show that, for any fixed $\epsilon$, any sublinear regret algorithm for the $\epsilon$-MPMAB problem must have regret guarantees not much better than that of $\robustagg(\epsilon)$ for a large family of $\frac{\epsilon}{2}$-MPMAB problem instances.

\begin{theorem}
\label{thm:gap_dep_lb}
Fix $\epsilon \geq 0$. Let $\Acal$ be an algorithm and $C > 0, \alpha > 0$ be constants, such that $\Acal$ has $C T^{1-\alpha}$ regret in all $\epsilon$-MPMAB environments. 
Then, for any Bernoulli $\frac\epsilon2$-MPMAB instance $\mu = (\mu_i^p)_{i \in [K], p \in [M]}$ such that $\mu_i^p \in [\frac {15}{32}, \frac{17}{32}]$ for all $i$ and $p$, we have:
\begin{align*}
\EE_\mu[\Rcal(T)]
\geq 
\Omega & \left(
\sum_{i \in \Ical_{\epsilon/20}^C}
\sum_{p \in [M]: \Delta_i^p > 0} 
\frac{\ln( \Delta_i^p T^\alpha / C) }{ \Delta_i^p} 
\right.
\\ 
& +
\left.
\sum_{i \in \Ical_{\epsilon / {20}}: \Delta_i^{\min} > 0}
\frac{\ln (\Delta_i^{\min} T^\alpha / C)}{ \Delta_i^{\min}} 
\right)
.
\end{align*}
\end{theorem}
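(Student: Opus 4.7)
The plan is to apply a change-of-measure argument in the style of Garivier--Kaufmann, handling separately the two families of arms appearing in the lower bound. I decompose $\EE_\mu[\mathcal{R}(T)] = \sum_{i,p} \Delta_i^p \EE_\mu[n_i^p(T)]$ and aim to lower-bound $\EE_\mu[n_i^p(T)]$ (or an appropriate sum over $p$) separately for each $i$, using a tailored alternative Bernoulli instance $\mu'$ that is still an $\epsilon$-MPMAB instance yet in which arm $i$ becomes optimal for some player.

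For a non-subpar arm $i \in \mathcal{I}_{\epsilon/20}^C$ and a player $p$ with $\Delta_i^p > 0$, the definition of $\mathcal{I}_{\epsilon/20}$ gives $\Delta_i^p \leq \epsilon/4$. I let $\mu'$ coincide with $\mu$ except at the single cell $(\mu')_i^p = \mu_i^p + (1+c)\Delta_i^p$ for a small constant $c \in (0,1)$, so that arm $i$ is uniquely optimal for player $p$ in $\mu'$ with gap $c\Delta_i^p$. Because the perturbation has magnitude at most $(1+c)\epsilon/4 \leq \epsilon/2$ and $\mu$ was $\epsilon/2$-similar, $\mu'$ is still an $\epsilon$-MPMAB instance, with all means inside a small bracket around $1/2$ bounded away from $\{0,1\}$. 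Choosing the event $\mathcal{E} = \{ n_i^p(T) > T/2 \}$, Markov applied to $\EE_\mu[\mathcal{R}^p(T)] \geq \Delta_i^p \EE_\mu[n_i^p(T)]$ gives $\Pr_\mu[\mathcal{E}] \lesssim C T^{-\alpha}/\Delta_i^p$, while the analogous bound in $\mu'$ via the optimality of arm $i$ for $p$ gives $\Pr_{\mu'}[\mathcal{E}^c] \lesssim C T^{-\alpha}/(c\Delta_i^p)$. The single-cell change-of-measure inequality $\EE_\mu[n_i^p(T)] \cdot \mathrm{kl}(\mu_i^p, (\mu')_i^p) \geq \mathrm{kl}(\Pr_\mu[\mathcal{E}], \Pr_{\mu'}[\mathcal{E}])$, combined with $\mathrm{kl}(\mu_i^p,(\mu')_i^p) = O((\Delta_i^p)^2)$ in the bracket, yields $\EE_\mu[n_i^p(T)] \gtrsim \ln(\Delta_i^p T^\alpha/C)/(\Delta_i^p)^2$ and hence a regret contribution $\Omega(\ln(\Delta_i^p T^\alpha/C)/\Delta_i^p)$.

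For a subpar arm $i \in \mathcal{I}_{\epsilon/20}$ with $\Delta_i^{\min} > 0$, I instead define $\mu'$ by shifting arm $i$'s mean for every player by the \emph{same} amount, $(\mu')_i^p = \mu_i^p + (1+c)\Delta_i^{\min}$. A uniform shift preserves pairwise dissimilarities on arm $i$, so $\mu'$ is still $\epsilon/2$-MPMAB, and since $\Delta_i^{\min} \leq 17/32 - 15/32 = 1/16$ the shifted means stay in $[0,1]$ and bounded away from $\{0,1\}$. Letting $p^*$ attain $\Delta_i^{p^*} = \Delta_i^{\min}$, arm $i$ is uniquely optimal for $p^*$ in $\mu'$ with gap $c\Delta_i^{\min}$. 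Taking $\mathcal{E} = \{n_i^{p^*}(T) > T/2\}$ and noting that $\mu$ and $\mu'$ differ in arm $i$'s mean for \emph{every} player, the multi-cell change-of-measure identity reads $\sum_{p \in [M]} \EE_\mu[n_i^p(T)] \cdot \mathrm{kl}(\mu_i^p,(\mu')_i^p) \geq \mathrm{kl}(\Pr_\mu[\mathcal{E}], \Pr_{\mu'}[\mathcal{E}])$. The same Markov/regret bookkeeping yields $\Pr_\mu[\mathcal{E}] \lesssim CT^{-\alpha}/\Delta_i^{\min}$ and $\Pr_{\mu'}[\mathcal{E}^c] \lesssim CT^{-\alpha}/(c\Delta_i^{\min})$, and using $\mathrm{kl} = O((\Delta_i^{\min})^2)$ on the left gives $\sum_p \EE_\mu[n_i^p(T)] \gtrsim \ln(\Delta_i^{\min} T^\alpha/C)/(\Delta_i^{\min})^2$. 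Since $\Delta_i^p \geq \Delta_i^{\min}$ for every $p$, this converts into a regret contribution at least $\Delta_i^{\min} \sum_p \EE_\mu[n_i^p(T)] \gtrsim \ln(\Delta_i^{\min} T^\alpha/C)/\Delta_i^{\min}$.

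Summing these per-arm contributions yields the claimed bound. The main obstacle is the subpar-arm case: the information-theoretic cost of distinguishing $\mu$ from $\mu'$ must be attributed to the \emph{collective} exploration across all players through the multi-cell KL decomposition, and then converted into a regret bound through $\Delta_i^p \geq \Delta_i^{\min}$; verifying that the uniform shift preserves the $\epsilon$-MPMAB class and that the constants hidden in the Bernoulli KL estimate remain benign relies on the bracketing $\mu_i^p \in [15/32, 17/32]$ together with the factor $20$ in the definition of $\mathcal{I}_{\epsilon/20}$, which together provide simultaneously bounded KL and enough headroom to perturb means while staying inside the instance class.
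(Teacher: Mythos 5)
Your proposal is correct and follows essentially the same route as the paper's proof: a single-cell perturbation of size $O(\Delta_i^p)\leq\epsilon/2$ for arms in $\Ical_{\epsilon/20}^C$ (yielding a per-player pull lower bound) and a uniform shift of arm $i$'s mean across all players for subpar arms (yielding a collective pull lower bound via the multi-cell divergence decomposition), combined with the same Markov/regret bookkeeping and the bracketing $\mu_i^p\in[\tfrac{15}{32},\tfrac{17}{32}]$ to control the Bernoulli KL. The only cosmetic differences are your use of the Garivier--Kaufmann $\mathrm{kl}$-inequality where the paper invokes Bretagnolle--Huber, and the perturbation scale $(1+c)\Delta$ versus the paper's $2\Delta$.
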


\Cref{thm:gap_dep_lb} is nearly tight compared with the upper bound presented in \Cref{thm:gap_dep_ub} with two differences. First, the upper bound is in terms of $\Ical_{\epsilon}$, while the lower bound is in terms of $\Ical_{\epsilon/20}$; we leave the possibility of exploiting data aggregation for arms in $\Ical_{\epsilon} \setminus \Ical_{\epsilon/20}$ as an open question. 
Second, the upper bound has an extra   $O(\sum_{i \in \Ical_{\epsilon}} M\Delta_i^{\min} )$ term, caused by the players issuing arm pulls in parallel in each round; we conjecture that it may be possible to remove this term by developing more efficient multi-player exploration strategies.

\paragraph{Gap-independent lower bound.} The following theorem shows that, 
there exists a value of $\epsilon$ (that depends on $T$ and $\abs{\Ical_{\epsilon}}$), such that any algorithm must have a minimax collective regret not much lower than the upper bound shown in Theorem~\ref{thm:gap_ind_upper} in the family of all $\epsilon$-MPMAB problems.

\begin{theorem}
\label{thm:gap_ind_lower}
For any $K \geq 2, M, T \in \NN$, and $l, l^C$ in $\NN$ such that $l \leq K-1, l + l^C = K$, there exists some $\epsilon > 0$, such that for any algorithm $\Acal$, there exists an $\epsilon$-MPMAB problem instance, in which 
$\abs{\Ical_\epsilon} \geq l$, and $\Acal$ has a collective regret at least
$\Omega(M\sqrt{(l^C-1) T} + \sqrt{M l T})$.
\end{theorem}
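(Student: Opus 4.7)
Since $M\sqrt{(l^C-1)T} + \sqrt{MlT}$ is $\Theta(\max(M\sqrt{(l^C-1)T},\sqrt{MlT}))$ up to absolute constants, it suffices to split into two cases according to which of the two summands dominates and to prove a matching lower bound on the dominant term in each case. In each case I choose a different $\epsilon$ and construct a different family of Bernoulli $\epsilon$-MPMAB instances whose minimax regret is lower bounded by a Le Cam/Pinsker argument, then the claim follows by combining.

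\textbf{Case A ($M(l^C-1)\ge l$).} Here the target reduces to $\Omega(M\sqrt{(l^C-1)T})$. I set $\epsilon = c_A\sqrt{(l^C-1)/T}$ for a small absolute constant $c_A$ and index instances by $\sigma\in[l^C]^M$: for each player $p$, arm $\sigma_p$ has Bernoulli mean $\tfrac12+\tfrac\epsilon2$, the other arms in $[l^C]$ have mean $\tfrac12-\tfrac\epsilon2$, and arms $l^C+1,\ldots,K$ uniformly have mean $\tfrac12-7\epsilon$. Any two players disagree by at most $\epsilon$ on each close arm and agree on each subpar arm (so the instance is $\epsilon$-MPMAB), and the subpar arms have gap $7.5\epsilon>5\epsilon$ so $|\Ical_\epsilon|\geq l$. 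For any algorithm $\Acal$, I fix $\sigma_{-p}$ and randomize $\sigma_p$ uniformly on $[l^C]$: since changing $\sigma_p$ leaves every player $q\neq p$'s reward distribution invariant, the chain rule collapses the KL between observation distributions to an expectation over player $p$'s own pulls of the two arms that differ, each weighted by $\KL(\Ber(\tfrac12+\tfrac\epsilon2)\|\Ber(\tfrac12-\tfrac\epsilon2))=\Theta(\epsilon^2)$. An Auer--Cesa-Bianchi--Freund--Schapire-style averaging argument over $\sigma_p\in[l^C]$ then yields expected regret $\Omega(\sqrt{(l^C-1)T})$ for player $p$'s close-arm subproblem (pulls of subpar arms only add to this, since each such pull costs $\Omega(\epsilon)$). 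Summing over $p\in[M]$ gives collective regret $\Omega(M\sqrt{(l^C-1)T})$, as required.

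\textbf{Case B ($M(l^C-1)<l$).} Here the target reduces to $\Omega(\sqrt{MlT})$. I set $\epsilon = c_B\sqrt{l/(MT)}$ for a small absolute constant $c_B$ and index instances by $\tau\in[K]$: all players share a common reward distribution in which arm $\tau$ has mean $\tfrac12$ and every other arm has mean $\tfrac12-6\epsilon$. The instance is $0$-MPMAB (hence $\epsilon$-MPMAB), and every non-$\tau$ arm has gap $6\epsilon>5\epsilon$, so $|\Ical_\epsilon|=K-1\geq l$. Because the $M$ players share a common per-arm distribution and pool their observations at the end of each round, the collective regret equals that of a single-agent $K$-armed bandit making $MT$ pulls in $T$ batches of size $M$; batching can only restrict the algorithm, so the standard observation-level KL minimax lower bound for stochastic bandits (cf.\ Lattimore and Szepesv\'{a}ri, Theorem 15.2) applies with total pull count $MT$, yielding $\Omega(\sqrt{(K-1)MT})\geq\Omega(\sqrt{lMT})$.

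\textbf{Main obstacle.} The principal tension is that a single scalar $\epsilon$ must simultaneously (i) upper bound all pairwise dissimilarities (for the instance to be $\epsilon$-MPMAB) and (ii) be strictly smaller than one-fifth the gap of at least $l$ arms (so $|\Ical_\epsilon|\geq l$). These pull $\epsilon$ in opposite directions and, for parameter regimes where $\sqrt{(l^C-1)/T}$ and $\sqrt{l/(MT)}$ are incompatible, prevent a single construction from delivering both terms at once; the case split resolves this by matching $\epsilon$ to the close-arm gap in Case A and taking it well below the subpar-arm gap in Case B. The other delicate step is the per-player argument in Case A: although information sharing might appear to provide side information about $\sigma_p$, the chain-rule decomposition shows that only player $p$'s own samples contribute to the KL between two instances differing in $\sigma_p$, so the classical single-agent minimax lower bound still applies cleanly to each player's subproblem.
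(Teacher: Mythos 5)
Your proposal is correct and follows essentially the same route as the paper's proof: the same case split on whether $M(l^C-1)\ge l$, a common-best-arm family yielding $\Omega(\sqrt{MlT})$ via a single-agent argument over $MT$ total pulls, and a per-player independent-best-arm family in $[l^C]^M$ where the divergence decomposition shows only player $p$'s own samples distinguish instances differing in $\sigma_p$, yielding $\Omega(M\sqrt{(l^C-1)T})$. The only differences are cosmetic (exact choices of means, keeping all $K$ arms active versus zeroing out the inactive ones, and taking $\epsilon$ small but positive rather than $\epsilon=0$ in the common-instance case, which if anything matches the theorem statement more literally).
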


The above lower bound is nearly tight in light of the upper bound in Theorem~\ref{thm:gap_ind_upper}: as long as $T = \Omega(K M)$, the upper and lower bounds match within a constant.

\section{$\epsilon$-MPMAB with Unknown $\epsilon$}
\label{sec:unknown_eps}
We now turn to the setting when $\epsilon$ is unknown to the learner. 
Unlike the $\robustagg(\epsilon)$ algorithm developed in the last section, which only has nontrivial regret guarantees for all $\epsilon$-MPMAB instances, in this section,
we aim to design algorithms that have nontrivial regret guarantees for all MPMAB instances.

Recall that $\robustagg(\epsilon)$ relies on the knowledge of $\epsilon$ to construct reward confidence intervals for each arm and player;  
when $\epsilon$ is unknown, constructing such confidence interval becomes a big challenge.
In Appendix~\ref{sec:adaptive-ci}, we give evidence showing that it may be impossible to design confidence interval-based algorithms that significantly benefit from inter-player information sharing. This suggests that new algorithmic ideas seem necessary to obtain nontrivial results in this setting.

\subsection{Gap-dependent lower bound}
Recall that \inducb achieves a gap-dependent regret bound of $O\del{\sum_{i \in [K]} \sum_{p \in [M]: \Delta_i^p > 0} \frac{\ln T}{\Delta_i^p}}$ for all MPMAB problems without knowing $\epsilon$. 
Interestingly, we show in the following theorem that any sublinear regret algorithm for the MPMAB problem must have gap-dependent lower bound not much better than \inducb for a large family of MPMAB problem instances, regardless of the value of $\epsilon$ and the size of $\Ical_\epsilon$ of that instance.

\begin{theorem}
\label{thm:gap_dep_lb_unk}
Let $\mathcal{A}$ be an algorithm and $C > 0, \alpha > 0$ be constants such that $\Acal$ has $CT^{1-\alpha}$ regret in all MPMAB problem instances. Then, for any Bernoulli MPMAB instance $\mu = (\mu_i^p)_{i \in [K], p \in [M]}$ such that $\mu_i^p \in [\frac{15}{32}, \frac{17}{32}]$ for all $i \in [K], p \in [M]$,
$$ \mathbb{E}_\mu [\Rcal(T)] \ge \Omega\del{ \sum_{i \in [K]} \sum_{p \in [M]: \Delta_i^p > 0} \frac{\ln (T^\alpha \Delta_i^p / C ) }{ \Delta_i^p} }.$$
\end{theorem}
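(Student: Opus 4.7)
I would follow a per-player, per-arm change-of-measure argument, exploiting the fact that in the unknown-$\epsilon$ regime $\Acal$ must satisfy its $CT^{1-\alpha}$ regret guarantee on \emph{every} MPMAB instance, with no inter-player similarity constraint. Fix the reference Bernoulli instance $\mu$. For each pair $(p,i)$ with $\Delta_i^p>0$, construct an alternative instance $\mu^{(p,i)}$ that agrees with $\mu$ on every coordinate except $(\mu^{(p,i)})_i^p := \mu_i^p + 2\Delta_i^p$. Since $\Delta_i^p \le \tfrac{17}{32} - \tfrac{15}{32} = \tfrac{1}{16}$, the perturbed mean lies in $(\tfrac{17}{32}, \tfrac{21}{32})$, so $\mu^{(p,i)}$ is a valid Bernoulli MPMAB instance in which arm $i$ becomes the unique optimum for player $p$ with a gap of $\Delta_i^p$ over every other arm, while all other players' distributions are unchanged. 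Note we do \emph{not} need $\mu^{(p,i)}$ to be close to $\mu$ in any $\epsilon$-sense---this is exactly where the unknown-$\epsilon$ assumption is used.

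Next, I would apply the multi-player analog of divergence decomposition: since at each round the reward distributions factorize across $(q,j)$-pairs and only the $(p,i)$ distribution differs between $\mu$ and $\mu^{(p,i)}$, the KL between trajectory laws satisfies
\[
\KL\bigl(\mathbb{P}_\mu \,\|\, \mathbb{P}_{\mu^{(p,i)}}\bigr) = \EE_\mu[n_i^p(T)] \cdot \KL\bigl(\Ber(\mu_i^p) \,\|\, \Ber(\mu_i^p + 2\Delta_i^p)\bigr).
\]
Because both Bernoulli parameters lie in $[\tfrac{15}{32}, \tfrac{21}{32}]$, bounded away from $0$ and $1$, a direct estimate gives $\KL(\Ber(a)\|\Ber(b)) \le c_0 (a-b)^2$ for an absolute constant $c_0$, so the KL between trajectory laws is at most $4 c_0 \EE_\mu[n_i^p(T)] (\Delta_i^p)^2$.

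For the matching lower bound, I would use Bretagnolle--Huber (equivalently, the data processing inequality for binary KL) with the event $A = \{n_i^p(T) \ge T/2\}$. Under $\mu^{(p,i)}$, each round in which player $p$ does not pull arm $i$ costs at least $\Delta_i^p$ in regret, so $\Delta_i^p \cdot (T - \EE_{\mu^{(p,i)}}[n_i^p(T)]) \le \EE_{\mu^{(p,i)}}[\Rcal(T)] \le CT^{1-\alpha}$, and Markov yields $\mathbb{P}_{\mu^{(p,i)}}(A^c) \le 2CT^{-\alpha}/\Delta_i^p$. Either the desired contribution $\Omega(\ln(T^\alpha \Delta_i^p / C)/\Delta_i^p)$ already holds directly (if $\EE_\mu[n_i^p(T)] \ge T/4$, say), or $\mathbb{P}_\mu(A) \le \tfrac12$ by Markov; in the latter case, chaining the two estimates through the binary KL gives
\[
4 c_0 \EE_\mu[n_i^p(T)] (\Delta_i^p)^2 \;\ge\; \tfrac{1}{2}\ln\!\bigl(\Delta_i^p T^\alpha / (2C)\bigr) - \ln 2,
\]
so $\EE_\mu[n_i^p(T)] = \Omega(\ln(T^\alpha \Delta_i^p / C) / (\Delta_i^p)^2)$. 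Multiplying by $\Delta_i^p$ and summing over all $(p,i)$ with $\Delta_i^p > 0$ delivers the claimed lower bound.

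The main work is bookkeeping rather than conceptual: tracking which of the two Markov-based estimates is the binding one in each regime of $\EE_\mu[n_i^p(T)]$, and absorbing the Bernoulli-KL constants near $1/2$ into the $\Omega(\cdot)$. The conceptual point worth emphasizing is that, absent knowledge of $\epsilon$, the instances $\mu$ and $\mu^{(p,i)}$ are both admissible for $\Acal$'s regret guarantee regardless of how players $q \ne p$ behave, so information shared by other players cannot reduce $\EE_\mu[n_i^p(T)]$ below what the single-player Lai--Robbins-type bound forces; this is precisely why the unknown-$\epsilon$ bound recovers \inducb's rate up to constants.
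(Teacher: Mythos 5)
Your proposal is correct and follows essentially the same route as the paper's proof: the identical single-coordinate perturbation $\mu_i^p \mapsto \mu_i^p + 2\Delta_i^p$ (with the same observation that the alternative need not satisfy any $\epsilon$-dissimilarity constraint), the same divergence decomposition, and the same quadratic bound on the Bernoulli KL for means bounded away from $0$ and $1$. The only cosmetic difference is the final information-theoretic step, where the paper adds the two regret lower bounds and applies Bretagnolle--Huber directly rather than your Markov-plus-binary-KL chaining; both yield the same $\Omega\bigl(\ln(T^\alpha \Delta_i^p/C)/(\Delta_i^p)^2\bigr)$ pull-count bound.
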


\subsection{Gap-independent upper bound}
\label{sec:gap_ind_upper}

While we have shown gap-dependent lower bounds that nearly matches the upper bounds for \inducb for sublinear regret MPMAB algorithms in Theorem~\ref{thm:gap_dep_lb_unk}, this does not rule out the possibility of achieving regret that improves upon \inducb in small-gap instances.
To see this, note that if $\Delta_i^p$ is of order $O(T^{-\alpha})$ for all $i$ in $[K]$ and $p$ in $[M]$, the above lower bound becomes vacuous. Therefore, it is still possible to get gap-independent upper bounds that improve over  the $\tilde{O}(M\sqrt{KT})$ upper bound by \inducb.

We present \arobustagg in Appendix~\ref{appendix:ub_unk_eps}, an algorithm that achieves such guarantee: specifically, it achieves a gap-independent regret upper bound adaptive to $\abs{\Ical_{2\epsilon}}$.
In a nutshell, the algorithm aggregates over a set of $\robustagg(\epsilon)$ base learners with different values of $\epsilon$, using the strategy of Corral~\citep{alns17}. We have the following theorem:

\begin{theorem}
\label{thm:gap_ind_upper_unk}
Let \arobustagg run on an $\epsilon$-MPMAB problem instance with any $\epsilon \in [0,1]$.
Its expected collective regret in a horizon of $T$ rounds satisfies
$$
\mathbb{E}[\mathcal{R}(T)] \le \tilde{O}\left( \left( \abs{\Ical_{2\epsilon}} + M \abr{\Ical_{2\epsilon}^C} \right) \sqrt{T} + M |\Ical_{\epsilon}| \right).
$$
\end{theorem}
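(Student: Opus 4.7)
The plan is to reduce the analysis of \arobustagg to that of a single well-chosen \robustagg base learner via the Corral meta-regret guarantee, then invoke Theorem~\ref{thm:gap_ind_upper} on that base learner. First, I would recall that \arobustagg runs Corral~\citep{alns17,pparzls20,amm20} over $N = O(\log T)$ base learners of the form $\robustagg(\epsilon_j)$, with $\epsilon_j$ chosen from a geometric grid spanning $[\Theta(1/T), 1]$. Since the true dissimilarity $\epsilon$ is unknown, the grid is constructed so that for any $\epsilon \in [0,1]$ some grid value $\hat{\epsilon}$ satisfies $\hat{\epsilon} \in [\epsilon, 2\epsilon]$ (up to a boundary adjustment of order $1/T$ when $\epsilon$ is very small, whose contribution is negligible).

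Next, since $\epsilon \le \hat{\epsilon}$, the given $\epsilon$-MPMAB instance is also a $\hat{\epsilon}$-MPMAB instance, so by Theorem~\ref{thm:gap_ind_upper} the base learner $\robustagg(\hat{\epsilon})$ would, if run alone, incur regret at most $R_{\hat{\epsilon}}(T) = \tilde{O}(\sqrt{|\Ical_{\hat{\epsilon}}|MT} + M\sqrt{|\Ical_{\hat{\epsilon}}^C|T} + M|\Ical_{\hat{\epsilon}}|)$. Using the monotonicity relations $\Ical_{2\epsilon} \subseteq \Ical_{\hat{\epsilon}} \subseteq \Ical_\epsilon$ (subpar-arm sets shrink as the dissimilarity parameter grows), together with the AM-GM bound $\sqrt{|\Ical_{\hat{\epsilon}}|MT} \le \tfrac{1}{2}(|\Ical_{\hat{\epsilon}}| + M)\sqrt{T}$ and $\sqrt{|\Ical_{\hat{\epsilon}}^C|T} \le |\Ical_{2\epsilon}^C|\sqrt{T}$, I would simplify $R_{\hat{\epsilon}}(T)$ to $\tilde{O}((|\Ical_{2\epsilon}| + M|\Ical_{2\epsilon}^C|)\sqrt{T} + M|\Ical_\epsilon|)$. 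I would then invoke the stochastic Corral regret guarantee (as in \citet{pparzls20,amm20}) to conclude that the master algorithm's regret is at most $R_{\hat{\epsilon}}(T)$ inflated by a polylogarithmic factor, plus an additive $\tilde{O}(\sqrt{NT}) = \tilde{O}(\sqrt{T})$ overhead; since $N = O(\log T)$, both extra contributions are absorbed into the $\sqrt{T}$ terms already present.

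The main obstacle is in instantiating a Corral-style guarantee for the \emph{multi-player} setting: standard Corral analyses are formulated for single-player bandits, so one must verify that the putative regret bound of $\robustagg(\hat{\epsilon})$ is preserved when it is invoked sporadically by the master (e.g., by constructing an anytime or suitably rescaled variant), and that the shared cross-player feedback remains consistent with Corral's importance-weighted estimators. A secondary, largely bookkeeping, obstacle is calibrating the grid resolution so that the $\tilde{O}(\sqrt{NT})$ overhead and the $\mathrm{polylog}(N)$ inflation factor both remain subdominant to the terms already present in the target bound.
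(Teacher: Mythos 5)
Your high-level architecture matches the paper's: run \corral over a geometric grid $\cbr{\epsilon_b = 2^{-b+1}}$, single out the grid point $\epsilon_{b_0}$ just above $\epsilon$, and use the monotonicity $\Ical_{2\epsilon} \subseteq \Ical_{\epsilon_{b_0}} \subseteq \Ical_{\epsilon}$ together with $|\Ical_\alpha| + |\Ical_\alpha^C| = K$ to convert that base learner's bound into the target form; the algebraic simplifications you perform are essentially those in Claim~\ref{claim:gap_ind_upper_rho_b_0}.

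The gap is in the step you invoke as a black box: there is no generic \corral guarantee of the form ``master regret $\le$ base learner's standalone regret times polylog, plus an additive $\tilde{O}(\sqrt{NT})$.'' The \corral master feeds each base learner importance-weighted rewards $\hat r = (W_t/w_t)\, r$ with $1/w_t$ as large as $\rho_b$, and its regret bound relative to learner $b$ (Lemma~\ref{lem:master_rho_b}) carries a \emph{negative} term $-\EE[\rho_b]/(40\eta\ln T)$ that must be cancelled against the base learner's regret, which itself degrades as $\sqrt{\rho_b}$. Making this rigorous is the substance of the proof: (i) the confidence widths of $\robustagg(\epsilon_b)$ must be redesigned to remain valid for rewards in $[0,\rho]$ with conditional variance $\rho$ (Lemma~\ref{lemma:concentration_rho}); (ii) one must prove the $\sqrt{\rho}$-scaled analogue of Theorem~\ref{thm:gap_ind_upper} including the extra $\min(\,\cdot\,,\epsilon_b MT)$ term (Lemma~\ref{lem:gap_ind_upper_rho}), which is exactly what rescues your ``boundary adjustment'' case $\epsilon < 2^{-B+1}$ --- there $\epsilon_{b_0} > 2\epsilon$, so $\Ical_{\epsilon_{b_0}}^C \supseteq \Ical_{2\epsilon}^C$ and your inequality $\sqrt{|\Ical_{\epsilon_{b_0}}^C|T} \le |\Ical_{2\epsilon}^C|\sqrt{T}$ points the wrong way; (iii) the doubling of $\rho$ across restarts must be summed and converted to a bound in $\EE[\rho_{b_0}]$ via Jensen (Lemma~\ref{lem:gap_ind_upper_rho_b}); and (iv) the master analysis must be redone for per-round losses in $[0,M]$ rather than $[0,1]$. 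The paper's footnote even notes that Lemma~\ref{lem:gap_ind_upper_rho} is not a weak-stability statement in the sense of \citet{alns17}, so the existing \corral theorems cannot be cited off the shelf. You correctly flag stability under sporadic invocation as ``the main obstacle,'' but it is not a verification step to be deferred: the final bound emerges only from the AM--GM cancellation $\sqrt{\EE[\rho_{b_0}]MT(\cdot)} \le O\big(\eta MT(\cdot)\big) + \EE[\rho_{b_0}]/(40\eta\ln T)$ with $\eta = 1/(M\sqrt{T})$, not from a polylog inflation plus $\tilde{O}(\sqrt{NT})$ overhead.
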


Under the mild assumption that $T = \Omega(\min(K^2, M^2))$, the above regret bound becomes $\tilde{O}\Bigg( \Big( \abs{\Ical_{2\epsilon}} + M \abr{\Ical_{2\epsilon}^C} \Big) \sqrt{T} \Bigg)$. 
If  furthermore $\abs{\Ical_{2\epsilon}} = K - o(\sqrt{K})$ 
and $M = \omega(\sqrt{K})$,
the regret bound of \arobustagg is of lower order than \inducb's $\tilde{O}(M\sqrt{KT})$ regret guarantee. In the most favorable case when $|\Ical_{2\epsilon}| = K - 1$, \arobustagg has expected collective regret $\tilde{O}\left((M + K) \sqrt{T}\right)$.

Such adaptivity of \arobustagg to unknown similarity structure comes at a price of higher minimax regret guarantee:
when $\Ical_\epsilon = \O$, \arobustagg has a regret of
$ 
\tilde{O}\left(MK\sqrt{T}\right)
$, a factor of $\sqrt{K}$ higher than $\tilde{O}(M\sqrt{KT})$, the worst-case regret of \inducb. We conjecture that this may be unavoidable due to lack of knowledge of $\epsilon$,
similar to results in adaptive Lipschitz bandits~\citep{locatelli2018adaptivity,krishnamurthy2019contextual,hadiji2019polynomial}.

\section{Related Work and Comparisons}
\label{sec:relatedwork}
\subsection{Multi-agent bandits.} 
We first compare existing multi-agent bandit learning problems with the $\epsilon$-MPMAB problem. We provide a more detailed review of the literature in Appendix~\ref{appendix:related_work}.

A large portion of prior studies \citep{kpc11,sbhojk13,lsl16,cgm19,kjg18,sgs19,whcw19,dp20a,csgs20,wpajr20} focuses on the setting where a set of players collaboratively work on one bandit learning problem instance, i.e., the reward distributions of an arm are identical across all players. In contrast, we study multi-agent bandit learning where the reward distributions across players can be different. 

Multi-agent bandit learning with heterogeneous feedback has also been covered by previous studies. In \citep{srj17}, a group of players seek to find the arm with the largest average reward over all players; however, in each round, the players have to reach a consensus and choose the same arm. 
\citet{cgz13} study a network of linear contextual bandit players with heterogeneous rewards, where the players can take advantage of reward similarities hinted by a graph. They use a Laplacian-based regularization, whereas we study when and how to use information from other players based on a dissimilarity parameter.
\citet{glz14,lkg16} assume that the players' reward distributions have a cluster structure; in addition, players that belong to one cluster share a \textit{common} reward distribution; our paper do not assume such cluster structure.
\citet{dp20b} assume access to some side information for every player, and learns a 
reward predictor that takes both player's side information 
models and action as input. 
In comparison, our work do not assume access to such side information.

Similarities in reward distributions are explored in \citep{sj12,zailn19} to warm start bandit learning agents.
\citet{glb13,soaremulti} investigate multitask learning in bandits through \textit{sequential transfer} between tasks that have similar reward distributions. In contrast, we study the multi-player setting, where all players learn continually and concurrently.

There are other practical formulations of multi-player bandits with player-dependent reward distributions~\citep{bblb20,bkmp20}, where the existence of collision is assumed; i.e., two players pulling the same arm in the same round receive zero reward. In comparison, collision is not modeled in this paper.

\subsection{Learning using weighted data aggregation}
\label{sec:weighting_samples}
Our design of confidence interval in Section~\ref{sec:mpmab_alg} has resemblance to the weighted empirical risk minimization algorithm proposed for domain adaptation by \citet{bbckpv10}, but our purposes are different from theirs.
Specifically, our choice of $\lambda$ minimizes the length of the confidence intervals, whereas 
\cite{bbckpv10} find $\lambda$ that minimizes classification error in the target domain.
Furthermore, our setting in Section~\ref{sec:unknown_eps} is more challenging: in offline domain adaptation, one may use a validation set drawn from the target domain to fine-tune the optimal weight $\lambda^*$, to adapt to unknown dissimilarity between the source and the target; however, in our setting (and online bandit learning in general), such tuning does not result in sample efficiency improvement. 

The idea of assigning weights to different sources of samples %
has also been studied by
\citet{zailn19} for warm starting contextual bandit learning from misaligned distributions and by \citet{rvc19} for online learning in non-stationary environments. 
\citet{zhx20} use a weighted compound of player-based estimator and cluster-based estimator for collaborative Thompson sampling, where the weights are given by a hyper-parameter; in contrast, we adaptively compute our weighting factor based on the numbers of samples collected by the players as well as the dissimilarity parameter $\epsilon$.

\begin{figure*}[t]
    \centering
    \begin{subfigure}{.3\textwidth}
        \centering
        \includegraphics[height=0.75\linewidth]{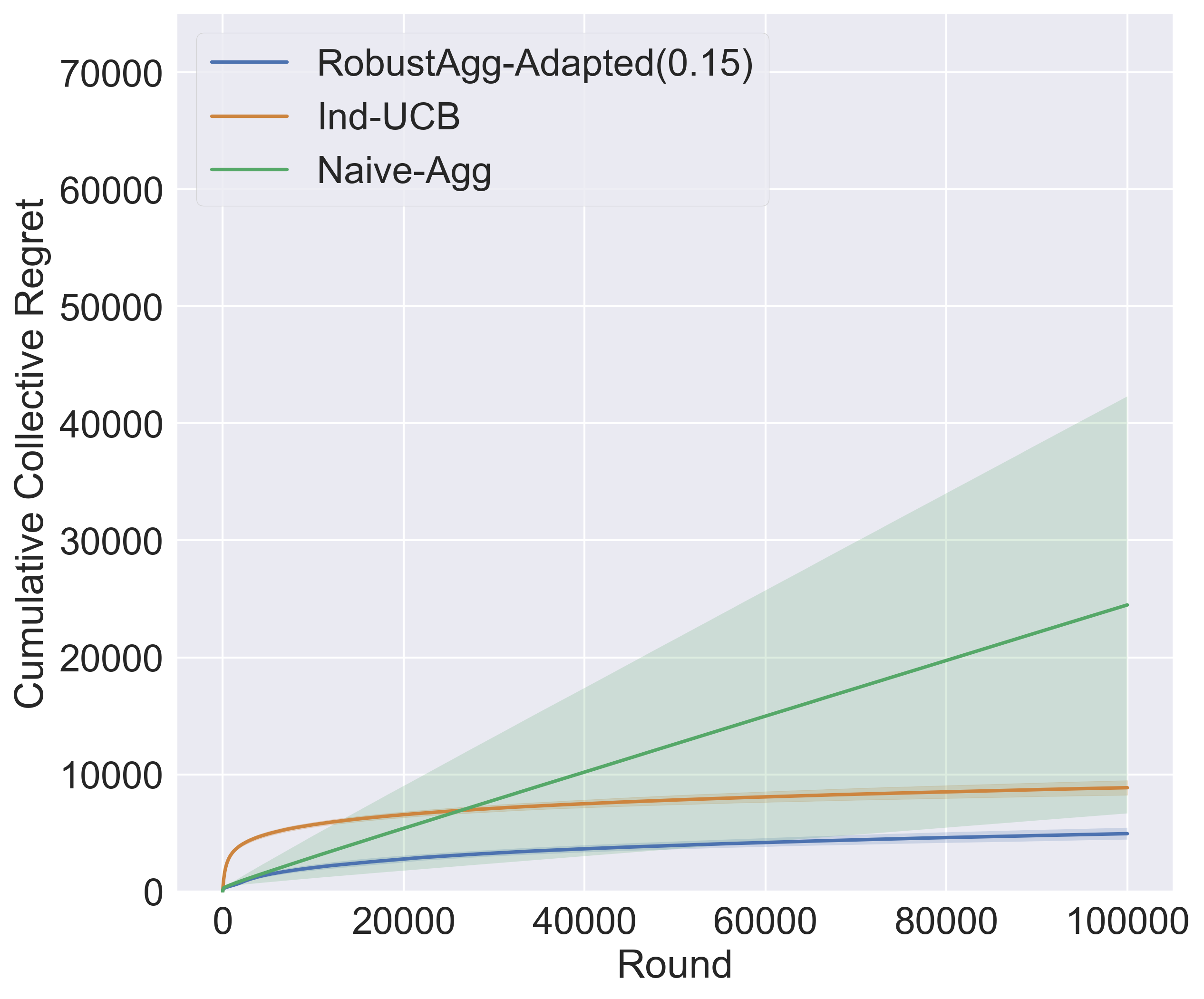}
        \caption{$|\Ical_\epsilon| = 8$}
        \label{figure:exp1a}
    \end{subfigure}
    \begin{subfigure}{0.3\textwidth}
        \centering
        \includegraphics[height=0.75\linewidth]{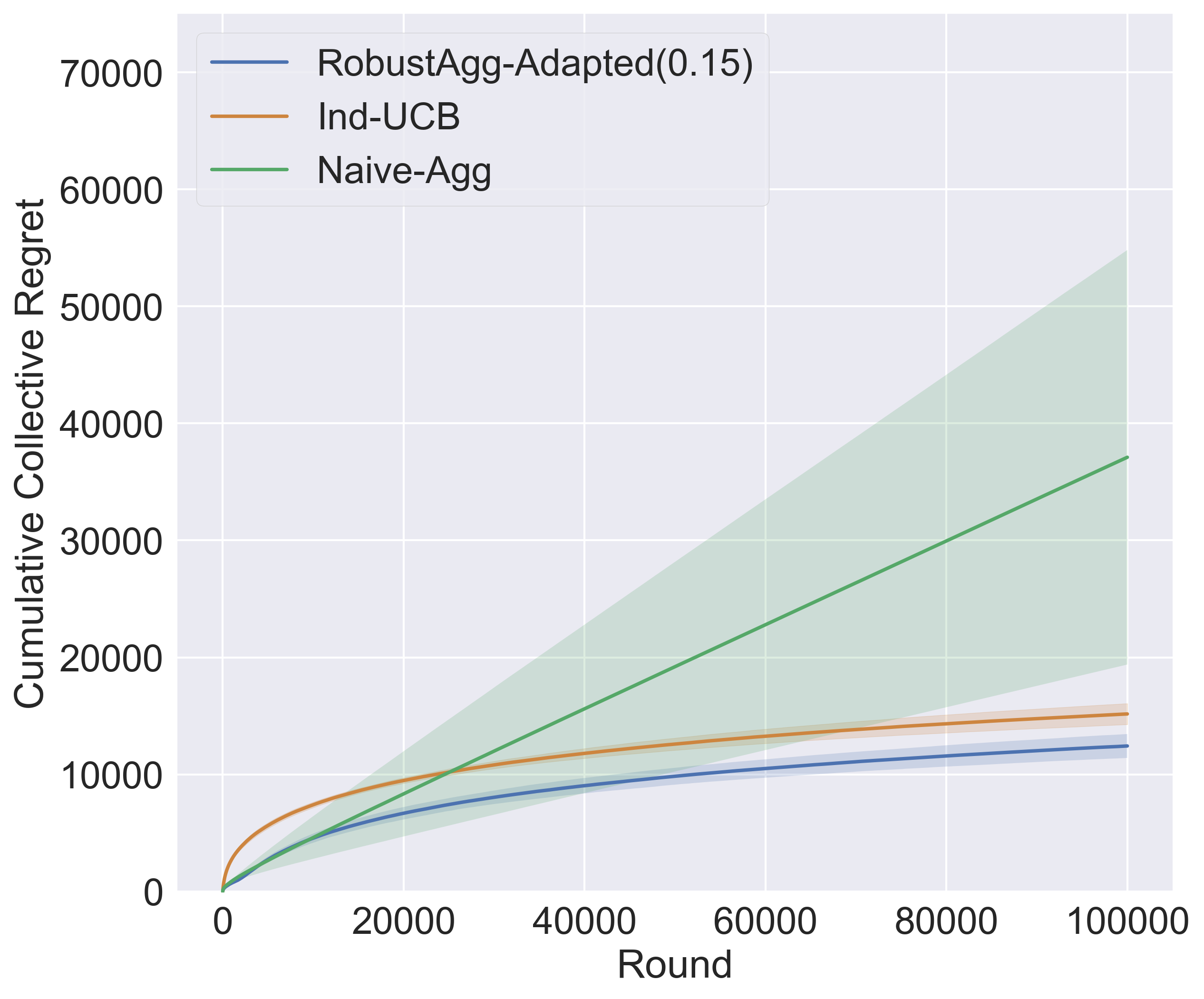}
        \caption{$|\Ical_\epsilon| = 6$}
        \label{figure:exp1b}
    \end{subfigure}
    \begin{subfigure}{0.3\textwidth}
        \centering
        \includegraphics[height=0.75\linewidth]{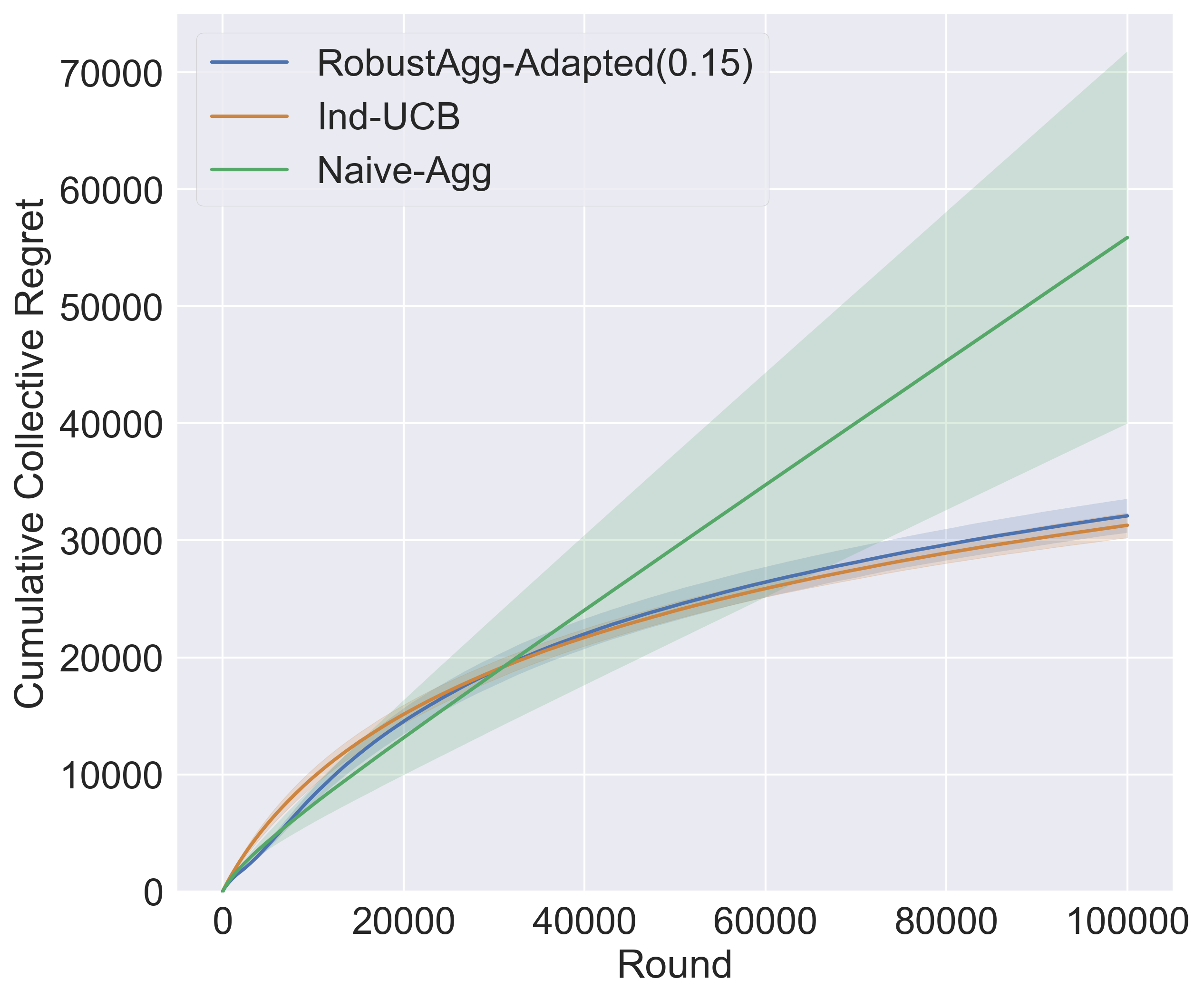}
        \caption{$|\Ical_\epsilon| = 0$}
        \label{figure:exp1c}
    \end{subfigure}
    \caption{Compares the average performance of $\adaptedrobustagg(0.15)$, \inducb, and \naiveagg in randomly generated Bernoulli $0.15$-MPMAB problem instances with $K = 10$ and $M = 20$. The $x$-axis shows a horizon of $T = 100,000$ rounds, and the $y$-axis shows the cumulative collective regret of the players.
    }
    \label{figure:experiment1}
\end{figure*}

\begin{figure*}[t]
    \centering
    \begin{subfigure}{.3\textwidth}
        \centering
        \includegraphics[height=0.75\linewidth]{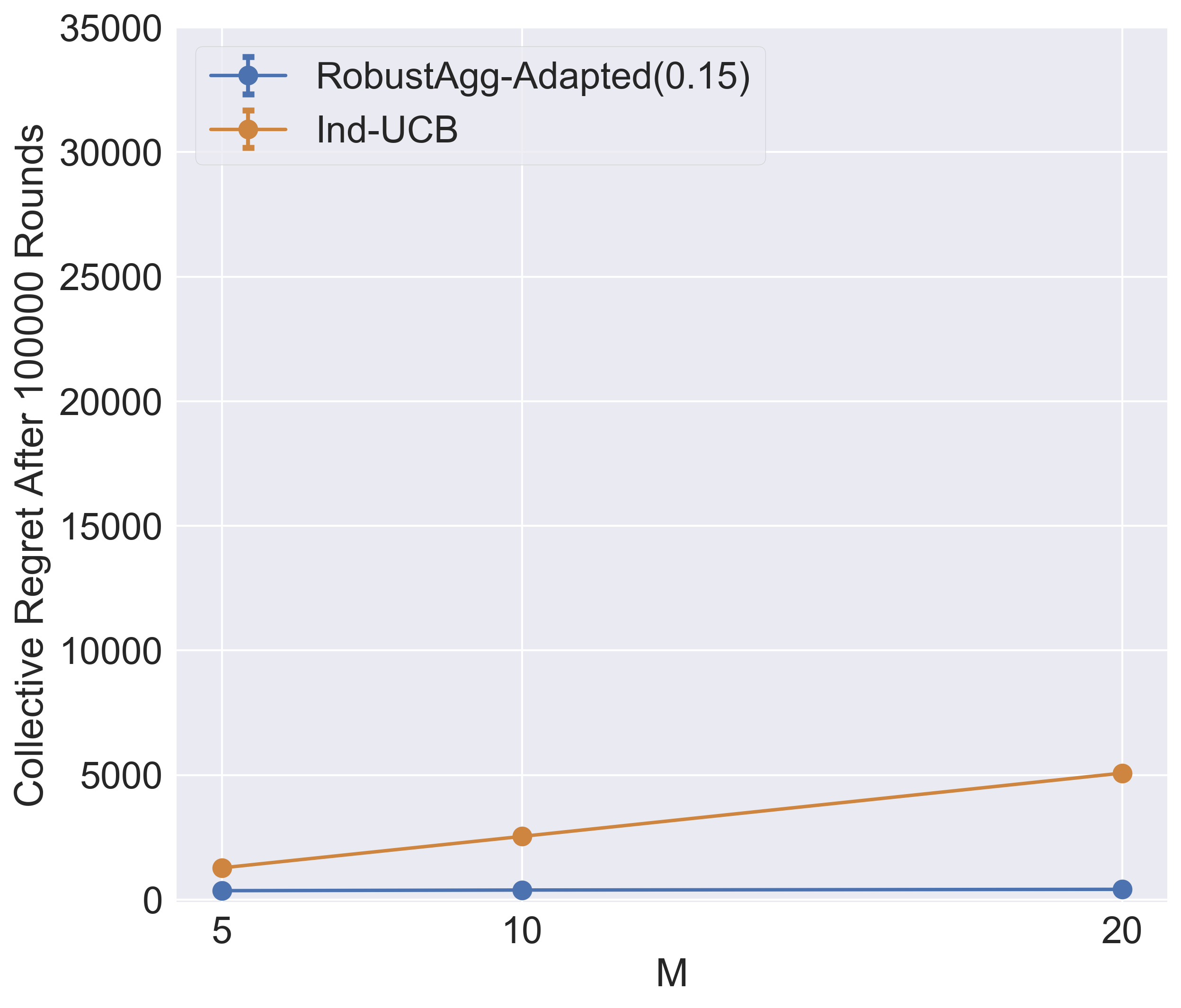}
        \caption{$|\Ical_\epsilon| = 9$}
        \label{figure:exp2a}
    \end{subfigure}
    \begin{subfigure}{0.3\textwidth}
        \centering
        \includegraphics[height=0.75\linewidth]{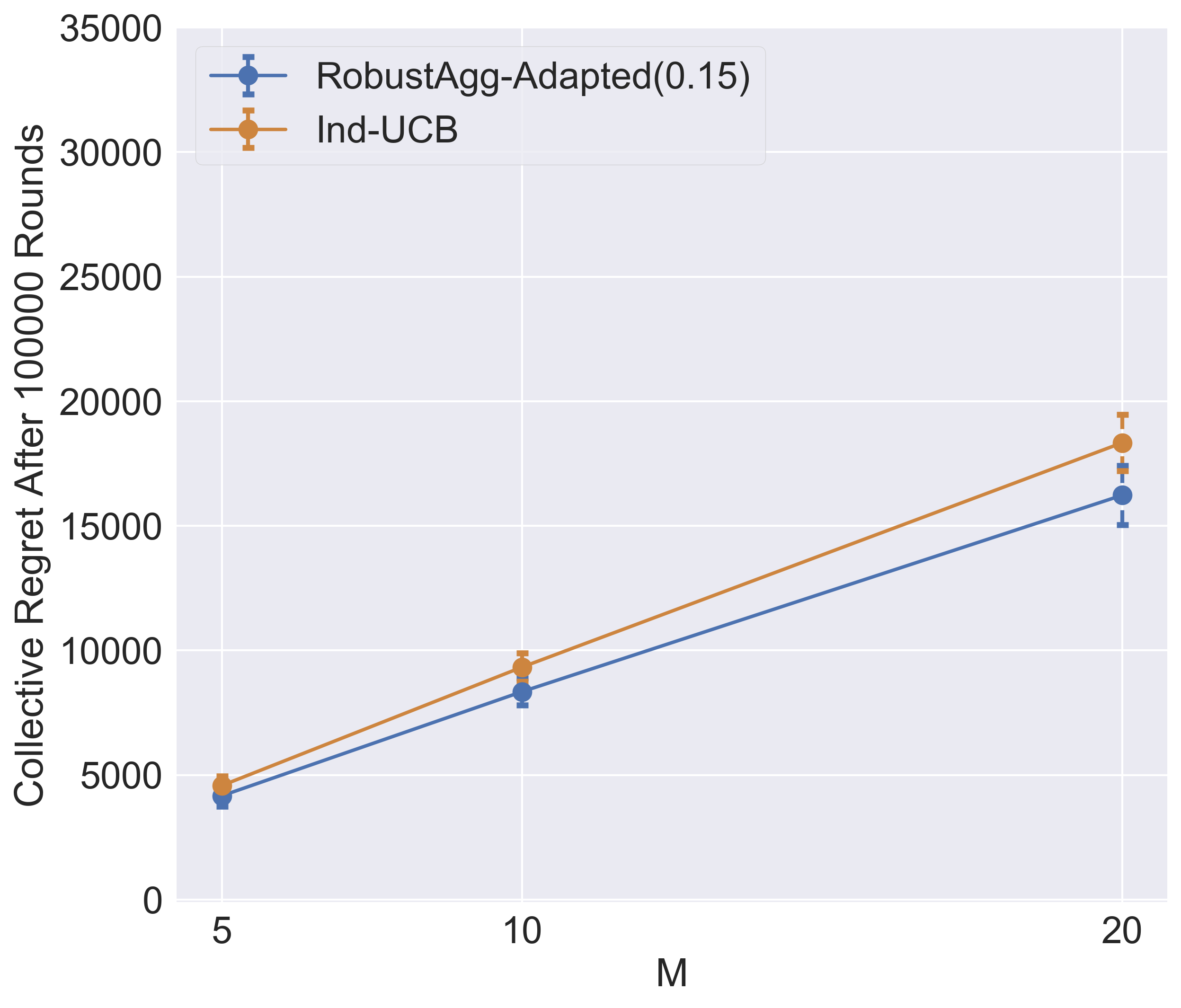}
        \caption{$|\Ical_\epsilon| = 5$}
        \label{figure:exp2b}
    \end{subfigure}
        \begin{subfigure}{0.3\textwidth}
        \centering
        \includegraphics[height=0.75\linewidth]{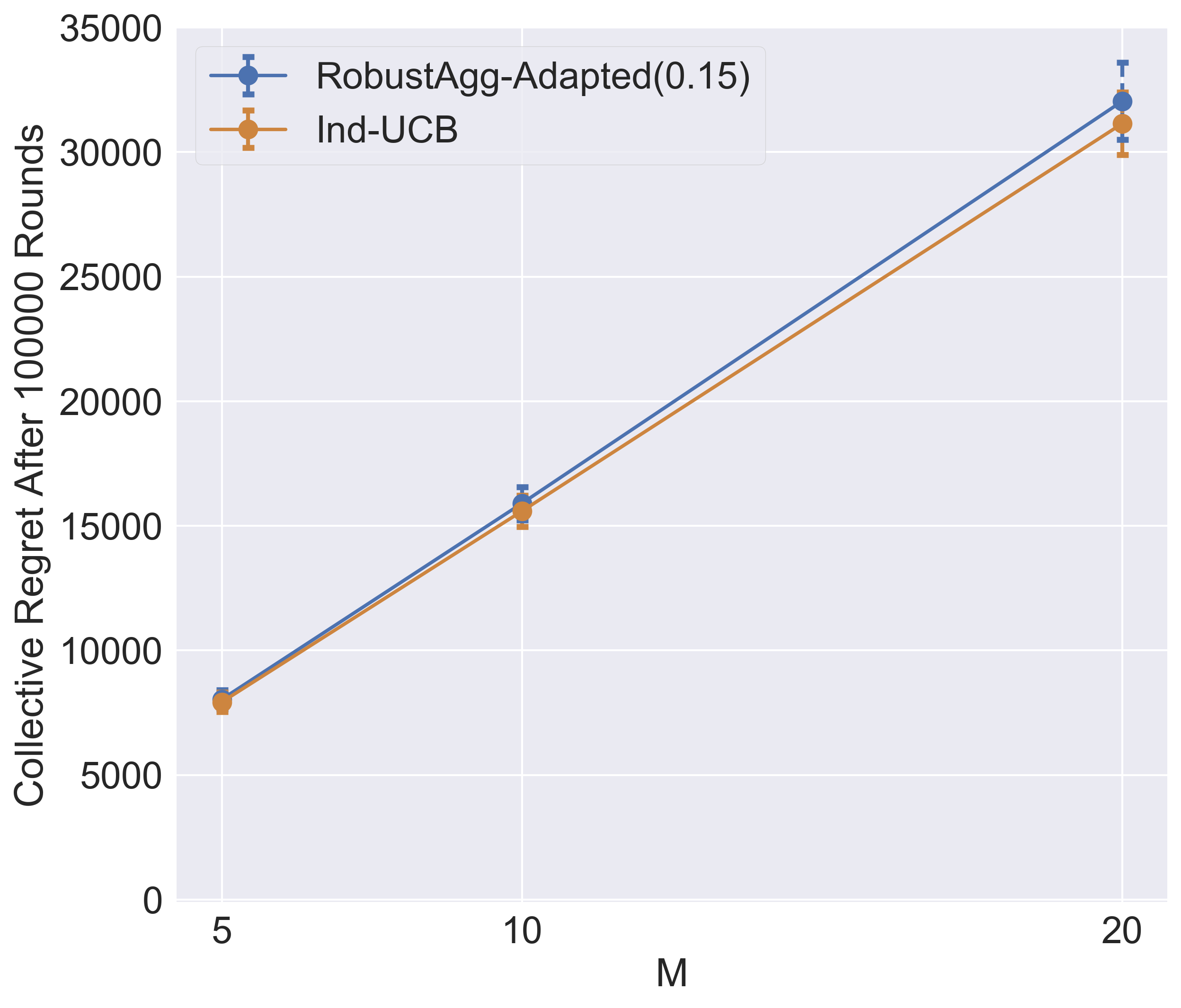}
        \caption{$|\Ical_\epsilon| = 0$}
        \label{figure:exp2c}
    \end{subfigure}
    \caption{Compares the average performance of $\adaptedrobustagg(0.15)$ and \inducb in randomly generated Bernoulli $0.15$-MPMAB problem instances with $K = 10$. 
    The $x$-axis shows different values of $M$, and the $y$-axis shows the cumulative collective regret of the players after $100,000$ rounds.}
    \label{figure:experiment2}
\end{figure*}

\section{Empirical Validation}
\label{sec:experiments}
We now validate our theoretical results with some empirical simulations using synthetic data\footnote{Our code is available at \url{https://github.com/zhiwang123/eps-MPMAB}.}. Specifically, we seek to answer the following questions:
\begin{enumerate}
    \item In practice, how does our proposed algorithm compare with algorithms that either do not take advantage of adaptive data aggregation or do not execute aggregation in a robust fashion?
    \item How does the performance of our algorithm change with different numbers of subpar arms?
\end{enumerate}
We note that these questions are considered in the setting where the dissimilarity parameter $\epsilon$ is known to the algorithms. 

\subsection{Experimental setup}
We first describe the algorithms compared in the simulations. We then discuss the procedure we used for generating synthetic data.

\paragraph{$\adaptedrobustagg(\epsilon)$.} 
Since standard concentration bounds are loose in practice, we performed simulations on a more practical and aggressive variant of $\robustagg(\epsilon)$, which we call $\adaptedrobustagg(\epsilon)$. Specifically, we changed the constant coefficient $8\sqrt{13}$ to $\sqrt{2}$ in the UCBs; this constant was taken from the original UCB-$1$ algorithm \citep{acf02}, which is an ingredient of the baseline \inducb, and we simply kept the default value.

\paragraph{Baselines.}
We evaluate the following two algorithms as baselines: (a) \inducb, described in \Cref{sec:formulation}; and (b) \naiveagg, in which the players \textit{naively} aggregate data assuming that their reward distributions are identical---in other words, \naiveagg is equivalent to $\adaptedrobustagg(0)$. 

\paragraph{Instance generation.} %
We generated problem instances using the following \textit{randomized} procedure. We first set $\epsilon = 0.15$. Then, given the number of players $M$, the number of arms $K$, and the number of subpar arms $|\Ical_\epsilon| \in \{0,1,\ldots,K-1\}$, we first sampled the means of the reward distributions for player $1$:

Let $c = K - |\Ical_\epsilon|$. For $i \in \{1, 2, \ldots, c\}$, we sampled $\mu^1_i \overset{\mathrm{i.i.d.}}{\sim} \mathcal{U}[0.8,0.8 + \epsilon)$, where $\mathcal{U}[a,b)$ is the uniform distribution with support $[a,b]$. Let $d = \max_{i \in [c]} \mu_i^1$.
Then, for $i \in \{c+1, \ldots, K\}$, we sampled $\mu^1_i \overset{\mathrm{i.i.d.}}{\sim} \mathcal{U}[0, d - 5\epsilon)$.

We then sampled the means of the reward distributions for players $p \in \{2, \ldots, M\}$: For each $i \in [K]$, we sampled $\mu^p_i \overset{\mathrm{i.i.d.}}{\sim} \mathcal{U}\big[\max(0,\mu^1_i - \frac{\epsilon}{2}),\min(\mu^1_i + \frac{\epsilon}{2},1)\big)$.

\begin{fact}
The above construction gives a Bernoulli $0.15$-MPMAB problem instance that has exactly $(K - c)$ subpar arms, namely, $\Ical_\epsilon = \{i: c+1 \le i \le K\}$.
\label{fact:synthetic}
\end{fact}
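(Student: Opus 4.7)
Two things need to be checked: that the construction yields an $\epsilon$-MPMAB problem instance with $\epsilon = 0.15$, and that $\Ical_\epsilon$ equals $\{i : c+1 \le i \le K\}$. Bernoulli-ness is immediate since each $\mathcal{D}_i^p$ is specified solely by its mean $\mu_i^p \in [0,1]$.

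For the dissimilarity condition, I would first observe that the construction forces $|\mu_i^p - \mu_i^1| \le \epsilon/2$ for every player $p$ and arm $i$: this is trivial for $p = 1$, and for $p \ge 2$ the uniform draw is over an interval of half-width at most $\epsilon/2$ centered at $\mu_i^1$, since the clipping to $[0,1]$ can only shrink the support inward. A triangle inequality through $\mu_i^1$ then gives $|\mu_i^p - \mu_i^q| \le \epsilon$ for all $p,q \in [M]$ and $i \in [K]$.

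For characterizing $\Ical_\epsilon$, I would handle arms $i > c$ and arms $i \le c$ separately. For each $i > c$, I would take player~$1$ as a witness: by construction $\mu_i^1 < d - 5\epsilon$ and $\mu_*^1 \ge \mu_{i^*}^1 = d$ where $i^* \in \arg\max_{j \in [c]} \mu_j^1$, giving $\mu_*^1 - \mu_i^1 > 5\epsilon$ and hence $i \in \Ical_\epsilon$. For each $i \le c$, I must instead show $\mu_*^p - \mu_i^p \le 5\epsilon$ for \emph{every} player~$p$. The plan is to split the max defining $\mu_*^p$ over $j \le c$ (bounded above by $\mu_j^1 + \epsilon/2 \le d + \epsilon/2 < 0.8 + \tfrac{3}{2}\epsilon$) versus $j > c$ (bounded above by $d - \tfrac{9}{2}\epsilon$, which is strictly smaller), and then combine with the lower bound $\mu_i^p \ge \mu_i^1 - \epsilon/2 \ge 0.8 - \epsilon/2$. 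The resulting gap is strictly less than $2\epsilon$, well below the $5\epsilon$ threshold, so $i \notin \Ical_\epsilon$.

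There is no deep obstacle; the whole argument is essentially a careful reading of the construction. The main items to track are the half-open uniform supports and the clipping at $0$ and $1$, both of which only tighten the inequalities used above and never bite in the critical direction for $\epsilon = 0.15$ with $\mu_i^1 \in [0.8, 0.8 + \epsilon)$ (resp.\ $[0, d - 5\epsilon)$). The key conceptual point is that player~$1$'s unperturbed profile is precisely what the $5\epsilon$ buffer in the construction was engineered for, so player~$1$ serves as the universal witness for every subpar arm.
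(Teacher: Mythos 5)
Your proof is correct and follows essentially the same route as the paper's: verify that every $\mu_i^p$ lies within $\epsilon/2$ of $\mu_i^1$ (so the instance is $\epsilon$-dissimilar), use player $1$ as the witness showing each arm $i>c$ is in $\Ical_\epsilon$, and show each arm $i\le c$ has gap at most a small multiple of $\epsilon$ for every player. The only cosmetic difference is the last step: the paper deduces $\Delta_i^p \le 3\epsilon$ from $\Delta_i^1 \le \epsilon$ via its earlier fact that $|\Delta_i^p - \Delta_i^q| \le 2\epsilon$ in any $\epsilon$-MPMAB instance, whereas you bound $\mu_*^p$ and $\mu_i^p$ directly to get a gap below $2\epsilon$ --- both land comfortably under the $5\epsilon$ threshold.
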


\subsection{Simulations and results}
We ran two sets of simulations, and the results are shown in \Cref{figure:experiment1} and \Cref{figure:experiment2}. More detailed results are deferred to Appendix~\ref{appendix:experiments}.

\paragraph{Experiment 1.} We compare the cumulative collective regrets of the three algorithms in problem instances with \textit{different numbers of subpar arms}. We set $M = 20$, $K = 10$ and $\epsilon = 0.15$. For each $v \in \{0, 1, 2, \ldots, 9\}$, we generated $30$ Bernoulli $0.15$-MPMAB problem instances, each of which has exactly $v$ subpar arms, i.e., we generated instances with $|\Ical_\epsilon| = v$. Figures \ref{figure:exp1a}, \ref{figure:exp1b} and \ref{figure:exp1c} show the average regrets in a horizon of $100,000$ rounds over these generated instances, in which $|\Ical_\epsilon| = 8, 6$ and $0$, respectively. In the interest of space, figures in which $|\Ical_\epsilon|$ takes other values are deferred to Appendix~\ref{app:more_exp_results}.

Notice that $\adaptedrobustagg(0.15)$ outperforms both baseline algorithms in Figures \ref{figure:exp1a} and \ref{figure:exp1b} when $|\Ical_\epsilon| = 8$ and $6$. \Cref{figure:exp1c} demonstrates that when $|\Ical_\epsilon| = 0$, i.e., when there is no arm that is amenable to data aggregation, the performance of $\adaptedrobustagg(0.15)$ is still on par with that of \inducb. Also, as shown in \Cref{figure:exp1a}, even when $|\Ical_\epsilon^C| = 2$, i.e., when there are only two ``competitive'' (not subpar) arms, the collective regret of \naiveagg can still easily be nearly linear in the number of rounds.

\paragraph{Experiment 2.}
We study how the collective regrets of $\adaptedrobustagg(0.15)$ and \inducb scale with the \textit{number of players} in problem instances with different numbers of subpar arms.
We set $K = 10$ and $\epsilon = 0.15$.
For each combination of $M \in \{5, 10, 20\}$ and $v \in \{0, 1, 2, \ldots, 9\}$, we generated $30$ Bernoulli $0.15$-MPMAB problem instances with $M$ players and exactly $v$ subpar arms, that is, for each instance, $|\Ical_\epsilon| = v$.
Figures \ref{figure:exp2a}, \ref{figure:exp2b} and \ref{figure:exp2c} compare the average regrets after $100,000$ rounds in instances with different numbers of players $M$, in which $|\Ical_\epsilon|$ are set to be $9, 5$ and $0$, respectively. Again, figures in which $|\Ical_\epsilon|$ takes other values are deferred to Appendix~\ref{app:more_exp_results}.

Observe that when $|\Ical_\epsilon|$ is large, the collective regret of $\adaptedrobustagg(0.15)$ is less sensitive to the number of players. In the extreme case when $|\Ical_\epsilon| = 9$, all suboptimal arms are subpar arms, and \Cref{figure:exp2a} shows that the collective regret of $\adaptedrobustagg(0.15)$ has negligible dependence on the number of players $M$. 

\subsection{Discussion}
Back to the questions we raised earlier, our simulations show that $\adaptedrobustagg(\epsilon)$, in general, outperforms the baseline algorithms \inducb and \naiveagg.
When the set of subpar arms $\Ical_\epsilon$ is large, we showed that properly managing data aggregation can substantially improve the players' collective performance in an $\epsilon$-MPMAB problem instance. When there is no subpar arm, we demonstrated the robustness of $\adaptedrobustagg(\epsilon)$, that is, its performance is comparable with \inducb, in which the players do not share information. These empirical results validate our theoretical analyses in Section~\ref{sec:known_eps}.

\section{Conclusion and Future Work}
In this paper, we studied multitask bandit learning from heterogeneous feedback. 
We formulated the $\epsilon$-MPMAB problem and showed that whether inter-player information sharing can boost the players' performance depends on the dissimilarity parameter $\epsilon$ as well as the intrinsic difficulty of each individual bandit problem that the players face. 
In particular, in the setting where $\epsilon$ is known, we presented a UCB-based data aggregation algorithm which has near-optimal instance-dependent regret guarantees.
We also provided upper and lower bounds in the setting where $\epsilon$ is unknown.

There are many avenues for future work. 
For example, we are interested in extending our results to contextual bandits and Markov decision processes. 
Another direction is to study multitask bandit learning under other interaction protocols (e.g., only a subset of players take actions in each round).
In the future, we would also like to evaluate our algorithms in real-world applications such as healthcare robotics \citep{riek-cacm}.

\section{Acknowledgments}
We thank Geelon So and Gaurav Mahajan for insightful discussions. We also thank the National Science Foundation under IIS 1915734 and CCF 1719133 for research support. Chicheng Zhang acknowledges startup funding support from the University of Arizona.

\newpage
\onecolumn
\appendix

\section{Related Work and Comparisons}
\label{appendix:related_work}

We review the literature on multi-player bandit problems (see also \citet[Section 1.3.2]{l19} for a survey), and we comment on how existing problem formulations/approaches compare with ours studied in this paper.

\paragraph{Identical reward distributions.} A large portion of prior studies focuses on the setting where a group of players collaboratively work on one bandit learning problem instance, i.e., for each arm/action, the reward distribution is identical for every player. 

For example, \citet{kpc11} study a networked bandit problem, in which only one agent observes rewards, and the other agents only have access to its sampling pattern.
Peer-to-peer networks are explored by \citet{sbhojk13}, in which limited communication is allowed based on an overlay network.
\citet{lsl16} apply running consensus
algorithms to study a distributed cooperative multi-armed bandit problem. %
\citet{kjg18} study collaborative stochastic bandits over different structures of social networks that connect a group of agents. 
\citet{whcw19} study communication cost minimization in multi-agent multi-armed bandits.
Multi-agent bandit with a gossip-style protocol that has a communication budget is investigated in \citep{sgs19,csgs20}.
\citet{dp20a} investigate multi-agent bandits with heavy-tailed rewards.
\citet{wpajr20} present an approach with a ``parsimonious exploration principle'' to minimize regret and communication cost.
We note that, in contrast, we study multi-player bandit learning where the reward distributions can be different across players .

\paragraph{Player-dependent reward distributions.}
Multi-agent bandit learning with \textit{heterogeneous feedback} has also been covered by previous studies. 

\begin{itemize}
    \item \citet{cgz13} study a network of linear contextual bandit players with heterogeneous rewards, where the players can take advantage of reward similarities hinted by a graph.
    In \citep{wwgw16,www17,whle17}, reward distributions of each player are \textit{generated} based on social influence, which is modeled using preferences of the player's neighbors in a graph.
    These papers use regularization-based methods that take advantage of graph structures; in contrast, we study \textit{when and how} to use information from other players based on a dissimilarity parameter.
    
    \item \citet{glz14,bcs14,stv14,lkg16,ksl16,lcll19}, among others, assume that the players' reward distributions have a cluster structure and players that belong to one cluster share a \textit{common} reward distribution; our paper does not assume such cluster structure. %
    
    \item \citet{nl14} investigate dynamic clustering of players with independent reward distributions and provides an empirical validation of their algorithm; \citet{zhx20} present an algorithm that combines dynamic clustering and Thompson sampling. In contrast, in this paper, we develop a UCB-based approach that has a fallback guarantee\footnote{In \citet{zhx20}, it is unclear how to tune the hyper-parameter $\beta$ apriori to ensure a sublinear fall-back regret guarantee, even if the ``similarity'' parameter $\gamma$ is known.}. 
    
    \item In the work of \citet{srj17}, a group of players seek to find the arm with the largest average reward over all players; and, in each round, the players have to reach a consensus and choose the same arm.
    
    \item \citet{dp20b} assume access to some side information for every player, and learns a reward predictor that takes both player's side information models and action as input. In comparison, our work do not assume access to such side information.
    
    \item Further, similarities in reward distributions are explored in the work of \citet{zailn19}, which studies a \textit{warm-start} scenario, in which data are provided as history \citep{sj12} for an learning agent to explore faster. 
    \citet{glb13,soaremulti} investigate multitask learning in bandits through \textit{sequential transfer} between tasks that have similar reward distributions. In contrast, we study the multi-player setting, where all players learn continually and concurrently.

\end{itemize}

\paragraph{Collisions in multi-player bandits.} Multi-player bandit problems with collisions \citep[e.g.,][]{lz10,knj14,bp20,bb20,sxsy20,blps20,wpajr20} are also well-studied. In such models, two players pulling the same arm in the same round \textit{collide} and receive zero reward. These models have a wide range of practical applications (e.g., cognitive radio), and some assume \textit{player-dependent heterogeneous} reward distributions~\citep{bblb20,bkmp20}; in comparison, collision is not modeled in our paper.

\paragraph{Side information.}
Models in which learning agents observe side information have also been studied in prior works---one can consider data collected by other players in multi-player bandits as side observations \citep{l19}. In some models, a player observes side information for some arms that are not chosen in the current round: stochastic models with such side information are studied in \citep{cklb12,bes14,wgs15}, and adversarial models in \citep{ms11,acgmms17}; 
Similarities/closeness among arms in one bandit problem are studied in \citep{dds17,xvzs17,wzzlisg18}.
We note that our problem formulation is different, because in these models, auxiliary data are from arms in the same bandit problem instance instead of from other players.

Upper and lower bounds on the means of reward distributions are used as side information in \citep{sbss20}. Loss predictors~\citep{wla20} can also be considered as side information. In contrast, we do not leverage such information.
Further, side information can also refer to ``context'' in contextual bandits \citep{s14}. In comparison, we assume a multi-armed setting and their results do not imply ours.

\paragraph{Other multi-player bandit learning topics.} Many other multi-player bandit learning topics have also been explored. 
For example, \citet{ak05,vss20} study multi-player models in which some of the players are malicious.
\citet{cb18} study collaborative bandits with applications such as top-$K$ recommendations. 
Nonstochastic multi-armed bandit models with communicating agents are studied in \citep{bm19,cgm19}. 
Privacy protection in decentralized exploration is investigated in \citep{fal19}.
We note that, in this paper, our goal does not align closely with these topics.

\section{Proof of Claim~\ref{claim:delta-small}}
\label{appendix:claim2}

We first restate Example~\ref{example:delta-small} and Claim~\ref{claim:delta-small}.

\newtheorem*{E1}{Example~\ref{example:delta-small}}
\begin{E1}
For a fixed $\epsilon \in (0,\frac1{8})$ and $\delta \le \epsilon/4$, consider the following Bernoulli MPMAB problem instance: for each $p \in [M]$, $\mu_1^p = \frac12 + \delta$, $\mu_2^p = \frac12$. 
This is a $0$-MPMAB instance, hence an $\epsilon$-MPMAB problem instance.
Also, note that $\epsilon$ is at least  four times larger than the gaps $\Delta^p_2 = \delta$.
\end{E1}

\newtheorem*{C1}{Claim~\ref{claim:delta-small}}
\begin{C1}
For the above example, any sublinear regret algorithm for the $\epsilon$-MPMAB problem must have $\Omega(\frac{M \ln T}{\delta})$ regret on this instance, 
matching the \inducb regret upper bound.
\end{C1}

\begin{proof}[Proof of Claim~\ref{claim:delta-small}]
Suppose $\Acal$ is a sublinear-regret algorithm for the $\epsilon$-MPMAB problem; i.e., there exist $C>0$ and $\alpha>0$ such that $\Acal$ has $CT^{1-\alpha}$ regret in all $\epsilon$-MPMAB instances. 

Recall that we consider the Bernoulli $\epsilon$-MPMAB instance $\mu = (\mu_i^p)_{i \in [2], p \in [M]}$ such that $\mu_1^p = \frac12 +\delta$ and $\mu_2^p = \frac12$ for all $p$. As $\epsilon \in (0, \frac18)$ and $\delta \leq \frac{\epsilon}{4}$, it can be directly verified that all $\mu_i^p$'s are in $[\frac{15}{32}, \frac{17}{32}]$. In addition, since for all $p$, $\Delta_2^p = \delta \leq \frac{\epsilon}{4} = 5 \cdot \frac{\epsilon}{20}$, we have $\Ical_{\epsilon/20} = \O$, i.e., $\Ical^C_{\epsilon/20} = \cbr{1,2}$. 

From Theorem~\ref{thm:gap_dep_lb}, we conclude that for this MPMAB instance $\mu$, $\Acal$ has regret lower bounded as follows:
\[
\EE_{\mu} \sbr{ \Rcal(T) }
\geq
\Omega\del{\frac{M \ln(T^\alpha \Delta_2^p / C)}{\Delta_2^p}} 
=
\Omega\del{\frac{M \ln(T^\alpha \delta / C)}{\delta}}
=
\Omega\del{\frac{M \ln T}{\delta}},
\]
for sufficiently large $T$.
\end{proof}

\section{Basic properties of $\Ical_\epsilon$ for $\epsilon$-MPMAB instances}
\label{appendix:proof_of_fact}

In Section~\ref{sec:known_eps} of the paper, we presented the following two facts about properties of $\Ical_\epsilon$ for $\epsilon$-MPMAB problem instances:
\newtheorem*{F1}{Fact~\ref{fact:optimal_arm}}
\begin{F1}
$\abs{\Ical_\epsilon} \leq K-1$. In addition, for each arm $i \in \Ical_\epsilon$, $\Delta_i^{\min} > 3\epsilon$; in other words,  for all players $p$ in $[M]$, $\Delta_i^p = \mu_*^p - \mu_i^p > 3\epsilon$; consequently, arm $i$ is suboptimal for all players $p$ in $[M]$. 
\end{F1}

\newtheorem*{F2}{Fact~\ref{fact:i-eps-inv-gap-short}}
\begin{F2}
For any $i \in \Ical_{\epsilon}$, 
$\frac{1}{\Delta_i^{\min}} \leq \frac{2}{M} \sum_{p \in [M]} \frac{1}{\Delta_i^p}$.
\end{F2}

Here, we will present and prove a more complete collection of facts about the properties of $\Ical_\epsilon$ which covers every statement in Fact~\ref{fact:optimal_arm} and Fact~\ref{fact:i-eps-inv-gap-short}.
Before that, we first prove the following fact.
\begin{fact}
\label{fact:delta_difference}
For an $\epsilon$-MPMAB problem instance, for any $i \in [K]$, and $p, q \in [M]$,  $|\Delta_i^p - \Delta_i^q| \le 2\epsilon$.
\end{fact}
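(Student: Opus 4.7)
The plan is to expand the definition of the suboptimality gaps and split the difference $\Delta_i^p - \Delta_i^q$ into two pieces, each of which can be bounded by $\epsilon$ using the definition of an $\epsilon$-MPMAB instance. Concretely, writing
\[
\Delta_i^p - \Delta_i^q = (\mu_*^p - \mu_i^p) - (\mu_*^q - \mu_i^q) = (\mu_*^p - \mu_*^q) - (\mu_i^p - \mu_i^q),
\]
the triangle inequality reduces the claim to showing that both $|\mu_i^p - \mu_i^q| \le \epsilon$ and $|\mu_*^p - \mu_*^q| \le \epsilon$.

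The first bound is immediate from Definition~\ref{assumption:epsilon}. For the second, I would argue that the optimal mean reward is $\epsilon$-Lipschitz in the player index by a standard ``best-arm comparison'' argument: let $i_*^p$ and $i_*^q$ denote optimal arms for players $p$ and $q$, so that $\mu_*^p = \mu_{i_*^p}^p$ and $\mu_*^q = \mu_{i_*^q}^q$. Then
\[
\mu_*^p = \mu_{i_*^p}^p \ge \mu_{i_*^q}^p \ge \mu_{i_*^q}^q - \epsilon = \mu_*^q - \epsilon,
\]
where the first inequality uses the optimality of $i_*^p$ for player $p$, and the second uses Definition~\ref{assumption:epsilon} applied to arm $i_*^q$ and players $p,q$. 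By symmetry, swapping the roles of $p$ and $q$ yields $\mu_*^q \ge \mu_*^p - \epsilon$, hence $|\mu_*^p - \mu_*^q| \le \epsilon$.

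Combining the two bounds via the triangle inequality gives $|\Delta_i^p - \Delta_i^q| \le 2\epsilon$, as desired. There is no real obstacle here; the only subtle point is recognizing that the gap difference reduces to a difference of optimal means plus a difference of arm-$i$ means, both of which are controlled by the pairwise dissimilarity assumption. This fact will then be a convenient ingredient for proving Fact~\ref{fact:optimal_arm} (which uses it to conclude that a subpar arm has large gap for \emph{every} player) and Fact~\ref{fact:i-eps-inv-gap-short}.
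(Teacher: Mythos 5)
Your proof is correct and follows essentially the same route as the paper: both reduce the claim to $|\mu_i^p - \mu_i^q| \le \epsilon$ plus $|\mu_*^p - \mu_*^q| \le \epsilon$ and combine via the triangle inequality, with the latter established by comparing each player's optimal arm against the other player's. The only cosmetic difference is that the paper proves one direction of the optimal-mean bound by contradiction, whereas you obtain both directions by symmetry of the same chain of inequalities.
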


\begin{proof}
Fix any player $p \in [M]$, let $j \in [K]$ be an optimal arm for $p$ such that $\mu_j^p = \mu_*^p$. We first show that, for any player $q \in [M]$, $|\mu_*^q - \mu_j^p| \le \epsilon$.

\begin{itemize}
    \item $\mu_*^q \ge \mu_j^p - \epsilon$ is trivially true because $\mu_j^q \ge \mu_j^p - \epsilon$ by Definition~\ref{assumption:epsilon} and $\mu_*^q \ge \mu_j^q$ by the definition of $\mu_*^q$;
    \item $\mu_*^q \le \mu_j^p + \epsilon$ is true because if there exists an arm $k \in [K]$ such that $\mu_k^q > \mu_j^p + \epsilon$, then by Definition~\ref{assumption:epsilon} we must have $\mu_k^p \ge \mu_k^q - \epsilon > \mu_j^p$ which contradicts with the premise that $j$ is an optimal arm for player $p$.
\end{itemize}

We have shown that $|\mu_*^q - \mu_*^p| \le \epsilon$. Since $|\mu_i^q - \mu_i^p| \le \epsilon$ by Definition~\ref{assumption:epsilon}, it follows from the triangle inequality that $|\Delta_i^p - \Delta_i^q| \le 2\epsilon$.
\end{proof}

We now present a set of basic properties of $\Ical_\epsilon$.

\begin{fact}[Basic properties of $\Ical_\epsilon$]
\label{fact:Ical_properties}
Let $\Delta_i^{\max} = \max_{p \in [M]} \Delta_i^p$.
For an $\epsilon$-MPMAB problem instance,
for each arm $i \in \Ical_\epsilon$,
\begin{enumerate}[(a)]
    \item $\Delta_i^p > 3\epsilon$ for all players $p \in [M]$; in other words, $\Delta_i^{\min} > 3\epsilon$; \label{item:3_epsilon}
    \item arm $i$ is suboptimal for all players $p \in [M]$, i.e., for any player $p \in [M]$, $\mu_i^p < \mu_*^p$; \label{item:suboptimal_arm}
    \item $\frac{\Delta_i^p}{\Delta_i^q} < 2$ for any pair of players $p, q \in [M]$; consequently, $\frac{\Delta_i^{\max}}{\Delta_i^{\min}} < 2$; \label{item:delta_quotient}
    \item $\frac{1}{\Delta_i^{\min}} \leq \frac{2}{M} \sum_{p \in [M]} \frac{1}{\Delta_i^p}$;  
    \item $|\Ical_\epsilon| \le K - 1$.
\end{enumerate}
\end{fact}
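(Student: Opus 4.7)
The plan is to treat all five properties as consequences of Fact~\ref{fact:delta_difference}, which has already been established. The key observation is that membership in $\Ical_\epsilon$ gives one player with a gap exceeding $5\epsilon$, and the $2\epsilon$-Lipschitz-like bound $|\Delta_i^p - \Delta_i^q| \le 2\epsilon$ then transfers a ``large gap'' property from one player to all players. I would begin by fixing $i \in \Ical_\epsilon$ and a witness $q \in [M]$ with $\Delta_i^q > 5\epsilon$; applying Fact~\ref{fact:delta_difference} gives $\Delta_i^p \ge \Delta_i^q - 2\epsilon > 3\epsilon$ for every $p$, which is exactly (a), and this in turn yields (b) since $\Delta_i^p > 3\epsilon \ge 0$ means $i$ is strictly suboptimal for every player.

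Next I would derive (c) by combining (a) with Fact~\ref{fact:delta_difference} in both directions: for any $p, q \in [M]$, we have $\Delta_i^p \le \Delta_i^q + 2\epsilon$ and $\Delta_i^q > 3\epsilon$ (by (a)), so $\Delta_i^p < \Delta_i^q + \tfrac{2}{3} \Delta_i^q = \tfrac{5}{3} \Delta_i^q < 2 \Delta_i^q$. Specializing $q$ to an argmin of $\Delta_i^{(\cdot)}$ yields $\Delta_i^{\max} / \Delta_i^{\min} < 2$.

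Part (d) is then an immediate averaging consequence of (c): for every $p$, $\Delta_i^p < 2 \Delta_i^{\min}$ implies $\frac{1}{\Delta_i^p} > \frac{1}{2 \Delta_i^{\min}}$, so summing over $p \in [M]$ and rearranging gives $\frac{1}{\Delta_i^{\min}} < \frac{2}{M}\sum_{p \in [M]} \frac{1}{\Delta_i^p}$. Finally, (e) uses (b): fix any player $p$ and any optimal arm $i_*^p$ for $p$; since $\Delta_{i_*^p}^p = 0$, the arm $i_*^p$ cannot belong to $\Ical_\epsilon$, so $\Ical_\epsilon \subseteq [K] \setminus \{i_*^p\}$, giving $|\Ical_\epsilon| \le K - 1$.

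There is no real obstacle: the whole block reduces to chaining Fact~\ref{fact:delta_difference} (already proved) with the definitional threshold $5\epsilon$ and elementary algebra. The only thing to watch is the order of presentation — (a) must be proved first because (b), (c), (d), and (e) all invoke it, either directly (for the suboptimality and the ratio bound) or through (c) (for the harmonic-mean inequality).
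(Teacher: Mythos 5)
Your proof is correct and follows essentially the same route as the paper's: each item is derived by combining the definitional threshold $\Delta_i^q > 5\epsilon$ for a witness player with the bound $|\Delta_i^p - \Delta_i^q| \le 2\epsilon$ from Fact~\ref{fact:delta_difference}, then chaining (a) into (b)--(e) exactly as the paper does. The only cosmetic difference is that your argument for (c) passes through the slightly sharper intermediate bound $\Delta_i^p < \tfrac{5}{3}\Delta_i^q$, whereas the paper bounds the ratio by $1 + \tfrac{2\epsilon}{\Delta_i^q} < 2$ directly.
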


\begin{proof}
We prove each item one by one.
\begin{enumerate}[(a)]
    \item For each arm $i \in \Ical_\epsilon$, by definition, there exists $p \in [M]$, $\Delta_i^p > 5\epsilon$. It follows from Fact~\ref{fact:delta_difference} that for any $q \in [M]$, $\Delta_i^q \ge \Delta_i^p - 2\epsilon > 3\epsilon$. $\Delta_i^{\min} > 3\epsilon$ then follows straightforwardly.
    
    \item For each arm $i \in \Ical_\epsilon$, it follows from item~\ref{item:3_epsilon} that for any $p \in [M]$, $\Delta_i^p > 3\epsilon \ge 0$. Therefore, $i$ is suboptimal for all player $p \in [M]$.
    
    \item By Fact~\ref{fact:delta_difference}, for any $i \in \Ical_\epsilon \subseteq [K]$ and any $p, q \in [M]$, $\Delta_i^p \le \Delta_i^q + 2\epsilon$, which implies $\frac{\Delta_i^p}{\Delta_i^q} \leq 1 + \frac{2\epsilon}{\Delta_i^q}$.
    Since by item~\ref{item:3_epsilon}, $\Delta_i^q > 3\epsilon$, it follows that
    $\frac{\Delta_i^p}{\Delta_i^q} \leq 1 + \frac{2\epsilon}{\Delta_i^q} < 2$. $\frac{\Delta_i^{\max}}{\Delta_i^{\min}} < 2$ then follows straightforwardly.
    
    \item For each arm $i \in \Ical_\epsilon$, it follows from item~\ref{item:delta_quotient} that for any $p \in [M]$, $\Delta_i^p \in [\Delta_i^{\min}, 2\Delta_i^{\min}]$. Therefore, we have $\frac{1}{\Delta_i^p} \in [\frac{1}{2\Delta_i^{\min}}, \frac{1}{\Delta_i^{\min}}]$, as $\Delta_i^p > 0$ for all $p$. It then follows that $\frac{2}{M} \sum_{p \in [M]} \frac{1}{\Delta_i^p} \ge \frac{2}{M} \sum_{p \in [M]} \frac{1}{2\Delta_i^{\min}} = \frac{1}{\Delta_i^{\min}}.$
    
    \item Pick an arm $i$ that is optimal with respect to player $1$; $i$ cannot be in $\Ical_\epsilon$ because of item~\ref{item:suboptimal_arm}. Therefore, 
    $\Ical_\epsilon \subseteq [K] \setminus \cbr{i}$, which implies that it has size at most $K-1$.
    \qedhere
\end{enumerate}
\end{proof}

\section{Proof of Upper Bounds in \Cref{sec:known_eps}}
\label{appendix:upper_bounds_known_eps}

\subsection{Proof Overview} 
In Appendix \ref{app:event_q} and \ref{app:event_e}, we focus on showing that in a ``clean'' event $\mathcal{E}$ (defined in \ref{app:event_e}), the upper confidence bound UCB$^p_i(t) = \kappa_i^p(t, \lambda) + F(\wbar{n^p_i}, \wbar{m^p_i}, \lambda, \epsilon)$ (line~\ref{line:ucb-end} of Algorithm~\ref{alg: mpmab})\footnote{Recall that $\wbar{z} = \max\{z,1\}$.} holds for every $t \in [T], i \in [K], p \in [M]$ and $\lambda \in [0,1]$; and the ``clean'' event $\mathcal{E}$ occurs with $1 - 4MK/T^4$ probability.

Then, in Appendix \ref{appendix:gap_dep_ub}, we provide a proof of the gap-dependent upper bound in Theorem~\ref{thm:gap_dep_ub}. In Appendix \ref{appendix:gap_ind_ub}, we provide a proof of the gap-independent upper bound in Theorem~\ref{thm:gap_ind_upper}.

\subsection{Event $\mathcal{Q}_i(t)$}
\label{app:event_q}
Recall that $n^p_i(t-1)$ is the number of pulls of arm $i$ by player $p$ after the first $(t-1)$ rounds. Let $m^p_i(t-1) = \sum_{q \in [M]: q \neq p} n^q_i(t-1)$.

We now define the following event.

\begin{definition}
Let $$\mathcal{Q}_i(t) = 
\cbr{ 
\forall p, 
\left|\zeta^p_i(t) - \mu^p_i\right| \le 8\sqrt{\frac{3\ln T}{\wbar{n^p_i}(t-1)}},\quad
\abs{ \eta_i^p(t) - \sum_{q \neq p} \frac{n_i^q(t-1)}{\wbar{m_i^p}(t-1)} \mu_i^q} \leq 4\sqrt{\frac{14 \ln T}{\wbar{m_i^p}(t-1)}} 
},
$$
where
\[ 
\zeta_i^p(t) = \frac{ \sum_{s=1}^{t-1} \mathds{1}\{ i_t^p = i \}r_t^p}{\wbar{n^p_i}(t-1)},  
\]
and
\[
\eta_i^p(t) = \frac{ \sum_{s=1}^{t-1} \sum_{q = 1}^M \mathds{1} \{ q \neq p,\ i_t^q = i \} r_t^q }{\wbar{m^p_i}(t-1) }.
\]
\end{definition}

\begin{lemma}
\label{lem:Q}
\[ 
\Pr( \mathcal{Q}_i(t) ) \geq 1 - 4MT^{-5}.
\]
\end{lemma}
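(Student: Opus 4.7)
The plan is to prove each of the two concentration bounds in the definition of $\mathcal{Q}_i(t)$ for a fixed player $p$, and then union bound over $p \in [M]$. I aim to show that for each $p$ the probability that either bound fails is at most a small polynomial in $T^{-1}$, so that the sum over $p$ is bounded by $4MT^{-5}$. Throughout I use $M \le T$, which holds by the standing assumption $T > \max(M,K)$.

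For the $\zeta$ bound I would use a standard ghost sample coupling: let $\tilde{X}_{p,i,1}, \tilde{X}_{p,i,2}, \ldots$ be i.i.d.\ from $\mathcal{D}_i^p$, coupled so that the $k$th reward player $p$ observes from arm $i$ equals $\tilde{X}_{p,i,k}$. Then $\zeta_i^p(t) \cdot \overline{n_i^p}(t-1) = \sum_{k=1}^{n_i^p(t-1)} \tilde{X}_{p,i,k}$, so it suffices to control the empirical mean of the first $n$ ghost samples uniformly in $n$. For each fixed $n \in \{1,\ldots,T\}$, Hoeffding's inequality gives $\Pr\bigl(|\tfrac{1}{n}\sum_{k=1}^n \tilde{X}_{p,i,k} - \mu_i^p| > 8\sqrt{3\ln T/n}\bigr) \le 2T^{-384}$, and the $n = 0$ case is trivial since $|0 - \mu_i^p| \le 1 \le 8\sqrt{3\ln T}$. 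Union bounding over $n \in \{1,\ldots,T\}$ yields a failure probability of at most $2T^{-383}$ for the first event for player $p$.

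The $\eta$ bound is the main obstacle, since $\eta_i^p(t)$ aggregates samples drawn from $M-1$ different distributions, and the counts $(n_i^q(t-1))_{q \ne p}$ are themselves random and highly correlated through the algorithm's adaptive play. A naive round-level martingale has increments of order $M$, which is too large. Instead, the plan is to order the arm-$i$ pulls by the players $q \ne p$ chronologically across rounds (breaking within-round ties by player index) and to define a sample-level martingale $\tilde{S}_N = \sum_{k=1}^N (Y_k - \mu_i^{q_k})$, where $Y_k$ is the $k$th such sample and $q_k$ its originating player. I would set up the filtration $\mathcal{G}_N$ so that $q_k$ is $\mathcal{G}_{k-1}$-measurable---this is possible because the identity of the player generating the $k$th sample is determined by actions (taken before any reward in that round is revealed), with the additional within-round order being a deterministic function of player indices. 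Then $\tilde{S}_N$ is a martingale with increments in $[-1,1]$, so Azuma--Hoeffding gives $\Pr(|\tilde{S}_N| > 4\sqrt{14\ln T \cdot N}) \le 2T^{-112}$ for each fixed $N$. Since $m_i^p(t-1) \le T(M-1) \le T^2$, union bounding over $N \in \{1,\ldots,T^2\}$ bounds the failure probability of the $\eta$ event for player $p$ by $2T^{-110}$. Finally, the algebraic identity $\eta_i^p(t)\cdot \overline{m_i^p}(t-1) - \sum_{q\ne p} n_i^q(t-1)\mu_i^q = \tilde{S}_{m_i^p(t-1)}$ (with the trivial case $m_i^p(t-1)=0$ handled separately since $|\eta_i^p(t)| \le 1$ and $4\sqrt{14 \ln T} \ge 1$) translates the martingale tail bound into the desired event.

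Combining via a union bound over $p \in [M]$ gives $\Pr(\mathcal{Q}_i(t)^c) \le M(2T^{-383} + 2T^{-110}) \le 4MT^{-5}$ for all $T \ge 2$, as claimed. The key technical point, and the main difficulty, is precisely the sample-level martingale construction for the $\eta$ event: ordering the individual pulls and adapting the filtration so that the player identity $q_k$ is predictable is what keeps the martingale increments bounded by $1$ rather than $M-1$, which is essential for the union bound over $N$ to leave enough room for the outer union bound over $p$.
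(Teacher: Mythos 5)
Your proposal is correct, and it reaches the same bound by a somewhat different concentration route. The structural heart of the argument is identical to the paper's: for the $\eta$ term you must decompose the aggregate deviation at the level of individual (round, player) pulls so that each martingale increment is bounded by $1$ rather than $M-1$, and you must refine the filtration so that the identity of each pull is predictable (determined by actions, which are fixed before any reward in that round is revealed). The paper implements this with a player-indexed filtration $\Gcal_{t,q}$ and indicator-weighted increments $Y_{t,q} = \ind\{q \neq p,\ i_t^q = i\}(r_t^q - \mu_i^q)$, then applies Freedman's inequality, whose self-normalization by the realized conditional variance $\sum \ind\{\cdot\} = m_i^p(t-1)$ absorbs the randomness of the sample count in one shot. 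You instead drop the zero increments, reindex by the running count of actual pulls, and apply fixed-$N$ Azuma--Hoeffding (and, for $\zeta$, a ghost-sample coupling with fixed-$n$ Hoeffding) followed by a union bound over $N \in \{1,\ldots,T^2\}$ (resp.\ $n \in \{1,\ldots,T\}$). Both are valid; Freedman's inequality trades the union bound over counts for a lower-order additive term that the paper then absorbs into the $8\sqrt{3}$ and $4\sqrt{14}$ constants, while your union bound spends a few powers of $T$ that the generous constants easily cover. One point to make fully explicit if you write this up: extend the sample-level martingale with zero increments past the realized number of pulls so that $\tilde{S}_N$ is defined for every fixed $N$ up to $T^2$, and then note that the realized $m_i^p(t-1)$ is one of the values covered by the simultaneous high-probability event.
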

\begin{proof}
For any fixed player $p$, we discuss the two inequalities separately. Lemma~\ref{lem:Q} then follows by a union bound over the two inequalities and over all $p \in [M]$.

We first discuss the concentration of $\zeta_i^p(t)$. We define a filtration $\cbr{\Bcal_t}_{t=1}^T$, where
$$\Bcal_t = \sigma( \cbr{ i_s^{p'}, r_s^{p'}: s \in [t], p' \in [M] } \cup \cbr{ i_{t+1}^{p'}: p' \in [M]})$$ is the $\sigma$-algebra generated by the historical interactions up to round $t$ and the arm selection of all players at round $t+1$.

Let random variable $X_t = \mathds{1}\{ i_t^p = i \}  \del{r^p_t - \mu_i^p}$. We have $\EE\sbr{X_t \mid \Bcal_{t-1}} = 0$; in addition, $\VV\sbr{X_t \mid \Bcal_{t-1}} = 
\EE\sbr{(X_t - \EE[X_t \mid \Bcal_{t-1}])^2 \mid \Bcal_{t-1}}
\leq 
\EE\sbr{ (\mathds{1}\{ i_t^p = i \} r^p_t)^2  \mid \Bcal_{t-1}} 
\leq \mathds{1} \{ i_t^p = i \}$ and $\abs{X_t} \leq 1$.

Applying Freedman's inequality~\cite[Lemma 2]{bartlett2008high} with 
$\sigma = \sqrt{\sum_{s=1}^{t-1} \VV\sbr{X_s \mid \Bcal_{s-1}}}$ and $b = 1$, and using $\sigma \leq \sqrt{\sum_{s=1}^{t-1} \ind\{i_s^p = i\}}$,
we have that with probability at least $1-2 T^{-5}$,
\begin{equation}
\abs{\sum_{s=1}^{t-1} X_s}
\leq
4 \sqrt{  \sum_{s=1}^{t-1} \mathds{1} \{ i_s^p = i \} \cdot  \ln(T^5\log_2 T) }
+
2 \ln(T^5 \log_2 T).
\label{eqn:freedman-p-i-1}
\end{equation}

We consider two cases:
\begin{enumerate}
\item If $n^p_i(t-1) = \sum_{s=1}^{t-1} \mathds{1} \{ i_s^p = i \} = 0$, we have $\wbar{n^p_i}(t-1) = 1$ and $\zeta_i^p(t) = 0$. In this case, we trivially have
\[ 
\abs{ \zeta_i^p(t) - \mu_i^p } 
\leq 1 
\leq 8 \sqrt{\frac{3\ln T}{\wbar{n^p_i}(t-1)}}.
\]
\item Otherwise, $n^p_i(t-1) \geq 1$. In this case, we have $\wbar{n^p_i}(t-1) = n^p_i(t-1)$. Divide both sides of Eq.~\eqref{eqn:freedman-p-i-1} by $n^p_i(t-1)$, and use the fact that $\log T \leq T$, we have
\[
\abs{  \frac{\sum_{s=1}^{t-1} \mathds{1} \{ i_s^p = i \} r_s^p}{n^p_i(t-1)} -\mu_i^p }
\leq 
4 \sqrt{ \frac{6 \ln T}{n^p_i(t-1)}  }
+
\frac{12 \ln T}{n^p_i(t-1)}.
\]
If $\frac{12 \ln T}{n^p_i(t-1)} \geq 1$, $\abs{  \frac{\sum_{s=1}^{t-1} \mathds{1} \{ i_s^p = i \} r_s^p}{n^p_i(t-1)} -\mu_i^p } \leq 8 \sqrt{ \frac{3 \ln T}{n^p_i(t-1)} }$ is trivially true. 
Otherwise, $\frac{12 \ln T}{n^p_i(t-1)} \leq 2\sqrt{\frac{3 \ln T }{n^p_i(t-1)}}$, which implies that $\abs{ \frac{\sum_{s=1}^{t-1} \mathds{1} \{ i_s^p = i \} r_s^p}{n^p_i(t-1)} -\mu_i^p } \leq (4\sqrt{6} + 2\sqrt{3}) \sqrt{\frac{\ln T}{n^p_i(t-1)}} \leq 8 \sqrt{\frac{3  \ln T}{n^p_i(t-1)}}$. 
\end{enumerate} 
In summary, in both cases,
with probability at least $1-2T^{-5}$, we have
\[\abs{ \zeta_i^p(t-1) -\mu_i^p } \leq 8 \sqrt{\frac{3 \ln T}{\wbar{n^p_i}(t-1)}}. \]

A similar application of Freedman's inequality also shows the concentration of $\eta_i^p(t)$.
Similarly, we define a filtration $\{\mathcal{G}_{t,q} \}_{t \in [T], q \in [M]}$, where
$$\mathcal{G}_{t,q} = \sigma( \cbr{ i_s^{p'}, r_s^{p'}: s \in [t], p' \in [M] } \cup \cbr{ i_{t+1}^{p'}: p' \in [M], p' \le q})$$ 
is the $\sigma$-algebra generated by the historical interactions up to round $t$ and the arm selection of players $1,2,\ldots,q$ in round $t+1$. 
We have
\[
\Gcal_{1,1} \subset \Gcal_{1,2} \subset \ldots \subset \Gcal_{1,M} \subset \Gcal_{2,1} \subset \ldots \subset \Gcal_{2,M} \subset \ldots \subset \Gcal_{T,M}.
\]
By convention, let $\Gcal_{t,0} = \Gcal_{t-1,M}$.

Now, let random variable $Y_{t,q} = \mathds{1} \{ q \neq p,\ i_t^q = i \}  \del{r^q_t - \mu_i^q}$. We have $\EE\sbr{Y_{t,q} \mid \Gcal_{t-1,q-1}} = 0$;
in addition, $\VV\sbr{Y_{t,q} \mid \Gcal_{t-1,q-1}} = %
\EE\sbr{Y_{t,q}^2 \mid \Gcal_{t-1,q-1}} \leq \mathds{1} \{ q \neq p,\ i_t^q = i \},$ and $\abs{Y_{t,q}} \leq 1$.

Similarly, applying Freedman's inequality~\cite[Lemma 2]{bartlett2008high}
with 
$\sigma = \sqrt{\sum_{s=1}^{t-1} \sum_{q=1}^M \VV\sbr{Y_{s,q} \mid \Gcal_{s-1,q-1}}}$ and $b = 1$, and using $\sigma \leq \sqrt{\sum_{s=1}^{t-1} \sum_{q=1}^M \ind\{q \neq p, i_s^q = i}\}$,
we have that with probability at least $1-2T^{-5}$,
\begin{equation}
\abs{\sum_{s=1}^{t-1} \sum_{q=1}^{M} Y_{s,q}}
\leq
4 \sqrt{  \sum_{s=1}^{t-1} \sum_{q=1}^M \mathds{1}\{ q \neq p,\ i_s^q = i \} \cdot  \ln(T^5\log_2 (TM)) }
+
2 \ln(T^5 \log_2 (TM)).
\label{eqn:freedman-p-i-2}
\end{equation}

Again, we consider two cases. If $m^p_i(t-1) = 0$, then we have $\eta_i^p(t-1) = 0$ and $$\abs{ \eta_i^p(t-1) - \sum_{q \neq p} \frac{n^q_i(t-1)}{\wbar{m^p_i}(t-1) }\mu_i^q } = 0 \le 4 \sqrt{\frac{14 \ln T}{\wbar{m^p_i}(t-1)}}.$$

Otherwise, we have $\wbar{m^p_i}(t-1) = m^p_i(t-1)$. Divide both sides of Eq.~\eqref{eqn:freedman-p-i-2} by $m^p_i(t-1)$, and use the fact that $\log_2 (TM) \leq T^2$,
we have
\[
\abs{  \frac{\sum_{s=1}^{t-1} \sum_{q=1}^{M} \mathds{1}\{ q \neq p,\ i_s^q = i \} r_s^q}{m^p_i(t-1)} -  \sum_{q \neq p} \frac{n^q_i(t-1)}{m^p_i(t-1) }\mu_i^q }
\leq 
4 \sqrt{ \frac{7 \ln T}{m^p_i(t-1)}  }
+
\frac{14 \ln T}{m^p_i(t-1)}.
\]
If $\frac{14 \ln T}{m^p_i(t-1)} \geq 1$, $\abs{  \frac{\sum_{s=1}^{t-1} \sum_{q=1}^{M} \mathds{1}\{ q \neq p,\ i_s^q = i \} r_s^q}{m^p_i(t-1)} -  \sum_{q \neq p}\frac{n^q_i(t-1)}{m^p_i(t-1) }\mu_i^q } \leq 4 \sqrt{\frac{14 \ln T}{m^p_i(t-1)}}$ is trivially true. 
Otherwise, $\frac{14 \ln T}{m^p_i(t-1)} \leq \sqrt{\frac{14 \ln T }{m^p_i(t-1)}}$, which implies that 
\begin{align}
\abs{  \frac{\sum_{s=1}^{t-1} \sum_{q=1}^{M} \mathds{1} \{ q \neq p,\ i_s^q = i\} r_s^q}{m^p_i(t-1)} -  \sum_{q \neq p}\frac{n^q_i(t-1)}{m^p_i(t-1) }\mu_i^q } &\leq (4\sqrt{7} + \sqrt{14}) \sqrt{\frac{\ln T}{m^p_i(t-1)}} \nonumber \\
&\leq 4 \sqrt{\frac{14  \ln T}{m^p_i(t-1)}}. \nonumber
\end{align}

In summary, in both cases, with probability at least $1-2T^{-5}$, we have
\[\abs{ \eta_i^p(t-1) -\sum_{q \neq p}\frac{n^q_i(t-1)}{\wbar{m^p_i}(t-1) }\mu_i^q } \leq 4 \sqrt{\frac{14 \ln T}{\wbar{m^p_i}(t-1)}}. \]

The lemma follows by taking a union bound over these two inequalities for each fixed $p$, and over all $p \in [M]$.
\end{proof}

\subsection{Event $\mathcal{E}$}
\label{app:event_e}
Let $\mathcal{E} = \cap_{t=1}^{T} \cap_{i=1}^K \Qcal_i(t)$. We present the following corollary and lemma regarding event $\mathcal{E}$.

\begin{corollary}
It follows from Lemma~\ref{lem:Q} that $\Pr[\mathcal{E}] \ge 1 - \frac{4MK}{T^4}$.
\end{corollary}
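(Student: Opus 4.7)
The plan is a direct union bound over the $T \cdot K$ events $\Qcal_i(t)$ that compose $\mathcal{E}$. By definition, $\mathcal{E}^c = \bigcup_{t=1}^T \bigcup_{i=1}^K \Qcal_i(t)^c$, so the union bound gives
\[
\Pr[\mathcal{E}^c] \;\leq\; \sum_{t=1}^T \sum_{i=1}^K \Pr[\Qcal_i(t)^c].
\]

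Next I would invoke Lemma~\ref{lem:Q}, which asserts $\Pr[\Qcal_i(t)] \geq 1 - 4MT^{-5}$ and hence $\Pr[\Qcal_i(t)^c] \leq 4MT^{-5}$, uniformly in $i$ and $t$. Summing this bound $TK$ times yields $\Pr[\mathcal{E}^c] \leq T \cdot K \cdot 4MT^{-5} = 4MK/T^4$, so $\Pr[\mathcal{E}] \geq 1 - 4MK/T^4$, as claimed.

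There is essentially no obstacle here: the failure probability $4MT^{-5}$ in Lemma~\ref{lem:Q} was already chosen precisely so that, after absorbing a factor of $T$ from the union over rounds and a factor of $K$ from the union over arms, one recovers a bound of the form $O(MK/T^4)$. The only thing worth noting is that the per-$(i,t)$ bound in Lemma~\ref{lem:Q} already incorporated a union bound over all $M$ players (which is why the numerator carries the $M$), so no additional union over $p$ is needed at this stage.
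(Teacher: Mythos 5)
Your proof is correct and matches the paper's (implicit) argument exactly: a union bound over the $TK$ events $\Qcal_i(t)$, each failing with probability at most $4MT^{-5}$ by Lemma~\ref{lem:Q}, which gives $\Pr[\mathcal{E}^c] \le 4MK/T^4$. Your remark that the union over players is already absorbed into Lemma~\ref{lem:Q} is also consistent with how the lemma is proved.
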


\begin{lemma}
\label{lem:kappa}
If $\mathcal{E}$ occurs, we have that for every $t \in [T]$, $i \in [K]$, $p \in [M]$, for all $\lambda \in [0,1]$,
\[
\abs{ \kappa_i^p(t,\lambda) - \mu_i^p }
\leq
8 \sqrt{13\ln T \del{ \frac{\lambda^{2}}{ \wbar{n^p_i}(t-1) } + \frac{(1-\lambda)^2}{ \wbar{m_i^p}(t-1) }} } 
+ 
(1-\lambda)\epsilon,
\]
where $\kappa_i^p(t,\lambda) = \lambda \zeta^p_i(t) + (1-\lambda) \eta^p_i(t)$.
\end{lemma}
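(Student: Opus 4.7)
The plan is to decompose $\kappa_i^p(t,\lambda) - \mu_i^p$ into a concentration (variance) part and a bias part, bound each separately using event $\mathcal{E}$ and Definition~\ref{assumption:epsilon}, and finally consolidate the two square-root terms via Cauchy--Schwarz.

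First, by the definition $\kappa_i^p(t,\lambda) = \lambda \zeta_i^p(t) + (1-\lambda)\eta_i^p(t)$ and the triangle inequality,
\[
|\kappa_i^p(t,\lambda) - \mu_i^p| \;\leq\; \lambda\, |\zeta_i^p(t) - \mu_i^p| \;+\; (1-\lambda)\,|\eta_i^p(t) - \mu_i^p|,
\]
and on $\mathcal{E}$ the first summand is immediately bounded by $8\lambda \sqrt{3\ln T / \wbar{n_i^p}(t-1)}$ by definition of $\mathcal{Q}_i(t)$. For the second summand I would insert the intermediate quantity $\tilde\mu := \sum_{q\neq p} \frac{n_i^q(t-1)}{\wbar{m_i^p}(t-1)}\mu_i^q$ and apply the triangle inequality again: the gap $|\eta_i^p(t) - \tilde\mu|$ is at most $4\sqrt{14\ln T/\wbar{m_i^p}(t-1)}$ on $\mathcal{E}$, and the bias $|\tilde\mu - \mu_i^p|$ can be bounded case by case. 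When $m_i^p(t-1) \geq 1$ we have $\wbar{m_i^p}(t-1) = m_i^p(t-1)$, so the weights $n_i^q(t-1)/m_i^p(t-1)$ sum to one and $\tilde\mu - \mu_i^p = \sum_{q\neq p}\frac{n_i^q(t-1)}{m_i^p(t-1)}(\mu_i^q - \mu_i^p)$, whose magnitude is at most $\epsilon$ by Definition~\ref{assumption:epsilon}.

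The main subtlety is the degenerate case $m_i^p(t-1) = 0$, where by convention $\wbar{m_i^p}(t-1) = 1$ and both $\tilde\mu$ and $\eta_i^p(t)$ equal $0$, so $|\eta_i^p(t) - \mu_i^p| = \mu_i^p$ could be as large as $1$ rather than $\epsilon$. I would resolve this by observing that $4(1-\lambda)\sqrt{14\ln T/\wbar{m_i^p}(t-1)} = 4(1-\lambda)\sqrt{14\ln T}$ already exceeds $(1-\lambda)$ once $T\geq 2$ (which holds by assumption $T > \max(M,K)$), so in either case we may write
\[
(1-\lambda)|\eta_i^p(t) - \mu_i^p| \;\leq\; 4(1-\lambda)\sqrt{14\ln T/\wbar{m_i^p}(t-1)} + (1-\lambda)\epsilon.
\]

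Combining the two pieces yields
\[
|\kappa_i^p(t,\lambda) - \mu_i^p| \;\leq\; 8\lambda\sqrt{3\ln T/\wbar{n_i^p}(t-1)} \;+\; 4(1-\lambda)\sqrt{14\ln T/\wbar{m_i^p}(t-1)} \;+\; (1-\lambda)\epsilon.
\]
Finally I would collapse the two square roots by $\sqrt{x}+\sqrt{y}\leq\sqrt{2(x+y)}$ with $x=192\lambda^2\ln T/\wbar{n_i^p}(t-1)$ and $y=224(1-\lambda)^2\ln T/\wbar{m_i^p}(t-1)$, and then bound $2\max(192,224)=448\leq 832=64\cdot 13$ to absorb the resulting expression into the single term $8\sqrt{13\ln T\bigl(\lambda^2/\wbar{n_i^p}(t-1) + (1-\lambda)^2/\wbar{m_i^p}(t-1)\bigr)}$, giving the stated bound. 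The only nonroutine step is the degenerate case above; the rest is a concentration-plus-bias decomposition and a constant-tracking exercise.
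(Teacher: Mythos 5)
Your proof is correct and follows essentially the same route as the paper's: bound the concentration of $\zeta_i^p$ and $\eta_i^p$ via the event $\mathcal{Q}_i(t)$, bound the bias of the weighted mean $\sum_{q\neq p}\frac{n_i^q(t-1)}{\wbar{m_i^p}(t-1)}\mu_i^q$ by $\epsilon$ using Definition~\ref{assumption:epsilon}, and merge the two square roots with $\sqrt{A}+\sqrt{B}\leq\sqrt{2(A+B)}$ and the constant check $448\leq 832=64\cdot 13$. Your explicit treatment of the degenerate case $m_i^p(t-1)=0$ (where the weights sum to zero and the bias bound $\leq\epsilon$ does not directly apply) is a point the paper's own proof glosses over, and your fix via the dominating $4\sqrt{14\ln T}$ term is valid.
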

\begin{proof}
If $\mathcal{E}$ occurs, for every $t \in [T]$ and $i \in [K]$, by the definition of event $\Qcal_i(t)$, we have
\[
\abs{ \zeta^p_i(t) - \mu^p_i } < 8 \sqrt{\frac{3\ln T}{\wbar{n^p_i}(t-1)}}, \text{and}\
\abs{ \eta_i^p(t) - \sum_{q \neq p} \frac{n_i^q(t-1)}{\wbar{m_i^p}(t-1)} \mu_i^q} \leq 4 \sqrt{\frac{14 \ln T}{\wbar{m_i^p}(t-1)}}.
\]
As $\kappa_i^p(t,\lambda) = \lambda \zeta^p_i(t) + (1-\lambda) \eta^p_i(t)$, we have:
\begin{align}
\abs{ \kappa_i^p(t,\lambda) - \Big[\lambda \mu^p_i + (1-\lambda) \sum_{q \neq p} \frac{n_i^q(t-1)}{\wbar{m_i^p}(t-1)} \mu_i^q \Big] } 
\leq &
8\lambda \sqrt{\frac{3\ln T}{\wbar{n^p_i}(t-1)}}
+
4 (1-\lambda) \sqrt{\frac{14 \ln T}{\wbar{m_i^p}(t-1)}} 
\notag \\
\leq &
8 \sqrt{13\ln T \del{ \frac{\lambda^{2}}{ \wbar{n^p_i}(t-1) } + \frac{(1-\lambda)^2}{ \wbar{m_i^p}(t-1) }}},
\label{eqn:var-wt-agg-1}
\end{align}
where the second inequality uses the elementary facts that $\sqrt{A} + \sqrt{B} \leq \sqrt{2(A+B)}$.

Furthermore, from Definition~\ref{assumption:epsilon}, we have
\[ 
\abs{ \sum_{q \neq p} \frac{n_i^q(t-1)}{\wbar{m_i^p}(t-1)} \mu_i^q - \mu_i^p }
\leq
\sum_{q \neq p} \frac{n_i^q(t-1)}{\wbar{m_i^p}(t-1)} \abs{\mu_i^q - \mu_i^p} 
\leq
\epsilon.
\]
This shows that 
\[
\abs{ \mu^p_i - (\lambda \mu^p_i + (1-\lambda) \sum_{q \neq p} \frac{n_i^q(t-1)}{\wbar{m_i^p}(t-1)} \mu_i^q) }
\leq 
(1-\lambda) \epsilon.
\]
Combining the above inequality with Eq.~\eqref{eqn:var-wt-agg-1}, we get
\[
\abs{ \kappa_i^p(t, \lambda) - \mu_i^p }
\leq
8 \sqrt{13\ln T \del{ \frac{\lambda^{2}}{ \wbar{n^p_i}(t-1) } + \frac{(1-\lambda)^2}{ \wbar{m_i^p}(t-1) }} } 
+ 
(1-\lambda)\epsilon. 
\]
This completes the proof.
\end{proof}

\subsection{Proof of Theorem~\ref{thm:gap_dep_ub}}
\label{appendix:gap_dep_ub}

We first restate Theorem~\ref{thm:gap_dep_ub}.
\newtheorem*{T1}{Theorem~\ref{thm:gap_dep_ub}}
\begin{T1}
Let $\robustagg(\epsilon)$ run on an $\epsilon$-MPMAB problem instance for $T$ rounds. 
Then, its expected collective regret satisfies
\begin{align*}
\mathbb{E}[\mathcal{R}(T)] \le \order \Bigg(
\sum_{i \in \Ical_\epsilon} \del{ \frac{\ln T}{\Delta_i^{\min}} + M \Delta_i^{\min}} +
\sum_{i \in \Ical^C_\epsilon}
\sum_{p \in [M]: \Delta_i^p > 0}
\frac{\ln T}{\Delta_i^p}
 \Bigg).
\end{align*}
\end{T1}

Recall that the expected collective regret is defined as $\EE[\mathcal{R}(T)] = \sum_{i\in [K]} \sum_{p \in [M]} \Delta_i^p \cdot \mathbb{E} [n^p_i(T)]$. %
Before we prove \Cref{thm:gap_dep_ub}, 
we first present the following two lemmas, which provides an upper bound for (1) the total number of arm pulls for arm $i$, for $i$ in $\Ical_{\epsilon}$ and (2) the individual number of arm pulls for arm $i$ and player $p$, for $i$ in $\Ical_{\epsilon}^C$, conditioned on $\Ecal$ happening.

\begin{lemma}
\label{lem:arm_pulls_I}
Denote $n_i(T) = \sum_{p \in [M]} n^p_i(T)$ as the total number of pulls of arm $i$ by all the players after $T$ rounds. Let $\robustagg(\epsilon)$ run on an $\epsilon$-MPMAB problem instance for $T$ rounds.
Then, for each $i \in \Ical_\epsilon$, we have
$$\mathbb{E} [n_i(T) \vert \mathcal{E}] \le \order \left(\frac{\ln T}{(\Delta_i^{\min})^2}  + M\right).$$
\end{lemma}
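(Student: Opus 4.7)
The plan is to work on the clean event $\mathcal{E}$ from Appendix~\ref{app:event_e}, on which Lemma~\ref{lem:kappa} provides a two-sided confidence bound simultaneously for every weighting $\lambda \in [0,1]$. My first goal is a deterministic pointwise bound: on $\mathcal{E}$, whenever any player $p$ pulls arm $i \in \Ical_\epsilon$ at a round $t \le T$, I claim that
\[
n_i(t-1) \;\le\; O\!\left(\frac{\ln T}{(\Delta_i^{\min})^2}\right).
\]
The lemma will then follow from a short bookkeeping step on the last round at which arm $i$ is ever pulled.

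To establish the pointwise bound I would use the standard UCB ``optimism $+$ deviation'' pair. Applied to the optimal arm $i_*^p$, Lemma~\ref{lem:kappa} gives $\UCB_{i_*^p}^p(t) \ge \mu_*^p$, so the selection rule $i_t^p = \arg\max_j \UCB_j^p(t)$ forces $\UCB_i^p(t) \ge \mu_*^p$, while the same lemma yields $\UCB_i^p(t) \le \mu_i^p + 2F(\wbar{n_i^p(t-1)},\wbar{m_i^p(t-1)},\lambda^*,\epsilon)$. Combining these gives $F(\wbar{n_i^p(t-1)},\wbar{m_i^p(t-1)},\lambda^*,\epsilon) \ge \tfrac{1}{2}\Delta_i^p$. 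Now I exploit the optimality of $\lambda^*$ by evaluating $F$ at the two extreme weightings: $\lambda = 1$ (own samples only) gives $8\sqrt{13\ln T / \wbar{n_i^p(t-1)}} \ge \tfrac{1}{2}\Delta_i^p$, hence $\wbar{n_i^p(t-1)} \le O(\ln T / (\Delta_i^p)^2)$; $\lambda = 0$ (pure aggregation) gives $8\sqrt{13\ln T / \wbar{m_i^p(t-1)}} + \epsilon \ge \tfrac{1}{2}\Delta_i^p$. Here the subpar property of $i$ enters crucially: by Fact~\ref{fact:optimal_arm}, $\Delta_i^p > 3\epsilon$, so $\tfrac{1}{2}\Delta_i^p - \epsilon \ge \tfrac{1}{6}\Delta_i^{\min}$, yielding $\wbar{m_i^p(t-1)} \le O(\ln T / (\Delta_i^{\min})^2)$. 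Since $\Delta_i^p \ge \Delta_i^{\min}$, both terms in $n_i(t-1) = n_i^p(t-1) + m_i^p(t-1)$ are at most $O(\ln T / (\Delta_i^{\min})^2)$, proving the claim.

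To conclude, let $T_i$ be the last round in which \emph{any} player pulls arm $i$ (set $T_i = 0$ if no such round exists, in which case the lemma is trivial). Applying the pointwise bound at $t = T_i$ gives $n_i(T_i - 1) \le O(\ln T / (\Delta_i^{\min})^2)$; at most $M$ players pull arm $i$ in round $T_i$ itself, and no pulls occur afterwards, so $n_i(T) \le n_i(T_i - 1) + M \le O(\ln T / (\Delta_i^{\min})^2) + M$ holds deterministically on $\mathcal{E}$ and hence under $\EE[\,\cdot\mid \mathcal{E}]$.

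The only nontrivial step is the $\lambda = 0$ evaluation, where the fixed bias $(1-\lambda)\epsilon = \epsilon$ must be absorbed into $\tfrac{1}{2}\Delta_i^p$; this is precisely where the subpar-arm property $\Delta_i^p > 3\epsilon$ (Fact~\ref{fact:optimal_arm}) is essential, and explains why the lemma is restricted to $i \in \Ical_\epsilon$. Without it, pure aggregation cannot overcome the inter-player dissimilarity, the bound on $\wbar{m_i^p(t-1)}$ breaks, and one would be left with only the per-player $O(\ln T / (\Delta_i^p)^2)$ count, losing the factor-$M$ collective saving that ultimately drives Theorem~\ref{thm:gap_dep_ub}.
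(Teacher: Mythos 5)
Your proposal is correct and follows essentially the same route as the paper's proof: on the clean event $\mathcal{E}$, combine optimism ($\UCB^p_{i_*^p}(t)\ge\mu_*^p$) with the two-sided bound $\UCB_i^p(t)\le\mu_i^p+2F(\cdot,\lambda^*,\epsilon)$ from Lemma~\ref{lem:kappa}, use the minimality of $\lambda^*$ to control the confidence width, invoke $\Delta_i^{\min}>3\epsilon$ from Fact~\ref{fact:optimal_arm} to absorb the bias term, and add $M$ for the parallel pulls in the final round. The only cosmetic differences are that the paper plugs in the single weighting $\lambda=\wbar{n^p_i}/(\wbar{n^p_i}+\wbar{m^p_i})$ to bound $n_i(t-1)=n_i^p(t-1)+m_i^p(t-1)$ in one step (and uses a threshold-counting decomposition rather than your last-pull argument), whereas you evaluate $F$ at $\lambda=0$ and $\lambda=1$ separately and sum the two bounds, which costs only a constant factor.
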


\begin{lemma}
\label{lem:arm_pulls_IC}
Let $\robustagg(\epsilon)$ run on an $\epsilon$-MPMAB problem instance for $T$ rounds. Then, for each $i \in \Ical^C_\epsilon$ and player $p \in [M]$ such that $\Delta_i^p > 0$, we have
$$
\mathbb{E} [n^p_i(T) \vert \mathcal{E}] \le \order \left( \frac{\ln T}{(\Delta^p_i)^2} \right).
$$
\end{lemma}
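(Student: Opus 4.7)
\textbf{Proof plan for Lemma~\ref{lem:arm_pulls_IC}.} The plan is to carry out a standard UCB-style argument, but using the confidence bound supplied by Lemma~\ref{lem:kappa} at the adaptively chosen $\lambda^\star$. The key observation is that this bound, combined with the optimality of $\lambda^\star$, is always at least as tight as the purely-individual UCB-1 confidence bound, so the usual UCB-1 pull count argument goes through almost verbatim for every arm (whether subpar or not), giving a deterministic upper bound on $n_i^p(T)$ under the clean event $\Ecal$.

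Concretely, I would first show that under $\Ecal$, for every round $t$, player $p$ and arm $i$, the algorithm's upper confidence bound is valid in both directions, namely
\[
\mu_i^p \;\le\; \UCB_i^p(t) \;\le\; \mu_i^p + 2\, F\bigl(\wbar{n_i^p}(t-1),\wbar{m_i^p}(t-1),\lambda^\star,\epsilon\bigr),
\]
which is immediate from Lemma~\ref{lem:kappa} applied at $\lambda=\lambda^\star$. In particular, for any optimal arm $i_\star^p$ for player $p$, $\UCB_{i_\star^p}^p(t)\ge \mu_\star^p$. Hence whenever player $p$ pulls a suboptimal arm $i$ at round $t$, i.e.\ $\UCB_i^p(t)\ge \UCB_{i_\star^p}^p(t)$, we must have
\[
\Delta_i^p \;\le\; 2\, F\bigl(\wbar{n_i^p}(t-1),\wbar{m_i^p}(t-1),\lambda^\star,\epsilon\bigr).
\]

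Next I would exploit that $\lambda^\star$ is the minimizer of $F(\wbar{n_i^p},\wbar{m_i^p},\cdot,\epsilon)$ over $[0,1]$. Plugging in the feasible choice $\lambda=1$ gives
\[
F\bigl(\wbar{n_i^p}(t-1),\wbar{m_i^p}(t-1),\lambda^\star,\epsilon\bigr)
\;\le\; F\bigl(\wbar{n_i^p}(t-1),\wbar{m_i^p}(t-1),1,\epsilon\bigr)
\;=\; 8\sqrt{\tfrac{13\ln T}{\wbar{n_i^p}(t-1)}}.
\]
Combining the two displays, a pull of arm $i$ by player $p$ at round $t$ forces
\[
\Delta_i^p \;\le\; 16\sqrt{\tfrac{13\ln T}{\wbar{n_i^p}(t-1)}},
\qquad\text{i.e.}\qquad
\wbar{n_i^p}(t-1) \;\le\; \tfrac{16^2\cdot 13\,\ln T}{(\Delta_i^p)^2}.
\]
Since $n_i^p(t)$ only increases by one per pull of arm $i$ by player $p$, this yields the deterministic bound $n_i^p(T) \le O(\ln T/(\Delta_i^p)^2)$ on the event $\Ecal$, and the claimed bound on $\EE[n_i^p(T)\mid \Ecal]$ follows immediately.

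The only real subtlety — and the step I would double-check carefully — is the ``fallback'' use of $\lambda^\star$: the minimizer for the confidence interval in the algorithm may well prefer to use shared samples ($\lambda^\star<1$), but the above argument only needs the existence of the uniform upper bound $F(\cdot,\cdot,1,\epsilon)$ to control the width, which is why the same $O(\ln T/(\Delta_i^p)^2)$ bound holds irrespective of whether $i\in\Ical_\epsilon$ or $i\in\Ical_\epsilon^C$. (The actual benefit from $\Ical_\epsilon$ will instead be extracted in Lemma~\ref{lem:arm_pulls_I} by bounding the \emph{collective} pull count rather than per-player counts.) No contextual facts beyond Lemma~\ref{lem:kappa}, the monotone optimality of $\lambda^\star$, and the standard UCB pull-count inequality are needed.
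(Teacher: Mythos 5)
Your proposal is correct and follows essentially the same route as the paper: both rely on the two-sided validity of the UCB under $\Ecal$ (Lemma~\ref{lem:kappa}), bound the optimized width $F(\cdot,\cdot,\lambda^*,\epsilon)$ by the feasible choice $\lambda=1$, and conclude with the standard UCB pull-count argument, yielding the same constant $16^2\cdot 13\,\ln T/(\Delta_i^p)^2$ that the paper uses as its threshold $\tau$. The paper merely packages the final step as a counting argument over an indicator sum with threshold $\tau$ rather than arguing directly from the last pull, which is an immaterial difference.
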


\begin{proof}[Proof of Lemma~\ref{lem:arm_pulls_I}]

We first note that it follows from item~\ref{item:suboptimal_arm} of Fact~\ref{fact:Ical_properties} that every arm $i \in \Ical_\epsilon$ is suboptimal for all players $p \in [M]$.

We have
\begin{flalign}
n_i(T) & = \sum_{t = 1}^T\ \sum_{p = 1}^M \mathds{1}\big\{ i^p_t = i\big\} \nonumber \\
& \le M + \tau + \sum_{t = 1}^T\ \sum_{p = 1}^M \mathds{1} \big\{ i^p_t = i, n_i(t-1) > \tau \big\}. \label{eq-counting}
\end{flalign}
Here, $\tau \ge 1$ is an arbitrary integer. The term $M$ is due to parallel arm pulls in the $\epsilon$-MPMAB problem: Let $s$ be the first round such that after round $s$, the total number of pulls $n_i(s) > \tau$. This implies that $n_i(s-1) \le \tau$. Then in round $s$, there can be up to $M$ pulls of arm $i$ by all the players, which means that in round $(s+1)$ when the third term in Eq.~\eqref{eq-counting} can first start counting, there could have been up to $\tau + M$ pulls of the arm $i$.

It then follows that
\begin{flalign}
n_i(T) & \le M + \tau + \sum_{t = 1}^T\ \sum_{p = 1}^M \mathds{1} \big\{\text{UCB}^p_{i_*^p}(t) \le \text{UCB}^p_i(t),\ n_i(t-1) > \tau \big\}. \label{eqn:armpull-batch}
\end{flalign}

Recall that $\Delta_i^{\min} = \min_{p} \Delta_i^p$, and for each $i \in \Ical_\epsilon$, we have $\Delta_i^p \ge \Delta_i^{\min} > 3\epsilon$ by item~\ref{item:3_epsilon} of Fact~\ref{fact:Ical_properties}.

With foresight, we choose $\tau = \lceil \frac{3328\ln T}{(\Delta^{\min}_i - 2\epsilon)^2} \rceil$. 
Conditional on $\mathcal{E}$, we show that, \textit{for any arm $i \in \Ical_\epsilon$}, the event $\big\{ \text{UCB}^p_{i_*^p}(t) \le \text{UCB}^p_i(t), n_i(t-1) > \tau \big\}$ 
never happens. It suffices to show that if $n_i(t-1) > \tau$,
\begin{equation} 
\UCB^p_{i_*^p}(t) \geq \mu_*^p, 
\label{eqn:i-star-large-eps}
\end{equation}
and
\begin{equation} \UCB^p_i(t) < \mu^p_*
\label{eqn:i-small-eps}
\end{equation}
happen simultaneously. 

Eq.~\eqref{eqn:i-star-large-eps} follows straightforwardly from the definition of $\Ecal$ along with Lemma~\ref{lem:kappa}. For Eq.~\eqref{eqn:i-small-eps}, we have the following upper bound on $\UCB_i^p(t)$:
\begin{align}
    \UCB_i^p(t)
    &= \kappa_i^p(t, \lambda^*) + F(\wbar{n^p_i}, \wbar{m^p_i}, \lambda^*, \epsilon) \notag \\
    & \leq \mu_i^p + 2 F(\wbar{n^p_i}, \wbar{m^p_i}, \lambda^*, \epsilon) \notag \\
    & = \mu_i^p + 2\Big[\min_{\lambda \in [0,1]} 8\sqrt{13\ln T[\frac{\lambda^2}{\wbar{n^p_i}(t-1)} + \frac{(1-\lambda)^2}{\wbar{m^p_i}(t-1)}]} + (1-\lambda)\epsilon\Big] \nonumber \\
    & \le \mu_i^p + 2\Big[8\sqrt{\frac{13\ln T}{\wbar{n^p_i}(t-1) + \wbar{m^p_i}(t-1)}} + \epsilon\Big] \nonumber \\
    & \le \mu_i^p + 2\Big[8\sqrt{\frac{13\ln T}{n_i(t-1)}} + \epsilon\Big] \nonumber \\
    & < \mu_i^p + 2\Big[8\sqrt{\frac{13\ln T (\Delta^p_i - 2\epsilon)^2}{3328\ln T}} +\epsilon \Big] = \mu_i^p + \Delta^p_i = \mu^p_*, \nonumber
\end{align}
where the first inequality is from the definition of $\Ecal$ and Lemma~\ref{lem:kappa}; the second inequality is from choosing $\lambda = \frac{\wbar{n^p_i}(t-1)}{\wbar{n^p_i}(t-1) + \wbar{m^p_i}(t-1)}$; the third inequality is from the simple facts that $n^p_i(t-1) \le \wbar{n^p_i}(t-1)$, $m^p_i(t-1) \le \wbar{m^p_i}(t-1)$, and $n_i(t-1) = n^p_i(t-1) + m^p_i(t-1)$;
the last inequality is from the premise that $n_i(t-1) > \tau \geq \frac{3328\ln T}{(\Delta^{\min}_i - 2\epsilon)^2} \geq \frac{3328\ln T}{(\Delta^p_i - 2\epsilon)^2}$.

Continuing Eq.~\eqref{eqn:armpull-batch}, it then follows that, for each $i \in \Ical_\epsilon$,
\begin{align}
    \mathbb{E} [n_i(T) \vert \mathcal{E}] \le \lceil \frac{3328\ln T}{(\Delta^{\min}_i - 2\epsilon)^2} \rceil + M \le  \frac{3328\ln T}{(\Delta^{\min}_i - 2\epsilon)^2}  + (M + 1).
    \label{eq:ub_pulls_I_before}
\end{align}

Now, by item~\ref{item:3_epsilon} of Fact~\ref{fact:Ical_properties}, for each $i \in \Ical_\epsilon$, $\Delta_i^{\min} > 3\epsilon$. We then have $\frac{\Delta_i^{\min}}{\Delta_i^{\min} - 2\epsilon} = \frac{\Delta_i^{\min} - 2\epsilon + 2\epsilon}{\Delta_i^{\min} - 2\epsilon} = 1 + \frac{2\epsilon}{\Delta_i^{\min} - 2\epsilon} < 3$. 
It follows that
$$\frac{3328\ln T}{(\Delta^{\min}_i - 2\epsilon)^2} = \frac{3328\ln T}{(\Delta_i^{\min})^2} \cdot \left(\frac{\Delta_i^{\min}}{\Delta^{\min}_i - 2\epsilon}\right)^2 < \frac{29952\ln T}{(\Delta_i^{\min})^2}.$$

Therefore, continuing Eq.~\eqref{eq:ub_pulls_I_before}, for each $i \in \Ical_\epsilon$, we have
\begin{align*}
    \mathbb{E} [n_i(T) \vert \mathcal{E}] < \frac{29952\ln T}{(\Delta_i^{\min})^2}  + (M + 1) \le \frac{29952\ln T}{(\Delta_i^{\min})^2}  + 2M.
\end{align*}
where the second inequality follows from the fact that $M \ge 1$.

It then follows that
\begin{align*}
  \mathbb{E} [n_i(T) \vert \mathcal{E}] \le \order \left(\frac{\ln T}{(\Delta_i^{\min})^2}  + M\right).
\end{align*}

This completes the proof of Lemma~\ref{lem:arm_pulls_I}.
\end{proof}

\begin{proof}[Proof of Lemma~\ref{lem:arm_pulls_IC}]
Let's now turn our attention to arms in $\Ical^C_\epsilon = [K] \setminus \Ical_\epsilon$. 
For each arm $i \in \Ical^C_\epsilon$ and for each player $p \in [M]$ such that $\mu_i^p < \mu_{*}^p$, we seek to bound the expected number of pulls of arm $i$ by $p$ in $T$ rounds, under the assumption that the event $\Ecal$ occurs.
Since the optimal arm(s) may be different for different players, we treat each player separately.

Fix a player $p \in [M]$ and a suboptimal arm $i \in \Ical^C_\epsilon$ such that $\Delta^p_i > 0$. Recall that $n^p_i(t-1)$ is the number of pulls of arm $i$ by player $p$ after $(t-1)$ rounds. We have
\begin{align}
    n^p_i(T) & = \sum_{t=1}^T \mathds{1}\big\{ i^p_t = i\big\} \nonumber \\
    & \le \tau + \sum_{t = \tau + 1}^T \mathds{1} \big\{ i^p_t = i, n^p_i(t-1) > \tau \big\}, \label{eq:lem_2_counting}
\end{align}
where $\tau \ge 1$ is an arbitrary integer. It then follows that
\[
n^p_i(T) \le \tau + \sum_{t = \tau + 1}^T \mathds{1} \big\{ \text{UCB}^p_{i_*^p}(t) \le \text{UCB}^p_i(t), n^p_i(t-1) > \tau \big\}.
\]

With foresight, let $\tau = \lceil \frac{3328\ln T}{(\Delta^p_i)^2} \rceil$. Conditional on $\mathcal{E}$, we show that, \textit{for any $i \in \Ical^C_\epsilon$ such that $\Delta_i^p > 0$}, the event $\big\{ \text{UCB}^p_{i_*^p}(t) \le \text{UCB}^p_i(t), n^p_i(t-1) > \tau \big\}$ 
never happens. It suffices to show that if $n^p_i(t-1) > \tau$,
\begin{equation} 
\UCB^p_{i_*^p}(t) \geq \mu_*^p, 
\label{eqn:i-star-large}
\end{equation}
and
\begin{equation} \UCB^p_i(t) < \mu_*^p
\label{eqn:i-small}
\end{equation}
happen simultaneously. 

Eq.~\eqref{eqn:i-star-large} follows straightforwardly from the definition of $\Ecal$ along with Lemma~\ref{lem:kappa}. For Eq.~\eqref{eqn:i-small}, we have the following upper bound on $\UCB_i^p(t)$:
\begin{align}
    \UCB_i^p(t)
    &= \kappa_i^p(t, \lambda^*) + F(\wbar{n^p_i}, \wbar{m^p_i}, \lambda^*, \epsilon) \notag \\
    & \leq \mu_i^p + 2 F(\wbar{n^p_i}, \wbar{m^p_i}, \lambda^*, \epsilon) \notag \\
    &= \mu_i^p + 2\Big[\min_{\lambda \in [0,1]} 8\sqrt{13\ln T[\frac{\lambda^2}{\wbar{n^p_i}(t-1)} + \frac{(1-\lambda)^2}{\wbar{m^p_i}(t-1)}]} + (1-\lambda)\epsilon\Big] \nonumber \\
    & \le \mu_i^p +2\Big[8\sqrt{\frac{13\ln T}{\wbar{n^p_i}(t-1)}}\Big] \nonumber \\
    & \le \mu_i^p +2\Big[8\sqrt{\frac{13\ln T}{n^p_i(t-1)}}\Big] \nonumber \\
    & < \mu_i^p + 2\Big[8\sqrt{\frac{13\ln T (\Delta^p_i)^2}{3328\ln T}} \Big] = \mu_i^p + \Delta^p_i = \mu^p_*, \nonumber
\end{align}
where the first inequality is from the definition of event $\mathcal{E}$ and Lemma~\ref{lem:kappa}; the second inequality is from choosing $\lambda = 1$; 
the third inequality uses the basic fact that $n^p_i(t-1) \le \wbar{n^p_i}(t-1)$;
the fourth inequality is by our premise that $n^p_i(t-1) > \tau \geq \frac{3328\ln T}{(\Delta^p_i)^2}$.

It follows that conditional on $\mathcal{E}$, the second term in Eq.~\eqref{eq:lem_2_counting} is always zero, i.e., player $p$ would not pull arm $i$ again. Therefore, for any $i \in \Ical^C_\epsilon$ such that $\Delta_i^p > 0$, we have
\begin{align}
    \mathbb{E} [n^p_i(T) \vert \mathcal{E}] \le \lceil \frac{3328\ln T}{(\Delta^p_i)^2} \rceil \le  \frac{3328\ln T}{(\Delta^p_i)^2} + 1 \le \frac{3328\ln T}{(\Delta^p_i)^2} \cdot 2 = \frac{6656\ln T}{(\Delta^p_i)^2}. %
\end{align}

It then follows that
\begin{align*}
  \mathbb{E} [n^p_i(T) \vert \mathcal{E}] \le \order \left( \frac{\ln T}{(\Delta^p_i)^2} \right).
\end{align*}

This completes the proof of Lemma~\ref{lem:arm_pulls_IC}.
\end{proof}

\paragraph{Proof of Theorem~\ref{thm:gap_dep_ub}.}
We now prove Theorem~\ref{thm:gap_dep_ub}.
\begin{proof}
We have
\begin{align}
    \EE[\mathcal{R}(T)] & \le \EE[\mathcal{R}(T) \vert \mathcal{E}] + \EE[\mathcal{R}(T) \vert \overline{\mathcal{E}}]\Pr[\overline{\mathcal{E}}] \nonumber \\
    & \le \EE[\mathcal{R}(T) \vert \mathcal{E}] + (TM)\frac{4MK}{T^4} \nonumber \\
    & \le \EE[\mathcal{R}(T) \vert \mathcal{E}] + O(1) \label{eq:expected_regret_given_e}
\end{align}
where the second inequality uses the fact that $\EE[\mathcal{R}(T) \vert \overline{\mathcal{E}}] \le TM$, as the instantaneous regret for each player in each round is bounded by $1$; and the last inequality follows under the premise that $T > \max(M,K)$.

Let $\Delta_i^{\max} = \max_p \Delta_i^p$. We have
\begin{align}
    \EE[\mathcal{R}(T) | \Ecal] &= 
    \sum_{i \in [K]} \sum_{p \in [M]} \mathbb{E} [n^p_i(T) | \Ecal] \cdot \Delta^p_i \nonumber \\
    & \le  \sum_{i \in \Ical_\epsilon} \mathbb{E} [n_i(T) | \Ecal] \cdot \Delta_i^{\max} 
    +  
    \sum_{i \in \Ical^C_\epsilon} \sum_{p \in [M]: \Delta_i^p > 0} \mathbb{E} [n^p_i(T) | \Ecal] \cdot \Delta_i^p, \label{eq:break_down_I_IC}
\end{align}
where the inequality holds because the instantaneous regret for any arm $i$ and any player $p$ is bounded by $\Delta_i^{\max}$.

Now, it follows from Lemma~\ref{lem:arm_pulls_I} that there exists some constant $C_1 > 0$ such that for each $i \in \Ical_\epsilon$,
$$\mathbb{E} [n_i(T) \vert \mathcal{E}] \le  C_1 \left(\frac{\ln T}{(\Delta_i^{\min})^2}  + M\right),
$$
and it follows from Lemma~\ref{lem:arm_pulls_IC} that there exists some constant $C_2 > 0$ such that for each $i \in \Ical^C_\epsilon$ and $p \in [M]$ with $\Delta_i^p > 0$,
$$
\mathbb{E} [n^p_i(T) \vert \mathcal{E}] \le C_2 \left( \frac{\ln T}{(\Delta^p_i)^2} \right).
$$

Then, continuing Eq.~\eqref{eq:break_down_I_IC}, we have
\begin{align}
    \EE[\mathcal{R}(T) | \Ecal] 
    & \le \sum_{i \in \Ical_\epsilon} C_1 \left(\frac{\ln T}{(\Delta_i^{\min})^2}  + M\right) \cdot \Delta_i^{\max} 
    + 
    \sum_{i \in \Ical^C_\epsilon} \sum_{p \in [M]: \Delta_i^p > 0} C_2 \left(\frac{\ln T}{(\Delta^p_i)^2}\right) \cdot \Delta_i^p \nonumber \\
    & \le 2C_1 \sum_{i \in \Ical_\epsilon} \left(\frac{\ln T}{\Delta_i^{\min}}  + M\Delta_i^{\min}\right)
    + 
    C_2 \sum_{i \in \Ical^C_\epsilon} \sum_{p \in [M]: \Delta_i^p > 0} \frac{\ln T}{\Delta^p_i}, \nonumber
\end{align}
where the second inequality follows from item~\ref{item:delta_quotient} of Fact~\ref{fact:Ical_properties} which states that $\forall i \in \Ical_\epsilon, \Delta_i^{\max} < 2\Delta_i^{\min}$.

It then follows from Eq.~\eqref{eq:expected_regret_given_e} that
\begin{align}
    \mathbb{E} [\mathcal{R}(T)] & \le \mathbb{E} [\mathcal{R}(T) | \Ecal] + O(1) \nonumber \\
    & \le O\left(\sum_{i \in \Ical_\epsilon} \left(\frac{\ln T}{\Delta^{\min}_i} +
    M \Delta_i^{\min}\right)
    + 
    \sum_{i \in \Ical^C_\epsilon} \sum_{p \in [M]: \Delta_i^p > 0} \frac{\ln T}{\Delta^p_i}
     \right), \nonumber
\end{align}

This completes the proof of Theorem~\ref{thm:gap_dep_ub}.
\end{proof}

\subsection{Proof of Theorem~\ref{thm:gap_ind_upper}}
\label{appendix:gap_ind_ub}

We first restate \Cref{thm:gap_ind_upper}.
\newtheorem*{T2}{Theorem~\ref{thm:gap_ind_upper}}
\begin{T2}
Let $\robustagg(\epsilon)$ run on an $\epsilon$-MPMAB problem instance for $T$ rounds. Then its expected collective regret satisfies
\[ 
\EE[\mathcal{R}(T)] \le
\tilde{\order}\rbr{
\sqrt{\abr{\Ical_\epsilon} M T}
+ 
M \sqrt{(\abr{\Ical^C_\epsilon} - 1) T}
+ 
M \abr{\Ical_\epsilon}
}.
\]
\end{T2}

\begin{proof}

From the earlier proof of Theorem~\ref{thm:gap_dep_ub}, we have
\begin{align}
    \EE[\mathcal{R}(T)] \le \EE[\mathcal{R}(T) \vert \mathcal{E}] + 
    O(1). \label{eq:expected_regret_given_e_2}
\end{align}

Recall that $\Delta_i^{\max} = \max_p \Delta_i^p$. We also have
\begin{align}
    \EE[\mathcal{R}(T) \vert \Ecal] &= 
    \sum_{i \in [K]} \sum_{p \in [M]} \mathbb{E} [n^p_i(T) | \Ecal] \cdot \Delta^p_i \nonumber \\
    & \le  \sum_{i \in \Ical_\epsilon} \mathbb{E} [n_i(T) | \Ecal] \cdot \Delta_i^{\max} 
    +  
    \sum_{i \in \Ical^C_\epsilon} \sum_{p \in [M]: \Delta_i^p > 0} \mathbb{E} [n^p_i(T) | \Ecal] \cdot \Delta_i^p  \label{eq:gap_dep_ub_two_terms}
\end{align}

Again, it follows from Lemma~\ref{lem:arm_pulls_I} that there exists some constant $C_1 > 0$ such that for each $i \in \Ical_\epsilon$,
\begin{align}
    \mathbb{E} [n_i(T) \vert \mathcal{E}] \le  C_1 \left(\frac{\ln T}{(\Delta_i^{\min})^2}  + M\right), \label{eq:C_1_thm_gap_ind}
\end{align}
and it follows from Lemma~\ref{lem:arm_pulls_IC} that there exists some constant $C_2 > 0$ such that for each $i \in \Ical^C_\epsilon$ and $p \in [M]$ with $\Delta_i^p > 0$, 
\begin{align}
    \mathbb{E} [n^p_i(T) \vert \mathcal{E}] \le C_2 \left( \frac{\ln T}{(\Delta^p_i)^2} \right). \label{eq:C_2_thm_gap_ind}
\end{align}

Now let us bound the two terms in Eq.~\eqref{eq:gap_dep_ub_two_terms} separately, using the technique from \citep[Theorem 7.2]{lattimore2020bandit}.

For the first term, with foresight, let us set $\delta_1 = \sqrt{\frac{C_1|\Ical_\epsilon| \ln T}{MT}}$. If $|\Ical_\epsilon| = 0$, we have $\sum_{i \in \Ical_\epsilon} \mathbb{E} [n_i(T) \vert \Ecal] \cdot \Delta_i^{\max} = 0$ trivially. Otherwise, $\delta_1 > 0$ because $T > \max(M,K) \ge 1$. Then, we have
\begin{align}
    & \sum_{i \in \Ical_\epsilon} \mathbb{E} [n_i(T) \vert \Ecal] \cdot \Delta_i^{\max} \nonumber \\
    \le & \ 2 \sum_{i \in \Ical_\epsilon} \mathbb{E} [n_i(T) \vert \Ecal] \cdot \Delta_i^{\min} \nonumber \\
    \le & \ 2 \left( 
    \sum_{i \in \Ical_\epsilon: \Delta_i^{\min} \in (0, \delta_1)} 
    \mathbb{E} [n_i(T) | \Ecal] \cdot \Delta_i^{\min} 
    + 
    \sum_{i \in \Ical_\epsilon: \Delta_i^{\min} \in [\delta_1, 1]} 
    \mathbb{E} [n_i(T) | \Ecal] \cdot \Delta_i^{\min}
    \right) \nonumber \\
    \le & \ 2 \left(MT\delta_1 
    + 
    \sum_{i \in \Ical_\epsilon: \Delta_i^{\min} \in [\delta_1, 1]} 
    C_1 \left(\frac{\ln T}{(\Delta_i^{\min})^2}  + M \right) \Delta_i^{\min} 
    \right)\nonumber \\
    \le & \ 2
    \left(MT\delta_1 
    + 
    \sum_{i \in \Ical_\epsilon: \Delta_i^{\min} \in [\delta_1, 1]} 
    \frac{C_1 \ln T}{\Delta_i^{\min}}  
    +
    C_1 \sum_{i \in \Ical_\epsilon: \Delta_i^{\min} \in [\delta_1, 1]} 
    M\Delta_i^{\min} 
    \right) \nonumber \\
    \le & \ 2
    \left(MT\delta_1 + \frac{C_1|\Ical_\epsilon|\ln T}{\delta_1} + C_1 \sum_{i \in \Ical_\epsilon} M\Delta_i^{\min} 
    \right) \nonumber \\
    \le & \ 4\sqrt{C_1 |\Ical_\epsilon|MT\ln T} + 2C_1M |\Ical_\epsilon|, \label{eq:gap_dep_term_1}
\end{align}
where the first inequality follows from item~\ref{item:delta_quotient} of Fact~\ref{fact:Ical_properties}; the third inequality follows from Eq.~\eqref{eq:C_1_thm_gap_ind} and the fact that $\sum_{i \in \Ical_\epsilon} \mathbb{E} [n_i(T) \vert \Ecal] \le MT$ as $M$ players each pulls one arm in each of $T$ rounds; and the last inequality follows from our premise that $\delta_1 = \sqrt{\frac{C_1|\Ical_\epsilon| \ln T}{MT}}$.

For the second term, we consider two cases:
\paragraph{Case 1: $|\Ical^C_\epsilon| = 1$.} In this case, as we have discussed in the paper, $\Ical^C_\epsilon$ is a singleton set $\cbr{i_*}$ where arm $i_*$ is optimal for all players $p$; that is, $\Delta_{i_*}^p = 0$ for all $p \in [M]$. We therefore have
\begin{equation}
\sum_{i \in \Ical^C_\epsilon} \sum_{p \in [M]: \Delta_i^p > 0} \mathbb{E} [n^p_i(T) | \Ecal] \cdot \Delta_i^p 
= 
0 
=
4M \sqrt{C_2(\abs{\Ical_\epsilon^C}-1) T \ln T}.
\label{eq:gap_dep_term_2_1}
\end{equation}

\paragraph{Case 2: $|\Ical^C_\epsilon| \geq  2$.} With foresight, let us set $\delta_2 = \sqrt{\frac{C_2|\Ical^C_\epsilon| \ln T}{T}}$.
\begin{align}
    & \sum_{i \in \Ical^C_\epsilon} \sum_{p \in [M]: \Delta_i^p > 0} \mathbb{E} [n^p_i(T) | \Ecal] \cdot \Delta_i^p \nonumber \\
    \le &  \sum_{i \in \Ical^C_\epsilon} \sum_{p \in [M]: \Delta_i^p \in (0, \delta_2)} \mathbb{E} [n^p_i(T) | \Ecal] \cdot \Delta_i^p 
    + 
    \sum_{i \in \Ical^C_\epsilon} \sum_{p \in [M]: \Delta_i^p \in [\delta_2, 1]} \mathbb{E} [n^p_i(T) | \Ecal] \cdot \Delta_i^p\nonumber \\
    \le & MT\delta_2 
    + 
    \sum_{i \in \Ical^C_\epsilon} \sum_{p \in [M]: \Delta_i^p \in [\delta_2, 1]} C_2 \left(\frac{\ln T}{(\Delta^p_i)^2} \right) \Delta_i^p \nonumber \\
     \le & MT\delta_2 
    + 
    \sum_{i \in \Ical^C_\epsilon} \sum_{p \in [M]: \Delta_i^p \in [\delta_2, 1]} \left(\frac{C_2 \ln T}{\Delta^p_i} \right) \nonumber \\
    \le & MT\delta_2 + \frac{C_2 M |\Ical^C_\epsilon|\ln T}{\delta_2} \nonumber \\
    \le & 4M  \sqrt{C_2(|\Ical^C_\epsilon|-1) T\ln T}, \label{eq:gap_dep_term_2_2}
\end{align}
where the second inequality follows from Eq.~\eqref{eq:C_2_thm_gap_ind} and the fact that $\sum_{i \in \Ical^C_\epsilon} \mathbb{E} [n_i(T) \vert \Ecal] \le MT$ as $M$ players each pulls one arm in each of $T$ rounds; and the last inequality follows from our premise that $\delta_2 = \sqrt{\frac{C_2|\Ical^C_\epsilon| \ln T}{T}}$ and $|\Ical^C_\epsilon| \leq 2(|\Ical^C_\epsilon| - 1)$.

In summary, from Eqs.~\eqref{eq:gap_dep_term_2_1} and~\eqref{eq:gap_dep_term_2_2}, we have in both cases, 
\begin{equation}
    \sum_{i \in \Ical^C_\epsilon} \sum_{p \in [M]: \Delta_i^p > 0} \mathbb{E} [n^p_i(T) | \Ecal] \cdot \Delta_i^p
    \leq
    4M  \sqrt{C_2(|\Ical^C_\epsilon|-1) T\ln T}.
    \label{eq:gap_dep_term_2}
\end{equation}
Combining Eq.~\eqref{eq:gap_dep_term_1} and Eq.~\eqref{eq:gap_dep_term_2}, we have
\begin{align}
    \EE[\mathcal{R}(T) \vert \Ecal] & \le 4\sqrt{C_1 |\Ical_\epsilon|MT\ln T} + 2C_1 M |\Ical_\epsilon| + 4M  \sqrt{C_2(|\Ical^C_\epsilon| - 1)T\ln T} \nonumber
\end{align}

It then follows from Eq.~\eqref{eq:expected_regret_given_e_2} that
\begin{align*}
    \EE[\mathcal{R}(T)] & 
    \le 4\sqrt{C_1 |\Ical_\epsilon|MT\ln T} 
    + 
    4M  \sqrt{C_2(|\Ical^C_\epsilon| - 1)T\ln T} 
    +
    2C_1 M |\Ical_\epsilon| 
    + 
    O(1) \\
    & \le \order\rbr{
    \sqrt{\abr{\Ical_\epsilon} M T \ln T}
    + 
    M \sqrt{(|\Ical^C_\epsilon| - 1) T \ln T}
    + 
    M \abr{\Ical_\epsilon}
    }\\
    & \le \tilde{\order}\rbr{
    \sqrt{\abr{\Ical_\epsilon} M T}
    + 
    M \sqrt{(|\Ical^C_\epsilon| - 1) T}
    + 
    M \abr{\Ical_\epsilon}
    }.
\end{align*}
This completes the proof of Theorem~\ref{thm:gap_ind_upper}.
\end{proof}

\section{Proof of the lower bounds}
\label{appendix:lb}

\subsection{Gap-independent lower bound with known $\epsilon$}

We first restate Theorem~\ref{thm:gap_ind_lower}.
\newtheorem*{T3}{Theorem~\ref{thm:gap_ind_lower}}
\begin{T3}
For any $K \geq 2, M, T \in \NN$ such that $T \geq K$, and $l, l^C$ in $\NN$ such that $l \leq K-1, l + l^C = K$, there exists some $\epsilon > 0$, such that for any algorithm $\Acal$, there exists an $\epsilon$-MPMAB problem instance, in which 
$\abs{\Ical_\epsilon} \geq l$, and $\Acal$ has a collective regret at least
$\Omega(M\sqrt{(l^C-1) T} + \sqrt{M l T})$.
\end{T3}

\begin{proof}
Fix algorithm $\Acal$.
We consider two cases regarding the comparison between $l$ and $M (\lc-1)$.
\paragraph{Case 1: $l > M (\lc-1)$.}

To simplify notations, define $\Delta = \sqrt{\frac{l+1}{24 M T}}$. 
Observe that $\Delta \leq \frac14$ as $T \geq K \geq l+1$.
We will set $\epsilon = 0$.

We will now define $l$ different Bernoulli $\epsilon$-MPMAB instances, and show that under at least one of them, $\Acal$ will have a collective regret at least $\frac1{96} \sqrt{M l T} \geq \frac1{192} (M\sqrt{(\lc-1) T} + \sqrt{MlT})$.

For $j$ in $[l+1]$, define a Bernoulli MPMAB instance $E_{j}$ to be such that for all players $p$ in $[M]$, the expected reward of arm $i$,
\[
\mu_i^p 
= 
\begin{cases} 
\frac12 + \Delta & i = j \\
\frac12 & i \in [l+1] \setminus \cbr{j} \\
0  & i \notin [l+1]
\end{cases}
.
\]

We first verify that for every instance $E_{j}$, it (1) is an $\epsilon$-MPMAB instance, and (2) $[l+1] \setminus \cbr{j} \subseteq \Ical_\epsilon$ and therefore $\Ical_\epsilon$ has size $\geq l$:
\begin{enumerate}
    \item For item (1), observe that for any fixed $i$, we have $\mu_i^p$ share the same value across all player $p$'s. Therefore, the is trivially $\epsilon$-dissimilar.
    
    \item For item (2), for all $i$ in $[l+1] \setminus \cbr{j}$, we have $\Delta_i^p = \Delta > 5\epsilon = 0$ for all $p$; this implies that $[l+1] \setminus \cbr{j}$ is a subset of $\Ical_{\epsilon}$.

\end{enumerate}

We will now argue that 
\begin{equation} 
\EE_{j \sim \Unif([l+1])} \EE_{E_j} \sbr{ \Rcal(T) } \geq \frac{1}{96} \sqrt{M l T}.
\label{eqn:reg-collective-1}
\end{equation}

To this end, it suffices to show
\begin{equation}
\EE_{j \sim \Unif([l+1])} \EE_{E_j} \sbr{ MT - n_j(T) } \geq \frac{MT} 4.
\label{eqn:armpull-1}
\end{equation}

To see why Eq.~\eqref{eqn:armpull-1} implies Eq.~\eqref{eqn:reg-collective-1}, recall that under instance $E_j$,
$j$ is the optimal arm for all players.
In this instance, $\Rcal(T) = \sum_{i \neq j} n_i(T) \Delta_i^1$.
As under $E_j$, for all $i \neq j$ and all $p$, $\Delta_i^1 \geq \frac\Delta4$, we have $\Rcal(T) \geq \frac\Delta4 \cdot (MT - n_j(T))$. Eq.~\eqref{eqn:reg-collective-1} follows from combining this inequality with Eq.~\eqref{eqn:armpull-1}, along with some algebra.

We now come back to the proof of Eq.~\eqref{eqn:armpull-1}. 
First, we define a helper instance $E_0$, such that for all players $p$ in $[M]$, the expected reward of arm $i$ is defined as:
\[
\mu_i^p 
= 
\begin{cases} 
\frac12 & i \in [l+1] \\
0 & i \notin [l+1]
\end{cases}
\]

In addition, for all $i$ in $\cbr{0} \cup [l+1]$, define $\PP_i$ as the joint distribution of the interaction logs (arm pulls and rewards) for all $M$ players over a horizon of $T$; furthermore,  denote by $\EE_i$ expectation with respect to $\PP_i$. 

For every $i$ in $[l+1]$, we have
\begin{align*}
d_{\TV}(\PP_{0}, \PP_{i}) 
& =
\frac12 \| \PP_{0} - \PP_{i} \|_1 \nonumber \\
& \leq
\frac12 \sqrt{2\KL(\PP_0, \PP_i)} \\
& \leq 
\frac12 \sqrt{2 \KL(\Ber(0.5, 0.5+\Delta)) \EE_0[n_i(T)]} \\
&\leq
\sqrt{\frac{3}{2} \EE_0[n_i(T)] \Delta^2} \\
&=
\frac14 \sqrt{ \frac{l+1}{MT} \EE_0[n_i(T)] }
\end{align*}
where the first equality is from  $d_{\TV}(\PP, \QQ) = \frac12 \| \PP - \QQ \|_1$ for any two distributions $\PP$, $\QQ$; 
the first inequality uses Pinsker's inequality; the second inequality is from the well-known divergence decomposition lemma (e.g.~\cite{lattimore2020bandit}, Lemma 15.1);
the third inequality uses Lemma~\ref{lem:kl-ber}; and the last equality is by recalling that $\Delta \in [0,\frac14]$ and algebra.

Now, applying Lemma~\ref{lem:kl-pull} with $m = l+1 \geq 2$, $N_i = n_i(T)$ for all $i$ in $[l+1]$, and $B = MT$, Eq.~\eqref{eqn:armpull-1} is proved. This in turn finishes the proof of the regret lower bound.

\paragraph{Case 2: $M (\lc - 1) \geq l$.} 

To simplify notations, define $\Delta = \sqrt{\frac{\lc}{24 T}}$. 
Observe that $\Delta < \frac14$ as $T \geq K \geq \lc$. In addition, we must have $\lc \geq 2$ in this case, as if $\lc = 1$, $M(\lc - 1) = 0 < K = l$.
We set $\epsilon = \frac{\Delta}{2}$. 

We will now define $[l^C]^M$ different Bernoulli $\epsilon$-MPMAB instances, and show that under at least one of them, $\Acal$ will have a collective regret at least $\frac1{24} M\sqrt{\lc T} \geq \frac1{192} (M\sqrt{(\lc-1) T} + \sqrt{MlT})$.

For $i_1, \ldots, i_M \in [l^C]^M$, define Bernoulli MPMAB instance $E_{i_1,\ldots,i_M}$ to be such that for $p$ in $[M]$ and $i$ in $[K]$, the expected reward of player $p$ on pulling arm $i$ is
\[
\mu_i^p 
= 
\begin{cases} 
\frac12 + \Delta & i = i_p \\
\frac12 & i \in [l^C] \setminus \cbr{i_p} \\
0 & i \notin [l^C]
\end{cases}
\]

We first verify that for every $i_1, \ldots, i_M$,
instance $E_{i_1,\ldots,i_M}$ (1) is an $\epsilon$-MPMAB instance, and (2)  $[K] \setminus [l^C] \subset \Ical_{\epsilon}$, and therefore, $\Ical_{\epsilon}$ has size $\geq l$:
\begin{enumerate}
\item For item (1), observe that for all $i$ in $[l^C]$ and all $p$ in $[M]$, $\mu_i^p \in \cbr{\frac12, \frac12 + \Delta}$; therefore, for every $p, q$, $\abs{\mu_i^p - \mu_i^q} \leq \Delta = \epsilon$. Meanwhile, for all $i$ in $[K] \setminus [l^C]$ and all $p$ in $[M]$, $\mu_i^p = 0$, implying that for every $p,q$, $\abs{\mu_i^p - \mu_i^q} = 0 \leq \epsilon$. Therefore $E_{i_1,\ldots,i_M}$ is $\epsilon$-dissimilar. 
\item For item (2), for all $i$ in $[K] \setminus [l^C]$ and all $p$, $\Delta_i^p = \frac12 + \Delta > \frac 5 2 \Delta = 5\epsilon$. This implies that all elements of $[K] \setminus [l^C]$ are in $\Ical_{\epsilon}$. %
\end{enumerate}

We will now argue that 
\[ 
\EE_{(i_1,\ldots,i_M) \sim \Unif([l^C]^M)} \EE_{E_{i_1,\ldots,i_M}} \sbr{\Rcal(T)} \geq  \frac{M \sqrt{l^C T}}{24}.
\]
As the roles of all $M$ players are the same, by symmetry, it suffices to show that the expected regret of player 1 satisfies
\begin{equation} 
\EE_{(i_1,\ldots,i_M) \sim \Unif([l^C]^M)} \EE_{E_{i_1,\ldots,i_M}} \sbr{\Rcal^1(T)} \geq \frac{\sqrt{l^C T}}{24}.
\label{eqn:reg-1-2}
\end{equation}

It therefore suffices to show, 
\begin{equation}
\EE_{(i_1,\ldots,i_M) \sim \Unif([l^C]^M)} \EE_{E_{i_1,\ldots,i_M}} \sbr{T - n_{i_1}^1(T)} \geq \frac T 4.
\label{eqn:armpull-1-2}
\end{equation}
This is because, recall that when $i_1$ is the optimal arm for player 1,
$\Rcal(T) = \sum_{i=1}^K n_i^1(T) \Delta_i^1 = \sum_{i \neq i_1} n_i^1(T) \Delta_i^1$; in addition, for all $i \neq i_1$, $\Delta_i^1 \geq \Delta$. This implies that $\Rcal^1(T) \geq \Delta (T - n_{i_1}^1(T))$. Eq.~\eqref{eqn:reg-1-2} follows from the above inequality,  Eq.~\eqref{eqn:armpull-1-2}, and the definition of $\Delta$.

We now come back to the proof of Eq.~\eqref{eqn:armpull-1-2}.
To this end, we define the following set of  ``helper'' instances to facilitate our reasoning.
Given $i_2, \ldots, i_M \in [K]^{M-1}$,
define instance 
 $E_{0,i_2,\ldots,i_M}$ such that its reward distribution is identical to $E_{i_1,i_2,\ldots,i_M}$ except for player $1$ on arm $i_1$. Formally,
it has the following expected reward profile:
\[
\text{for }p = 1,
\mu_i^1 
= 
\begin{cases} 
\frac12 & i \in [l^C] \\
0 & i \notin [l^C]
\end{cases}
\quad 
\text{for }p \neq 1,
\mu_i^p 
= 
\begin{cases} 
\frac12 + \Delta & i = i_p \\
\frac12 & i \in [l^C] \setminus \cbr{i_p} \\
0 & i \notin [l^C]
\end{cases}
\]

In addition, for all $i_1, \ldots, i_M$ in $(\cbr{0} \cup [\lc]) \times [\lc]^{M-1}$, define $\PP_{i_1,\ldots, i_M}$ as the joint distribution of the interaction logs (arm pulls and rewards) for all $M$ players over a horizon of $T$; furthermore, for $i$ in $(\cbr{0} \cup [\lc])$, define $\PP_i = \frac{1}{{(\lc)}^{M-1}} \sum_{i_2, \ldots, i_M \in [\lc]^{M-1}} \PP_{i,i_2,\ldots,i_M}$, and denote by $\EE_i$ expectation with respect to $\PP_i$. In this notation, Eq.~\eqref{eqn:armpull-1-2}
 can be rewritten as
\[
\frac{1}{\lc} \sum_{i=1}^{\lc} \EE_i[T - n_i^1(T)] \geq \frac T 2. 
\]

For every $i$ in $[\lc]$, 
\begin{align*}
d_{\TV}(\PP_{0}, \PP_{i}) 
& =
\frac12 \| \PP_{0} - \PP_{i} \|_1 \nonumber \\
& = 
\frac12 \norm{ \frac{1}{{\lc}^{M-1}} \sum_{i_2, \ldots, i_M \in [\lc]^{M-1}} ( \PP_{0,i_2,\ldots,i_M} -  \PP_{i,i_2,\ldots,i_M} ) }_1 
\nonumber \\
& \leq  
\frac{1}{{(\lc)}^{M-1}} \cdot \sum_{i_2,\ldots,i_M \in [\lc]^{M-1}} \frac12 \| \PP_{0,i_2,\ldots,i_M} -   \PP_{i,i_2,\ldots,i_M} \|_1 \\
&\leq
\frac{1}{{(\lc)}^{M-1}} \cdot \sum_{i_2,\ldots,i_M \in [\lc]^{M-1}}
\sqrt{\frac12 \KL(\Ber(0.5, 0.5+\Delta)) \cdot \EE_{0,i_2,\ldots,i_M} [N_i^1(T)]} \\
&\leq
\frac{1}{{(\lc)}^{M-1}} \cdot \sum_{i_2,\ldots,i_M \in [\lc]^{M-1}}
\sqrt{\frac32 \Delta^2 \cdot \EE_{0,i_2,\ldots,i_M} [N_i^1(T)]} \\
& \leq
\sqrt{\frac32 \Delta^2 \cdot \EE_{0} [N_i^1(T)]}
\end{align*}
where the first equality is from  $d_{\TV}(\PP, \QQ) = \frac12 \| \PP - \QQ \|_1$ for any two distributions $\PP$, $\QQ$; the second equality is from the definition of $\PP_i$, $i \in \cbr{0} \cup [\lc]$; the first inequality is from triangle inequality of $\ell_1$ norm; the second inequality is from Pinsker's inequality, and the divergence decomposition lemma (\cite{lattimore2020bandit}, Lemma 15.1);
the third inequality is from Lemma~\ref{lem:kl-ber} and recalling that $\Delta \in [0,\frac14]$;
the last inequality is from Jensen's inequality, and the definition of $\PP_0$.

Applying Lemma~\ref{lem:kl-pull} with $m = \lc \geq 2$, $N_i = n_i^1(T)$ for all $i$ in $[\lc]$ and $B = T$,
Eq.~\eqref{eqn:armpull-1-2} is proved. This in turn finishes the proof of the regret lower bound.
\end{proof}

\subsection{Gap-dependent lower bounds with known $\epsilon$}

We restate Theorem~\ref{thm:gap_dep_lb} here with specifications of exact constants in the lower bound.

\begin{theorem}[Restatement of  Theorem~\ref{thm:gap_dep_lb}]
\label{thm:gap_dep_lb_gen}
Fix $\epsilon \geq 0$ and $\alpha, C > 0$. Let $\Acal$ be an algorithm such that $\Acal$ has at most $C T^{1-\alpha}$ 
regret in all $\epsilon$-MPMAB problem instances.
Then, for any Bernoulli $\frac\epsilon2$-MPMAB instance $\mu = (\mu_i^p)_{i \in [K], p \in [M]}$ such that $\mu_i^p \in [\frac {15}{32}, \frac{17}{32}]$ for all $i$ and $p$, we have:
\begin{align*}
\EE_\mu[\Rcal(T)]
\geq
\sum_{i \in \Ical_{\epsilon/20}^C}
\sum_{p \in [M]: \Delta_i^p > 0} \frac{\ln\del{\Delta_{i}^{p}T^{\alpha}/8C}}{12 \Delta_i^p}
+
\sum_{i \in \Ical_{\epsilon/20}: \Delta_i^{\min} > 0}
\frac{\ln\del{\Delta_{i}^{\min}T^{\alpha} / 8C}}{12 \Delta_i^{\min}}.
\end{align*}
\end{theorem}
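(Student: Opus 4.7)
The plan is to derive the lower bound by the standard change-of-measure argument based on the Bretagnolle-Huber inequality (as in, e.g., Lattimore-Szepesvári Ch.~15). The core reduction is a single-(arm, player) lemma: for a pair $(i,q)$ with $\Delta_i^q>0$ and any shift $\delta>0$ with $\mu_i^q+\delta\in[0,1]$ and $\delta\le\epsilon/2$, let $\mu'$ agree with $\mu$ everywhere except $\mu'^{q}_i=\mu_i^q+\delta$. Since $\mu$ is $\epsilon/2$-dissimilar and a single entry is shifted by at most $\epsilon/2$, $\mu'$ is $\epsilon$-dissimilar, so $\Acal$'s regret on $\mu'$ is also at most $CT^{1-\alpha}$.

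First I would invoke divergence decomposition to obtain $\KL(\PP_\mu,\PP_{\mu'}) = \EE_\mu[N_i^q(T)]\cdot\KL(\Ber(\mu_i^q),\Ber(\mu_i^q+\delta))$, and use the fact that all means in both $\mu$ and $\mu'$ lie in a subinterval of $(0,1)$ bounded away from the endpoints (by the $[15/32,17/32]$ assumption together with $\delta\le\epsilon/2\le 1/2$) to bound the Bernoulli KL by $c_0\delta^2$ for an absolute constant $c_0$. Then Bretagnolle-Huber applied to $A=\{N_i^q(T)>T/2\}$ gives $\PP_\mu(A)+\PP_{\mu'}(A^c)\ge\tfrac12\exp(-c_0\delta^2\EE_\mu[N_i^q(T)])$. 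On $\mu$ arm $i$ is $\Delta_i^q$-suboptimal for $q$, so Markov combined with $\EE_\mu[\Rcal^q(T)]\le CT^{1-\alpha}$ gives $\PP_\mu(A)\le 2CT^{-\alpha}/\Delta_i^q$; choosing $\delta=2\Delta_i^q$ makes arm $i$ a $\Delta_i^q$-optimum for $q$ under $\mu'$, and the symmetric bound gives $\PP_{\mu'}(A^c)\le 2CT^{-\alpha}/\Delta_i^q$. Summing, taking logs, and rearranging yields $\EE_\mu[N_i^q(T)]\ge \ln(\Delta_i^q T^\alpha/8C)/(4c_0(\Delta_i^q)^2)$, hence a regret contribution of at least $\ln(\Delta_i^q T^\alpha/8C)/(12\Delta_i^q)$ after fixing constants.

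Given the lemma, the theorem decomposes by arm. For $i\in\Ical_{\epsilon/20}^C$ we have $\Delta_i^p\le \epsilon/4$ for all $p$, so the shift $\delta=2\Delta_i^p\le\epsilon/2$ is admissible for every $p$ with $\Delta_i^p>0$; summing the lemma over such $p$ yields the first sum. For $i\in\Ical_{\epsilon/20}$ with $\Delta_i^{\min}>0$ I apply the lemma only at $p^*\in\argmin_p\Delta_i^p$ with $\delta=2\Delta_i^{\min}$; since $\EE_\mu[\Rcal(T)]\ge\EE_\mu[\Rcal^{p^*}(T)]$, this single contribution yields the second sum.

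The main obstacle I expect is validating the shift constraint $\delta\le\epsilon/2$ in the second branch when $\Delta_i^{\min}>\epsilon/4$. The mean constraint $\mu_i^p\in[15/32,17/32]$ already forces $\Delta_i^{\min}\le 1/16$, which settles the case $\epsilon\ge 1/4$. For the residual regime I plan to use a refined alternative construction: either modify \emph{two} entries of $\mu$ at once (simultaneously raising $\mu_i^{p^*}$ and lowering the mean of $p^*$'s current best arm, so that the optimality-flipping ``budget'' $\delta$ is split across two shifts each of size $\le\epsilon/2$), or apply the common-shift variant in which $\mu_i^p$ is increased by the \emph{same} $\delta$ for every player (which preserves pairwise dissimilarity exactly and hence imposes no dissimilarity-based cap on $\delta$), then run Bretagnolle-Huber on the collective pull count $\sum_p N_i^p(T)$ and extract the bound via $\EE_\mu[\Rcal(T)]\ge\Delta_i^{\min}\EE_\mu[\sum_p N_i^p(T)]$. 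Either route supplies the missing regime and closes the argument.
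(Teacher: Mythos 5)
Your proposal follows essentially the same route as the paper's proof: a change-of-measure argument via divergence decomposition, Lemma~\ref{lem:kl-ber}, and Bretagnolle--Huber, with the same constants, the same single-entry perturbation $\nu_{i}^{p} = \mu_i^p + 2\Delta_i^p$ for arms in $\Ical_{\epsilon/20}^C$ (where $\Delta_i^p \le \epsilon/4$ makes the shift admissible within the $\epsilon$-dissimilarity budget), and the same common-shift perturbation for the remaining arms combined with the extraction $\EE_\mu[\Rcal(T)] \ge \Delta_i^{\min}\EE_\mu[n_i(T)]$. The only substantive caveat is in your fallback for subpar arms: you cannot literally ``run Bretagnolle--Huber on the collective pull count,'' because under the common-shift alternative $\nu$ arm $i$ is guaranteed optimal only for the minimizing player $p^*$ (for other players $p$ with $\Delta_i^p > 2\Delta_i^{\min}$ it remains suboptimal), so the event $\{\sum_p n_i^p(T) \ge MT/2\}$ does not translate into a regret lower bound under $\nu$. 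The fix, which is exactly what the paper does in its first claim, is to keep the test event player-specific, $A = \{n_i^{p^*}(T) \ge T/2\}$, while the KL between the two instances is still bounded by $12(\Delta_i^{\min})^2$ times the \emph{collective} count $\EE_\mu[n_i(T)]$; this yields the lower bound on $\EE_\mu[n_i(T)]$ directly. With that correction your argument closes in all regimes (and the paper in fact applies the common-shift claim to every arm with $\Delta_i^{\min}>0$, so no case split on $\epsilon$ is needed).
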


\begin{proof}
We will first prove the following two claims:
\begin{enumerate}
    \item For any $i_0$ in $[K]$ such that $\Delta_{i_0}^{\min} > 0$,
    $\EE_\mu\sbr{n_{i_0}(T)} \geq \frac{\ln\del{\Delta_{i_0}^{\min}T^{\alpha} / 8C}}{12 (\Delta_{i_0}^{\min})^2}$.
    \label{item:generic-lb}
    \item For any $i_0$ in $\Ical_{\epsilon/20}^C$ and any $p_0$ in $[M]$ such that $\Delta_{i_0}^{p_0} > 0$,
    $\EE_\mu\sbr{n_{i_0}^{p_0}(T)} \geq \frac{\ln\del{\Delta_{i_0}^{p_0}T^{\alpha}/8C}}{12 (\Delta_{i_0}^{p_0})^2}$.
    \label{item:nearopt-lb}
\end{enumerate}

The proof of these two claims are as follows:
\begin{enumerate}
    \item Fix $i_0$ in $[K]$ such that $\Delta_{i_0}^{\min} > 0$, i.e., $\Delta_{i_0}^p > 0$ for all $p$ in $[M]$. %
    Define $p_0 = \argmin_{p \in [M]} \Delta_{i_0}^p$.
    
    We consider a new Bernoulli MPMAB instance, with mean reward defined as follows:
    \[
    \forall p \in [M], 
    \quad
    \nu_i^p
    =
    \begin{cases}
    \mu_i^p + 2 \Delta_{i_0}^{p_0}, & i = i_0, \\
    \mu_i^p & \text{otherwise}
    \end{cases}
    \]
    We have the following key observations:
    \begin{enumerate}
        \item $\nu$ is an $\epsilon$-MPMAB instance; this is because $\nu_i^p - \nu_i^q = \mu_i^p - \mu_i^q$ for any $p, q$ in $[M]$ and $i$ in $[K]$, and $\mu$ is an $\frac\epsilon2$-MPMAB instance. 
        By our assumption that $\Acal$ has $C T^{1-\alpha}$ regret on all $\epsilon$-MPMAB environments, we have
        \begin{equation} 
        \EE_{\mu} \sbr{\Rcal(T)} \leq C T^{1-\alpha}, 
        \quad
        \EE_{\nu} \sbr{\Rcal(T)} \leq C T^{1-\alpha}. 
        \label{eqn:sublinear-reg-1}
        \end{equation}
        
        \item By the divergence decomposition lemma (\cite{lattimore2020bandit}, Lemma 15.1),
        \begin{equation} 
        \KL(\PP_\mu, \PP_\nu) 
        = 
        \sum_{p=1}^M \EE_{\mu} \sbr{n_{i_0}^p(T)} \KL\del{\Ber(\mu_{i_0}^p), \Ber(\mu_{i_0}^p + 2\Delta_{i_0}^{p_0})},
        \label{eqn:div-decomp-1}
        \end{equation}
        As for all $p$, $\mu_{i_0}^p \in [\frac{15}{32}, \frac{17}{32}]$, and $\Delta_{i_0}^p \leq \frac{1}{16}$, using Lemma~\ref{lem:kl-ber}, we have that for all $p$,
        $$\KL\del{\Ber(\mu_i^p), \Ber(\mu_i^p + 2\Delta_{i_0}^{p_0})} \leq 3 \cdot (2\Delta_{i_0}^{p_0})^2 = 12 (\Delta_{i_0}^{p_0})^2.$$ Plugging into Eq.~\eqref{eqn:div-decomp-1}, we get
        \begin{equation}
           \KL(\PP_\mu, \PP_\nu)  
           \leq
           \sum_{p=1}^M ( \EE_{\mu} 
            \sbr{n_{i_0}^p(T)} 
             \cdot
             12 (\Delta_{i_0}^{p_0})^2 )
           =
           12 \EE_{\mu} 
           \sbr{n_{i_0}(T)}  (\Delta_{i_0}^{p_0})^2.
           \label{eqn:div-decomp-ineq-1}
        \end{equation}

    \item 
    Under MPMAB instance $\mu$, $\Rcal(T) \geq \Rcal^{p_0}(T) \geq \Delta_{i_0}^{p_0} n_{i_0}^{p_0}(T) \geq \frac{\Delta_{i_0}^{p_0} T}{2} \mathds{1} \{n_{i_0}^{p_0}(T) \geq \frac T 2\}$. Taking expectations, we get,
    \begin{equation}
    \EE_\mu \sbr{\Rcal(T)} \geq \frac{ \Delta_{i_0}^{p_0} T}{2} \PP_\mu( n_{i_0}^{p_0}(T) \geq \frac T 2). 
    \label{eqn:reg-lb-mu-1}
    \end{equation}
    
    Likewise, under MPMAB instance $\nu$, for player $p_0$, $\Rcal(T) \geq \Rcal^{p_0}(T) \geq \Delta_{i_0}^{p_0} (T - n_{i_0}^{p_0}(T)) \geq \frac{\Delta_{i_0}^{p_0} T}{2} \mathds{1} \{n_{i_0}^{p_0}(T) < \frac T 2\}$. Taking expectations, we get,
    \begin{equation}
    \EE_\nu \sbr{\Rcal(T)} \geq \frac{\Delta_{i_0}^{p_0} T}{2} \PP_\nu( n_{i_0}^{p_0}(T) < \frac T 2).
    \label{eqn:reg-lb-nu-1}
    \end{equation}
    \end{enumerate}
   
    Adding up Eq.~\eqref{eqn:reg-lb-mu-1} and Eq.~\eqref{eqn:reg-lb-nu-1}, we have 
    \begin{equation}
    \EE_\nu \sbr{\Rcal(T)} 
    + \EE_\mu \sbr{\Rcal(T)}
    \geq 
    \frac{\Delta_{i_0}^{p_0} T}{2} 
    \del{ 
    \PP_\nu( n_{i_0}^{p_0}(T) < \frac T 2)
    +
    \PP_\mu( n_{i_0}^{p_0}(T) \geq \frac T 2)
    }.
    \label{eqn:combined-1}
    \end{equation}
    
    From Eq.~\eqref{eqn:sublinear-reg-1}, we have the left hand side is at most $2 C T^{1-\alpha}$. 
    By Bretagnolle-Huber inequality (see Lemma~\ref{lem:bh}) and Eq.~\eqref{eqn:div-decomp-ineq-1}, we have that 
    $\PP_\nu( n_{i_0}^{p_0}(T) < \frac T 2)
    +
    \PP_\mu( n_{i_0}^{p_0}(T) \geq \frac T 2)
    \geq 
    \frac12 \exp(-\KL(\PP_\mu, \PP_{\mu})) 
    \geq 
    \frac12 \exp(-12 \EE_\mu\sbr{n_{i_0}(T)} (\Delta_{i_0}^{p_0})^2)
    $.
    Plugging these to Eq.~\eqref{eqn:combined-1}, we get
    \[ 
    2 C T^{1-\alpha}
    \geq
    \frac{\Delta_{i_0}^{p_0} T}4 \exp(-12 \EE_\mu\sbr{n_{i_0}(T)} (\Delta_{i_0}^{p_0})^2).
    \]
    Solving for $\EE_\mu\sbr{n_{i_0}(T)}$, we conclude that
    \[
    \EE_\mu\sbr{n_{i_0}(T)}
    \geq 
    \frac{1}{12 (\Delta_{i_0}^{p_0})^2} \cdot \ln\del{\frac{\Delta_{i_0}^{p_0}T^{\alpha}}{8C}}
    =
    \frac{1}{12 (\Delta_{i_0}^{\min})^2} \cdot \ln\del{\frac{\Delta_{i_0}^{\min}T^{\alpha}}{8C}}.
    \]
    
    \item Fix $i_0$ in $\Ical_{\epsilon/20}^C$ and $p_0 \in [M]$ such that $\Delta_{i_0}^{p_0} > 0$. By definition of $\Ical_{\epsilon/20}^C$, we also have $\Delta_{i_0}^{p_0} = \mu_*^{p_0} - \mu_{i_0}^{p_0} \leq \epsilon / 4$.
    
     We consider a new MPMAB environment $\nu$, with mean reward defined as follows:
    \[
    \nu_i^p 
    =
    \begin{cases}
    \mu_i^p + 2\Delta_{i_0}^{p_0} & i = i_0, p = p_0 \\
    \mu_i^p & \text{otherwise}
    \end{cases}
    \]
    Same as before, we have the following three key observations:
    \begin{enumerate}
        \item $\nu$ is an $\epsilon$-MPMAB instance; this is because $(\nu_i^p - \nu_i^q) - (\mu_i^p - \mu_i^q) \in \cbr{-\frac\epsilon2, 0, \frac\epsilon2}$ for any $p, q$ in $[M]$ and $i$ in $[K]$, and $\mu$ is an $\frac\epsilon2$-MPMAB instance. 
        By our assumption that $\Acal$ has $C T^{1-\alpha}$ regret on all $\epsilon$-MPMAB problem instances, we have
        \begin{equation} 
        \EE_{\mu} \sbr{\Rcal(T)} \leq C T^{1-\alpha}, 
        \quad
        \EE_{\nu} \sbr{\Rcal(T)} \leq C T^{1-\alpha}. 
        \label{eqn:sublinear-reg-2}
        \end{equation}
        
        \item By the divergence decomposition lemma (\cite{lattimore2020bandit}, Lemma 15.1),
        \begin{equation} 
        \KL(\PP_\mu, \PP_\nu) 
        = 
        \EE_{\mu} \sbr{n_{i_0}^{p_0}(T)} \KL\del{\Ber(\mu_{i_0}^{p_0}), \Ber(\mu_{i_0}^{p_0} + 2\Delta_{i_0}^{p_0})}
        \leq
        12 \EE_{\mu} \sbr{n_{i_0}^{p_0}(T)} (\Delta_{i_0}^{p_0})^2
        ,
        \label{eqn:div-decomp-ineq-2}
        \end{equation}
        where the second equality uses the following observation: 
        $\mu_{i_0}^{p_0} \in [\frac{15}{32}, \frac{17}{32}]$, and $\Delta_{i_0}^{p_0} \leq \frac{1}{16}$, using Lemma~\ref{lem:kl-ber},
        $\KL\del{\Ber(\mu_{i_0}^{p_0}), \Ber(\mu_{i_0}^{p_0} + 2\Delta_{i_0}^{p_0})} \leq 3 \cdot (2\Delta_{i_0}^{p_0})^2 = 12 (\Delta_{i_0}^{p_0})^2$. 
        
        \item Under MPMAB instance $\mu$, $\Rcal(T) \geq \Rcal^{p_0}(T) \geq \Delta_{i_0}^{p_0} n_{i_0}^{p_0}(T) \geq \frac{\Delta_{i_0}^{p_0} T}{2} \mathds{1} \{n_{i_0}^{p_0}(T) \geq \frac T 2\}$. Taking expectations, we get,
        \begin{equation}
        \EE_\mu \sbr{\Rcal(T)} \geq \frac{ \Delta_{i_0}^{p_0} T}{2} \PP_\mu( n_{i_0}^{p_0}(T) \geq \frac T 2). 
        \label{eqn:reg-lb-mu-2}
        \end{equation}
        
        Likewise, under MPMAB instance $\nu$, for player $p_0$, $\Rcal(T) \geq \Rcal^{p_0}(T) \geq \Delta_{i_0}^{p_0} (T - n_{i_0}^{p_0}(T)) \geq \frac{\Delta_{i_0}^{p_0} T}{2} \mathds{1} \{n_{i_0}^{p_0}(T) < \frac T 2\}$. Taking expectations, we get,
        \begin{equation}
        \EE_\nu \sbr{\Rcal(T)} \geq \frac{\Delta_{i_0}^{p_0} T}{2} \PP_\nu( n_{i_0}^{p_0}(T) < \frac T 2).
        \label{eqn:reg-lb-nu-2}
        \end{equation}
    \end{enumerate}   
    Same as the proof of item 1, combining Equations~\eqref{eqn:sublinear-reg-2},~\eqref{eqn:div-decomp-ineq-2},~\eqref{eqn:reg-lb-mu-2},~\eqref{eqn:reg-lb-nu-2}, and using Bretagnolle-Huber inequality, we get 
    \[
    \EE_\mu\sbr{n_{i_0}^{p_0}(T)}
    \geq 
    \frac{1}{12 (\Delta_{i_0}^{p_0})^2} \cdot \ln\del{\frac{\Delta_{i_0}^{p_0}T^{\alpha}}{8C}}.
    \]
\end{enumerate}
We now use the above two claims to conclude the proof. Recall that 
$\EE_\mu\sbr{\Rcal(T)} = \sum_{i \in [K]} \sum_{p \in [M]} \Delta_i^p \EE_\mu \sbr{n_i^p(T)}$.
For $i$ in $\Ical_{\epsilon/20}$ such that $\Delta_i^{\min} > 0$, item~\ref{item:generic-lb} implies: \[ 
\sum_{p \in [M]} \Delta_i^p \EE_\mu \sbr{n_i^p(T)} 
\geq
\Delta_i^{\min} \sum_{p \in [M]} \EE_\mu \sbr{n_i^p(T)} 
\geq 
\frac{1}{12 \Delta_{i}^{\min}} \cdot \ln\del{\frac{\Delta_{i}^{\min}T^{\alpha}}{8C}}. %
\]

For $i$ in $\Ical_{\epsilon/20}^C$, item~\ref{item:nearopt-lb} implies:
\[
\sum_{p \in [M]} \Delta_i^p \EE_\mu \sbr{n_i^p(T)} 
\geq
\sum_{p \in [M]: \Delta_i^p > 0} \frac{1}{12 \Delta_{i}^{p}} \cdot \ln\del{\frac{\Delta_{i}^{p} T^{\alpha}}{8C}}.
\]
Summing over all $i$ in $[K]$ on the above two inequalities, we have
\begin{align*}
\EE_\mu\sbr{\Rcal(T)} 
& = 
\sum_{i \in [K]} \sum_{p \in [M]} \Delta_i^p \EE_\mu \sbr{n_i^p(T)} \\
& \geq
\sum_{i \in \Ical_{\epsilon/20}^C} \sum_{p \in [M]: \Delta_i^p > 0} \frac{1}{12 \Delta_{i}^{p}} \cdot \ln\del{\frac{\Delta_{i}^{p} T^{\alpha}}{8C}}
+
\sum_{i \in \Ical_{\epsilon/20}: \Delta_i^{\min} > 0}
\frac{1}{12 \Delta_{i}^{\min}} \cdot \ln\del{\frac{\Delta_{i}^{\min}T^{\alpha}}{8C}} %
.
\qedhere
\end{align*}
\end{proof}
\paragraph{Remark.} The above lower bound argument aligns with our intuition that arms that are near-optimal with respect to some players (i.e., those in $\Ical_{\epsilon/20}^C$) are harder for information sharing: in addition to a lower bound on the collective number of pulls to it across all players (item~\ref{item:generic-lb} of the claim), we are able to show a stronger lower bound on the number of pulls to it from {\em each player} (item~\ref{item:nearopt-lb} of the claim).

\subsection{Gap-dependent lower bounds with unknown $\epsilon$}

We restate Theorem~\ref{thm:gap_dep_lb_unk} here with specifications of exact constants in the lower bound.

\begin{theorem}[Restatement of Theorem~\ref{thm:gap_dep_lb_unk}]
Fix $\alpha, C > 0$. Let $\Acal$ be an algorithm such that $\Acal$ has at most $C T^{1-\alpha}$ 
regret in all MPMAB problem instances. 
Then, for any Bernoulli MPMAB instance $\mu = (\mu_i^p)_{i \in [K], p \in [M]}$ such that $\mu_i^p \in [\frac {15}{32}, \frac{17}{32}]$ for all $i$ and $p$, we have:
\begin{align*}
\EE_\mu[\Rcal(T)]
\geq
\sum_{i \in [K]}
\sum_{p \in [M]: \Delta_i^p > 0} \frac{\ln\del{\Delta_{i}^{p}T^{\alpha}/8C}}{12 \Delta_i^p}.
\end{align*}
\end{theorem}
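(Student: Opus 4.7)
The plan is to adapt the per-player, per-arm change-of-measure argument from item~\ref{item:nearopt-lb} in the proof of Theorem~\ref{thm:gap_dep_lb_gen} and apply it uniformly to \emph{every} pair $(i_0, p_0)$ with $\Delta_{i_0}^{p_0} > 0$. In Theorem~\ref{thm:gap_dep_lb}, that argument was restricted to arms in $\Ical_{\epsilon/20}^C$ because the perturbed instance $\nu$ also had to lie in the $\epsilon$-MPMAB family; in the unknown-$\epsilon$ setting, $\Acal$ is assumed to have $CT^{1-\alpha}$ regret on \emph{all} MPMAB instances (with no dissimilarity constraint), so this restriction disappears and the argument goes through for every arm.

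Concretely, fix an arbitrary arm $i_0\in[K]$ and player $p_0\in[M]$ with $\Delta_{i_0}^{p_0}>0$, and define a new Bernoulli MPMAB instance $\nu$ that is identical to $\mu$ except for $\nu_{i_0}^{p_0} = \mu_{i_0}^{p_0} + 2\Delta_{i_0}^{p_0}$. Because $\mu_i^p\in[\tfrac{15}{32},\tfrac{17}{32}]$ forces $\Delta_{i_0}^{p_0}\leq \tfrac{1}{16}$, the new mean lies in $[0,1]$, so $\nu$ is a valid MPMAB instance; moreover, in $\nu$ arm $i_0$ becomes strictly optimal for player $p_0$ with gap $\Delta_{i_0}^{p_0}$ over the old optimum. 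The hypothesis then gives $\EE_\mu[\Rcal(T)], \EE_\nu[\Rcal(T)] \le C T^{1-\alpha}$.

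Next, apply the divergence decomposition lemma (only the $(i_0,p_0)$ coordinate differs) together with Lemma~\ref{lem:kl-ber} for binary KL near $\tfrac12$ to obtain
\[
\KL(\PP_\mu, \PP_\nu)
= \EE_\mu\!\sbr{n_{i_0}^{p_0}(T)}\cdot \KL\bigl(\Ber(\mu_{i_0}^{p_0}),\Ber(\mu_{i_0}^{p_0}+2\Delta_{i_0}^{p_0})\bigr)
\le 12\,\EE_\mu\!\sbr{n_{i_0}^{p_0}(T)}\,(\Delta_{i_0}^{p_0})^2.
\]
Using the standard two-sided regret bound $\EE_\mu[\Rcal(T)]\ge \tfrac{\Delta_{i_0}^{p_0}T}{2}\PP_\mu(n_{i_0}^{p_0}(T)\ge T/2)$ and $\EE_\nu[\Rcal(T)]\ge \tfrac{\Delta_{i_0}^{p_0}T}{2}\PP_\nu(n_{i_0}^{p_0}(T)<T/2)$, summing, and invoking the Bretagnolle--Huber inequality (Lemma~\ref{lem:bh}), I can solve for $\EE_\mu[n_{i_0}^{p_0}(T)]$ to get
\[
\EE_\mu\!\sbr{n_{i_0}^{p_0}(T)} \;\ge\; \frac{\ln\!\bigl(\Delta_{i_0}^{p_0} T^{\alpha}/(8C)\bigr)}{12\,(\Delta_{i_0}^{p_0})^2}.
\]

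Finally, write $\EE_\mu[\Rcal(T)] = \sum_{i\in[K]}\sum_{p\in[M]} \Delta_i^p\,\EE_\mu[n_i^p(T)]$, drop summands with $\Delta_i^p=0$, and substitute the per-arm/per-player lower bound above to obtain the claimed
\[
\EE_\mu[\Rcal(T)] \;\ge\; \sum_{i\in[K]}\sum_{p\in[M]:\,\Delta_i^p>0}\frac{\ln\!\bigl(\Delta_i^p T^{\alpha}/(8C)\bigr)}{12\,\Delta_i^p}.
\]
There is no genuinely hard step: the only thing to check carefully is that the perturbed instance $\nu$ remains in the model class, and this is immediate here precisely because we have dropped the $\epsilon$-dissimilarity requirement. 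The lower bound therefore applies to \emph{every} arm and \emph{every} player simultaneously, which is exactly why the bound matches \inducb's upper bound up to constants, independently of the size of $\Ical_\epsilon$.
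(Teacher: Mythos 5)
Your proposal is correct and follows essentially the same route as the paper's own proof: the same single-coordinate perturbation $\nu_{i_0}^{p_0} = \mu_{i_0}^{p_0} + 2\Delta_{i_0}^{p_0}$, validity of $\nu$ via $\Delta_{i_0}^{p_0} \le \tfrac{1}{16}$, divergence decomposition with Lemma~\ref{lem:kl-ber}, and the Bretagnolle--Huber inequality, applied to every pair $(i_0,p_0)$ with $\Delta_{i_0}^{p_0}>0$. The paper likewise notes that the only difference from the known-$\epsilon$ case is that $\nu$ need not be $\epsilon$-dissimilar, which is exactly the observation you make.
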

\begin{proof}
Recall that 
$\EE_\mu\sbr{\Rcal(T)} = \sum_{i=1}^K \sum_{p=1}^M \Delta_i^p \EE_\mu \sbr{N_i^p(T)}$;
it suffices to show that for any $i_0$ in $[K]$ and any $p_0$ in $[M]$ such that $\Delta_{i_0}^{p_0} > 0$,
    $\EE_\mu\sbr{n_{i_0}^{p_0}(T)} \geq \frac{\ln\del{\Delta_{i_0}^{p_0}T^{\alpha}/8C}}{12 (\Delta_{i_0}^{p_0})^2}$.

The proof of this claim is almost identical to the proof of the the second claim in the previous theorem, except that we have more flexibility to choose the ``alternative instances'' $\nu$'s, because $\Acal$ is assumed to have sublinear regret in all MPMAB instances; specifically, $\nu$ no longer needs to be an $\epsilon$-MPMAB instance.
We include the argument here for completeness. 
Fix $i_0$ in $[K]$ and $p_0 \in [M]$ such that $\Delta_{i_0}^{p_0} > 0$. We consider a new Bernoulli MPMAB instance $\nu$, with mean reward defined as follows:
    \[
    \nu_i^p 
    =
    \begin{cases}
    \mu_i^p + 2\Delta_{i_0}^{p_0} & i = i_0, p = p_0 \\
    \mu_i^p & \text{otherwise}
    \end{cases}
    \]
    We have the following three key observations:
    \begin{enumerate}[(a)]
        \item $\nu$ is still a valid Bernoulli MPMAB instance; this is because for all $p$ in $[M]$ and $i$ in $[K]$, $(\nu_i^p - \mu_i^p) \in [-\frac1{8}, \frac1{8}]$, and $\mu_i^p \in [\frac{15}{32}, \frac{17}{32}]$, implying that $\nu_i^p \in [0,1]$. 
        By our assumption that $\Acal$ has $C T^{1-\alpha}$ regret on all Bernoulli MPMAB instances, we have
        \begin{equation} 
        \EE_{\mu} \sbr{\Rcal(T)} \leq C T^{1-\alpha}, 
        \quad
        \EE_{\nu} \sbr{\Rcal(T)} \leq C T^{1-\alpha}. 
        \label{eqn:sublinear-reg-3}
        \end{equation}
        
        \item By the divergence decomposition lemma (\cite{lattimore2020bandit}, Lemma 15.1),
        \begin{equation} 
        \KL(\PP_\mu, \PP_\nu) 
        = 
        \EE_{\mu} \sbr{n_{i_0}^{p_0}(T)} \KL\del{\Ber(\mu_{i_0}^{p_0}), \Ber(\mu_{i_0}^{p_0} + 2\Delta_{i_0}^{p_0})}
        \leq
        12 \EE_{\mu} \sbr{n_{i_0}^{p_0}(T)} (\Delta_{i_0}^{p_0})^2
        ,
        \label{eqn:div-decomp-ineq-3}
        \end{equation}
        where the second equality uses the following observation: 
        $\mu_{i_0}^{p_0} \in [\frac{15}{32}, \frac{17}{32}]$, and $\Delta_{i_0}^{p_0} \leq \frac{1}{16}$, using Lemma~\ref{lem:kl-ber},
        $\KL\del{\Ber(\mu_{i_0}^{p_0}), \Ber(\mu_{i_0}^{p_0} + 2\Delta_{i_0}^{p_0})} \leq 3 \cdot (2\Delta_{i_0}^{p_0})^2 = 12 (\Delta_{i_0}^{p_0})^2$. 
        
        \item Under MPMAB instance $\mu$, $\Rcal(T) \geq \Rcal^{p_0}(T) \geq \Delta_{i_0}^{p_0} n_{i_0}^{p_0}(T) \geq \frac{\Delta_{i_0}^{p_0} T}{2} \mathds{1}\{n_{i_0}^{p_0}(T) \geq \frac T 2\}$. Taking expectations, we get,
        \begin{equation}
        \EE_\mu \sbr{\Rcal(T)} \geq \frac{ \Delta_{i_0}^{p_0} T}{2} \PP_\mu( n_{i_0}^{p_0}(T) \geq \frac T 2). 
        \label{eqn:reg-lb-mu-3}
        \end{equation}
        Likewise, under MPMAB instance $\nu$, for player $p_0$, $\Rcal(T) \geq \Rcal^{p_0}(T) \geq \Delta_{i_0}^{p_0} (T - n_{i_0}^{p_0}(T)) \geq \frac{\Delta_{i_0}^{p_0} T}{2} \mathds{1}\{n_{i_0}^{p_0}(T) < \frac T 2\}$. Taking expectations, we get,
        \begin{equation}
        \EE_\nu \sbr{\Rcal(T)} \geq \frac{\Delta_{i_0}^{p_0} T}{2} \PP_\nu( n_{i_0}^{p_0}(T) < \frac T 2).
        \label{eqn:reg-lb-nu-3}
        \end{equation}
    \end{enumerate}
    Combining Equations~\eqref{eqn:sublinear-reg-3},~\eqref{eqn:div-decomp-ineq-3},~\eqref{eqn:reg-lb-mu-3},~\eqref{eqn:reg-lb-nu-3}, and using Bretagnolle-Huber inequality, we get 
    \[
    \EE_\mu\sbr{n_{i_0}^{p_0}(T)}
    \geq 
    \frac{1}{12 (\Delta_{i_0}^{p_0})^2} \cdot \ln\del{\frac{\Delta_{i_0}^{p_0}T^{\alpha}}{8C}}.
    \]
    This concludes the proof of the claim, and in turn concludes the proof of the theorem. 
\end{proof}

\subsection{Auxiliary lemmas}
The following lemma is well known for proving gap-independent lower bounds in single player $K$-armed bandits. We will be using the following convention: for probability distribution $\PP_i$, denote by $\EE_i$ its induced expectation operator.

\begin{lemma}
\label{lem:kl-pull}
Suppose $m, B$ are positive integers and $m \geq 2$; there are $m+1$ probability distributions $\PP_0, \PP_1, \ldots, \PP_m$, and 
$m$ random variables $N_1, \ldots, N_m$, such that:
(1) Under any of the $\PP_i$'s, $N_1,\ldots,N_m$ are non-negative and $\sum_{i=1}^m N_i \leq B$ with probability 1;
(2) for all $i$ in $[m]$, $d_{\TV}(\PP_0, \PP_i) \leq \frac14 \sqrt{\frac m {B} \cdot \EE_0[N_i]}$.
Then,
\[ \frac 1 m \sum_{i=1}^m \EE_i[B - N_i] \geq \frac{B}{4}. \]
\end{lemma}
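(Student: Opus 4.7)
The plan is to use a standard change-of-measure argument that compares each $\EE_i[B-N_i]$ to $\EE_0[B-N_i]$ via the total variation distance between $\PP_0$ and $\PP_i$, then average over $i$ and exploit the constraint $\sum_i N_i \leq B$.

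First, I would observe that under any $\PP_i$, the random variable $B - N_i$ takes values in $[0, B]$: it is nonnegative because $N_i \leq \sum_j N_j \leq B$ by hypothesis (1), and upper-bounded by $B$ because $N_i \geq 0$. Since $B-N_i$ is a bounded random variable, the standard TV coupling inequality gives
\[
\EE_i[B - N_i] \;\geq\; \EE_0[B - N_i] \;-\; B \cdot d_{\TV}(\PP_0, \PP_i).
\]
Averaging over $i \in [m]$ and using hypothesis (2) yields
\[
\frac{1}{m}\sum_{i=1}^m \EE_i[B-N_i]
\;\geq\;
\underbrace{\frac{1}{m}\sum_{i=1}^m \EE_0[B - N_i]}_{(\mathrm{I})}
\;-\;
\underbrace{\frac{B}{4m}\sum_{i=1}^m \sqrt{\tfrac{m}{B}\,\EE_0[N_i]}}_{(\mathrm{II})}.
\]

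For term (I), I would pull the sum inside the expectation under $\PP_0$ and use hypothesis (1):
\[
(\mathrm{I}) \;=\; B - \frac{1}{m}\EE_0\!\left[\sum_{i=1}^m N_i\right] \;\geq\; B - \frac{B}{m} \;\geq\; \frac{B}{2},
\]
where the last step uses $m \geq 2$. For term (II), I would apply Cauchy--Schwarz to $\sum_i \sqrt{\EE_0[N_i]}$:
\[
(\mathrm{II}) \;\leq\; \frac{B}{4m}\sqrt{\tfrac{m}{B}} \cdot \sqrt{m \sum_{i=1}^m \EE_0[N_i]}
\;\leq\; \frac{B}{4m}\sqrt{\tfrac{m}{B}} \cdot \sqrt{mB}
\;=\; \frac{B}{4},
\]
again invoking $\EE_0[\sum_i N_i] \leq B$. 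Combining the bounds on (I) and (II) gives $\frac{1}{m}\sum_i \EE_i[B-N_i] \geq \frac{B}{2} - \frac{B}{4} = \frac{B}{4}$, as desired.

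There is no real obstacle here; the argument is essentially the standard ``needle in a haystack'' lower bound, and the only step requiring a touch of care is verifying that the scaling of $d_{\TV}(\PP_0,\PP_i)$ in hypothesis (2), namely $\tfrac14\sqrt{m\EE_0[N_i]/B}$, matches exactly what one needs so that the Cauchy--Schwarz step in bounding (II) yields the constant $\tfrac14$ required to beat the $B/2$ produced by term (I). The condition $m \geq 2$ is precisely what guarantees $(\mathrm{I}) \geq B/2$.
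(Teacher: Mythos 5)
Your proof is correct and follows essentially the same route as the paper's: a change-of-measure step via $|\EE_i[X]-\EE_0[X]|\leq B\cdot d_{\TV}(\PP_0,\PP_i)$ for the bounded variable, averaging over $i$, the constraint $\EE_0[\sum_i N_i]\leq B$, and concavity of the square root (your Cauchy--Schwarz step is the paper's Jensen step in disguise). The only cosmetic difference is that you lower-bound $\frac1m\sum_i\EE_i[B-N_i]$ directly while the paper upper-bounds $\frac1m\sum_i\EE_i[N_i]$ by $\frac{3B}{4}$ and then negates.
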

\begin{proof}
For every $i$ in $[m]$, as $N_i$ is a random variable that takes values in $[0,B]$, we have,
\[ \abs{\EE_i[N_i] - \EE_0[N_i]} \leq B \cdot d_{\TV}(\PP_0, \PP_i).
\] 
By item (2) and algebra, this implies that
\[ \EE_i[N_i] \leq \EE_0[N_i] + \frac14 \sqrt{m B \EE_0[N_i]}. \]
Averaging over $i$ in $[m]$ and using Jensen's inequality, we have
\[ 
 \frac1m \sum_{i=1}^m \EE_i[N_i]
 \leq
 \frac1m \sum_{i=1}^m \EE_0[N_i] 
 + \frac1{4m} \sum_{i=1}^m \sqrt{m B \EE_0[N_i]}
 \leq 
 \frac1m \sum_{i=1}^m \EE_0[N_i] 
 + 
 \frac14 \sqrt{m B \cdot \del{\frac1m \sum_{i=1}^m \EE_0[N_i]}}
\]
Noting that item (2) implies $\frac1m \sum_{i=1}^m \EE_0[N_i] \leq \frac B m$; plugging this into the above inequality, we have
\[
 \frac1m \sum_{i=1}^m \EE_i[N_i]
 \leq
 \frac{B}{m} + \frac14 \sqrt{m B \cdot \frac{B}{m}}
 \leq 
 \frac B 2 + \frac B 4
 =
 \frac{3B}{4},
\]
where the second inequality uses the assumption that $m \geq 2$.
The lemma is concluded by negating and adding $B$ on both sides.
\end{proof}

\begin{lemma}
\label{lem:kl-ber}
Suppose $a, b$ are both in $[\frac14, \frac34]$. Then, 
$\KL(\Ber(a), \Ber(b)) \leq 3(b-a)^2$.
\end{lemma}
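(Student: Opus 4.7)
The plan is to treat the KL divergence as a function of its first argument with $b$ held fixed, and apply a second-order Taylor expansion about the point $a=b$. Specifically, I would define $f(a) := \KL(\Ber(a), \Ber(b)) = a \ln(a/b) + (1-a) \ln((1-a)/(1-b))$ and first verify two elementary facts: $f(b) = 0$, and $f'(b) = 0$. The derivative computes to $f'(a) = \ln \frac{a(1-b)}{(1-a)b}$, which vanishes at $a=b$. A further short differentiation gives $f''(a) = \frac{1}{a(1-a)}$.

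Then Taylor's theorem with the Lagrange form of the remainder yields $f(a) = \tfrac{1}{2} f''(\xi) (a-b)^2 = \frac{(a-b)^2}{2\xi(1-\xi)}$ for some $\xi$ lying between $a$ and $b$. Since $a, b \in [\tfrac14, \tfrac34]$, so is $\xi$, and the minimum of $\xi(1-\xi)$ on this interval is $\tfrac{3}{16}$. Hence $\KL(\Ber(a), \Ber(b)) \leq \tfrac{8}{3}(a-b)^2$, and since $\tfrac{8}{3} < 3$, the stated bound follows with room to spare.

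There is no real obstacle here: the argument is essentially a one-dimensional Taylor estimate, and the hypothesis $a,b \in [\tfrac14, \tfrac34]$ is used precisely to bound $\xi(1-\xi)$ uniformly away from zero. The only care needed is in confirming $f'(b)=0$ (so that the expansion starts at second order) and that the standard remainder form applies on the relevant subinterval, both of which are routine. An alternative route would be to integrate $f''$ twice, $f(a) = \int_b^a \int_b^s \frac{1}{u(1-u)}\,du\,ds$, and upper-bound the integrand by $\tfrac{16}{3}$ throughout $[\tfrac14, \tfrac34]$; this gives the same constant without explicitly invoking Taylor's theorem.
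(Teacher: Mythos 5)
Your proof is correct and follows essentially the same route as the paper: both arguments reduce the claim to a second-order Taylor expansion about $a=b$ (the paper phrases the vanishing of the zeroth- and first-order terms via the identity $\KL(\Ber(a),\Ber(b)) = h(a)-h(b)-h'(b)(a-b)$ for the entropy-type function $h$, which is exactly your observation that $f(b)=f'(b)=0$), and both bound the resulting factor $\frac{1}{2\xi(1-\xi)}$ by $\frac{8}{3} \le 3$ on $[\frac14,\frac34]$. No substantive difference.
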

\begin{proof}
Define $h(x) = x \ln\frac1x + (1-x)\ln\frac1{1-x}$. It can be easily verified that 
$\KL(\Ber(a), \Ber(b)) = a \ln\frac a b + (1-a) \ln\frac{1-a}{1-b}$, which in turn equals $h(a) - h(b) - h'(b)(a-b)$.
By Taylor's theorem, there exists some $\xi \in [a,b] \subseteq [\frac14, \frac34]$ such that 
\[
h(a) - h(b) - h'(b)(a-b)
= 
\frac{h''(\xi)}{2} (b-a)^2
=
\frac{1}{2\xi(1-\xi)} (b-a)^2.
\]
The lemma is concluded by verifying that $\frac{1}{2\xi(1-\xi)} \leq 3$ for $\xi$ in $[\frac14, \frac34]$.
\end{proof}

\begin{lemma}[Bretagnolle-Huber]
\label{lem:bh}
For any two distributions $\PP$ and $\QQ$ and an event $A$, 
\[
\PP(A) + \QQ(A^C) \geq \frac12 \exp(-\KL(\PP, \QQ)).
\]
\end{lemma}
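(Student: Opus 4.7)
My plan is to prove the Bretagnolle-Huber inequality via the classical two-step argument that first reduces the claim to a lower bound on $\int \min(p,q)\, d\mu$ (where $p,q$ are densities of $\PP,\QQ$ with respect to some common dominating measure $\mu$), and then bounds this quantity from below using the Hellinger/Bhattacharyya affinity $\int\sqrt{pq}\, d\mu$ as a bridge to $\KL(\PP,\QQ)$.

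First, I would argue the reduction: for any event $A$, since $p \geq \min(p,q)$ on $A$ and $q \geq \min(p,q)$ on $A^C$,
\[
\PP(A) + \QQ(A^C) = \int_A p\, d\mu + \int_{A^C} q\, d\mu \geq \int_A \min(p,q)\, d\mu + \int_{A^C} \min(p,q)\, d\mu = \int \min(p,q)\, d\mu.
\]
So it suffices to show $\int \min(p,q)\, d\mu \geq \tfrac{1}{2}\exp(-\KL(\PP,\QQ))$. Next, writing $\sqrt{pq} = \sqrt{\min(p,q)}\cdot\sqrt{\max(p,q)}$ and applying the Cauchy-Schwarz inequality, together with the elementary identity $\int(\min(p,q)+\max(p,q))\, d\mu = \int (p+q)\, d\mu = 2$ (which gives $\int \max(p,q)\, d\mu \leq 2$), yields
\[
\left(\int \sqrt{pq}\, d\mu\right)^2 \leq \int \min(p,q)\, d\mu \cdot \int \max(p,q)\, d\mu \leq 2\int \min(p,q)\, d\mu.
\]

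The final step is to lower bound the affinity $\int\sqrt{pq}\, d\mu$ by applying Jensen's inequality to the concave function $\log$ with respect to the probability measure $\PP$:
\[
\log \int \sqrt{pq}\, d\mu = \log \int p\sqrt{q/p}\, d\mu \geq \int p \log \sqrt{q/p}\, d\mu = -\tfrac{1}{2}\KL(\PP,\QQ),
\]
so that $\int \sqrt{pq}\, d\mu \geq \exp(-\tfrac{1}{2}\KL(\PP,\QQ))$. Chaining this with the Cauchy-Schwarz bound gives $2\int \min(p,q)\, d\mu \geq \exp(-\KL(\PP,\QQ))$, which together with the reduction concludes the proof. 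The only subtle point will be the degenerate case where $\PP$ is not absolutely continuous with respect to $\QQ$ (so that $\KL(\PP,\QQ) = \infty$), in which case the bound is vacuous and can be handled separately; otherwise, choosing $\mu = \PP + \QQ$ ensures all the integrals above are well-defined and the Jensen step applies cleanly.
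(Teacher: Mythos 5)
Your proof is correct and complete; the paper itself states this lemma without proof, importing it as the classical Bretagnolle--Huber inequality, so there is no in-paper argument to compare against. Your derivation is the standard one (reduce $\PP(A)+\QQ(A^C)$ to $\int\min(p,q)\,d\mu$, bound it via Cauchy--Schwarz using $\int\max(p,q)\,d\mu\le 2$, and lower-bound the affinity $\int\sqrt{pq}\,d\mu$ by $\exp(-\tfrac12\KL(\PP,\QQ))$ via Jensen), and every step, including the degenerate case $\KL(\PP,\QQ)=\infty$, checks out.
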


\section{Upper bounds with unknown $\epsilon$}
\label{appendix:ub_unk_eps}

In this section, we provide a description of \arobustagg, an algorithm that has regret adaptive to $\Ical_{\epsilon}$ in all MPMAB environments with unknown $\epsilon$.

To ensure sublinear regret in all MPMAB environments, \robustagg uses the aggregation-based framework named \corral~\citep[][see also Lemma~\ref{lem:master_rho_b} below]{alns17}, which we now briefly review.
The \corral meta-algorithm allows one to combine multiple online bandit learning algorithms (called base learners) into one master algorithm that has performance competitive with all base learners'. For different environments, different base learners may stand out as the best, and therefore the master algorithm exhibits some degree of adaptivity. We refer readers to \citep{alns17} for the full description of \corral. 

In the context of MPMAB problems, recall that we have developed $\robustagg(\epsilon)$ that has good regret guarantees for all $\epsilon$-MPMAB instances. The central idea of \arobustagg is to apply the \corral algorithm over a series of baser learners, i.e., $\cbr{\robustagg(\epsilon_b)}_{b=1}^B$, where $E = \cbr{\epsilon_b}_{b=1}^B$ is a covering of the $[0,1]$ interval. With an appropriate setting of $E$, for any $\epsilon$-MPMAB instance, there exists some $b_0$ in $[B]$ such that $\epsilon_{b_0}$ is not much larger than $\epsilon$, and running $\robustagg(\epsilon_{b_0})$ would achieve regret guarantee competitive to $\robustagg(\epsilon)$. As $\corral$ achieves online performance competitive with all $\robustagg(\epsilon_b)$'s~\citep{alns17}, it must be competitive with $\robustagg(\epsilon_{b_0})$, and therefore can inherit the adaptive regret guarantee of $\robustagg(\epsilon_{b_0})$.

We now provide important technical details of $\arobustagg$:
\begin{enumerate}
    \item $B = \lceil \log(MT)\rceil+1$ is the number of base learners, and 
    $E = \cbr{ \epsilon_b = 2^{-b+1}: b \in  [B]
    }$ is the grid of $\epsilon$ to be aggregated.
    $\corral$ uses master learning rate $\eta = \frac{1}{M\sqrt{T}}$.
    \item For each base learner that runs  $\robustagg(\epsilon)$ for some $\epsilon$, we require them to take a new parameter $\rho \geq 1$ as input, to accommodate for the fact that it may not be selected by the \corral master all the time.
    Specifically, it performs bandit learning interaction with an environment whose returned rewards are unbiased but \textit{importance weighted}: at time step $t$, when player $p$ pulls arm $i$, instead of directly receiving reward drawn from $r \sim \Dcal_{i}^p$, it receives $\hat{r} = \frac{W_t}{w_t} r$, where $w_t \in [1, \frac{1}{\rho}]$ is a random number, and conditioned on $w_t$, $W_t \sim \Ber(w_t)$ is an independently-drawn Bernoulli random variable.
    Observe that
    $\hat{r}$ has conditional mean $\mu_i^p$, lies in the interval $[0, \rho]$, and has conditional variance at most $\rho$. 
    
    We call an environment that has the above analytical form a {\em $\rho$-importance weighted environment}; in the special case of $\rho = 1$, $w_t = 1$ and $W_t = 1$ with probability $1$ for all $t$, and therefore a $1$-importance weighted environment is the same as the original bandit learning environment.
    
    Under an $\rho$-importance weighted environment, the rewards are no longer bounded in $[0,1]$, therefore, the constructions of the UCB's of the mean rewards in the original $\robustagg(\epsilon)$ becomes invalid. Instead, we will rely on the following lemma (analogue of Lemma~\ref{lem:Q}) for constructing valid UCB's:

    \begin{lemma}
    \label{lemma:concentration_rho}
    With probability at least $1 - 4MT^{-5}$, we have
    \[
    \abs{
    \zeta_i^p (t-1) - \mu_i^p 
    }
    \leq
    8 \sqrt{ \frac{3 \rho \ln T}{ \wbar{n_i^p}(t-1) } },
    \]
    \[
    \abs{ \eta_i^p(t-1) -\sum_{q \neq p}\frac{n^q_i(t-1)}{\wbar{m^p_i}(t-1) }\mu_i^q } \leq 4 \sqrt{\frac{14 \rho \ln T}{\wbar{m^p_i}(t-1)}}
    \]
    holding for all $p$ in $[M]$,
    where
    $
    \zeta_i^p(t) = \frac{ \sum_{s=1}^{t-1} \mathds{1}\{ i_t^p = i \}\hat{r}_t^p}{\wbar{n^p_i}(t-1)},  
    $
    and
    $
    \eta_i^p(t) = \frac{ \sum_{s=1}^{t-1} \sum_{q = 1}^M \mathds{1} \{ q \neq p,\ i_t^q = i \} \hat{r}_t^q }{\wbar{m^p_i}(t-1) }.
    $
    \end{lemma}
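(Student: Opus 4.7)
The plan is to mirror the proof of Lemma~\ref{lem:Q} verbatim, carefully tracking how the $\rho$-dependence propagates through Freedman's inequality when the rewards are importance-weighted. The two key changes relative to Lemma~\ref{lem:Q} are the range and the conditional variance of the reward variables: $\hat{r}_t^p \in [0,\rho]$ and has conditional variance at most $\rho$, as opposed to $r_t^p \in [0,1]$ with conditional variance at most $1$.

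Concretely, I would reuse the same two filtrations $\{\mathcal{B}_t\}_{t\in[T]}$ and $\{\mathcal{G}_{t,q}\}_{t\in[T],q\in[M]}$ from the proof of Lemma~\ref{lem:Q}. For the first inequality, set $X_t = \mathds{1}\{i_t^p = i\}(\hat{r}_t^p - \mu_i^p)$. By construction, $\mathbb{E}[\hat{r}_t^p \mid w_t, \mathcal{B}_{t-1}, i_t^p=i] = \mu_i^p$, so $\{X_t\}$ is a martingale difference sequence adapted to $\{\mathcal{B}_t\}$. Since $\hat{r}_t^p \in [0,\rho]$ and $\mu_i^p \in [0,1] \subseteq [0,\rho]$, we have $|X_t| \le \rho$, and the hypothesized variance bound on $\hat{r}$ yields $\mathbb{V}[X_t \mid \mathcal{B}_{t-1}] \le \rho \cdot \mathds{1}\{i_t^p = i\}$. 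Applying Freedman's inequality~\cite[Lemma 2]{bartlett2008high} with $b = \rho$ and $\sigma \le \sqrt{\rho \sum_{s=1}^{t-1}\mathds{1}\{i_s^p=i\}}$, we get with probability at least $1 - 2T^{-5}$,
\[
\Bigl|\sum_{s=1}^{t-1} X_s\Bigr| \le 4\sqrt{\rho \sum_{s=1}^{t-1}\mathds{1}\{i_s^p=i\}\,\ln(T^5 \log_2 T)} + 2\rho \ln(T^5 \log_2 T).
\]
Then, exactly as in Lemma~\ref{lem:Q}, I would split into two cases: if $n_i^p(t-1) = 0$ the bound is trivial (the left side is $\le 1 \le 8\sqrt{3\rho\ln T/\overline{n_i^p}(t-1)}$ since $\rho \ge 1$), otherwise I divide by $n_i^p(t-1)$ and use the elementary observation that either $12\rho\ln T / n_i^p(t-1) \ge 1$ (and the bound is trivial) or it is dominated by $2\sqrt{3\rho\ln T/n_i^p(t-1)}$. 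This gives $|\zeta_i^p(t) - \mu_i^p| \le 8\sqrt{3\rho\ln T/\overline{n_i^p}(t-1)}$.

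For the second inequality, I would repeat the analogous argument using $Y_{t,q} = \mathds{1}\{q \neq p, i_t^q = i\}(\hat{r}_t^q - \mu_i^q)$ on the filtration $\{\mathcal{G}_{t,q}\}$, again with $b = \rho$ and $\sigma \le \sqrt{\rho \sum_{s,q}\mathds{1}\{q\neq p, i_s^q = i\}}$, yielding the corresponding $\sqrt{\rho}$-scaled deviation bound on $\eta_i^p(t)$. A union bound over the two inequalities, over all $p \in [M]$, gives the failure probability $\le 4MT^{-5}$ claimed in the lemma.

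The only real obstacle is bookkeeping: one must verify that the stated $8\sqrt{3\rho\ln T/\overline{n_i^p}}$ and $4\sqrt{14\rho\ln T/\overline{m_i^p}}$ constants survive the same $\log_2(T)$-absorption and "small vs.\ large" case analysis as in Lemma~\ref{lem:Q}, once $\rho \ge 1$ is folded into the variance term. Since every inequality in the original proof remains valid after replacing $1$ by $\rho$ in both the Bernstein bound and the deterministic envelope, no new idea is required beyond the $\rho$-aware Freedman application.
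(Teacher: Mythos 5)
Your proposal is correct and follows essentially the same route as the paper's own proof: the paper likewise reruns the Freedman-based argument of Lemma~\ref{lem:Q} with the filtrations augmented to include the $w_s$'s, using $|X_t|\le\rho$ and conditional variance at most $\rho\cdot\mathds{1}\{i_t^p=i\}$ (and the analogous bounds for $Y_{t,q}$), then the same case analysis and union bound. The only cosmetic difference is that the paper explicitly redefines the $\sigma$-algebras to contain the importance weights, which you implicitly handle by conditioning on $w_t$.
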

    
    According to the above concentration bounds, changing the definition of confidence interval width to $F(\wbar{n^p_i}, \wbar{m^p_i}, \lambda, \epsilon ) = 8\sqrt{13\rho\ln T\left[\frac{\lambda^2}{\wbar{n^p_i}} + \frac{(1-\lambda)^2}{\wbar{m^p_i}}\right]} + (1-\lambda)\epsilon$ would maintain the validity of the UCB's in $\rho$-importance weighted environments; henceforth, we incorporate this modification in $\robustagg(\epsilon)$. 
    
    We have the following important analogue of Theorem~\ref{thm:gap_ind_upper}, which establishes a gap-independent regret upper bound when $\robustagg(\epsilon)$ is run in a $\rho$-importance weighted $\epsilon$-MPMAB environment. 
    This shows $\robustagg(\epsilon)$ enjoys stability: the regret of the algorithm degrades gracefully with increasing $\rho$.\footnote{See an elegant definition of $(R(T), \alpha)$-(weak) stability for bandit algorithms in~\citep{alns17}.
    Our guarantee on $\robustagg(\epsilon)$ in Lemma~\ref{lem:gap_ind_upper_rho} is slightly stronger than the
    $( \tilde{\order}\rbr{
    \sqrt{\abr{\Ical_\epsilon} M T}
    + 
    M \abr{\Ical_\epsilon}
    +
    \min\del{
    M \sqrt{\abr{\Ical^C_\epsilon} T}
    ,
    \epsilon M T
    }
    }, \frac12)$-weak stability,
    in that the regret bound has terms that are unaffected by $\rho$.
    \label{footnote:not-stab}
    }
    
    \begin{lemma}
    \label{lem:gap_ind_upper_rho}
    Let $\robustagg(\epsilon)$ run on a $\rho$-importance weighted
    $\epsilon$-MPMAB problem instance for $T$ rounds. Then its expected collective regret satisfies
    \[ 
    \EE[\mathcal{R}(T)] \le
    \tilde{\order}\rbr{
    \sqrt{\rho \abr{\Ical_\epsilon} M T}
    + 
    M \abr{\Ical_\epsilon}
    +
    \min\del{
    M \sqrt{\rho \abr{\Ical^C_\epsilon} T}
    ,
    \epsilon M T
    }
    }.
    \]
    \end{lemma}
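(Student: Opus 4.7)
The plan is to recycle the proof structure of Theorem~\ref{thm:gap_ind_upper}, tracking how the factor $\rho$ propagates through the confidence widths and the pull-count bounds, and then to add one additional $O(\epsilon M T)$ estimate specific to arms in $\Ical^C_\epsilon$ in order to obtain the $\min$ inside the bound.

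First, I would define a clean event $\mathcal{E}$ by intersecting the events provided by Lemma~\ref{lemma:concentration_rho} over all $t \in [T]$ and $i \in [K]$; a union bound gives $\Pr[\mathcal{E}] \geq 1 - 4MK/T^4$. On $\mathcal{E}$, the modified $F$ (with $\sqrt{\rho}$ appearing inside the square root) is a valid half-width of a confidence interval for $\mu_i^p$ centered at $\kappa_i^p(t,\lambda)$ for every $\lambda \in [0,1]$, which gives the optimism property $\mu_i^p \le \UCB_i^p(t)$ exactly as in Lemma~\ref{lem:kappa}. Because the true $\mu_i^p$'s (and hence the instantaneous regrets) still lie in $[0,1]$ regardless of the reward rescaling, the contribution from $\overline{\mathcal{E}}$ to $\EE[\Rcal(T)]$ is $O(MT \cdot MK/T^4) = O(1)$ and may be discarded.

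Next, I would re-run the pigeonhole arguments of Appendix~\ref{appendix:gap_dep_ub}; the only substantive change is that the threshold $\tau$ is inflated by a factor of $\rho$ due to the wider confidence interval. This yields the $\rho$-scaled analogues of Lemmas~\ref{lem:arm_pulls_I} and~\ref{lem:arm_pulls_IC}: for $i \in \Ical_\epsilon$, $\EE[n_i(T) \mid \mathcal{E}] = O(\rho \ln T / (\Delta_i^{\min})^2 + M)$; and for $i \in \Ical^C_\epsilon$ and $p$ with $\Delta_i^p > 0$, $\EE[n_i^p(T) \mid \mathcal{E}] = O(\rho \ln T / (\Delta_i^p)^2)$. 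Everything else in those proofs carries over verbatim, using Fact~\ref{fact:Ical_properties} to compare $\Delta_i^{\min}$ with $\Delta_i^{\min} - 2\epsilon$ and with $\Delta_i^{\max}$.

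Finally, I would convert to a gap-independent bound via the threshold-splitting argument of Appendix~\ref{appendix:gap_ind_ub}. For arms in $\Ical_\epsilon$, choosing the splitting threshold proportional to $\sqrt{\rho |\Ical_\epsilon| \ln T / (MT)}$ yields a contribution $\tilde{O}(\sqrt{\rho |\Ical_\epsilon| M T} + M|\Ical_\epsilon|)$, mirroring the earlier calculation but with $\rho \ln T$ in place of $\ln T$. For arms in $\Ical^C_\epsilon$, I would produce two upper bounds and take their minimum: (a) the same splitting argument with threshold $\sqrt{\rho |\Ical^C_\epsilon| \ln T / T}$ gives $\tilde{O}(M\sqrt{\rho |\Ical^C_\epsilon| T})$; (b) since $\Delta_i^p \le 5\epsilon$ for every $i \in \Ical^C_\epsilon$ and every $p$ by definition of $\Ical^C_\epsilon$, the trivial estimate $\sum_{i \in \Ical^C_\epsilon} \sum_p \Delta_i^p \EE[n_i^p(T)] \le 5\epsilon \cdot MT$ holds because $\sum_{i,p} \EE[n_i^p(T)] \le MT$. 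Both (a) and (b) upper bound the same quantity, so their minimum is also an upper bound, delivering the $\min(M\sqrt{\rho |\Ical^C_\epsilon| T}, \epsilon MT)$ term in the statement.

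The main obstacle is purely bookkeeping the $\rho$ factor through the concentration inequality into the pull-count bounds without picking up any unwanted $\sqrt{\rho}$ on the $M|\Ical_\epsilon|$ term, which comes from the parallel-pull slack ``$+M$'' in Lemma~\ref{lem:arm_pulls_I} and is unrelated to variance; no genuinely new idea is needed beyond the trivial $\epsilon M T$ bound in step (b). That trivial bound is what prevents the $\sqrt{\rho}$-dependent term from overwhelming the regret when $\epsilon$ is very small, and it is the reason \robustagg enjoys the gracefully-$\rho$-degrading guarantee required by the \corral master.
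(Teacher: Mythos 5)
Your proposal is correct and follows essentially the same route as the paper's proof: condition on the $\rho$-dependent clean event from Lemma~\ref{lemma:concentration_rho}, derive the $\rho$-scaled analogues of Lemmas~\ref{lem:arm_pulls_I} and~\ref{lem:arm_pulls_IC}, apply the threshold-splitting conversion from Appendix~\ref{appendix:gap_ind_ub}, and intersect the resulting $M\sqrt{\rho\abs{\Ical_\epsilon^C}T}$ bound with the trivial $5\epsilon MT$ bound coming from $\Delta_i^p\le 5\epsilon$ on $\Ical_\epsilon^C$. Your closing remark about why the $\epsilon MT$ term is needed for the \corral stability guarantee is also consistent with the paper's discussion.
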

    The proof of Lemmas~\ref{lemma:concentration_rho} and~\ref{lem:gap_ind_upper_rho} can be found at the end of this section.
    
        \item \corral maintains a probability distribution on base learners $q_t = (q_{t,b}: b \in [B])$ over time. 
        At time step $t$, each base learner $b$ proposes their own arm pull decisions $(i_t^p(b): p \in [M])$; the \corral master chooses a base learner with probability according to $q_t$, that is,
        $i_t^p = i_t^p(b_t)$ for all $p$, where $b_t \sim q_t$.
        After the arm pulls, learner $b$ receives feedback $\hat{r}_t^p(b) = \frac{\ind\cbr{b_t = b}}{q_{t,b}} r_t^p$, which is equivalent to interacting with an importance weighted environment discussed before---$q_{t,b}$ and $\ind\cbr{b_t=b}$ correspond to $w_t$ and $W_t$, respectively; when $b_t = b$, $r_t^p$ is drawn from $\Dcal_{i_t^p}^p$ for all $p$ in $[M]$.
        
        \corral also uses the above feedback to update $q_{t+1}$, its weighting of the base learners: define $\ell_{t,b} = \frac{\ind\cbr{b_t = b}}{q_{t,b}} \mathds{1}\{b_t = b\} (\sum_{p=1}^M (1- r_t^p) )$ to be the importance weighted loss of base learner $b$ at time step $t$; $q_t$ is updated to $q_{t+1}$ using $(\ell_{t,b}: b \in [B])$, with online mirror descent with the log-barrier regularizer and learning rate $\eta > 0$.
        A small complication of directly applying the existing results of \corral is that \corral originally assumes that the losses suffered by the base learner from each round have range $[0,1]$. In the multi-player setting, the losses suffered by the base learner is the sum of the losses of all players, which has range $[0,M]$. Nevertheless, we can obtain a similar guarantee. Denote by $\rho_b$ be the final value of $\rho$ of base learner $b$ (see also the next item).
        A slight modification of~\citet[Lemma 13]{alns17} shows that for all base learner $b$, 
    \[
    \sum_{t=1}^T \sum_{b'=1}^B q_{t,b'} \ell_{t,b'}
    - 
    \sum_{t=1}^T \ell_{t,b}
    \leq
    O\del{
    \frac{B}{\eta}
    +
    \eta M^2 T
    }
    -
    \frac{\rho_b}{40 \eta \ln T}.
    \]
    Taking expectation on both sides, and observing that 
    $\EE\sbr{\sum_{t=1}^T \sum_{b'=1}^B q_{t,b'} \ell_{t,b'}} = \EE\sbr{ \sum_{t=1}^T \sum_{p=1}^M (1 - \mu_{i_t^p}^p) }$,
    and
    $\EE\sbr{\sum_{t=1}^T \ell_{t,b}}
    =
    \EE \sbr{ \sum_{t=1}^T \sum_{p=1}^M (1 -\mu_{i_t^p(b)}^p) }
    $, along with some algebra, we get the following lemma.
    \begin{lemma}
    \label{lem:master_rho_b}
    Suppose \arobustagg is run for $T$ rounds. Then, for every $b$ in $[B]$, we have that the regret of the master algorithm with respect to base learner $b$ is bounded by
    \[
    \EE\sbr{    
    \sum_{t=1}^T \sum_{p=1}^M \mu_{i_t^p(b)}^p 
    -
    \sum_{t=1}^T \sum_{p=1}^M \mu_{i_t^p}^p
    }
    \leq
    O\del{
    \frac{B}{\eta}
    +
    \eta M^2 T
    }
    -
    \frac{\EE\sbr{\rho_b}}{40 \eta \ln T}.
    \]
    \end{lemma}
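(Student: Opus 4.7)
My plan is to adapt the \corral regret analysis of \citet[Lemma 13]{alns17} to the multi-player setting, where the per-round loss $\sum_{p=1}^M (1 - r_t^p)$ of each base learner now lies in $[0,M]$ rather than $[0,1]$. The master algorithm runs online mirror descent (OMD) with the log-barrier regularizer on the importance-weighted loss estimates $\ell_{t,b} = \frac{\ind\cbr{b_t=b}}{q_{t,b}} \sum_{p=1}^M (1 - r_t^p)$, and the goal is to upper bound $\sum_t \sum_{b'} q_{t,b'} \ell_{t,b'} - \sum_t \ell_{t,b}$ for each fixed $b$, then take expectation.

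The standard log-barrier OMD regret decomposition splits the regret into (i) a Bregman-divergence term of order $B/\eta$, coming from the distance between the uniform initialization and the comparator; (ii) a positive stability term of order $\eta \sum_t \EE_{b_t \sim q_t}\sbr{\ell_{t,b_t}^2}$; and (iii) a negative ``adaptive stability'' contribution that the log-barrier update produces. Since $\ell_{t,b_t} \leq M/q_{t,b_t}$, we have $\EE_{b_t \sim q_t}\sbr{\ell_{t,b_t}^2} \leq M^2$ uniformly in $t$, so term (ii) is at most $\eta M^2 T$; this replaces the $\eta T$ appearing in the original Agarwal et al.\ bound, while term (i) is unchanged. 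Term (iii), the negative $-\rho_b/(40\eta \ln T)$ contribution, transfers essentially verbatim because it arises purely from the self-bounding property of the log-barrier relating the maximum importance weight $\rho_b$ to the drop in the potential function, and depends only on the OMD update structure and the step-size condition (which $\eta = 1/(M\sqrt T)$ satisfies for $T$ large enough).

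The remaining step is to take expectation over both the master's internal randomness $\cbr{b_t}$ and the environmental noise. Conditioned on the past, $\EE\sbr{\ell_{t,b}} = \sum_p (1 - \mu_{i_t^p(b)}^p)$ because $\ind\cbr{b_t = b}/q_{t,b}$ has conditional mean $1$ and is independent of $r_t^p$, and likewise $\EE\sbr{\sum_{b'} q_{t,b'} \ell_{t,b'}} = \sum_p (1 - \mu_{i_t^p}^p)$; summing over $t$ and rearranging then yields the claimed bound. The main obstacle is a careful audit of the log-barrier analysis of \citet{alns17} to confirm that scaling the raw per-round losses by $M$ only alters term (ii) (by a factor of $M^2$) while leaving the $B/\eta$ Bregman term and the $1/40$ constant in term (iii) intact; this should follow from tracking the loss magnitude through the second-order Taylor expansion used to bound the Bregman divergence between successive iterates, together with the standard ``increased-probability'' lemma for log-barrier OMD, which bounds $q_{t+1,b}$ below by a constant fraction of $q_{t,b}$ under the prescribed step-size condition.
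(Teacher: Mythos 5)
Your proposal matches the paper's argument: the paper likewise obtains the claimed bound as a ``slight modification'' of \citet[Lemma 13]{alns17}, noting that the only effect of the per-round losses lying in $[0,M]$ rather than $[0,1]$ is that the stability term becomes $\eta M^2 T$ while the $B/\eta$ Bregman term and the negative $-\rho_b/(40\eta\ln T)$ term are unchanged, and then takes expectations using the unbiasedness of the importance-weighted losses exactly as you describe. One small correction: the quantity controlled by the log-barrier stability term is the squared local norm $\sum_{b'} q_{t,b'}^2\,\ell_{t,b'}^2$, not $\EE_{b_t\sim q_t}\left[\ell_{t,b_t}^2\right]$; the latter equals $c_t^2\sum_{b'} q_{t,b'}^{-1}$ with $c_t=\sum_{p}(1-r_t^p)$ and is not bounded by $M^2$, whereas the former equals $c_t^2\le M^2$, which is what actually yields the $\eta M^2 T$ term.
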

    
    \item Following~\cite{alns17}, a doubling trick is used for maintaining the value of $\rho$'s for all base learners over time. Specifically, each base learner $b$ maintains a separate guess of $\rho$, an upper bound of $\max_{s=1}^t \frac{1}{q_{s,b}}$; if the upper bound is violated, its $\rho$ gets doubled and the base learner restarts. As \corral initializes $\rho$ as $2B$ for each base learner, and maintains the invariant that $\rho \leq BT$, the number of doublings/restarts for each base learner is at most $\lceil \log T \rceil$.
    For a fixed $b$, summing over the regret guarantees between different restarts of base learner $b$, we have the following regret guarantee.
    \begin{lemma}
    \label{lem:gap_ind_upper_rho_b}
    Suppose $\epsilon_b \geq \epsilon$, and $\robustagg(\epsilon_b)$ is run as a base learner of $\arobustagg$, on a 
    $\epsilon$-MPMAB problem instance for $T$ rounds. Denote by $\rho_b$ the final value of $\rho$. 
    Then its expected collective regret satisfies
    \[ 
    \EE\sbr{ 
    T \sum_{p=1}^M \mu_*^p
    -
    \sum_{t=1}^T \sum_{p=1}^M \mu_{i_t^p(b)}^p
    }
    \le
    \tilde{\order}\rbr{
    \sqrt{\EE\sbr{\rho_b} \abr{\Ical_{\epsilon_b}} M T}
    + 
    \min\del{ M \sqrt{\EE\sbr{\rho_b} \abr{\Ical^C_{\epsilon_b}} T},
    \epsilon M T}
    + 
    M \abr{\Ical_{\epsilon_b}}
    }.
    \]
    \end{lemma}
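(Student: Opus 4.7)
\textbf{Proof proposal for Lemma~\ref{lem:gap_ind_upper_rho_b}.} The plan is to decompose the horizon $T$ into phases separated by the (at most $\lceil \log T \rceil$) restarts induced by the doubling trick for $\rho$, apply Lemma~\ref{lem:gap_ind_upper_rho} within each phase, and then aggregate the phase-wise bounds using Cauchy--Schwarz together with the geometric growth of the $\rho$ sequence.

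First I would set up the reduction to Lemma~\ref{lem:gap_ind_upper_rho}. Since $\epsilon_b \geq \epsilon$, the underlying $\epsilon$-MPMAB instance is also an $\epsilon_b$-MPMAB instance, so all regret guarantees for $\robustagg(\epsilon_b)$ apply with dissimilarity parameter $\epsilon_b$ and the set of subpar arms $\Ical_{\epsilon_b}$. Within a single phase, the base learner's interaction is exactly that of $\robustagg(\epsilon_b)$ with a $\rho$-importance weighted environment for the current fixed value of $\rho$, with $w_t = q_{t,b}$ and $W_t = \ind\{b_t = b\}$; because importance weighting is unbiased, the ``counterfactual'' pseudo-regret $T_j \sum_p \mu_*^p - \sum_{t \in \text{phase } j} \sum_p \mu_{i_t^p(b)}^p$ accumulated over this phase is precisely the expected collective regret bounded by Lemma~\ref{lem:gap_ind_upper_rho}.

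Next, let $J \leq \lceil \log T \rceil + 1$ be the total number of phases, with lengths $T_1, \ldots, T_J$ summing to $T$, and values $\rho^{(1)} < \rho^{(2)} < \cdots < \rho^{(J)} = \rho_b$ with $\rho^{(j+1)} = 2\rho^{(j)}$. Applying Lemma~\ref{lem:gap_ind_upper_rho} to phase $j$ bounds its contribution by
\[
\tilde{O}\!\left(
\sqrt{\rho^{(j)} |\Ical_{\epsilon_b}| M T_j}
+ M |\Ical_{\epsilon_b}|
+ \min\!\bigl(M \sqrt{\rho^{(j)} |\Ical^C_{\epsilon_b}| T_j},\ \epsilon M T_j\bigr)
\right).
\]
Summing over $j \in [J]$, the $M |\Ical_{\epsilon_b}|$ terms contribute at most $\tilde{O}(M |\Ical_{\epsilon_b}|)$ since $J = \tilde{O}(1)$. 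For the leading square-root term, Cauchy--Schwarz gives $\sum_j \sqrt{\rho^{(j)} T_j} \leq \sqrt{(\sum_j \rho^{(j)})(\sum_j T_j)}$, and the geometric series identity $\sum_j \rho^{(j)} \leq 2 \rho_b$ then yields $\sum_j \sqrt{\rho^{(j)} T_j} \leq \sqrt{2 \rho_b T}$. For the $\min$ term I would use $\sum_j \min(a_j, b_j) \leq \min(\sum_j a_j, \sum_j b_j)$: the first alternative is bounded by $\sqrt{2\rho_b |\Ical^C_{\epsilon_b}| T}$ by the same Cauchy--Schwarz argument, while the second telescopes to $\epsilon M T$. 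Finally I would take expectations and invoke Jensen's inequality $\EE[\sqrt{\rho_b}] \leq \sqrt{\EE[\rho_b]}$ to move the expectation inside the square roots, producing the stated bound.

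The main obstacles are (i) the bookkeeping to argue cleanly that the base learner's per-phase interaction really is a fresh run of $\robustagg(\epsilon_b)$ on a $\rho^{(j)}$-importance weighted $\epsilon_b$-MPMAB instance (this is where the restart-on-doubling convention matters, since it resets the learner's internal state and makes Lemma~\ref{lem:gap_ind_upper_rho} applicable as a black box), and (ii) ensuring that the Cauchy--Schwarz consolidation does not lose a $\sqrt{\log T}$ factor; here the geometric growth of $\rho^{(j)}$ is essential, as it lets $\sum_j \rho^{(j)}$ be dominated by its last term $\rho_b$ up to a constant. Apart from these, the proof is a routine aggregation.
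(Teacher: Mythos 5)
Your proposal is correct and follows essentially the same route as the paper's proof: decompose the horizon into the phases delimited by the doubling-induced restarts, apply Lemma~\ref{lem:gap_ind_upper_rho} within each phase (valid because the instance is also $\epsilon_b$-dissimilar and each restart gives a fresh run in a $\rho^{(j)}$-importance weighted environment), exploit the geometric growth of the $\rho^{(j)}$ to absorb the sum into the final value $\rho_b$, and finish with the law of total expectation and Jensen's inequality $\EE[\sqrt{\rho_b}] \leq \sqrt{\EE[\rho_b]}$. The only cosmetic difference is that you keep the phase lengths $T_j$ and consolidate via Cauchy--Schwarz, whereas the paper simply bounds each phase length by $T$ and uses $\sum_j \sqrt{\rho^{(j)}} = O(\sqrt{\rho_b})$; both yield the same bound up to constants.
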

    The proof of this lemma can be found at the end of this section;
    we also refer the reader to~\citep[Appendix D]{alns17} for details.
    
\end{enumerate}

Combining all the lemmas above, we are now ready to prove Theorem~\ref{thm:gap_ind_upper_unk}, restated below for convenience.

\newtheorem*{T12}{Theorem~\ref{thm:gap_ind_upper_unk}}
\begin{T12}
Let \arobustagg run on an $\epsilon$-MPMAB problem instance with any $\epsilon \in [0,1]$.
Its expected collective regret in a horizon of $T$ rounds satisfies
$$
\mathbb{E}[\mathcal{R}(T)] \le \tilde{O}\left( \left( \abs{\Ical_{2\epsilon}} + M \abr{\Ical_{2\epsilon}^C} \right) \sqrt{T} + M |\Ical_{\epsilon}| \right).
$$
\end{T12}

\begin{proof}[Proof of Theorem~\ref{thm:gap_ind_upper_unk}]

Suppose \arobustagg interacts with an $\epsilon$-MPMAB problem instance. 
Let $b_0 = \max\cbr{ b \in [B]: \epsilon_b \geq \epsilon}$. From the definition of $E = \cbr{1, 2^{-1}, \ldots, 2^{-B+1}}$ and $\epsilon \in [0,1]$, $b_0$ is well-defined.

We present the following technical claim that elucidates the guarantee provided by learner $b_0$ based on Lemma~\ref{lem:gap_ind_upper_rho_b}; we defer its proof after the proof of the theorem. 

\begin{claim}
\label{claim:gap_ind_upper_rho_b_0}
Let $b_0$ be defined above.
$\robustagg(\epsilon_{b_0})$ is run as a base learner of $\arobustagg$, on a 
    $\epsilon$-MPMAB problem instance for $T$ rounds. Denote by $\rho_{b_0}$ the final value of $\rho$. 
    Then its expected collective regret satisfies
    \[ 
    \EE\sbr{ 
    T \sum_{p=1}^M \mu_*^p
    -
    \sum_{t=1}^T \sum_{p=1}^M \mu_{i_t^p(b_0)}^p
    }
    \le
    \tilde{\order}\rbr{
    \sqrt{
    \EE\sbr{\rho_{b_0}} M T 
    ( \abr{\Ical_{2\epsilon}} 
      +
      M  \abr{\Ical_{2\epsilon}^C} 
    )
    } 
    + 
    M \abr{\Ical_{\epsilon}}
    }.
    \]
\end{claim}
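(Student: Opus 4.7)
The plan is to invoke Lemma~\ref{lem:gap_ind_upper_rho_b} with $b = b_0$, and then translate the resulting bound (which is expressed in terms of $\abs{\Ical_{\epsilon_{b_0}}}$ and $\abs{\Ical_{\epsilon_{b_0}}^C}$) into the target bound (which is expressed in terms of $\abs{\Ical_{2\epsilon}}$, $\abs{\Ical_{2\epsilon}^C}$, and $\abs{\Ical_{\epsilon}}$), using only the dyadic structure of the grid $E$ and the monotonicity of the subpar-arm set in the dissimilarity parameter.

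The first step is to pin down how close $\epsilon_{b_0}$ is to $\epsilon$. By the definition of $b_0$ we have $\epsilon_{b_0} \geq \epsilon$; and, since $b_0$ is the \emph{largest} such index and $\epsilon_b = 2^{-b+1}$, either $b_0 = B$ (in which case $\epsilon_{b_0} \leq 2^{-B+1} \leq 1/(MT)$, making all contributions of $\epsilon_{b_0}$ to the regret of lower order) or $\epsilon_{b_0+1} < \epsilon$ and consequently $\epsilon_{b_0} = 2\epsilon_{b_0+1} < 2\epsilon$. Thus, up to negligible boundary effects, $\epsilon \leq \epsilon_{b_0} < 2\epsilon$.

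The second step is a monotonicity observation: since $\Ical_{\epsilon'} = \{ i : \exists p,\ \mu_*^p - \mu_i^p > 5\epsilon'\}$, the set $\Ical_{\epsilon'}$ shrinks as $\epsilon'$ grows. Combined with Step 1, this yields the two containments $\Ical_{2\epsilon} \subseteq \Ical_{\epsilon_{b_0}} \subseteq \Ical_\epsilon$, hence $\abs{\Ical_{\epsilon_{b_0}}^C} \leq \abs{\Ical_{2\epsilon}^C}$ and $\abs{\Ical_{\epsilon_{b_0}}} \leq \abs{\Ical_\epsilon}$. In addition, the trivial decomposition $\abs{\Ical_{\epsilon_{b_0}}} \leq K = \abs{\Ical_{2\epsilon}} + \abs{\Ical_{2\epsilon}^C}$ will be used for the leading $\sqrt{\cdot}$ term.

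The final step is to plug Lemma~\ref{lem:gap_ind_upper_rho_b} into the above inequalities. The $M\abs{\Ical_{\epsilon_{b_0}}}$ term is immediately bounded by $M\abs{\Ical_\epsilon}$. The middle term $\min\bigl(M\sqrt{\EE[\rho_{b_0}]\abs{\Ical_{\epsilon_{b_0}}^C}T},\, \epsilon MT\bigr)$ is bounded by its first argument, which in turn is at most $M\sqrt{\EE[\rho_{b_0}]\abs{\Ical_{2\epsilon}^C}T}$. For the leading $\sqrt{\EE[\rho_{b_0}]\abs{\Ical_{\epsilon_{b_0}}}MT}$ term, I will use $\abs{\Ical_{\epsilon_{b_0}}} \leq \abs{\Ical_{2\epsilon}} + \abs{\Ical_{2\epsilon}^C}$ together with $\sqrt{a+b} \leq \sqrt{a} + \sqrt{b}$ to split it into $\sqrt{\EE[\rho_{b_0}]\abs{\Ical_{2\epsilon}}MT} + \sqrt{\EE[\rho_{b_0}]\abs{\Ical_{2\epsilon}^C}MT}$; the second summand is absorbed into $M\sqrt{\EE[\rho_{b_0}]\abs{\Ical_{2\epsilon}^C}T}$ since $\sqrt{M} \leq M$. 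Collecting everything and using $\sqrt{x}+\sqrt{y}=\Theta(\sqrt{x+y})$ to repackage the square-root terms yields the claimed $\tilde{O}\!\bigl(\sqrt{\EE[\rho_{b_0}] MT(\abs{\Ical_{2\epsilon}}+M\abs{\Ical_{2\epsilon}^C})}+M\abs{\Ical_\epsilon}\bigr)$. There is no real obstacle here; the only item worth being careful about is the boundary case $b_0 = B$ (equivalently, $\epsilon$ smaller than the finest grid value), which is handled by noting that $\epsilon_B \leq 1/(MT)$ contributes at most $O(1)$ to the regret and can be swept into the $\tilde{O}$.
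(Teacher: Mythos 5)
Your proposal is correct and follows essentially the same route as the paper's proof: the same dichotomy (either $\epsilon_{b_0} < 2\epsilon$, giving the containments $\Ical_{2\epsilon} \subseteq \Ical_{\epsilon_{b_0}} \subseteq \Ical_{\epsilon}$, or $b_0 = B$ with $\epsilon_{b_0} \le 1/(MT)$ so the $\min$ term is $O(1)$ and $\abs{\Ical_{\epsilon_{b_0}}} \le K \le \abs{\Ical_{2\epsilon}} + M\abs{\Ical_{2\epsilon}^C}$ suffices), followed by the same $\sqrt{a}+\sqrt{b} = \Theta(\sqrt{a+b})$ repackaging. The only cosmetic difference is that you bound the leading term via $\abs{\Ical_{\epsilon_{b_0}}} \le K$ uniformly in both cases, while the paper uses $\abs{\Ical_{\epsilon_{b_0}}} + M\abs{\Ical_{\epsilon_{b_0}}^C} \le \abs{\Ical_{2\epsilon}} + M\abs{\Ical_{2\epsilon}^C}$ in the first case; both are equivalent up to constants.
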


Combining Claim~\ref{claim:gap_ind_upper_rho_b_0} and Lemma~\ref{lem:master_rho_b} with $b = b_0$, we have the following regret guarantee for \arobustagg:
\begin{align*}
\EE\sbr{\Rcal(T)} 
&=
    \EE\sbr{ 
    T \sum_{p=1}^M \mu_*^p
    -
    \sum_{t=1}^T \sum_{p=1}^M \mu_{i_t^p}^p
    } \\
&=
    \EE\sbr{ 
    T \sum_{p=1}^M \mu_*^p
    -
    \sum_{t=1}^T \sum_{p=1}^M \mu_{i_t^p(b_0)}^p
    }
    +
    \EE\sbr{    
    \sum_{t=1}^T \sum_{p=1}^M \mu_{i_t^p(b_0)}^p 
    -
    \sum_{t=1}^T \sum_{p=1}^M \mu_{i_t^p}^p
    } \\
&\leq 
    \tilde{\order}\rbr{
    \sqrt{
    \EE\sbr{\rho_{b_0}} M T 
    ( \abr{\Ical_{2\epsilon}} 
      +
      M  \abr{\Ical_{2\epsilon}^C} 
    )
    }
    +
    M \abr{\Ical_{\epsilon}}
    +
    \frac{B}{\eta}
    +
    \eta M^2 T
    }
    -
    \frac{\EE\sbr{\rho_{b_0}}}{40 \eta \ln T} \\
& \leq 
    \tilde{\order}\rbr{
    \eta  M T 
    ( \abr{\Ical_{2\epsilon}} 
      +
      M  \abr{\Ical_{2\epsilon}^C} 
    ) 
    +
    M \abr{\Ical_{\epsilon}}
    +
    \frac{B}{\eta}
    +
    \eta M^2 T
    },
\end{align*}
where the first inequality is from Claim~\ref{claim:gap_ind_upper_rho_b_0} and Lemma~\ref{lem:master_rho_b}; the second inequality is from the AM-GM inequality that 
    $\sqrt{
    \EE\sbr{\rho_{b_0}} M T 
    ( \abr{\Ical_{2\epsilon}} 
      +
      M  \abr{\Ical_{2\epsilon}^C} 
    )
    } 
    \leq 
    O(
    \eta  M T 
    ( \abr{\Ical_{2\epsilon}} 
      +
      M  \abr{\Ical_{2\epsilon}^C} 
    ) 
    +
    \frac{\EE\sbr{\rho_{b_0}}}{\eta \ln T}
    )
    $ and algebra (canceling out the second term in the last expression with $-
    \frac{\EE\sbr{\rho_{b_0}}}{40 \eta \ln T}$). 
    As \arobustagg chooses \corral's master learning rate $\eta = \frac{1}{M\sqrt{T}}$, and $B = \tilde{O}(1)$, we have that 
    \[
      \EE\sbr{\Rcal(T)} 
      \leq 
      \tilde{O} 
      \del{ (M + \abr{\Ical_{2\epsilon}}
      +
      M  \abr{\Ical_{2\epsilon}^C}) \sqrt{T}
      +
      M \abr{\Ical_{\epsilon}}
      }
      \leq
      \tilde{O} 
      \del{ (\abr{\Ical_{2\epsilon}} 
      +
      M \abr{\Ical_{2\epsilon}^C}) \sqrt{T} 
      +
      M \abr{\Ical_{\epsilon}}
      },
    \]
    where the second inequality uses the fact that $\abs{\Ical_{2\epsilon}^C} \geq 1$.
\end{proof}

\begin{proof}[Proof of Claim~\ref{claim:gap_ind_upper_rho_b_0}]
As $\epsilon_b \geq \epsilon$ always holds,
$M \abr{\Ical_{\epsilon_{b_0}}} \leq M \abr{\Ical_{\epsilon}}$. 
It remains to check by algebra that 
\begin{equation}
    \sqrt{\EE\sbr{\rho_{b_0}} \abr{\Ical_{\epsilon_{b_0}}} M T}
    + 
    \min\del{
    M \sqrt{\EE\sbr{\rho_{b_0}} \abr{\Ical^C_{\epsilon_{b_0}}} T}
    ,
    M T \epsilon_{b_0}
    }
    =
    \tilde{\order}\rbr{
    \sqrt{
    \EE\sbr{\rho_{b_0}} M T 
    ( \abr{\Ical_{2\epsilon}} 
      +
      M  \abr{\Ical_{2\epsilon}^C} 
    )
    } 
    }.
\label{eqn:eps-b-2eps}
\end{equation}
We consider two cases:
\begin{enumerate}
    \item $\epsilon_{b_0} \leq 2\epsilon$. In this case,  we have $\Ical_{\epsilon_{b_0}} \subset \Ical_{2\epsilon}$.
    We have the following derivation:
    \begin{align*}
        \sqrt{\EE\sbr{\rho_{b_0}} \abr{\Ical_{\epsilon_{b_0}}} M T}
        + 
        M \sqrt{\EE\sbr{\rho_{b_0}} \abr{\Ical^C_{\epsilon_{b_0}}} T} 
        & \leq 
        2\sqrt{
        \EE\sbr{\rho_{b_0}} M T 
        \cdot ( \abr{\Ical_{\epsilon_b}} 
        +
        M \abr{\Ical_{\epsilon_b}^C} )
        } \\
        & \leq 
        2 \sqrt{
        \EE\sbr{\rho_{b_0}} M T 
        ( \abr{\Ical_{2\epsilon}} 
        +
        M \abr{\Ical_{2\epsilon}^C} 
        )
        }
    \end{align*}
    where the first inequality is from the basic fact that $\sqrt{A} + \sqrt{B} \leq 2\sqrt{A + B}$ for positive $A$, $B$; the second inequality is from the fact that  
    $
      \abr{\Ical_{\epsilon_b}} 
        +
        M \abr{\Ical_{\epsilon_b}^C}
     \leq 
     \abr{\Ical_{2\epsilon}}  +
        M \abr{\Ical_{2\epsilon}^C}
    $,
    as $\abs{\Ical_{\epsilon_b}} \geq \abs{\Ical_{2\epsilon}}$, $M \geq 1$,  and $\abs{\Ical_{\alpha}} + \abs{\Ical_{\alpha}^C} = K$ for any $\alpha$. This verifies Eq.~\eqref{eqn:eps-b-2eps}.
        
    \item $\epsilon_{b_0} > 2\epsilon$. In this case, $b_0 = B = 1 + \lceil \log (MT) \rceil$ and $\epsilon_{b_0} \leq \frac{1}{MT}$. Although we no longer have $\Ical_{\epsilon_{b_0}} \subset \Ical_{2\epsilon}$, we can still upper bound the left hand side as follows. 
    
    First, the second term, 
    $\min\del{
    M \sqrt{\EE\sbr{\rho_{b_0}} \abr{\Ical^C_{\epsilon_{b_0}}} T}
    ,
    M T \epsilon_{b_0}
    } \leq MT \cdot \frac{1}{MT} = 1$.
    
    Moreover, the first term, $\sqrt{\EE\sbr{\rho_{b_0}} \abr{\Ical_{\epsilon_{b_0}}} M T} \leq \sqrt{ \EE\sbr{\rho_{b_0}} K M T }$.
    As $\abr{\Ical_{2\epsilon}} 
      +
      M  \abr{\Ical_{2\epsilon}^C}
      \geq K
    $, we have
    $\sqrt{\EE\sbr{\rho_{b_0}} \abr{\Ical_{\epsilon_{b_0}}} M T} \leq \sqrt{ \EE\sbr{\rho_{b_0}} (\abr{\Ical_{2\epsilon}} 
      +
      M  \abr{\Ical_{2\epsilon}^C}) M T }$. Combining the above two,  Eq.~\eqref{eqn:eps-b-2eps} is proved.
     \qedhere      
\end{enumerate}
\end{proof}

\begin{proof}[Proof sketch of Lemma~\ref{lemma:concentration_rho}]

Since the proof of Lemma~\ref{lem:Q} can be almost directly carried over here, we only sketch the proof by pointing out the major differences. We also refer the reader to \citep[Appendix C.3]{amm20} for a similar reasoning.

We first consider the concentration of $\zeta_i^p(j, t)$. We define a filtration $\cbr{\Bcal_t}_{t=1}^T$, where 
$$\Bcal_t = \sigma( \{ w_s, i_s^{p'}, \hat{r}_s^{p'}: s \in [t], p' \in [M] \} \cup \cbr{ i_{t+1}^{p'}: p' \in [M]})$$
is the $\sigma$-algebra generated by the %
history (including that of $w_s$'s) up to round $t$ and the arm selection of all players at time step $t+1$

Let $X_t = \mathds{1}\{ i_t^p = i \}  \del{ \hat{r}_t^p - \mu_i^p}$. 
We have $\EE\sbr{X_t \mid \Bcal_{t-1}} = 0$.
In addition,
\begin{align*}
\VV\sbr{X_t \mid \Bcal_{t-1}} & = 
\EE\sbr{(X_t - \EE[X_t \mid \Bcal_{t-1}])^2 \mid \Bcal_{t-1}} \\
& =
\EE\sbr{X_t^2 \mid \Bcal_{t-1}} \\
& \leq \mathds{1} \{ i_t^p = i \} \EE \sbr{ w_{t} \left( \frac{r_t^p}{w_{t}} - \mu_i^p\right)^2 +  (1-w_t) 0 \mid \Bcal_{t-1}} \\
& \le \mathds{1} \{ i_t^p = i \} \EE \sbr{ w_{t} \left( \frac{r_t^p}{w_{t}} \right)^2 \mid \Bcal_{t-1}} \\
& \le \mathds{1} \{ i_t^p = i \} \rho.
\end{align*}
Also, $\abs{X_t} \leq \rho$ with probability 1.
Applying Freedman's inequality~\cite[Lemma 2]{bartlett2008high} with 
$\sigma = \sqrt{\sum_{s=1}^{t-1} \VV\sbr{X_s \mid \Bcal_{s-1}}}$ and $b = \rho$, and using $\sigma \leq \sqrt{\sum_{s=1}^{t-1} \mathds{1} \{i_s^p = i\} \rho}$,
we have that with probability at least $1-2 T^{-5}$,
\begin{equation}
\abs{\sum_{s=1}^{t-1} X_s}
\leq
4 \sqrt{  \sum_{s=1}^{t-1} \mathds{1} \{ i_s^p = i \} \rho \cdot  \ln(T^5\log_2 T) }
+
2 \rho \ln(T^5 \log_2 T).
\end{equation}

We can then show that
\[
\abs{  \frac{\sum_{s=1}^{t-1} \mathds{1} \{i_s^p = i \} \hat{r}_s^p}{\wbar{n^p_i}(t-1)} -\mu_i^p }
\leq 
8 \sqrt{ \frac{3 \rho \ln T}{\wbar{n^p_i}(t-1)}  }.
\]
following the same strategy in the proof for Lemma~\ref{lem:Q}.

Similarly, we show the concentration of $\eta_i^p(t)$.
We define a filtration $\{\mathcal{G}_{t,q} \}_{t \in [T], q \in [M]}$, where
$$\mathcal{G}_{t,q} = \sigma( \cbr{ w_s, i_s^{p'}, \hat{r}_s^{p'}: s \in [t], p' \in [M], i \in [K] } \cup \cbr{ i_{t+1}^{p'}: p' \in [M], p' \le q})$$ 
is the $\sigma$-algebra generated by 
the history (including that of $w_s$'s) up to round $t$ and the arm selection of players $1,2,\ldots,q$ at round $t+1$. %

Let random variable $Y_{t,q} = \mathds{1} \{ q \neq p,\ i_t^q = i \}  \del{\hat{r}^q_s - \mu_i^q}$. We have $\EE\sbr{Y_{t,q} \mid \Gcal_{t-1,q}} = 0$; in addition, $\VV\sbr{Y_t \mid \Gcal_{t-1}} = \EE\sbr{Y_{t,q}^2 \mid \Gcal_{t-1, q}} \leq \mathds{1} \{ q \neq p,\ i_t^q = i \} \rho$ and $\abs{Y_{t,q}} \leq \rho$.

Again, applying Freedman's inequality~\cite[Lemma 2]{bartlett2008high}, we have that with probability at least $1-2T^{-5}$,
\begin{equation}
\abs{\sum_{s=1}^{t-1} \sum_{q=1}^{M} Y_{s,q}}
\leq
4 \sqrt{  \sum_{s=1}^{t-1} \sum_{q=1}^M \mathds{1}\{ q \neq p,\ i_s^q = i \} \rho \cdot  \ln(T^5\log_2 (TM)) }
+
2 \rho \ln(T^5 \log_2 (TM)).
\end{equation}

Using the same strategy from the proof for Lemma~\ref{lem:Q}, we can show that
\[
\abs{ \eta_i^p(t-1) -\sum_{q \neq p}\frac{n^q_i(t-1)}{\wbar{m^p_i}(t-1) }\mu_i^q } \leq 4 \sqrt{\frac{14 \rho \ln T}{\wbar{m^p_i}(t-1)}}. 
\qedhere
\]

The lemma then follows by applying the union bound.
\end{proof}

\begin{proof}[Proof sketch of Lemma~\ref{lem:gap_ind_upper_rho}]
Similar to the proof of Theorem~\ref{thm:gap_ind_upper}, we define
$\mathcal{E} = \cap_{t=1}^{T} \cap_{i=1}^K \Qcal_i(t)$, where
\[
\Qcal_i(t)
=
\cbr{ 
\forall p, 
\left|\zeta^p_i(t) - \mu^p_i\right| \le 8\sqrt{\frac{3\rho\ln T}{\wbar{n^p_i}(t-1)}},\quad
\abs{ \eta_i^p(t) - \sum_{q \neq p} \frac{n_i^q(t-1)}{\wbar{m_i^p}(t-1)} \mu_i^q} \leq 4\sqrt{\frac{14 \rho \ln T}{\wbar{m_i^p}(t-1)}} 
};
\]
note that the new definition of $\Qcal_i(t)$ has a dependence on $\rho$.

Similar to the proof of Theorem~\ref{thm:gap_dep_ub}, we have, 
\begin{align*}
    \EE[\mathcal{R}(T)] \le \EE[\mathcal{R}(T) \vert \mathcal{E}] + O(1),
\end{align*}
and
\begin{align*}
    \EE[\mathcal{R}(T) \vert \Ecal] &= 
    \sum_{i \in [K]} \sum_{p \in [M]} \mathbb{E} [n^p_i(T) | \Ecal] \cdot \Delta^p_i \nonumber \\
    & \le  \sum_{i \in \Ical_\epsilon} \mathbb{E} [n_i(T) | \Ecal] \cdot \Delta_i^{\max} 
    +  
    \sum_{i \in \Ical^C_\epsilon} \sum_{p \in [M]: \Delta_i^p > 0} \mathbb{E} [n^p_i(T) | \Ecal] \cdot \Delta_i^p   
\end{align*}

We bound these two terms respectively, applying the technique from \citep[Theorem 7.2]{lattimore2020bandit}.

\begin{enumerate}
    \item We can show the following analogue of Lemma~\ref{lem:arm_pulls_I}:
    there exists some constant $C_1 > 0$ such that for each $i \in \Ical_\epsilon$,
    \begin{align}
        \mathbb{E} [n_i(T) \vert \mathcal{E}] \le  C_1 \left(\frac{\rho \ln T}{(\Delta_i^{\min})^2}  + M\right). \nonumber
    \end{align}
    Using the above fact, and from a similar calculation of Equation~\eqref{eq:gap_dep_term_1} in the proof of Theorem~\ref{thm:gap_ind_upper}, we get
    \[
    \sum_{i \in \Ical_\epsilon} \mathbb{E} [n_i(T) \vert \Ecal] \cdot \Delta_i^{\max}
    \leq
    4\sqrt{C_1 \rho |\Ical_\epsilon|MT\ln T} + 2C_1M |\Ical_\epsilon|.
    \]

    \item 
    We can show the following analogue of Lemma~\ref{lem:arm_pulls_IC}: there exists some constant $C_2 > 0$ such that for each $i \in \Ical^C_\epsilon$ and $p \in [M]$ with $\Delta_i^p > 0$, 
    \begin{align*}
        \mathbb{E} [n^p_i(T) \vert \mathcal{E}] \le C_2 \left( \frac{\ln T}{(\Delta^p_i)^2} \right). 
    \end{align*}

    Using the above fact, and from a similar calculation of Equation~\eqref{eq:gap_dep_term_2} in the proof of Theorem~\ref{thm:gap_ind_upper}, we get
    \[
    \sum_{i \in \Ical^C_\epsilon} \sum_{p \in [M]: \Delta_i^p > 0} \mathbb{E} [n^p_i(T) | \Ecal] \cdot \Delta_i^p
    \leq 
    2M \sqrt{C_2 \rho |\Ical^C_\epsilon|T\ln T}.
    \]

    On the other hand, we trivially have that for all $i$ in $\Ical_\epsilon^C$, $\Delta_i^p \leq 5\epsilon$; therefore,
    \[
    \sum_{i \in \Ical^C_\epsilon} \sum_{p \in [M]: \Delta_i^p > 0}  \mathbb{E} [n^p_i(T)] \cdot \Delta_i^p
    \leq 
    5 \epsilon MT.
    \]

\end{enumerate}

Therefore,
\begin{align*}
\EE[\mathcal{R}(T)] & \le
\left(4\sqrt{C_1 \rho |\Ical_\epsilon|MT\ln T} + 2C_1 M |\Ical_\epsilon|\right)
+ 
\min \del{2M  \sqrt{C_2 \rho |\Ical^C_\epsilon|T\ln T}, 5 \epsilon MT}
+
O(1) \\
& \le
\tilde{\order}\rbr{
\sqrt{\rho \abr{\Ical_\epsilon} M T}
+ 
M \abr{\Ical_\epsilon}
+
\min\del{
M \sqrt{\rho \abr{\Ical^C_\epsilon} T}
,
\epsilon M T
}
}.
\qedhere
\end{align*}
\end{proof}

\begin{proof}[Proof of Lemma~\ref{lem:gap_ind_upper_rho_b}]
The proof closely follows~\citep[][Theorem 15]{alns17}; we cannot directly repeat that proof here, because Lemma~\ref{lem:gap_ind_upper_rho} is not precisely a weak stability statement (see footnote~\ref{footnote:not-stab}).

For base learner $b$, suppose that its $\rho$
gets doubled $n_b$ times throughout the process, where $n_b$ is a random number in $[\lceil \log T\rceil]$.
For every $i \in [n_b]$, denote by random variable $t_i$ the $i$-th time step where the value of $\rho$ gets doubled. In addition, denote by $t_0 = 0$ and $t_{n_b+1} = T$.
In this notation, for all $t \in \cbr{t_i+1, t_i+2, \ldots, t_{i+1}}$, the value of $\rho$ is equal to $\rho^i = 2B \cdot 2^i$; in addition, $\rho_b = 2B \cdot 2^{n_b}$.

Therefore, we have:
\begin{align*}
\EE\sbr{\Rcal(T) \mid n_b = n}
& =
\sum_{i=0}^{n} \EE\sbr{\sum_{t=t_i + 1}^{t_{i+1}} \del{\sum_{p=1}^M \mu_*^p
    -
    \sum_{p=1}^M \mu_{i_t^p(b)}^p} \mid n_b = n} \\
& =
\sum_{i=0}^{n}
    \tilde{\order}\rbr{ 
    \sqrt{\rho^i \abr{\Ical_{\epsilon_b}} M T}
    + 
    M \abr{\Ical_{\epsilon_b}}
    +
    \min\del{
    M \sqrt{\rho^i \abr{\Ical^C_{\epsilon_b}} T}
    ,
    {\epsilon_b} M T
    }
    }  \\ 
& =
\tilde{\order}\rbr{ 
    \sqrt{\rho^n \abr{\Ical_{\epsilon_b}} M T}
    + 
    M \abr{\Ical_{\epsilon_b}}
    +
    \min\del{
    M \sqrt{\rho^n \abr{\Ical^C_{\epsilon_b}} T}
    ,
    {\epsilon_b} M T
    }
    },
\end{align*}
where the first equality of by the definition of $\Rcal(T)$, and $[T] = \cup_{i=1}^{n} \cbr{t_i+1, t_i+2, \ldots, t_{i+1}}$; the second equality is from Lemma~\ref{lem:gap_ind_upper_rho_b}'s guarantee in each time interval $\cbr{t_i+1, t_i+2, \ldots, t_{i+1}}$ and $\epsilon_b \geq \epsilon$;
and the third equality is by algebra.

As $n_b = n$ is equivalent to $\rho^n = \rho_b$, this implies that 
\[
\EE\sbr{\Rcal(T) \mid \rho_b}
=
\tilde{\order}\rbr{ 
    \sqrt{\rho_b \abr{\Ical_{\epsilon_b}} M T}
    + 
    M \abr{\Ical_{\epsilon_b}}
    +
    \min\del{
    M \sqrt{\rho_b \abr{\Ical^C_{\epsilon_b}} T}
    ,
    {\epsilon_b} M T
    }
    };
\]
observe that the expression inside $\tilde{O}$ in the last line is a concave function of $\rho_b$.

Now, by the law of total expectation,
\begin{align*}
    \EE \sbr{\Rcal(T)}
    & =  \EE\sbr{ \EE\sbr{\Rcal(T) \mid \rho_b} } \\
    & = \EE\sbr{ \tilde{\order}\rbr{ 
    \sqrt{\rho_b \abr{\Ical_{\epsilon_b}} M T}}
    + 
    M \abr{\Ical_{\epsilon_b}}
    +
    \min\del{
    M \sqrt{\rho_b \abr{\Ical^C_{\epsilon_b}} T}
    ,
    {\epsilon_b} M T
    }
    } \\
    & =
    \tilde{\order}\rbr{ 
    \EE\sbr{
    \sqrt{\rho_b \abr{\Ical_{\epsilon_b}} M T}
    + 
    M \abr{\Ical_{\epsilon_b}}
    +
    \min\del{
    M \sqrt{\rho_b \abr{\Ical^C_{\epsilon_b}} T}
    ,
    {\epsilon_b} M T
    }
    }} \\
    &=
    \tilde{\order}\rbr{ 
    \sqrt{\EE\sbr{\rho_b} \abr{\Ical_{\epsilon_b}} M T}
    + 
    M \abr{\Ical_{\epsilon_b}}
    +
    \min\del{
    M \sqrt{\EE\sbr{\rho_b} \abr{\Ical^C_{\epsilon_b}} T}
    ,
    {\epsilon_b} M T
    }
    },
\end{align*}
where the third equality is by algebra, and the last equality uses Jensen's inequality. 
\end{proof}

\begin{figure}[htbp]
    \centering
    \begin{subfigure}{0.32\textwidth}
        \centering
        \includegraphics[height=0.85\linewidth]{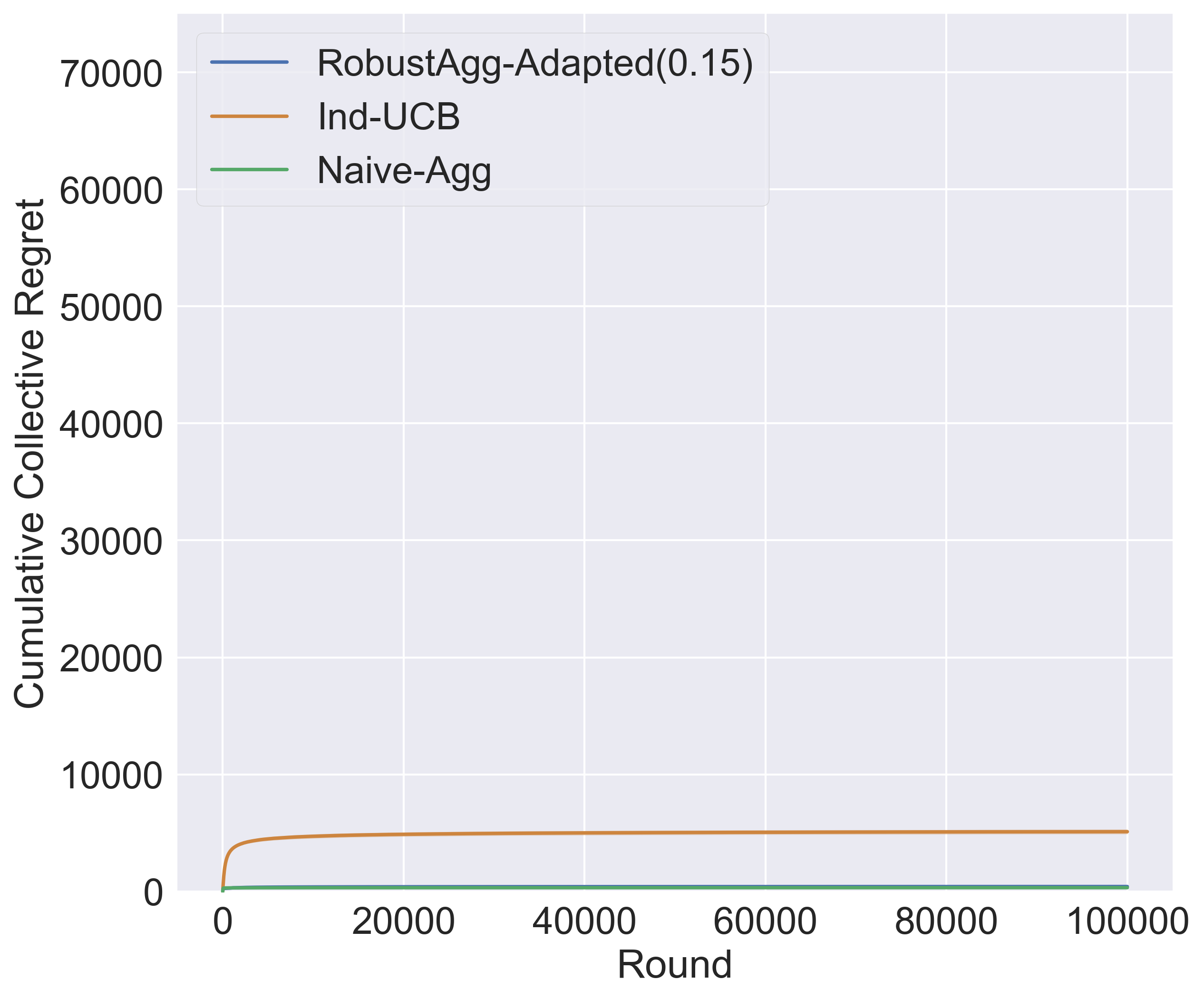}
        \caption{$|\Ical_\epsilon| = 9$}
        \label{figure:exp1_9}
    \end{subfigure}
    \begin{subfigure}{0.32\textwidth}
        \centering
        \includegraphics[height=0.85\linewidth]{Figures/Exp1/5eps_mpl_ical=8_100000x30.png}
        \caption{$|\Ical_\epsilon| = 8$}
        \label{figure:exp1_8}
    \end{subfigure}
    \begin{subfigure}{0.32\textwidth}
        \centering
        \includegraphics[height=0.85\linewidth]{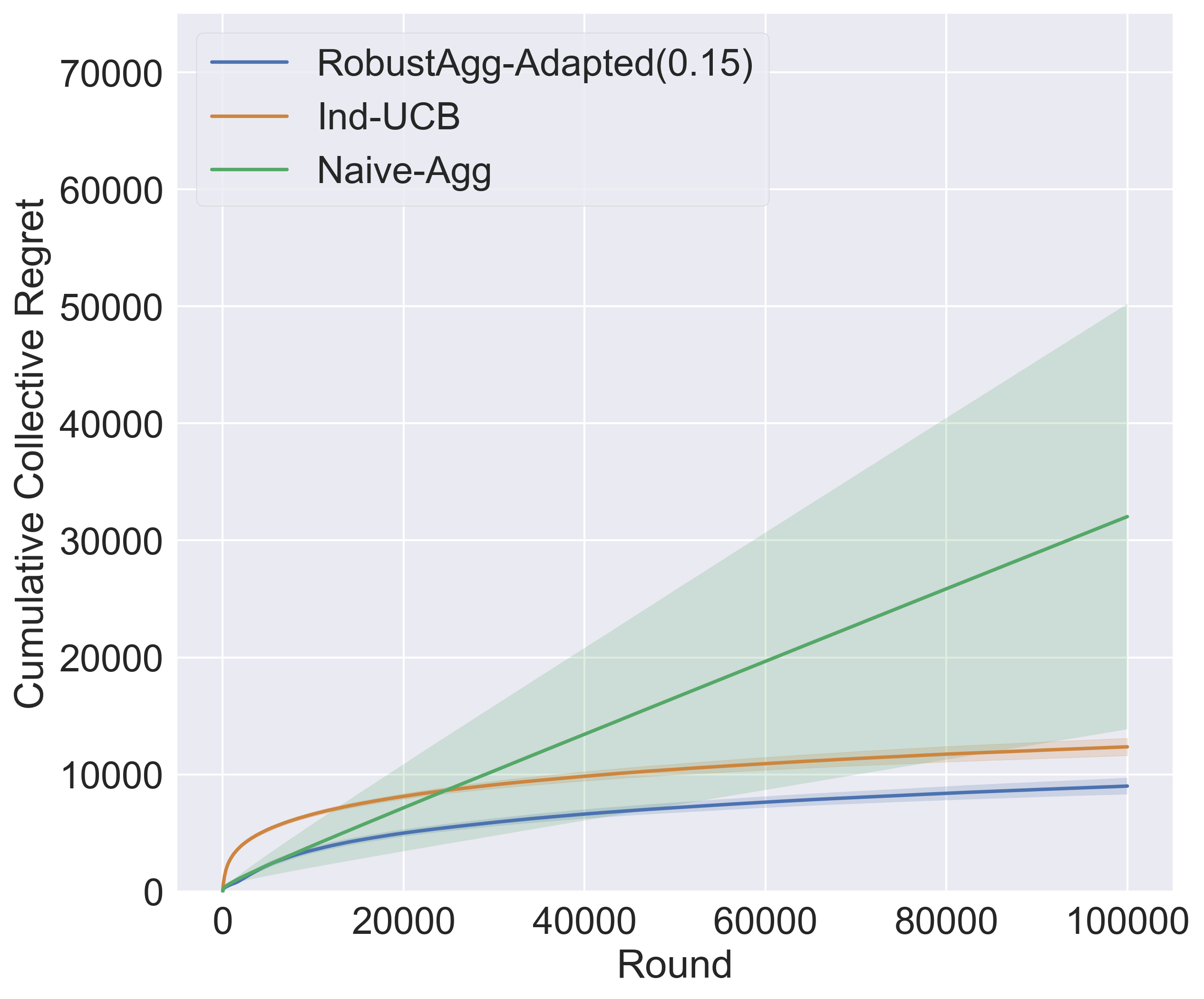}
        \caption{$|\Ical_\epsilon| = 7$}
        \label{figure:exp1_7}
    \end{subfigure}
        \begin{subfigure}{.32\textwidth}
        \centering
        \includegraphics[height=0.85\linewidth]{Figures/Exp1/5eps_mpl_ical=6_100000x30.png}
        \caption{$|\Ical_\epsilon| = 6$}
        \label{figure:exp1_6}
    \end{subfigure}
    \begin{subfigure}{0.32\textwidth}
        \centering
        \includegraphics[height=0.85\linewidth]{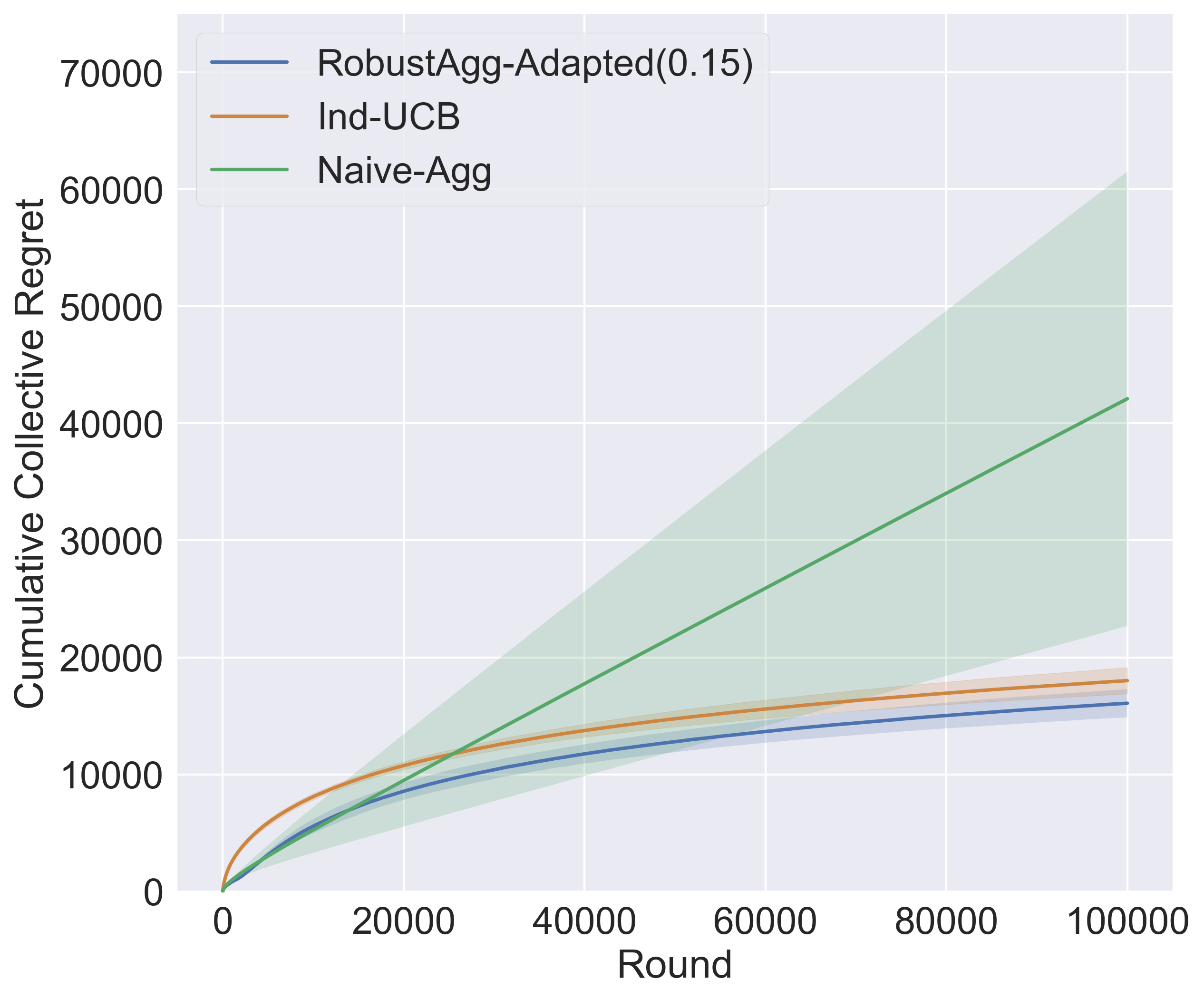}
        \caption{$|\Ical_\epsilon| = 5$}
        \label{figure:exp1_5}
    \end{subfigure}
    \begin{subfigure}{0.32\textwidth}
        \centering
        \includegraphics[height=0.85\linewidth]{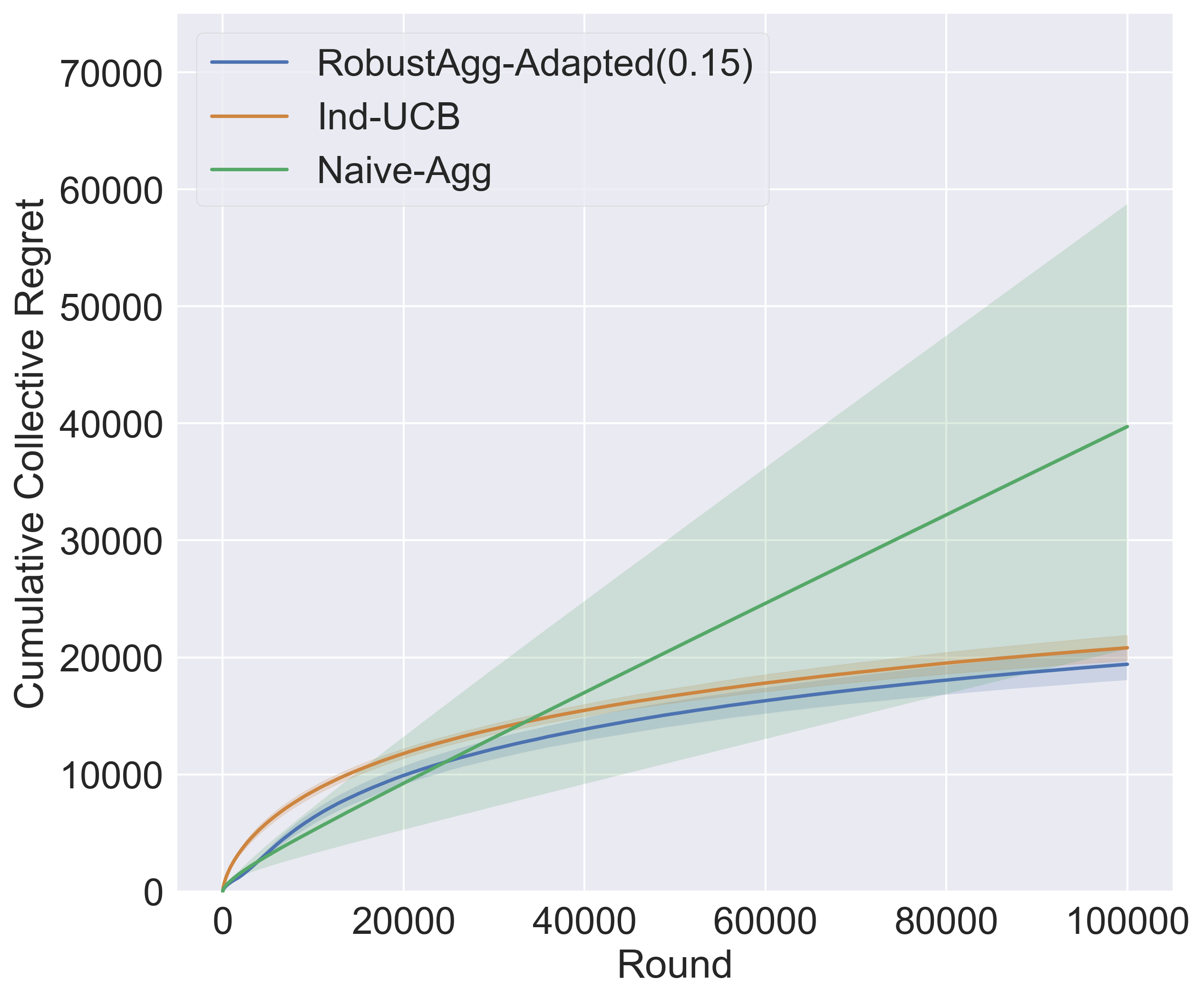}
        \caption{$|\Ical_\epsilon| = 4$}
        \label{figure:exp1_4}
    \end{subfigure}
        \begin{subfigure}{.32\textwidth}
        \centering
        \includegraphics[height=0.85\linewidth]{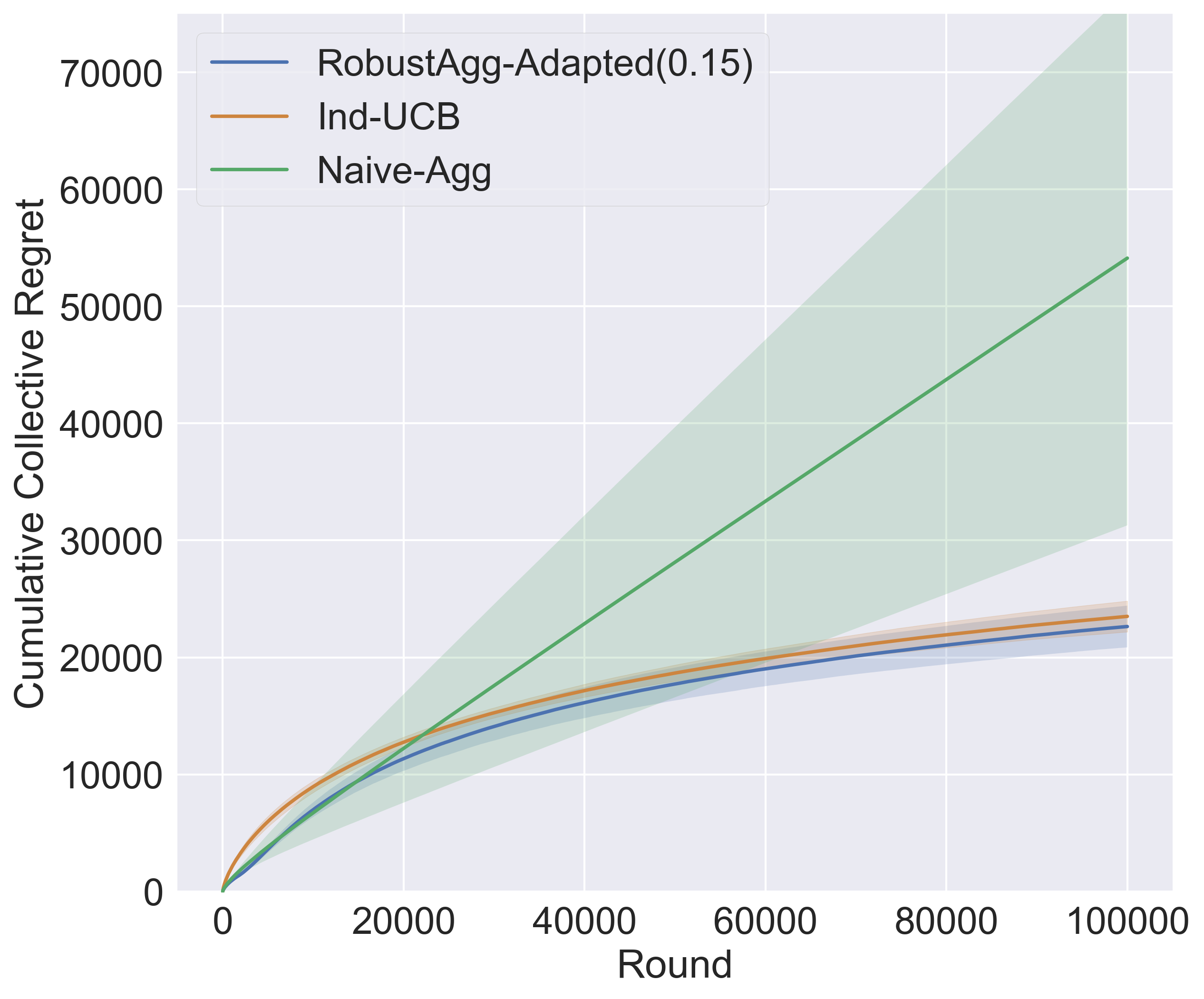}
        \caption{$|\Ical_\epsilon| = 3$}
        \label{figure:exp1_3}
    \end{subfigure}
    \begin{subfigure}{0.32\textwidth}
        \centering
        \includegraphics[height=0.85\linewidth]{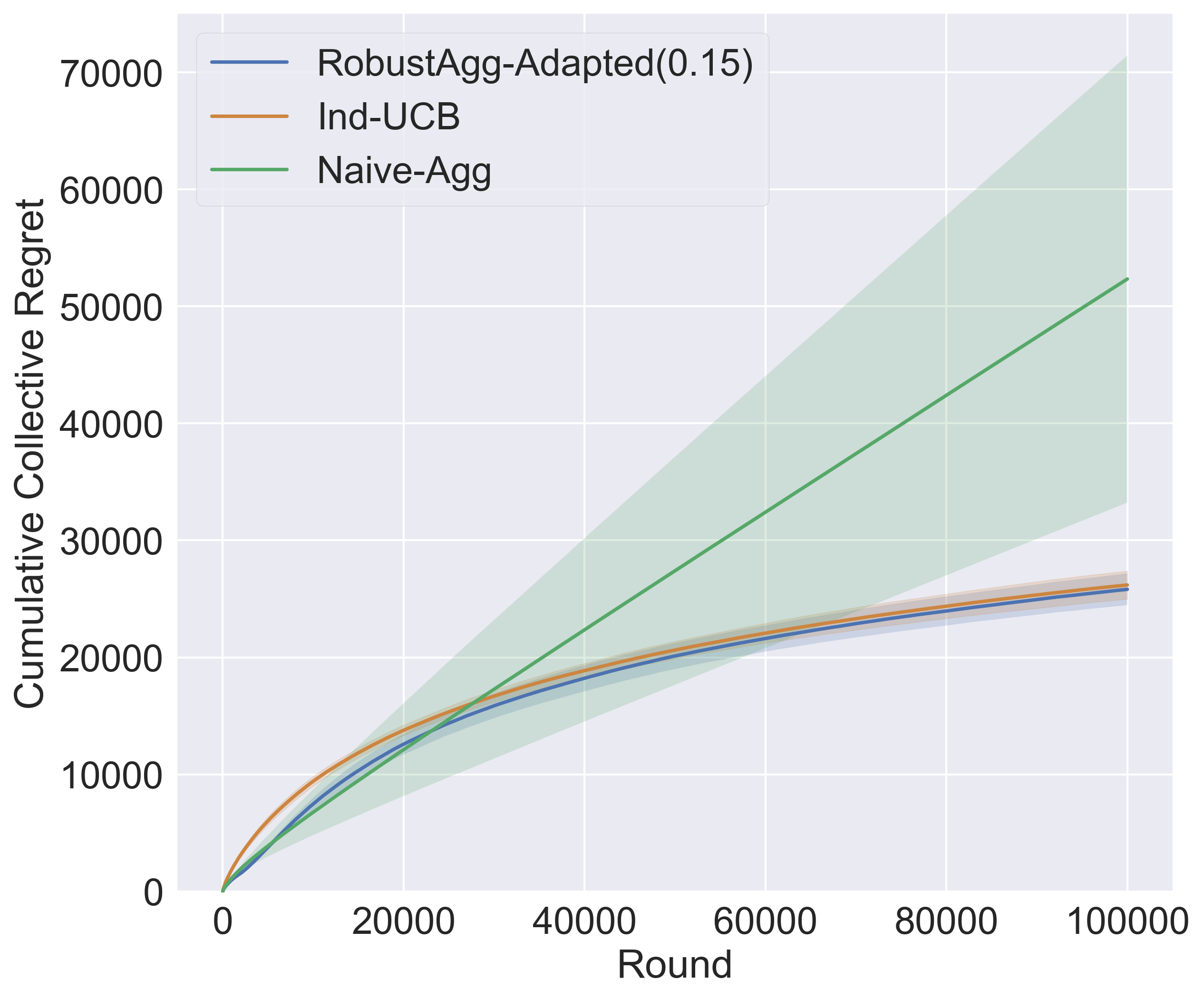}
        \caption{$|\Ical_\epsilon| = 2$}
        \label{figure:exp1_2}
    \end{subfigure}
    \begin{subfigure}{0.32\textwidth}
        \centering
        \includegraphics[height=0.85\linewidth]{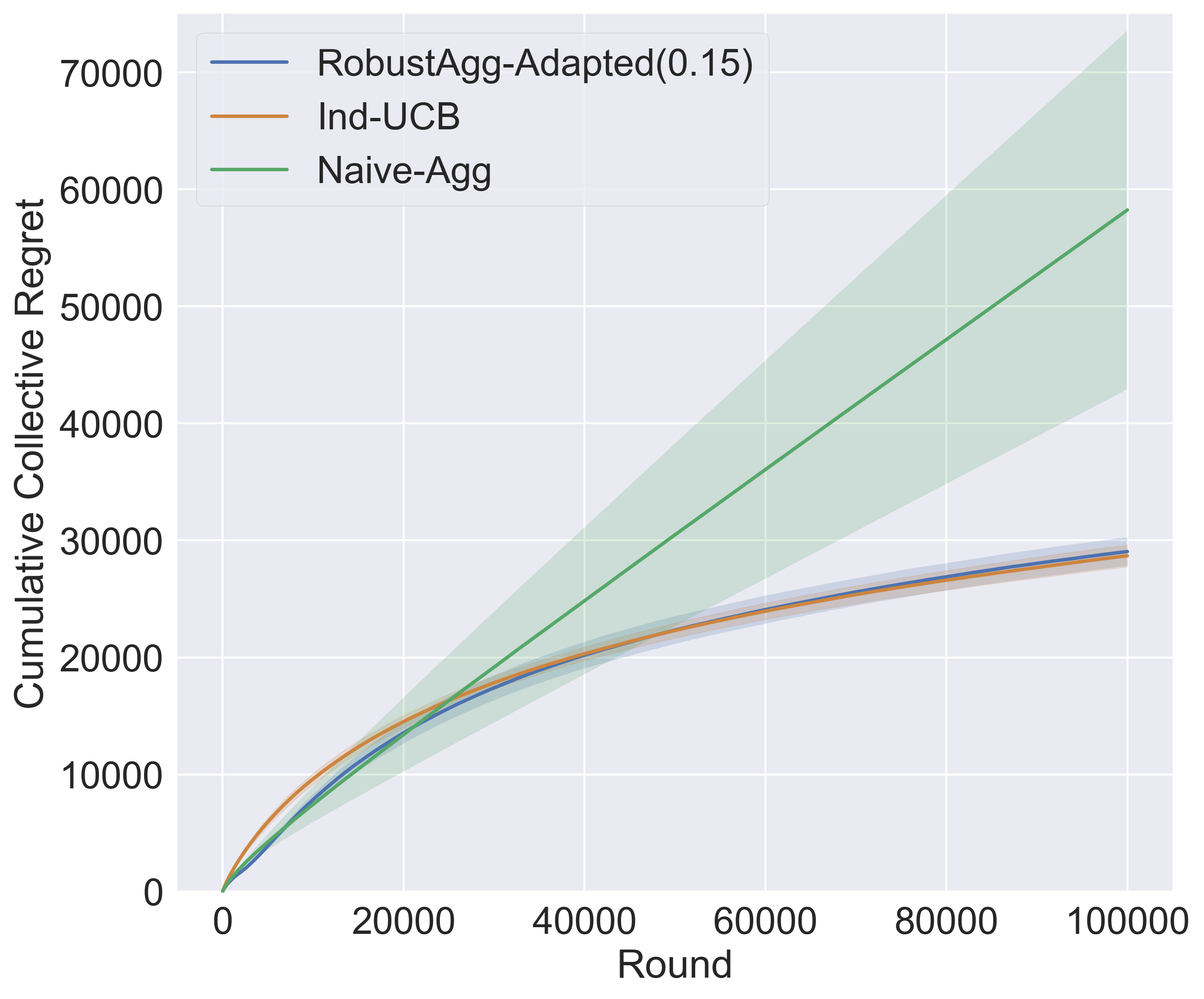}
        \caption{$|\Ical_\epsilon| = 1$}
        \label{figure:exp1_1}
    \end{subfigure}
    \begin{subfigure}{0.32\textwidth}
        \centering
        \includegraphics[height=0.85\linewidth]{Figures/Exp1/5eps_mpl_ical=0_100000x30.png}
        \caption{$|\Ical_\epsilon| = 0$}
        \label{figure:exp1_0}
    \end{subfigure}
    \caption{Compares the average performance of $\adaptedrobustagg(0.15)$, \inducb, and \naiveagg over randomly generated Bernoulli $0.15$-MPMAB problem instances with $K = 10$ and $M = 20$. The $x$-axis shows a horizon of $T = 100,000$ rounds, and the $y$-axis shows the cumulative collective regret of the players.}
    \label{figure:experiment1_allplots}
\end{figure}

\begin{figure}[htbp]
    \centering
    \begin{subfigure}{.32\textwidth}
        \centering
        \includegraphics[height=0.85\linewidth]{Figures/Exp2/5eps_mpl_ical=9_100000x30.png}
        \caption{$|\Ical_\epsilon| = 9$}
        \label{figure:exp2_9}
    \end{subfigure}
    \begin{subfigure}{0.32\textwidth}
        \centering
        \includegraphics[height=0.85\linewidth]{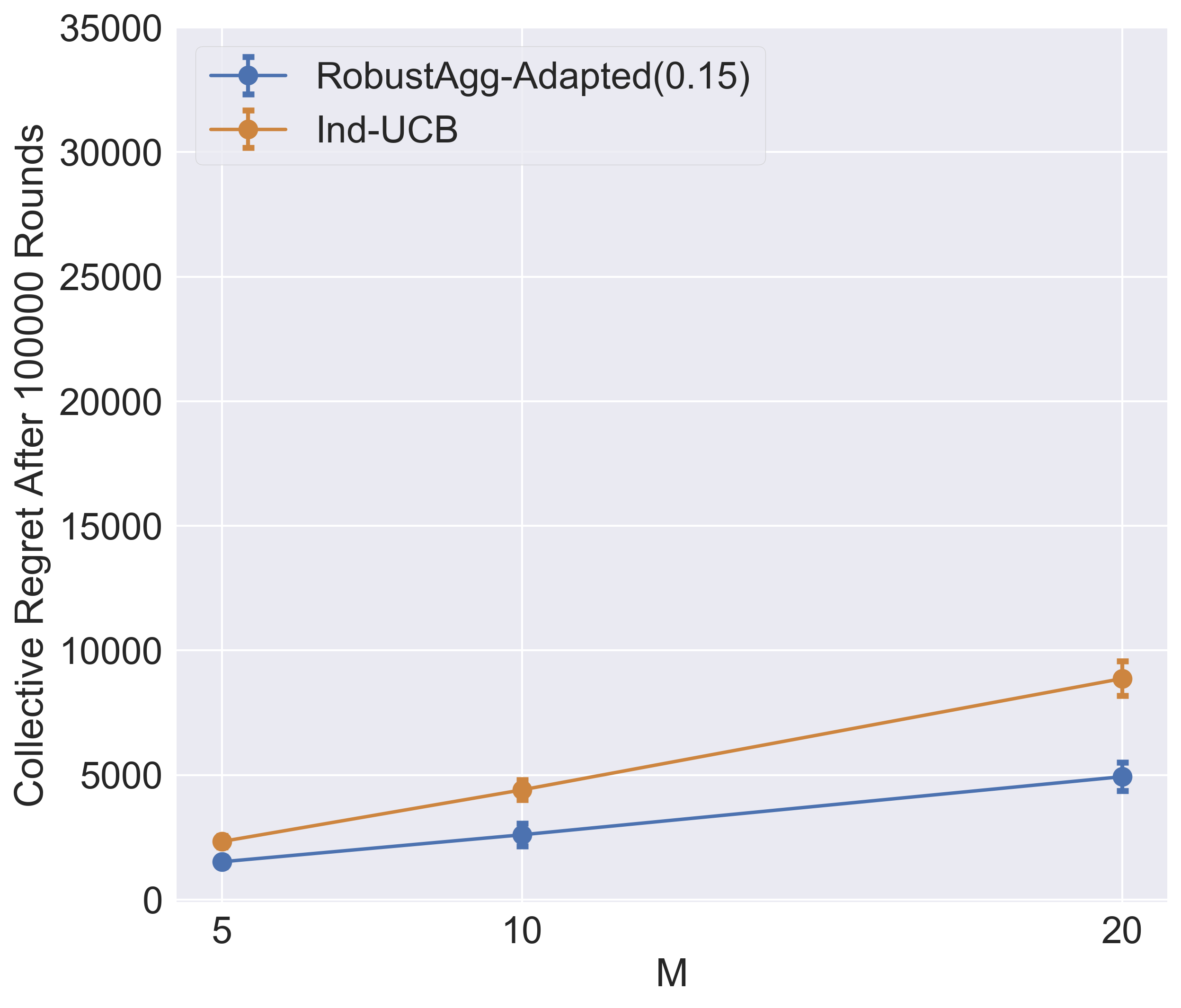}
        \caption{$|\Ical_\epsilon| = 8$}
        \label{figure:exp2_8}
    \end{subfigure}
    \begin{subfigure}{0.32\textwidth}
        \centering
        \includegraphics[height=0.85\linewidth]{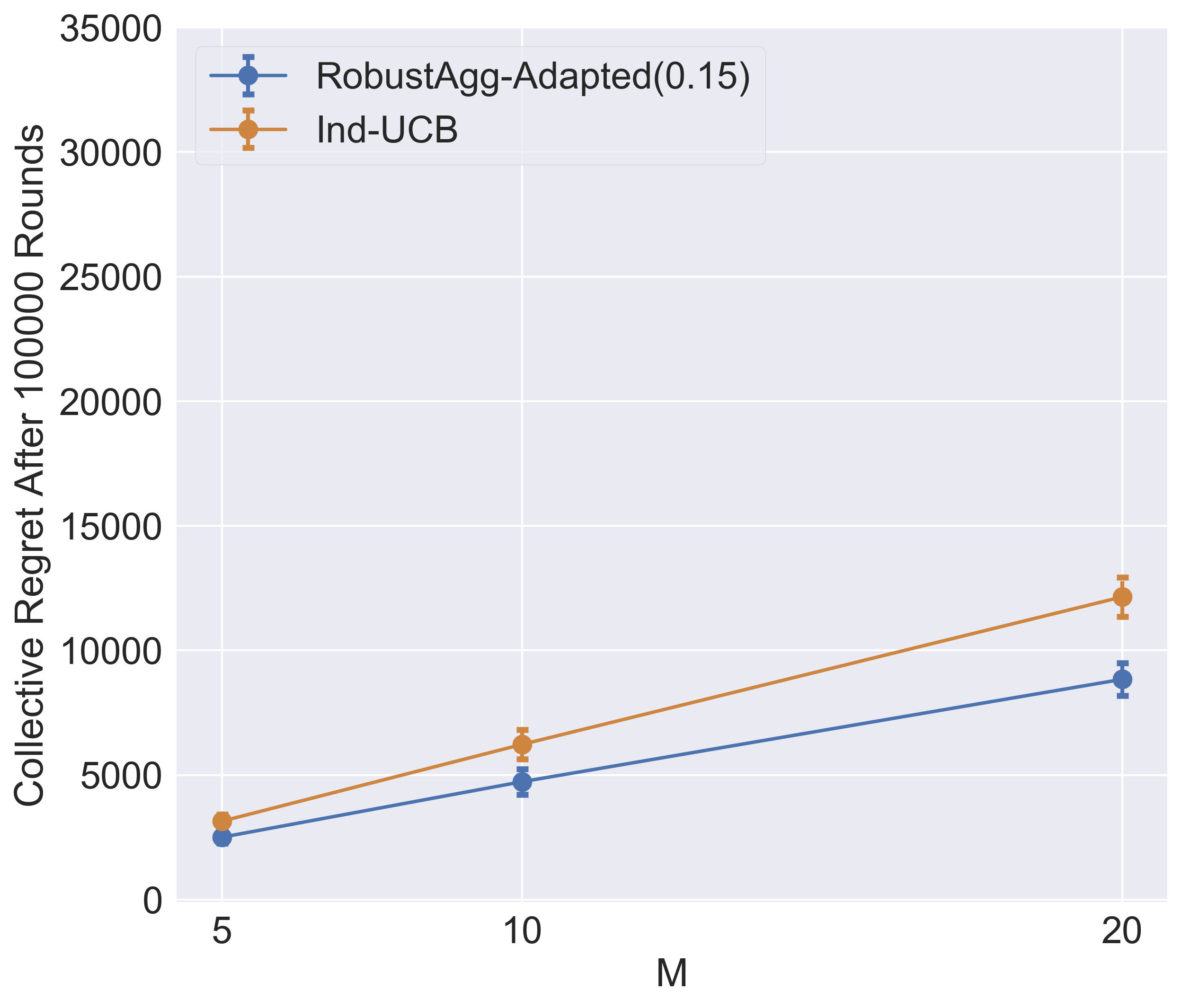}
        \caption{$|\Ical_\epsilon| = 7$}
        \label{figure:exp2_7}
    \end{subfigure}
        \begin{subfigure}{.32\textwidth}
        \centering
        \includegraphics[height=0.85\linewidth]{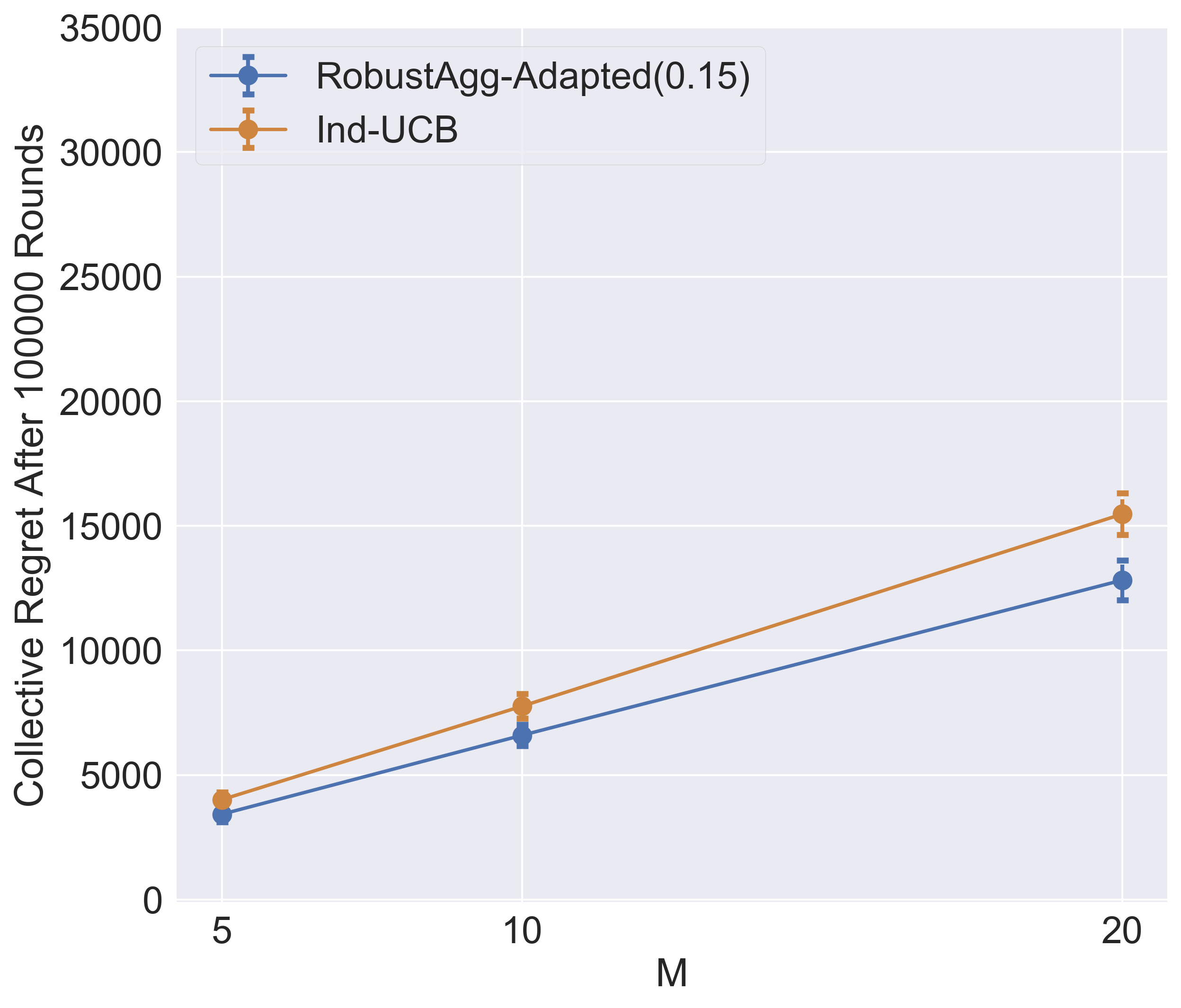}
        \caption{$|\Ical_\epsilon| = 6$}
        \label{figure:exp2_6}
    \end{subfigure}
    \begin{subfigure}{0.32\textwidth}
        \centering
        \includegraphics[height=0.85\linewidth]{Figures/Exp2/5eps_mpl_ical=5_100000x30.png}
        \caption{$|\Ical_\epsilon| = 5$}
        \label{figure:exp2_5}
    \end{subfigure}
    \begin{subfigure}{0.32\textwidth}
        \centering
        \includegraphics[height=0.85\linewidth]{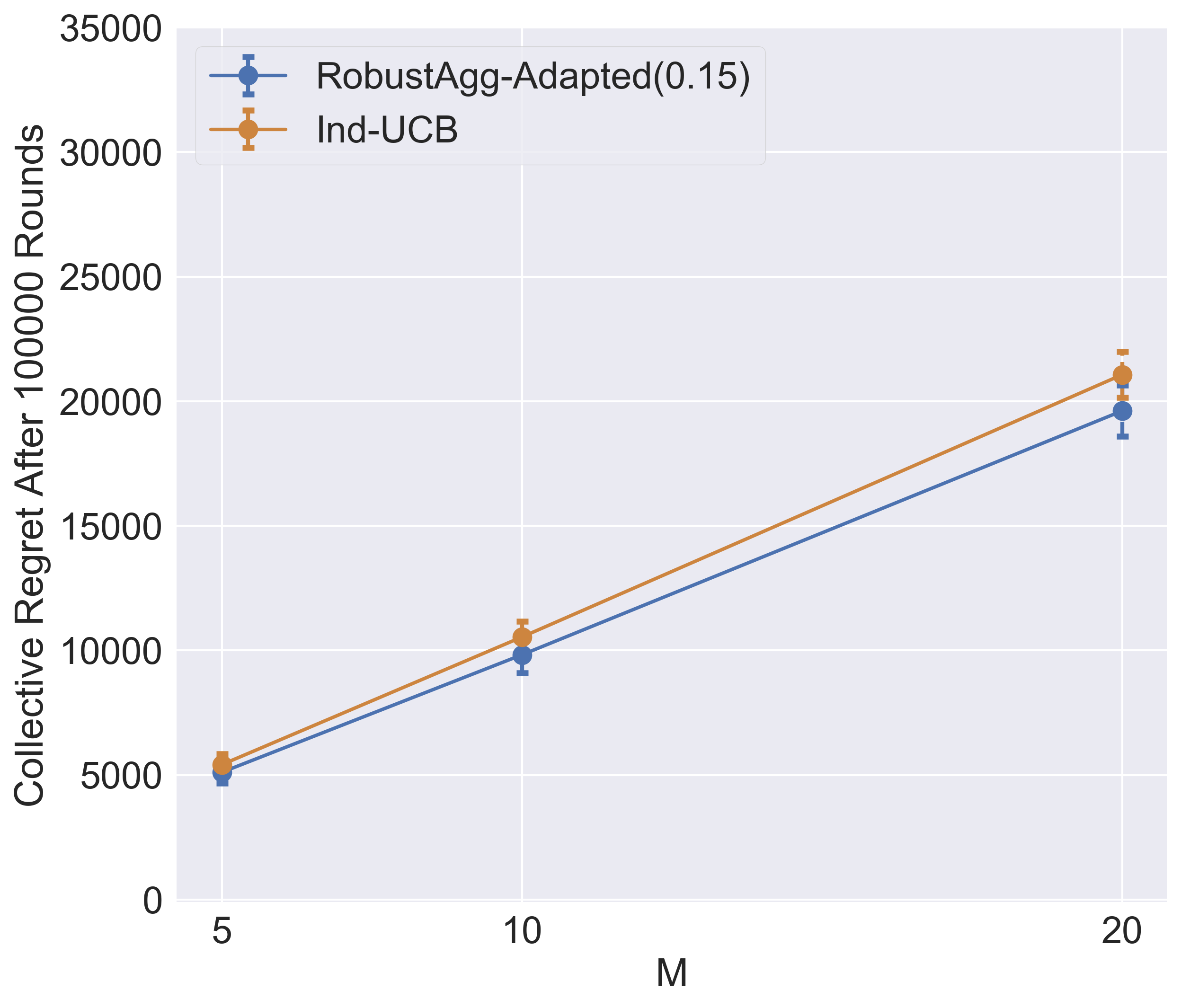}
        \caption{$|\Ical_\epsilon| = 4$}
        \label{figure:exp2_4}
    \end{subfigure}
        \begin{subfigure}{.32\textwidth}
        \centering
        \includegraphics[height=0.85\linewidth]{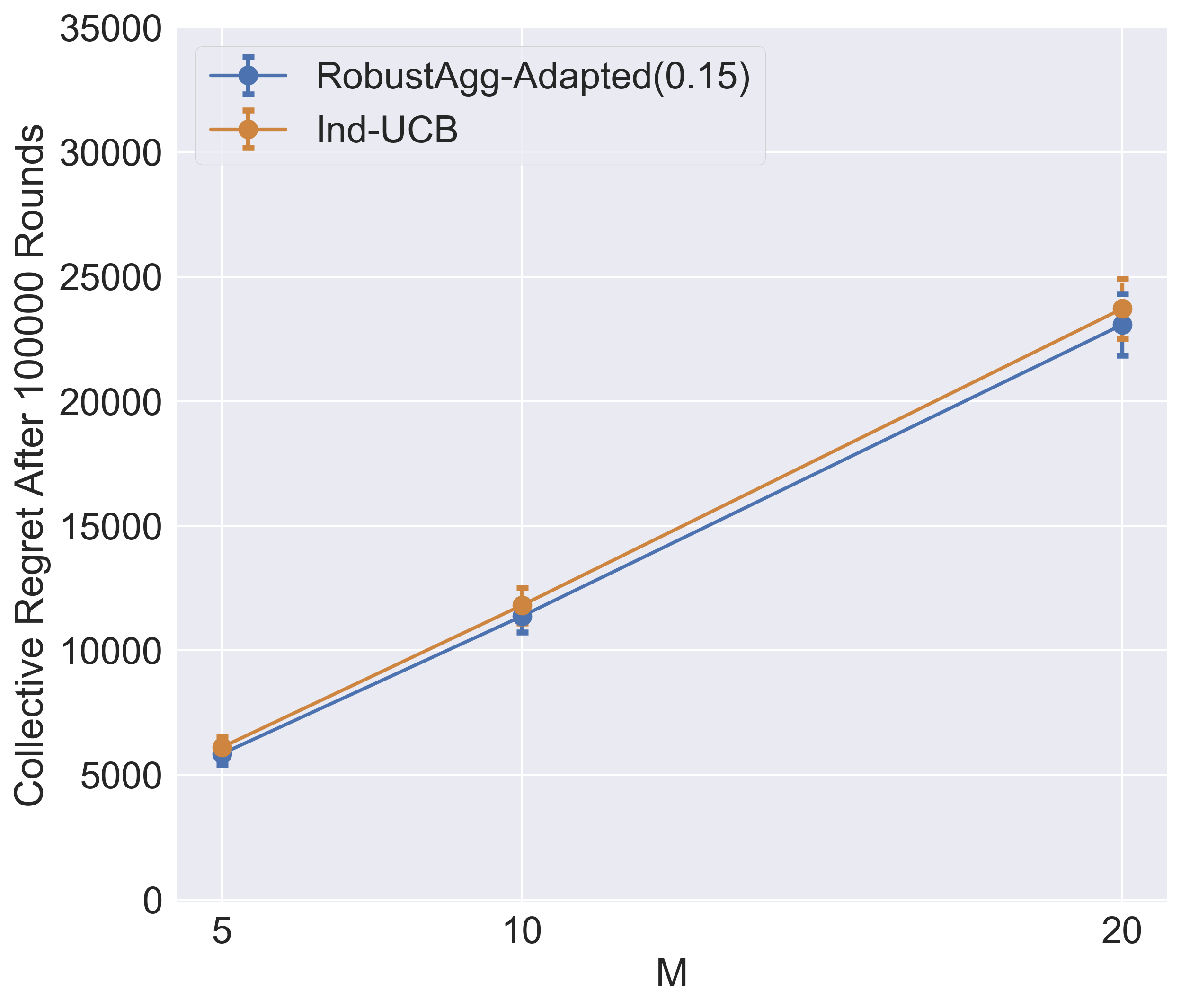}
        \caption{$|\Ical_\epsilon| = 3$}
        \label{figure:exp2_3}
    \end{subfigure}
    \begin{subfigure}{0.32\textwidth}
        \centering
        \includegraphics[height=0.85\linewidth]{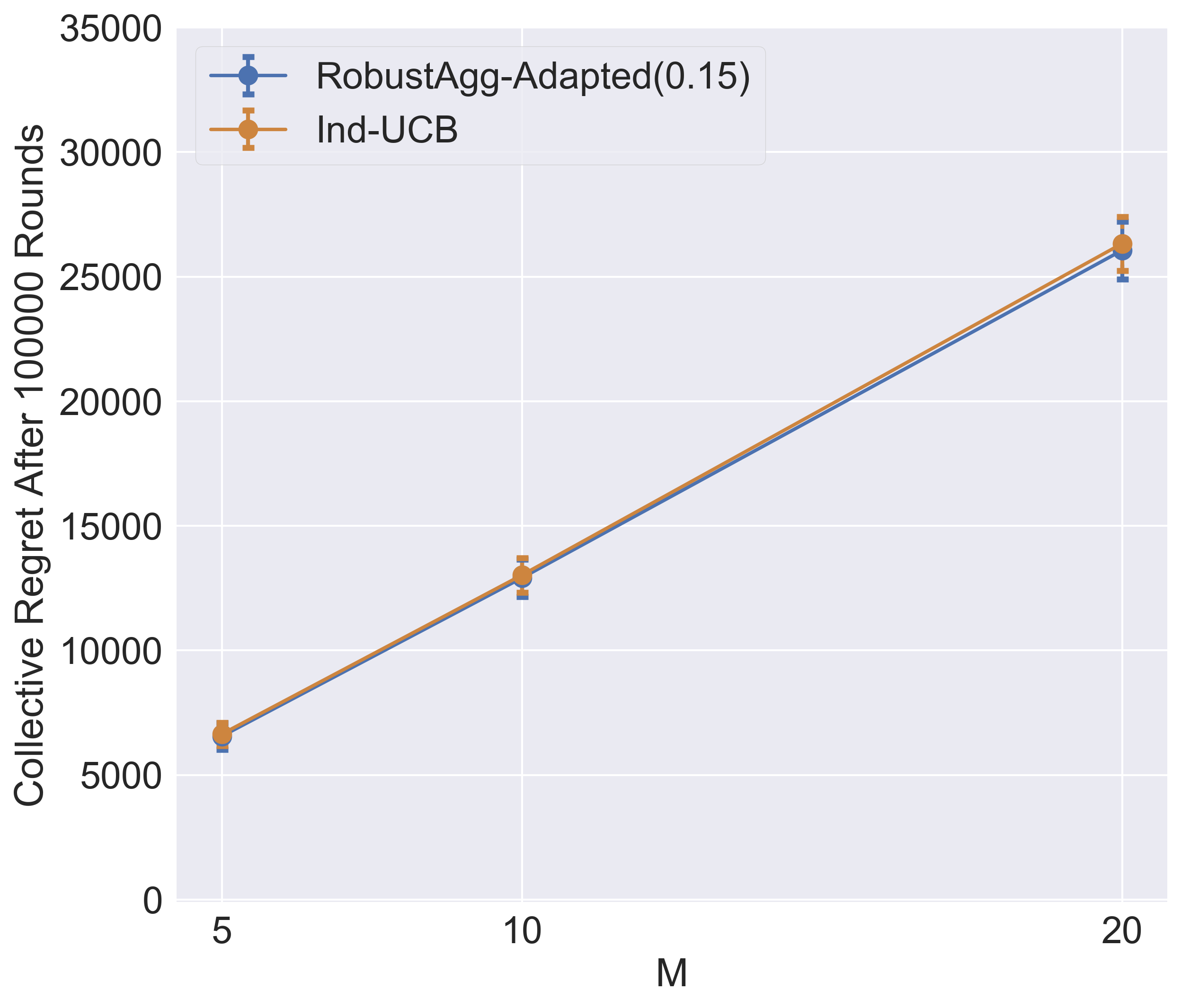}
        \caption{$|\Ical_\epsilon| = 2$}
        \label{figure:exp2_2}
    \end{subfigure}
    \begin{subfigure}{0.32\textwidth}
        \centering
        \includegraphics[height=0.85\linewidth]{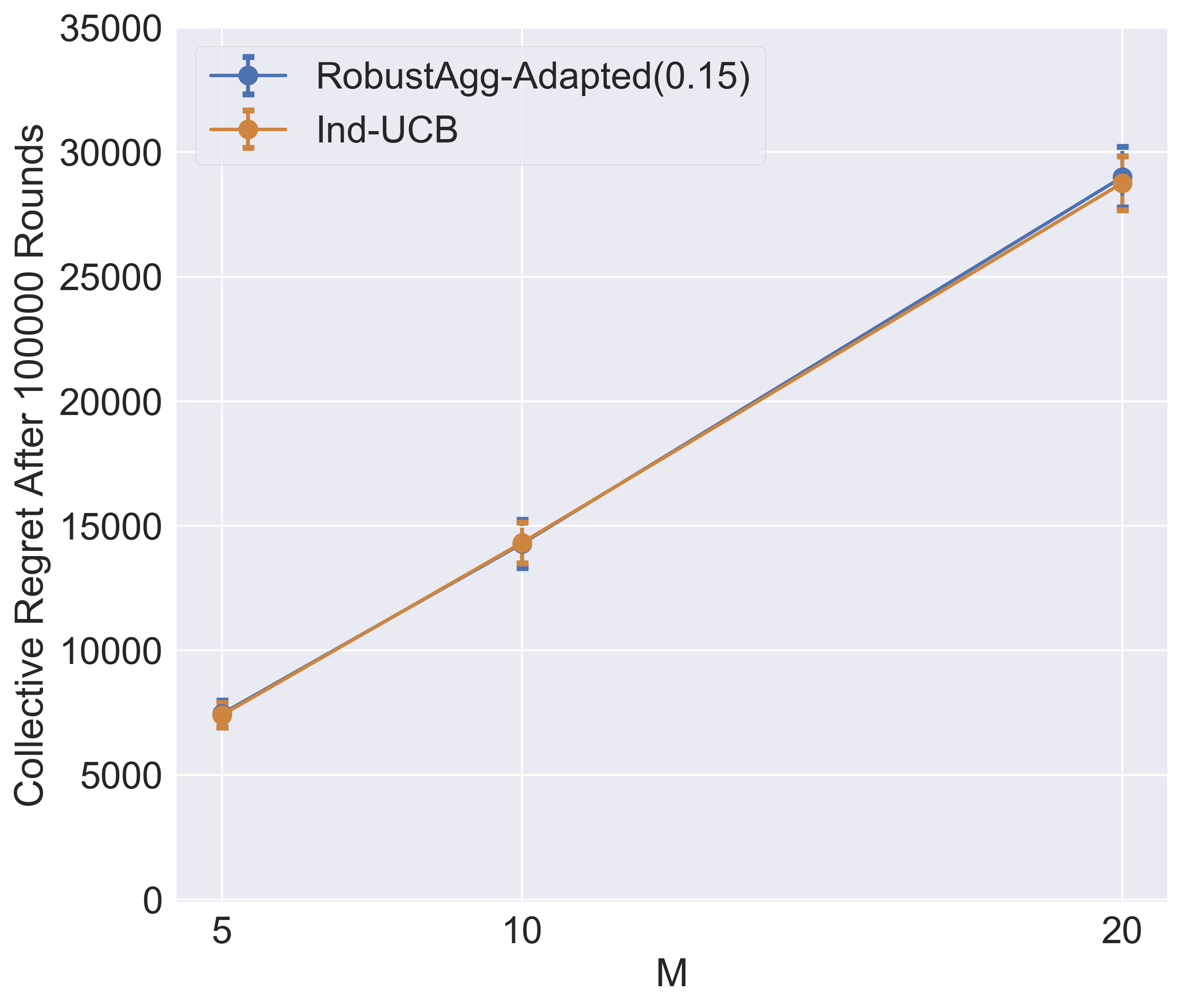}
        \caption{$|\Ical_\epsilon| = 1$}
        \label{figure:exp2_1}
    \end{subfigure}
    \begin{subfigure}{0.32\textwidth}
        \centering
        \includegraphics[height=0.85\linewidth]{Figures/Exp2/5eps_mpl_ical=0_100000x30.png}
        \caption{$|\Ical_\epsilon| = 0$}
        \label{figure:exp2_0}
    \end{subfigure}
    \caption{Compares the average performance of $\adaptedrobustagg(0.15)$ and \inducb over randomly generated Bernoulli $0.15$-MPMAB problem instances with $K = 10$ and $M \in \{5, 10, 20\}$. The $x$-axis shows different values of $M$, and the $y$-axis shows the cumulative collective regret of the players after $100,000$ rounds.}
    \label{figure:experiment2_allplots}
\end{figure}

\section{Experimental Details}
\label{app:experiments}

\label{appendix:experiments}

In Appendix~\ref{appendix:instance_generation_proof}, we provide a proof of Fact~\ref{fact:synthetic} which is about the instance generation procedure.
Then, in Appendix~\ref{app:more_exp_results}, we present comprehensive results from the simulations we performed.

\subsection{%
Proof of Fact~\ref{fact:synthetic}}
\label{appendix:instance_generation_proof}

\begin{proof}[Proof of Fact~\ref{fact:synthetic}]
For every $i$, as $\mu_i^p \in [\mu_i^1 -\frac\epsilon2, \mu_i^1 +\frac\epsilon2]$ for all $p$ in $[M]$, we have that for all $p,q$ in $[M]$,
$\abs{\mu_i^p - \mu_i^q} \leq \epsilon$. This proves that $\mu = (\mu_i^p)_{i \in [K], p \in [M]}$ is indeed a Bernoulli $\epsilon$-MPMAB instance.

 Recall that $d = \max_{i \in [c]} \mu_i^1 = \max_{i \in [K]} \mu_i^1$ is the optimal mean reward for player 1. We now show that $\Ical_\epsilon =  \cbr{c+1,\ldots,K}$ by a case analysis:
\begin{enumerate}
    \item First, we show that 
for all $i$ in $\cbr{c+1,\ldots,K}$, $i$ is in $\Ical_{\epsilon}$.
This is because $\mu_i^1$ is chosen from $[0, d - 5\epsilon)$, which implies that $\Delta_i^1 > 5\epsilon$.
    \item Second, for all $i$ in $\cbr{1,\ldots,c}$, we claim that $i \notin \Ical_{\epsilon}$. To this end, we show that for all $p$, $\Delta_i^p \leq 5\epsilon$.
    
    We start with the observation that
    $\mu_i^1 \in (d-\epsilon, d]$, which implies that $\Delta_i^1 = d - \mu_i^1 \leq \epsilon$.
    Now, it follows from Fact~\ref{fact:delta_difference} in Appendix~\ref{appendix:proof_of_fact} that for any $i \in [K]$ and $p \in [M]$, $|\Delta_i^p - \Delta_i^1| \le 2\epsilon$. 
    Therefore, we have $\Delta_i^p \le 3\epsilon$ for all $p$, which implies that any $i \in [c]$ cannot be in $\Ical_\epsilon$.
    
    \qedhere
\end{enumerate}
\end{proof}

\subsection{Extended results}
\label{app:more_exp_results}
Here, we present comprehensive results from the simulations we performed.

\paragraph{Experiment 1.}
Recall that for each $v \in \{0, 1, 2, \ldots, 9\}$, we generated $30$ Bernoulli $0.15$-MPMAB problem instances such that $|\Ical_\epsilon| = v$. \Cref{figure:experiment1_allplots} compares the average cumulative collective regrets of the three algorithms in a horizon of $100,000$ rounds over instances with different values of $|\Ical_\epsilon|$:

\begin{itemize}
    \item Notice that $\adaptedrobustagg(0.15)$ outperforms both baseline algorithms when $|\Ical_\epsilon| \in [2,8]$, as shown in Figures \ref{figure:exp1_8}, \ref{figure:exp1_7}, $\ldots$, \ref{figure:exp1_2}, especially when $|\Ical_\epsilon|$ is large.
    
    \item Figure~\ref{figure:exp1_9} shows that when $|\Ical_\epsilon| = 9$---i.e., when one arm is optimal for all players and the other arms are all subpar arms---\naiveagg and $\adaptedrobustagg(0.15)$ perform much better than \inducb with little difference between themselves. However, note that as long as there are more than one ``competitive'' arms---e.g., in Figure~\ref{figure:exp1_8} when $|\Ical_\epsilon^C| = 2$---the collective regret of \naiveagg can easily be nearly linear in the number of rounds.
    
    \item \Cref{figure:exp1_1} and \Cref{figure:exp1_0} demonstrate that when when there are very few arms or even no arm that is amenable to data aggregation, the performance of $\adaptedrobustagg(0.15)$ is still on par with that of \inducb.
\end{itemize}

\paragraph{Experiment 2.} 
Recall that for each $M \in \{5, 10, 20\}$ and $v \in \{0, 1, 2, \ldots, 9\}$, we generated $30$ Bernoulli $0.15$-MPMAB problem instances with $M$ players such that $|\Ical_\epsilon| = v$.
\Cref{figure:experiment2_allplots} shows and compares the average collective regrets of $\adaptedrobustagg(0.15)$ and \inducb after $100,000$ rounds in problem instances with $M = 5, 10$, and $20$, and in each subfigure, $|\Ical_\epsilon|$ takes a different value.

Observe that when $|\Ical_\epsilon|$ is large (e.g., in Figures \ref{figure:exp2_9}, \ref{figure:exp2_8},$\ldots$, \ref{figure:exp2_5}), the collective regret of $\adaptedrobustagg(0.15)$ is less sensitive to the number of players $M$, in comparison with \inducb. Especially, in the extreme case when $|\Ical_\epsilon| = 9$---i.e., all suboptimal arms are subpar arms---\Cref{figure:exp2_9} shows that the collective regret of $\adaptedrobustagg(0.15)$ has negligible dependence on $M$.

In conclusion, our empirical evaluation validate our theoretical results in Section~\ref{sec:known_eps}.

\section{Analytical Solution to $\lambda^*$}
\label{appendix:lambda_star}

We first present the following proposition simliar to the results in~\citep[Section 6 thereof]{bbckpv10}.
The original solution in~\cite{bbckpv10} has a $\min(1, \cdot)$ operation in the second case; we slightly simplify that result by showing that this operation is unnecessary.\footnote{In~\cite{bbckpv10}'s notation, this can also be seen directly by observing that when $m_T \geq D^2$, $v = \frac{m_T}{m_T + m_S} \cdot \del{1 + \frac{m_S}{ \sqrt{ D^2 (m_S + m_T) - m_S m_T }}} \leq \frac{m_T}{m_T + m_S} \cdot (1 + \frac{m_S}{m_T}) = 1$.}

\begin{proposition}
\label{prop:lambda_star}
Suppose $\beta \in (0,1)$.
Define function 
\[
    f(\alpha) = 2B\sqrt{\left( \frac{\alpha^2}{\beta} + \frac{(1-\alpha)^2}{1-\beta} \right)} + 2(1-\alpha)A,
\]
Then, $\alpha^* = \argmin_{\alpha \in [0,1]} f(\alpha)$ has the following form:
\[
\alpha^*
=
\begin{cases}
1 & \beta \geq \frac{B^2}{A^2}, \\
\beta \del{1 + \frac{1-\beta}{\sqrt{\frac{B^2}{A^2} - \beta(1-\beta)}}} & \beta < \frac{B^2}{A^2}.
\end{cases}
\]
\end{proposition}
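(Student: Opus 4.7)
}

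The plan is to treat this as a one-dimensional constrained convex optimization problem and solve it by explicitly finding the critical point of $f$, then checking whether it falls inside $[0,1]$. First, I would simplify the expression under the square root. A short algebraic rearrangement yields the clean identity
\[
\frac{\alpha^2}{\beta} + \frac{(1-\alpha)^2}{1-\beta} = \frac{(\alpha - \beta)^2}{\beta(1-\beta)} + 1,
\]
which expresses the quadratic inside the radical as a squared offset plus a constant. This is the key manipulation: it both makes the subsequent calculus tractable and makes convexity visible, since functions of the form $\alpha \mapsto \sqrt{c(\alpha-a)^2 + d}$ with $c, d \ge 0$ are convex (they are Euclidean norms of an affine map), and adding the linear term $2(1-\alpha)A$ preserves convexity. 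Thus $f$ is convex on $[0,1]$, and any stationary point in the open interval is the unique global minimizer; otherwise the minimum is at a boundary whose sign of $f'$ we can read off.

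Next I would differentiate. Using the identity above, $f'(\alpha) = \dfrac{2B(\alpha - \beta)}{\beta(1-\beta)\sqrt{g(\alpha)}} - 2A$, where $g(\alpha)$ denotes the bracketed quantity. Setting $f'(\alpha) = 0$ forces $\alpha \ge \beta$, and squaring yields
\[
(\alpha - \beta)^2 \bigl[B^2 - A^2\beta(1-\beta)\bigr] = A^2 \beta^2 (1-\beta)^2.
\]
When $B^2 > A^2 \beta(1-\beta)$ we can solve for $\alpha - \beta$ with the positive root and obtain
\[
\alpha^{**} = \beta + \frac{A\beta(1-\beta)}{\sqrt{B^2 - A^2 \beta(1-\beta)}} = \beta\left[1 + \frac{1 - \beta}{\sqrt{B^2/A^2 - \beta(1-\beta)}}\right],
\]
which matches the claimed formula.

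Then I would determine when this interior stationary point lies in $[0,1]$. Rearranging the inequality $\alpha^{**} \le 1$ gives $A\beta \le \sqrt{B^2 - A^2 \beta(1-\beta)}$, which after squaring simplifies cleanly to $A^2 \beta \le B^2$, i.e.\ $\beta \le B^2/A^2$. So in the regime $\beta < B^2/A^2$ the stationary point lies strictly inside $[0,1]$ and, by convexity, equals $\lambda^*$. In the regime $\beta \ge B^2/A^2$, I would evaluate $f'(1) = 2B/\sqrt{\beta} - 2A$ and note that $f'(1) \le 0$ under this condition (with equality exactly at the boundary case $\beta = B^2/A^2$). Combined with convexity, this forces the constrained minimizer to be $\alpha^* = 1$, completing the case analysis.

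The computation is mostly routine; the only subtle step is the convexity argument needed to justify that a stationary point is indeed the global minimum and that $f'(1) \le 0$ implies the minimum is at the right endpoint. I anticipate no serious obstacle, but I would take care to verify the algebraic identity for $g(\alpha)$ and to handle the squaring step (which is valid precisely because $\alpha \ge \beta$ at any interior stationary point) without introducing spurious roots.
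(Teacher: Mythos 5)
Your proposal is correct and follows the same high-level strategy as the paper's proof: establish convexity of $f$, locate the stationary point by setting $f'(\alpha)=0$, and decide the minimizer by checking whether that point lies in $[0,1]$. The closed-form expression you obtain for the interior stationary point, and the threshold $\beta \le B^2/A^2$ for it to lie in $[0,1]$, agree exactly with the paper.

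Two details of your execution differ from the paper's and are, if anything, cleaner. First, the paper simply asserts strict convexity and works with the derivative in the raw form $f'(\alpha) = 2B\bigl(\tfrac{\alpha}{\beta}-\tfrac{1-\alpha}{1-\beta}\bigr)/\sqrt{\tfrac{\alpha^2}{\beta}+\tfrac{(1-\alpha)^2}{1-\beta}} - 2A$, then reduces $f'=0$ to a quadratic equation in $\alpha$ and analyzes its discriminant; your completed-square identity $\tfrac{\alpha^2}{\beta}+\tfrac{(1-\alpha)^2}{1-\beta} = \tfrac{(\alpha-\beta)^2}{\beta(1-\beta)}+1$ both makes convexity visible (norm of an affine map plus a linear term) and lets you solve $f'=0$ directly for $(\alpha-\beta)^2$ without forming the quadratic. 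Second, the paper splits the $\alpha^*=1$ conclusion into two subcases (discriminant negative, i.e.\ $\beta(1-\beta)>B^2/A^2$ so no stationary point exists; versus stationary point exists but $\alpha_0\ge 1$), whereas your single endpoint check $f'(1) = 2B/\sqrt{\beta}-2A \le 0 \iff \beta \ge B^2/A^2$, combined with monotonicity of $f'$, handles both at once. Both arguments are complete; yours packages the case analysis more economically.
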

Observe that when $\beta < \frac{B^2}{A^2}$, $\frac{B^2}{A^2} - \beta(1-\beta) > 0$, so the expression in the second case is well defined.

\begin{proof}
First, observe that $f$ is a strictly convex function, and therefore has at most one stationary point in $\RR$; and if it exists, it must be $f$'s global minimum.

Second, we study the monotonicity property of $f$ in $\RR$. To this end, we calculate $\alpha_0$, the stationary point of $f$.
We have
\[
f'(\alpha) 
=
2B 
\frac
{ \frac \alpha \beta - \frac {1-\alpha} {1-\beta} }
{ \sqrt{\frac{\alpha^2}{\beta} + \frac{(1-\alpha)^2}{1-\beta}} } 
-
2A
\]
By algebraic calculations, $f'(\alpha) = 0$
is equivalent to 
\[
\frac{\alpha - \beta}{\beta (1-\beta)}
=
\frac A B 
\sqrt{\frac{\alpha^2 - 2\beta\alpha + 1}{\beta (1-\beta)}}.
\]

This yields the following quadratic equation:
\[
\del{\frac{B^2}{A^2} - \beta(1-\beta) }\alpha^2
-
2\beta \del{\frac{B^2}{A^2} - \beta(1-\beta)} \alpha 
+
\beta^2 \del{\frac{B^2}{A^2} - (1-\beta)}
=
0,
\]
with the constraint that $\alpha > \beta$.
The discriminant of the above quadratic equation is $\Delta = 4\beta^2(1-\beta)^2 (\frac{B^2}{A^2} - \beta (1-\beta))$. If $\Delta \geq 0$, the stationary point is
\[
\alpha_0 
= 
\frac{2 \beta (\frac{B^2}{A^2} - \beta(1-\beta) ) + \sqrt{\Delta}}{ 2(\frac{B^2}{A^2} - \beta(1-\beta)) }
=
\beta
\del{1 + \frac{1-\beta}{\sqrt{\frac{B^2}{A^2} - \beta(1-\beta)}}}
\]

We now consider two cases:
\begin{enumerate}
    \item If $\beta(1-\beta) > \frac{B^2}{A^2}$, it can be checked that $\Delta < 0$, and consequently $f'(\alpha) < 0$ for all $\alpha \in \RR$, i.e., $f$ is monotonically decreasing in $\RR$.
    \item $\beta(1-\beta) \leq \frac{B^2}{A^2}$, we have that $f$ is monotonically decreasing in $(-\infty, \alpha_0]$, and monotonically increasing in $[\alpha_0, +\infty)$.
\end{enumerate}

We are now ready to calculate $\alpha^* = \argmin_{\alpha \in [0,1]} f(\alpha)$. 
\begin{enumerate}
    \item If $\beta(1-\beta) > \frac{B^2}{A^2}$, as $f$ is monotonically decreasing in $\RR$, $\alpha^* = 1$.
    \item If $\beta(1-\beta) \leq \frac{B^2}{A^2}$ and $\beta > \frac{B^2}{A^2}$, it can be checked that $\alpha_0 \geq 1$.
    As $f$ is monotonically decreasing in $(-\infty, \alpha_0] \supset [0,1]$, we also have $\alpha^* = 1$.
    \item If $\beta \leq \frac{B^2}{A^2}$, $\alpha_0 \in [0,1]$. Therefore, $\alpha^* = \alpha_0 = \beta
    \del{1 + \frac{1-\beta}{\sqrt{\frac{B^2}{A^2} - \beta(1-\beta)}}}$.
\end{enumerate}
In summary, we have the expression of $\alpha^*$ as desired.
\end{proof}

Algorithm~\ref{alg: mpmab}'s line~\ref{line:opt-lambda} computes
\begin{align*}
    \lambda^* = \argmin_{\lambda \in [0,1]}\ 8\sqrt{13(\ln T)[\frac{\lambda^2}{\wbar{n^p_i}(t-1)} + \frac{(1-\lambda)^2}{\wbar{m^p_i}(t-1)}]} + (1-\lambda)\epsilon;
\end{align*}
we now use Proposition~\ref{prop:lambda_star} to give its analytical form.
For notational simplicity, let $n = \wbar{n^p_i}(t-1)$ and $m = \wbar{m^p_i}(t-1)$. Applying Proposition~\ref{prop:lambda_star} with $A = \frac{\epsilon}{2}, B = 4\sqrt{\frac{13(\ln T)}{n+m}}$, and $\beta = \frac{n}{n+m}$, we have
\[
\lambda^*
=
\begin{cases}
1 & \epsilon > 0 \text{ and } n \geq \frac{832 (\ln T) }{\epsilon^2}, \\
\frac{n}{n+m} \left(1 + \epsilon m \sqrt{\frac{1}{832(\ln T)(n+m) - \epsilon^2 nm}}\right) & \text{otherwise}.
\end{cases}
\]

\section{On adaptive reward confidence interval construction under unknown $\epsilon$}
\label{sec:adaptive-ci}

Recall that $\robustagg(\epsilon)$ carefully utilizes the dissimilarity parameter $\epsilon$ to construct high-probability reward confidence bounds for all arms and players by inter-player information sharing.
In this section, we investigate the limitations of constructing reward confidence bounds by utilizing auxiliary data, in the setting when $\epsilon$ is unknown.

To this end, we consider a basic interval estimation problem: given source data $(z_1,\ldots,z_n)$ and target data $(x_1,\ldots,x_m)$ drawn iid from distributions $D_Z$ and $D_X$ supported on $[0,1]$ of unknown dissimilarity, how can one design an adaptive interval estimator for $\mu_X$, the mean of $D_X$? 
A naive baseline is to build the estimator by ignoring the source data: by Hoeffding's inequality, the interval centered at $\bar{x} = \frac1m \sum_{i=1}^m x_i$ of width $O(\sqrt{\frac{\ln\frac1\delta}{m}})$ is a $(1-\delta)$-confidence interval. This motivates the question: can one construct  adaptive $(1-\delta)$-confidence intervals that become much narrower when $D_X$ and $D_Z$ are very close and $n$ is large?

We show in this section that the aforementioned naive interval estimation is about the best one can do: any ``valid'' confidence interval construction for $\mu_X$ must have width $\Omega(\sqrt{\frac{1}{m}})$ for a wide family of $(D_X, D_Z)$ where $D_X$ and $D_Z$ are identical to $D_Z$, regardless of the value of $n$. This is in sharp contrast to the results in the setting when a dissimiliarity parameter between $D_X$ and $D_Z$ is known: if we know that $D_X = D_Z$ {\em apriori}, it is easy to construct a confidence interval of $\mu_X$ of length $O(\sqrt{\frac{1}{n+m}})$ by setting its center at $\frac{1}{m+n} (\sum_{i=1}^m x_i + \sum_{i=1}^n z_i)$.
Similar impossibility results of constructing adaptive and honest confidence intervals have appeared in~\citep{low1997nonparametric,juditsky2003nonparametric} in nonparametric regression.

To formally present our results, we set up some useful notation. Denote by $S = (x_1, \ldots, x_m, z_1, \ldots, z_n)$ the sample we observe; in this notation, sample $S$ is drawn from $D_X^m \otimes D_Z^n$. 
The following notion formalizes the idea of valid confidence interval construction. 

\begin{definition}
$I: \RR^{m+n} \times (0,1) \to \cbr{[a,b]: 0 \leq a \leq b \leq 1}$ is said to be an honest confidence interval construction procedure for $\mu_X$, if under all distributions $D_X$ and $D_Z$ supported on $[0,1]$, and $\delta \in (0,1)$,
\[
\PP_{S \sim D_X^m \otimes D_Z^n}\del{ \mu_X \in I( S, \delta ) }
\geq 
1 - \delta.
\]
\end{definition}

We have the following theorem. 
\begin{theorem}
Suppose $I$ is an honest confidence interval construction procedure for $\mu_X$. Then for any $m \geq 10$, $n$, $\mu \in [\frac 3 8, \frac 5 8]$, we have the following: 
under distributions $D_X = D_Z = \Ber(\mu)$,
\[
\EE_{S \sim D_X^m \otimes D_Z^n} \sbr{ \lambda\del{I(S, 0.1)} } 
\geq 
\frac{1}{10 \sqrt{m}},
\]
where $\lambda \del{J}$ denotes the length of interval $J$.
\end{theorem}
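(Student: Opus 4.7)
The plan is to set up a standard two-point Le Cam-style argument against any honest confidence interval $I$, using an alternative distribution that perturbs only the target marginal while leaving the source marginal unchanged. Concretely, fix $\mu \in [\tfrac{3}{8}, \tfrac{5}{8}]$, $m \geq 10$, and pick $\Delta = c/\sqrt{m}$ for a small absolute constant $c$ to be tuned. Define $P_1 = \Ber(\mu)^{\otimes m} \otimes \Ber(\mu)^{\otimes n}$ (for which $\mu_X = \mu$) and the alternative $P_2 = \Ber(\mu + 2\Delta)^{\otimes m} \otimes \Ber(\mu)^{\otimes n}$ (for which $\mu_X = \mu + 2\Delta$). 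Both distributions are valid: for $c$ sufficiently small and $m \geq 10$, one has $\mu + 2\Delta \in [\tfrac{1}{4}, \tfrac{3}{4}]$, which will also make the KL bound of Lemma~\ref{lem:kl-ber} applicable.

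The crux is the following geometric observation. Let $A = \{\mu \in I(S, 0.1)\}$ and $B = \{\mu + 2\Delta \in I(S, 0.1)\}$. Whenever both $A$ and $B$ occur, $I(S, 0.1)$ must contain two points at distance $2\Delta$, so $\lambda(I(S, 0.1)) \geq 2\Delta$. By honesty, $P_1(A) \geq 0.9$ and $P_2(B) \geq 0.9$.

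To lift the second inequality to $P_1$, I bound the total variation between the two product distributions. Since they differ only on the target coordinates, the divergence decomposition gives
\[
\KL(P_1, P_2) = m \cdot \KL(\Ber(\mu), \Ber(\mu + 2\Delta)) \leq 12 m \Delta^2
\]
by Lemma~\ref{lem:kl-ber}, so Pinsker's inequality yields $d_{\TV}(P_1, P_2) \leq \sqrt{6 m \Delta^2} = c\sqrt{6}$. Hence $P_1(B) \geq P_2(B) - c\sqrt{6} \geq 0.9 - c\sqrt{6}$, and inclusion-exclusion gives
\[
P_1(\lambda(I(S, 0.1)) \geq 2\Delta) \geq P_1(A \cap B) \geq P_1(A) + P_1(B) - 1 \geq 0.8 - c\sqrt{6}.
\]
Markov's inequality in reverse then yields $\EE_{P_1}[\lambda(I(S, 0.1))] \geq 2\Delta (0.8 - c\sqrt{6}) = \frac{2c(0.8 - c\sqrt{6})}{\sqrt{m}}$.

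The remaining step is to tune $c$. Choosing $c = 0.4/\sqrt{6}$ makes $0.8 - c\sqrt{6} = 0.4$ and produces a lower bound of $\frac{0.32}{\sqrt{6}\sqrt{m}} > \frac{1}{10\sqrt{m}}$; one should also verify that this $c$ keeps $\mu + 2\Delta \leq 5/8 + 2c/\sqrt{10} < 3/4$, so Lemma~\ref{lem:kl-ber} applies legitimately. The main obstacle, such as it is, is pure bookkeeping: picking $c$ to simultaneously (i) keep both alternatives inside $[\tfrac{1}{4}, \tfrac{3}{4}]$ so the quadratic KL bound kicks in, (ii) keep $0.8 - c\sqrt{6}$ bounded below by a constant strictly greater than zero, and (iii) beat the stated constant $1/10$; there is no conceptual difficulty, only the need to make the constants line up. No step relies on $n$, which is exactly the point: the source sample is powerless to separate $P_1$ from $P_2$.
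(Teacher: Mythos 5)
Your proposal is correct and follows essentially the same route as the paper's proof: a two-point Le Cam argument with an alternative that perturbs only the target marginal by $\Theta(1/\sqrt{m})$, the divergence decomposition plus Lemma~\ref{lem:kl-ber} and Pinsker to bound the total variation, inclusion-exclusion to force the interval to contain both candidate means, and Markov to convert this into an expected-length bound. The only difference is cosmetic constant tuning (the paper fixes the separation at $\frac{1}{3\sqrt{m}}$ and rounds the TV bound up to $0.5$, landing exactly on $\frac{1}{10\sqrt{m}}$), and your verification that $\mu + 2\Delta$ stays in $[\frac14,\frac34]$ is the same check the paper needs.
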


\begin{proof}
We consider two hypotheses: 
\begin{enumerate}
\item $H_1: D_X = \Ber(\mu), D_Z = \Ber(\mu)$; under this hypothesis, $\mu_X = \mu$.
\item $H_2: D_X = \Ber(\mu + \frac{1}{3\sqrt{m}}), D_Z = \Ber(\mu)$;
under this hypothesis, $\mu_X = \mu + \frac{1}{3\sqrt{m}}$.
\end{enumerate}
Denote by $\PP_{H_1}$ and $\PP_{H_2}$ the probability distributions of $S$ under hypotheses $H_1$ and $H_2$ respectively.
As $I$ is an honest confidence interval procedure for $\mu_X$, we must have
\begin{equation}
\PP_{H_1}\del{ \mu \in I(S, 0.1) } \geq 0.9,
\label{eqn:h1-include}
\end{equation}
and
\begin{equation}
\PP_{H_2}\del{ \mu + \frac{1}{3\sqrt{m}} \in I(S, 0.1) } \geq 0.9,
\label{eqn:h2-include}
\end{equation}
holding simultaneously. 
We now show that $\EE_{H_1} \abs{I(S, 0.1)} \geq \frac{1}{10\sqrt{m}}$.

We first establish a lower bound on $\PP_{H_1}\del{ \mu + \frac{1}{3\sqrt{m}} \in I(S, 0.1) }$ using Eq.~\eqref{eqn:h2-include}. The KL divergence between $\PP_{H_1}$ and $\PP_{H_2}$ can be bounded by:
\begin{align*} 
\KL(\PP_{H_1}, \PP_{H_2}) 
& = 
\sum_{i=1}^m \KL\del{\Ber(\mu), \Ber(\mu+\frac{1}{3\sqrt{m}})}
+
\sum_{i=1}^n \KL\del{\Ber(\mu), \Ber(\mu)} \\
& \leq 
m \cdot 3 (\frac{1}{3\sqrt{m}})^2 + n \cdot 0
= 
\frac13.
\end{align*}
where the inequality uses Lemma~\ref{lem:kl-ber} and $\mu, \mu + \frac{1}{3\sqrt{m}} \in [\frac14, \frac34]$.

By Pinsker's inequality, 
\[ 
d_{\TV}(\PP_{H_1}, \PP_{H_2}) \leq \sqrt{\frac12 \KL(\PP_{H_1}, \PP_{H_2}) } \leq 0.5.
\]

Therefore, 
\[ \PP_{H_1}\del{ \mu + \frac{1}{3\sqrt{m}} \in I(S, 0.1) } \geq \PP_{H_2}\del{ \mu + \frac{1}{3\sqrt{m}} \in I(S, 0.1) }  - d_{\TV}(\PP_{H_1}, \PP_{H_2}) \geq 0.4. \]

Combining the above inequality with Eq.~\eqref{eqn:h1-include}, using the fact that $\PP(U \cap V) \geq \PP(U) + \PP(V) - 1$, we have
\[
\PP_{H_1} \del{ \mu \in I(S, 0.1), \mu + \frac{1}{3\sqrt{m}} \in I(S, 0.1) }
\geq 
0.9+0.4-1
\geq
0.3.
\]
Observe that if $\mu \in I(S, 0.1)$ and $\mu + \frac{1}{3\sqrt{m}} \in I(S, 0.1)$ both happens, $\lambda\del{I(S, 0.1)} \geq \frac{1}{3\sqrt{m}}$. Therefore,
\[
\EE_{H_1} \sbr{ \lambda\del{I(S, 0.1)} } 
\geq 
\PP_{H_1} \del{ \mu \in I(S, 0.1), \mu + \frac{1}{3\sqrt{m}} \in I(S, 0.1) } \cdot 
\frac{1}{3\sqrt{m}} 
\geq
\frac{1}{10\sqrt{m}}.
\qedhere
\]
\end{proof}

\putbib
\end{bibunit}

\end{document}